%% file: main.tex
\pdfoutput=1
\documentclass[10pt]{article}

\input{preamble}    %
\input{macros}      %
\input{theorems}    %
\usepackage{thm-restate}  %
\usepackage[capitalise,nameinlink,noabbrev,sort&compress]{cleveref}  %
\input{crefnames}
\newenvironment{proofof}[1]{\begin{proof}[Proof of #1]}{\end{proof}}  %

\title{Recalibrating Many Questions with Few Labels:\\ Minimax Rates and Identification Limits}
\author{Xiaoyu Li\thanks{University of New South Wales. \href{mailto:xiaoyulics@gmail.com}{xiaoyulics@gmail.com}}%
  \hspace{3em}%
  Chiwun Yang\thanks{City University of Hong Kong. \href{mailto:christiannyang37@gmail.com}{christiannyang37@gmail.com}}}
\date{\today}

\usepackage{etoolbox}
\makeatletter
\patchcmd{\@maketitle}{\vskip 2em}{\vskip -1.2em}{}{}
\makeatother

\hypersetup{
  pdftitle={Recalibrating Many Questions with Few Labels: Minimax Rates and Identification Limits},
  pdfauthor={Xiaoyu Li, Chiwun Yang},
  pdfsubject={Compound minimax rates and empirical-Bayes identification for probability recalibration},
  pdfkeywords={recalibration, calibration, Jev, decision models, isotonic regression, minimax rates, compound decisions,
    empirical Bayes, identification, moment problems},
}

\begin{document}
\maketitle
\vspace{-1.6em}   %

\begin{abstract}
\input{sections/0_abstract}

\smallskip\noindent\textbf{Keywords:} recalibration; calibration; Jev; isotonic regression; minimax rates;
compound decisions; empirical Bayes; identification; moment problems.
\end{abstract}

\vspace{0.6em}
\begin{center}
\resizebox{\linewidth}{!}{\input{figures/tikz_hero}}
\captionof{figure}{(a)~$m$ nondecreasing maps (thin) within sup-distance $\tau$ of a shared shape $g_0$
(thick; band $\pm\tau$); dots: the $n$ labels of one question $j$ (dashed); bottom: the rate of
\cref{thm:main-rate}. (b)~Random effects, logarithmic scale. Not knowing the prior costs, in the worst
case over priors on nondecreasing maps, at most $2^{-n}$ when the law of a question's data is known
(solid; \cref{thm:main-eb}), and at least $\max\{1/(8\binom{2n}{n}),\bstar^{-n}/(72\sqrt n)\}$ for every
number of questions (thin; \cref{cor:finite-m}). Dashed: $\bstar^{-n}$, the
upper bound for constant maps, with the regret computed among constant-map priors; its base is exact
(\cref{thm:betastar}). Dotted, for scale: the Bayes risk $1/(8(n+1))$ of the arcsine prior
(\cref{app:practice}).}
\label{fig:hero}
\end{center}

\clearpage
\setcounter{tocdepth}{2}
{\hypersetup{linkcolor=black}\small\tableofcontents}   %
\clearpage

\input{sections/1_intro}
\input{sections/2_prelim}
\input{sections/3_rate}
\input{sections/4_identification}
\input{sections/5_numerics}
\input{sections/6_discussion}

\bibliographystyle{plainnat}   %
\clearpage
\section*{AI Disclosure}
In preparing this work we made use of AI assistants based on large language models. They helped us survey
and summarize the literature, develop and cross-check proof arguments, run and verify the numerical
computations, and draft and edit parts of the text, the figures and the \LaTeX{} source. Every mathematical
statement, proof, citation and numerical claim was checked by the authors, who take full responsibility for
the content of the paper.

\phantomsection\addcontentsline{toc}{section}{References}
\bibliography{refs}

\appendix
\input{sections/9_appendix}

\end{document}

%% file: preamble.tex
\usepackage[T1]{fontenc}
\usepackage[utf8]{inputenc}
\usepackage{lmodern}        %
\usepackage{iftex}          %
\ifPDFTeX                   %
  \usepackage[activate={true,nocompatibility},final]{microtype}
\else                       %
  \usepackage[protrusion=true,final]{microtype}
\fi

\usepackage[margin=1in]{geometry}

\usepackage{authblk}                    %

\usepackage{amsmath}
\usepackage{amssymb,amsfonts}
\usepackage{mathtools}      %
\usepackage{bm}             %
\usepackage{nicefrac}
\allowdisplaybreaks         %

\usepackage{amsthm}

\usepackage{graphicx}
\graphicspath{{figures/}{./}}                       %
\usepackage{booktabs}                               %
\usepackage{multirow,makecell,array}
\usepackage[font=small,labelfont=bf]{caption}       %
\usepackage{subcaption}                             %
\usepackage{algorithm}
\usepackage{algpseudocode}                          %

\usepackage{enumitem}
\setlist[itemize]{leftmargin=2.2em,itemsep=2pt,topsep=2pt}
\setlist[enumerate]{leftmargin=2.2em,itemsep=2pt,topsep=2pt}

\usepackage{xcolor}
\definecolor{LinkColor}{rgb}{0.10,0.40,0.75}        %
\definecolor{CiteColor}{rgb}{0.70,0.25,0.20}        %
\definecolor{UrlColor} {rgb}{0.08,0.30,0.58}        %
\definecolor{TodoColor}{rgb}{0.80,0.30,0.10}        %

\usepackage{tikz}
\usetikzlibrary{positioning,calc,arrows.meta,decorations.pathreplacing}

\usepackage[round,sort&compress]{natbib}

\usepackage{url}
\usepackage{hyperref}
\hypersetup{
  colorlinks=true,
  linkcolor=LinkColor,
  citecolor=CiteColor,
  urlcolor=UrlColor,
  breaklinks=true,
  bookmarksnumbered=true,
}
\usepackage{bookmark}                               %

\numberwithin{equation}{section}

\newif\ifdraft \draftfalse
\ifdraft
  \usepackage{lineno}\linenumbers
  \newcommand{\todo}[1]{\textcolor{TodoColor}{\textbf{[TODO:}~#1\textbf{]}}}
\else
  \newcommand{\todo}[1]{}                            %
\fi

%% file: macros.tex
\newcommand{\R}{\mathbb{R}}

\newcommand{\Z}{\mathbb{Z}}
\newcommand{\Q}{\mathbb{Q}}
\newcommand{\E}{\mathbb{E}}                       %
\renewcommand{\P}{\mathbb{P}}                     %

\newcommand{\eps}{\varepsilon}

\newcommand{\defeq}{\coloneqq}                    %

\DeclareMathOperator{\supp}{supp}

\DeclareMathOperator{\poly}{poly}
\DeclareMathOperator{\Var}{Var}

\DeclareMathOperator*{\argmin}{arg\,min}

\DeclarePairedDelimiter{\abs}{\lvert}{\rvert}
\DeclarePairedDelimiter{\norm}{\lVert}{\rVert}
\DeclarePairedDelimiter{\set}{\{}{\}}
\DeclarePairedDelimiter{\ceil}{\lceil}{\rceil}
\DeclarePairedDelimiter{\floor}{\lfloor}{\rfloor}
\DeclarePairedDelimiterX{\inner}[2]{\langle}{\rangle}{#1,#2}   %

\newcommand{\Mm}{\mathfrak{M}}                    %
\newcommand{\cR}{\mathcal{R}}                     %
\DeclareMathOperator{\TV}{TV}                     %
\DeclareMathOperator{\Bern}{Bern}                 %
\DeclareMathOperator{\clip}{clip}                 %
\DeclareMathOperator{\Law}{Law}                   %
\newcommand{\Fmono}{\mathcal{F}_{\uparrow}}       %
\newcommand{\Dcl}{\mathcal{D}}                    %
\newcommand{\Lmu}{L_2(\mu)}                       %
\newcommand{\nmu}[1]{\norm{#1}_{L_2(\mu)}}        %
\newcommand{\gh}{\widehat{g}}                     %
\newcommand{\dhat}{\widehat{d}}                   %
\newcommand{\fhat}{\widehat{f}}                   %
\newcommand{\gbar}{\bar{g}}                       %
\newcommand{\etabar}{\bar{\eta}}                  %
\newcommand{\Nbr}{N_{[\,]}}                       %
\newcommand{\cG}{\mathcal{G}}                     %
\newcommand{\Gmono}{\mathcal{G}_{\mathrm{mono}}}  %
\newcommand{\Gconst}{\mathcal{G}_{\mathrm{const}}}%
\newcommand{\Reg}{\mathrm{Reg}}                   %
\newcommand{\cU}{\mathcal{U}}                     %
\newcommand{\cH}{\mathcal{H}}                     %
\newcommand{\Gcat}{\mathrm{Cat}}                  %
\newcommand{\bstar}{\beta_{\star}}                %
\DeclareMathOperator{\GM}{GM}                     %
\DeclareMathOperator{\Mah}{Mah}                   %
\newcommand{\Ttil}{\widetilde{T}}                 %

%% file: theorems.tex
\usepackage{aliascnt}
\newcommand{\newsharedtheorem}[2]{%
  \newaliascnt{#1}{theorem}%
  \newtheorem{#1}[#1]{#2}%
  \aliascntresetthe{#1}}

\theoremstyle{plain}

\newsharedtheorem{proposition}{Proposition}
\newsharedtheorem{lemma}{Lemma}
\newsharedtheorem{corollary}{Corollary}
\newsharedtheorem{conjecture}{Conjecture}
\newsharedtheorem{fact}{Fact}

\theoremstyle{definition}
\newsharedtheorem{definition}{Definition}
\newsharedtheorem{assumption}{Assumption}
\newsharedtheorem{example}{Example}
\newsharedtheorem{problem}{Problem}

\theoremstyle{remark}
\newsharedtheorem{remark}{Remark}
\newsharedtheorem{claim}{Claim}

%% file: crefnames.tex
\crefname{theorem}{Theorem}{Theorems}          \Crefname{theorem}{Theorem}{Theorems}
\crefname{proposition}{Proposition}{Propositions}
\Crefname{proposition}{Proposition}{Propositions}
\crefname{lemma}{Lemma}{Lemmas}                \Crefname{lemma}{Lemma}{Lemmas}
\crefname{corollary}{Corollary}{Corollaries}   \Crefname{corollary}{Corollary}{Corollaries}
\crefname{conjecture}{Conjecture}{Conjectures} \Crefname{conjecture}{Conjecture}{Conjectures}
\crefname{fact}{Fact}{Facts}                   \Crefname{fact}{Fact}{Facts}
\crefname{definition}{Definition}{Definitions} \Crefname{definition}{Definition}{Definitions}
\crefname{assumption}{Assumption}{Assumptions} \Crefname{assumption}{Assumption}{Assumptions}
\crefname{example}{Example}{Examples}          \Crefname{example}{Example}{Examples}
\crefname{problem}{Problem}{Problems}          \Crefname{problem}{Problem}{Problems}
\crefname{remark}{Remark}{Remarks}             \Crefname{remark}{Remark}{Remarks}
\crefname{claim}{Claim}{Claims}                \Crefname{claim}{Claim}{Claims}
\crefname{algorithm}{Algorithm}{Algorithms}    \Crefname{algorithm}{Algorithm}{Algorithms}

%% file: sections/0_abstract.tex
Many related prediction problems, which we call questions, come with few labels each. What
information can be shared across them, and what is actually necessary for good decisions? We answer
both for recalibration, motivated by decision models such as Jev~\citep{typesafe2026jev,typesafe2026docs}
that return a probability for any yes/no question a developer writes. A predictor reports a probability $P$ with output law $\mu$ on $m$ questions,
each with $n$ labels and a nondecreasing map $g_j(p)=\P(Y=1\mid P=p)$; the compound risk is the
excess Brier score over the true maps, averaged over questions. Over maps within sup-distance
$\tau\le1$ and total variation $V\le2$ of one nondecreasing shape (fixed effects), the minimax
compound risk is $(mn)^{-2/3}+\min\{\tau^2,1/n\}+\min\{\tau^2,V^2,(V/n)^{2/3}\}$ up to universal
constants, for every non-atomic $\mu$. A pooled isotonic fit with a per-question correction sized
once, on held-out labels of all questions, attains it without knowing $\tau$, $V$ or $\mu$ ($n\ge2$,
$m\ge3$); tuning each question on its own labels loses a factor of order $\min\{n,\log(m^2/n)\}$ when
the maps differ by constants and $m^2/n$ is large. When the maps are drawn from an unknown prior
(random effects), other questions need not identify the Bayes rule. The loss, the
identification regret, is at most $2^{-n}$ once the law of a question's data is known, extending a bound
of \citet{skibinsky1968minimax} for coins, and in the worst case at least $\bstar^{-n}/(72\sqrt n)$ at
every finite $m$, where $\bstar\approx3.583$.
Beyond a few labels per question, what cannot be learned is nearly unnecessary. For constant maps (an
uninformative predictor) and constant-map priors, the exponential base is exactly $\bstar$, not the
$4$ suggested by the moment range; for monotone maps it lies in $[2,\bstar]$.

%% file: figures/tikz_hero.tex
\begin{tikzpicture}[font=\footnotesize]
  \node[font=\small\bfseries,anchor=south] at (3.00,3.85) {(a) What can be shared};
  \fill[black!10] (0.000,1.128) -- (0.100,1.139) -- (0.200,1.152) -- (0.300,1.167) -- (0.400,1.182) -- (0.500,1.199) -- (0.600,1.218) -- (0.700,1.238) -- (0.800,1.259) -- (0.900,1.283) -- (1.000,1.308) -- (1.100,1.336) -- (1.200,1.365) -- (1.300,1.397) -- (1.400,1.431) -- (1.500,1.468) -- (1.600,1.506) -- (1.700,1.548) -- (1.800,1.591) -- (1.900,1.637) -- (2.000,1.686) -- (2.100,1.736) -- (2.200,1.789) -- (2.300,1.844) -- (2.400,1.901) -- (2.500,1.959) -- (2.600,2.019) -- (2.700,2.080) -- (2.800,2.142) -- (2.900,2.205) -- (3.000,2.268) -- (3.100,2.331) -- (3.200,2.394) -- (3.300,2.456) -- (3.400,2.517) -- (3.500,2.577) -- (3.600,2.635) -- (3.700,2.692) -- (3.800,2.747) -- (3.900,2.800) -- (4.000,2.850) -- (4.100,2.899) -- (4.200,2.945) -- (4.300,2.988) -- (4.400,3.030) -- (4.500,3.068) -- (4.600,3.105) -- (4.700,3.139) -- (4.800,3.171) -- (4.900,3.200) -- (5.000,3.228) -- (5.100,3.253) -- (5.200,3.277) -- (5.300,3.298) -- (5.400,3.318) -- (5.500,3.337) -- (5.600,3.354) -- (5.700,3.369) -- (5.800,3.384) -- (5.900,3.397) -- (6.000,3.408) -- (6.000,2.472) -- (5.900,2.461) -- (5.800,2.448) -- (5.700,2.433) -- (5.600,2.418) -- (5.500,2.401) -- (5.400,2.382) -- (5.300,2.362) -- (5.200,2.341) -- (5.100,2.317) -- (5.000,2.292) -- (4.900,2.264) -- (4.800,2.235) -- (4.700,2.203) -- (4.600,2.169) -- (4.500,2.132) -- (4.400,2.094) -- (4.300,2.052) -- (4.200,2.009) -- (4.100,1.963) -- (4.000,1.914) -- (3.900,1.864) -- (3.800,1.811) -- (3.700,1.756) -- (3.600,1.699) -- (3.500,1.641) -- (3.400,1.581) -- (3.300,1.520) -- (3.200,1.458) -- (3.100,1.395) -- (3.000,1.332) -- (2.900,1.269) -- (2.800,1.206) -- (2.700,1.144) -- (2.600,1.083) -- (2.500,1.023) -- (2.400,0.965) -- (2.300,0.908) -- (2.200,0.853) -- (2.100,0.800) -- (2.000,0.750) -- (1.900,0.701) -- (1.800,0.655) -- (1.700,0.612) -- (1.600,0.570) -- (1.500,0.532) -- (1.400,0.495) -- (1.300,0.461) -- (1.200,0.429) -- (1.100,0.400) -- (1.000,0.372) -- (0.900,0.347) -- (0.800,0.323) -- (0.700,0.302) -- (0.600,0.282) -- (0.500,0.263) -- (0.400,0.246) -- (0.300,0.231) -- (0.200,0.216) -- (0.100,0.203) -- (0.000,0.192) -- cycle;
  \draw[->] (0,0) -- (6.35,0) node[right] {$p$};
  \draw[->] (0,0) -- (0,3.85);
  \draw (0,3.60) -- (-0.08,3.60) node[left] {$1$};
  \node[left] at (0,0) {$0$};
  \draw[black!40] (0.000,0.249) -- (0.100,0.264) -- (0.200,0.280) -- (0.300,0.297) -- (0.400,0.315) -- (0.500,0.335) -- (0.600,0.356) -- (0.700,0.379) -- (0.800,0.404) -- (0.900,0.430) -- (1.000,0.459) -- (1.100,0.489) -- (1.200,0.522) -- (1.300,0.556) -- (1.400,0.593) -- (1.500,0.633) -- (1.600,0.674) -- (1.700,0.718) -- (1.800,0.765) -- (1.900,0.814) -- (2.000,0.865) -- (2.100,0.918) -- (2.200,0.974) -- (2.300,1.032) -- (2.400,1.092) -- (2.500,1.153) -- (2.600,1.216) -- (2.700,1.280) -- (2.800,1.345) -- (2.900,1.410) -- (3.000,1.476) -- (3.100,1.542) -- (3.200,1.607) -- (3.300,1.672) -- (3.400,1.736) -- (3.500,1.799) -- (3.600,1.860) -- (3.700,1.920) -- (3.800,1.978) -- (3.900,2.034) -- (4.000,2.087) -- (4.100,2.138) -- (4.200,2.187) -- (4.300,2.234) -- (4.400,2.278) -- (4.500,2.319) -- (4.600,2.359) -- (4.700,2.396) -- (4.800,2.430) -- (4.900,2.463) -- (5.000,2.493) -- (5.100,2.522) -- (5.200,2.548) -- (5.300,2.573) -- (5.400,2.596) -- (5.500,2.617) -- (5.600,2.637) -- (5.700,2.655) -- (5.800,2.672) -- (5.900,2.688) -- (6.000,2.703);
  \draw[black!40] (0.000,0.627) -- (0.100,0.635) -- (0.200,0.643) -- (0.300,0.653) -- (0.400,0.665) -- (0.500,0.677) -- (0.600,0.691) -- (0.700,0.707) -- (0.800,0.724) -- (0.900,0.744) -- (1.000,0.765) -- (1.100,0.788) -- (1.200,0.813) -- (1.300,0.841) -- (1.400,0.870) -- (1.500,0.903) -- (1.600,0.937) -- (1.700,0.974) -- (1.800,1.013) -- (1.900,1.055) -- (2.000,1.099) -- (2.100,1.145) -- (2.200,1.194) -- (2.300,1.244) -- (2.400,1.297) -- (2.500,1.351) -- (2.600,1.407) -- (2.700,1.463) -- (2.800,1.521) -- (2.900,1.579) -- (3.000,1.638) -- (3.100,1.697) -- (3.200,1.755) -- (3.300,1.813) -- (3.400,1.869) -- (3.500,1.925) -- (3.600,1.979) -- (3.700,2.032) -- (3.800,2.082) -- (3.900,2.131) -- (4.000,2.177) -- (4.100,2.221) -- (4.200,2.263) -- (4.300,2.302) -- (4.400,2.339) -- (4.500,2.373) -- (4.600,2.406) -- (4.700,2.435) -- (4.800,2.463) -- (4.900,2.488) -- (5.000,2.511) -- (5.100,2.532) -- (5.200,2.552) -- (5.300,2.569) -- (5.400,2.585) -- (5.500,2.599) -- (5.600,2.611) -- (5.700,2.623) -- (5.800,2.633) -- (5.900,2.641) -- (6.000,2.649);
  \draw[black!40] (0.000,1.002) -- (0.100,1.011) -- (0.200,1.022) -- (0.300,1.033) -- (0.400,1.047) -- (0.500,1.061) -- (0.600,1.077) -- (0.700,1.095) -- (0.800,1.114) -- (0.900,1.135) -- (1.000,1.158) -- (1.100,1.183) -- (1.200,1.211) -- (1.300,1.240) -- (1.400,1.272) -- (1.500,1.306) -- (1.600,1.342) -- (1.700,1.381) -- (1.800,1.422) -- (1.900,1.466) -- (2.000,1.512) -- (2.100,1.560) -- (2.200,1.610) -- (2.300,1.663) -- (2.400,1.717) -- (2.500,1.773) -- (2.600,1.831) -- (2.700,1.890) -- (2.800,1.949) -- (2.900,2.009) -- (3.000,2.070) -- (3.100,2.131) -- (3.200,2.191) -- (3.300,2.250) -- (3.400,2.309) -- (3.500,2.367) -- (3.600,2.423) -- (3.700,2.477) -- (3.800,2.530) -- (3.900,2.580) -- (4.000,2.628) -- (4.100,2.674) -- (4.200,2.718) -- (4.300,2.759) -- (4.400,2.798) -- (4.500,2.834) -- (4.600,2.868) -- (4.700,2.900) -- (4.800,2.929) -- (4.900,2.957) -- (5.000,2.982) -- (5.100,3.005) -- (5.200,3.026) -- (5.300,3.045) -- (5.400,3.063) -- (5.500,3.079) -- (5.600,3.093) -- (5.700,3.107) -- (5.800,3.118) -- (5.900,3.129) -- (6.000,3.138);
  \draw[black!40] (0.000,1.009) -- (0.100,1.022) -- (0.200,1.036) -- (0.300,1.051) -- (0.400,1.067) -- (0.500,1.085) -- (0.600,1.105) -- (0.700,1.126) -- (0.800,1.148) -- (0.900,1.173) -- (1.000,1.199) -- (1.100,1.228) -- (1.200,1.258) -- (1.300,1.291) -- (1.400,1.326) -- (1.500,1.363) -- (1.600,1.403) -- (1.700,1.445) -- (1.800,1.490) -- (1.900,1.537) -- (2.000,1.586) -- (2.100,1.638) -- (2.200,1.692) -- (2.300,1.747) -- (2.400,1.805) -- (2.500,1.865) -- (2.600,1.925) -- (2.700,1.988) -- (2.800,2.050) -- (2.900,2.114) -- (3.000,2.178) -- (3.100,2.242) -- (3.200,2.306) -- (3.300,2.368) -- (3.400,2.431) -- (3.500,2.491) -- (3.600,2.551) -- (3.700,2.609) -- (3.800,2.664) -- (3.900,2.718) -- (4.000,2.770) -- (4.100,2.819) -- (4.200,2.866) -- (4.300,2.911) -- (4.400,2.953) -- (4.500,2.993) -- (4.600,3.030) -- (4.700,3.065) -- (4.800,3.098) -- (4.900,3.128) -- (5.000,3.157) -- (5.100,3.183) -- (5.200,3.208) -- (5.300,3.230) -- (5.400,3.251) -- (5.500,3.271) -- (5.600,3.289) -- (5.700,3.305) -- (5.800,3.320) -- (5.900,3.334) -- (6.000,3.347);
  \draw[very thick] (0.000,0.660) -- (0.100,0.671) -- (0.200,0.684) -- (0.300,0.699) -- (0.400,0.714) -- (0.500,0.731) -- (0.600,0.750) -- (0.700,0.770) -- (0.800,0.791) -- (0.900,0.815) -- (1.000,0.840) -- (1.100,0.868) -- (1.200,0.897) -- (1.300,0.929) -- (1.400,0.963) -- (1.500,1.000) -- (1.600,1.038) -- (1.700,1.080) -- (1.800,1.123) -- (1.900,1.169) -- (2.000,1.218) -- (2.100,1.268) -- (2.200,1.321) -- (2.300,1.376) -- (2.400,1.433) -- (2.500,1.491) -- (2.600,1.551) -- (2.700,1.612) -- (2.800,1.674) -- (2.900,1.737) -- (3.000,1.800) -- (3.100,1.863) -- (3.200,1.926) -- (3.300,1.988) -- (3.400,2.049) -- (3.500,2.109) -- (3.600,2.167) -- (3.700,2.224) -- (3.800,2.279) -- (3.900,2.332) -- (4.000,2.382) -- (4.100,2.431) -- (4.200,2.477) -- (4.300,2.520) -- (4.400,2.562) -- (4.500,2.600) -- (4.600,2.637) -- (4.700,2.671) -- (4.800,2.703) -- (4.900,2.732) -- (5.000,2.760) -- (5.100,2.785) -- (5.200,2.809) -- (5.300,2.830) -- (5.400,2.850) -- (5.500,2.869) -- (5.600,2.886) -- (5.700,2.901) -- (5.800,2.916) -- (5.900,2.929) -- (6.000,2.940);
  \draw[thick,densely dashed] (0.000,0.652) -- (0.100,0.668) -- (0.200,0.685) -- (0.300,0.703) -- (0.400,0.722) -- (0.500,0.743) -- (0.600,0.765) -- (0.700,0.789) -- (0.800,0.815) -- (0.900,0.842) -- (1.000,0.872) -- (1.100,0.903) -- (1.200,0.936) -- (1.300,0.972) -- (1.400,1.010) -- (1.500,1.050) -- (1.600,1.093) -- (1.700,1.138) -- (1.800,1.185) -- (1.900,1.235) -- (2.000,1.287) -- (2.100,1.342) -- (2.200,1.399) -- (2.300,1.457) -- (2.400,1.518) -- (2.500,1.580) -- (2.600,1.644) -- (2.700,1.709) -- (2.800,1.775) -- (2.900,1.841) -- (3.000,1.908) -- (3.100,1.975) -- (3.200,2.041) -- (3.300,2.107) -- (3.400,2.172) -- (3.500,2.236) -- (3.600,2.298) -- (3.700,2.359) -- (3.800,2.417) -- (3.900,2.474) -- (4.000,2.529) -- (4.100,2.581) -- (4.200,2.631) -- (4.300,2.678) -- (4.400,2.723) -- (4.500,2.766) -- (4.600,2.806) -- (4.700,2.844) -- (4.800,2.880) -- (4.900,2.913) -- (5.000,2.944) -- (5.100,2.974) -- (5.200,3.001) -- (5.300,3.027) -- (5.400,3.051) -- (5.500,3.073) -- (5.600,3.094) -- (5.700,3.113) -- (5.800,3.131) -- (5.900,3.148) -- (6.000,3.164);
  \fill (0.480,0.000) circle (1.5pt);
  \fill (1.320,0.000) circle (1.5pt);
  \fill (2.220,3.600) circle (1.5pt);
  \fill (3.000,0.000) circle (1.5pt);
  \fill (3.780,3.600) circle (1.5pt);
  \fill (4.680,3.600) circle (1.5pt);
  \fill (5.580,3.600) circle (1.5pt);
  \node[anchor=west] at (3.60,1.63) {shared shape $g_0$};
  \draw[thin] (3.60,1.66) -- (3.18,1.91);
  \node[anchor=east,align=right] at (2.16,2.81) {question $j$\\ (dashed)};
  \node[anchor=west,align=left,font=\scriptsize] at (0.12,1.44) {$\pm\tau$ band};
  \node[anchor=north] at (3.00,-0.18) {$\displaystyle\Phi=\underbrace{(mn)^{-2/3}}_{\text{all $mn$ labels}}+\underbrace{\min\{\tau^2,\tfrac1n\}+\min\{\tau^2,V^2,(V/n)^{2/3}\}}_{\text{each question, its own $n$ labels}}$};
  \begin{scope}[xshift=8.30cm]
  \node[font=\small\bfseries,anchor=south] at (3.00,3.85) {(b) What is necessary for decisions};
  \fill[black!10] (0.000,3.492) -- (0.400,3.383) -- (0.800,3.275) -- (1.200,3.167) -- (1.600,3.058) -- (2.000,2.950) -- (2.400,2.841) -- (2.800,2.733) -- (3.200,2.625) -- (3.600,2.516) -- (4.000,2.408) -- (4.400,2.300) -- (4.800,2.191) -- (5.200,2.083) -- (5.600,1.974) -- (6.000,1.866) -- (6.000,0.114) -- (5.600,0.326) -- (5.200,0.538) -- (4.800,0.749) -- (4.400,0.959) -- (4.000,1.169) -- (3.600,1.379) -- (3.200,1.588) -- (2.800,1.795) -- (2.400,2.002) -- (2.000,2.207) -- (1.600,2.410) -- (1.200,2.611) -- (0.800,2.807) -- (0.400,2.995) -- (0.000,3.167) -- cycle;
  \draw[->] (0,0) -- (6.35,0) node[below] {$n$};
  \draw[->] (0,0) -- (0,3.85);
  \draw (0,3.600) -- (-0.08,3.600) node[left] {$1$};
  \draw (0,1.800) -- (-0.08,1.800) node[left] {$10^{-5}$};
  \draw (0,0.000) -- (-0.08,0.000) node[left] {$10^{-10}$};
  \draw (0.000,0) -- (0.000,-0.08) node[below] {$1$};
  \draw (1.200,0) -- (1.200,-0.08) node[below] {$4$};
  \draw (2.800,0) -- (2.800,-0.08) node[below] {$8$};
  \draw (4.400,0) -- (4.400,-0.08) node[below] {$12$};
  \draw (6.000,0) -- (6.000,-0.08) node[below] {$16$};
  \node[rotate=90,anchor=south] at (-1.05,1.80) {excess risk};
  \draw[densely dotted,thick] (0.000,3.167) -- (0.400,3.103) -- (0.800,3.058) -- (1.200,3.023) -- (1.600,2.995) -- (2.000,2.971) -- (2.400,2.950) -- (2.800,2.931) -- (3.200,2.915) -- (3.600,2.900) -- (4.000,2.886) -- (4.400,2.874) -- (4.800,2.862) -- (5.200,2.851) -- (5.600,2.841) -- (6.000,2.832);
  \draw[very thick] (0.000,3.492) -- (0.400,3.383) -- (0.800,3.275) -- (1.200,3.167) -- (1.600,3.058) -- (2.000,2.950) -- (2.400,2.841) -- (2.800,2.733) -- (3.200,2.625) -- (3.600,2.516) -- (4.000,2.408) -- (4.400,2.300) -- (4.800,2.191) -- (5.200,2.083) -- (5.600,1.974) -- (6.000,1.866);
  \draw[thick,densely dashed] (0.000,3.400) -- (0.400,3.201) -- (0.800,3.001) -- (1.200,2.802) -- (1.600,2.602) -- (2.000,2.403) -- (2.400,2.203) -- (2.800,2.004) -- (3.200,1.804) -- (3.600,1.605) -- (4.000,1.405) -- (4.400,1.206) -- (4.800,1.006) -- (5.200,0.806) -- (5.600,0.607) -- (6.000,0.407);
  \draw[thin] (0.000,3.167) -- (0.400,2.995) -- (0.800,2.807) -- (1.200,2.611) -- (1.600,2.410) -- (2.000,2.207) -- (2.400,2.002) -- (2.800,1.795) -- (3.200,1.588) -- (3.600,1.379) -- (4.000,1.169) -- (4.400,0.959) -- (4.800,0.749) -- (5.200,0.538) -- (5.600,0.326) -- (6.000,0.114);
  \node[anchor=west,align=left] at (6.050,2.832) {Bayes risk\\(arcsine)};
  \node[anchor=west] at (6.050,1.866) {$2^{-n}$};
  \node[anchor=west,align=left] at (6.050,0.487) {$\bstar^{-n}$\\(constant maps)};
  \node[anchor=north east] at (4.000,1.129) {lower bound};
  \end{scope}
\end{tikzpicture}

%% file: sections/1_intro.tex
\section{Introduction}
\label{sec:intro}

Suppose we face many related statistical problems, which we call \emph{questions}, and observe each
of them only a few times. Having $m$ questions with $n$ observations each is not the same as having
$mn$ observations of one: information can be shared across questions only to the extent that their
structure is shared. How much can then be learned collectively---what does the heterogeneity between
the questions cost each of them? And what remains impossible to learn about an individual question,
however many others are observed? In short: what information can be shared across questions, and
what information is actually necessary for good decisions? For one nonparametric problem, the
recalibration of probabilities, we answer both halves: with \emph{fixed effects}, where the questions
are arbitrary members of a class of heterogeneous problems and we take the worst case, and with
\emph{random effects}, where they are exchangeable draws from an unknown prior.

The first answer is a minimax rate, exact up to universal constants: the shared shape is learned from
all $mn$ labels, and each question pays for its own deviation at the rate of its own $n$ labels. The
second answer concerns exchangeable questions, where the target is the Bayes risk. Other questions
reveal the law of one question's data, but in general not the Bayes rule. The part they cannot reveal
costs at most $2^{-n}$ when that law is known, and at least $\bstar^{-n}/(72\sqrt n)$ in the
worst case, where $\bstar\approx3.583$ (\cref{fig:hero}). Beyond a few labels per question, what
cannot be learned is nearly unnecessary. The setting is that of compound decision
theory~\citep{robbins1951asymptotically} and empirical
Bayes~\citep{robbins1956empirical,zhang2003compound}, and of learning many related
tasks~\citep{baxter2000model}.

\paragraph{Recalibration with few labels per question.}
A predictor reports a probability $P\in[0,1]$ for a binary event and is used on many questions:
sub-populations or slices of a dataset~\citep{li2024fewshot}, raters~\citep{raj2026pebs},
groups~\citep{ji2020trust}, or yes/no questions that developers write for decision models such as Jev
and Laya~\citep{typesafe2026jev,typesafe2026docs,laya2026}, systems that return a probability for each
such question. Calibration of the predictor, if it holds, is a property of the population of
questions and says nothing about a particular question. For question $j$ the relevant object is its
\emph{recalibration map} $g_j(p)=\P(Y=1\mid P=p,\ \text{question }j)$. It is the identity exactly when
the predictor is calibrated on question $j$, and we assume it nondecreasing: the predictor ranks
cases correctly within a question even when its numbers are off. The map is observed through $n$
pairs $(P_{jk},Y_{jk})$ with $P_{jk}\sim\mu$, the \emph{output law}, shared by all questions
(\cref{def:model}). An estimator sees all $mn$ pairs and returns maps $\gh_1,\dots,\gh_m$; its compound
risk, the average over $j$ of $\E\,\nmu{\gh_j-g_j}^2$, is the average excess Brier score over ideal
per-question recalibration. Heterogeneity is measured against a \emph{shared shape} $g_0$, a
nondecreasing map: the deviations $g_j-g_0$ have sup-norm at most $\tau$ and total variation at most
$V$ (the class $\Theta(\tau,V)$ of \cref{def:class}). We write $\Mm_{m,n}(\tau,V;\mu)$ for the minimax
compound risk over $\Theta(\tau,V)$.

\paragraph{What can be shared: the shape, and the size of the deviations.}
Our first answer is the minimax rate (\cref{thm:main-rate}): for every non-atomic $\mu$, all
$m,n\ge1$, $\tau\in[0,1]$ and $V\in[0,2]$, with constants independent of $\mu,m,n,\tau,V$,
\begin{equation}\label{eq:intro-rate}
  \Mm_{m,n}(\tau,V;\mu)\;\asymp\;\Phi(m,n,\tau,V)=(mn)^{-2/3}
    +\min\Bigl\{\tau^2,\frac1n\Bigr\}
    +\min\Bigl\{\tau^2,V^2,\Bigl(\frac Vn\Bigr)^{2/3}\Bigr\}.
\end{equation}
All questions share the cost of the shape. Each question either learns its offset from its $n$
labels, at price $1/n$, or ignores it, at price $\tau^2$; likewise it learns the rest of its
deviation, its wiggle, at price $(V/n)^{2/3}$, or ignores it, at price $\min\{\tau^2,V^2\}$. The shape
is what pooling buys, and heterogeneity is paid question by question. With $m=300$ questions and
$n=16$ labels each (orders of magnitude, constants ignored), a question alone resolves its map only at
$n^{-2/3}\approx0.16$ (the rate for $m=1$, non-atomic $\mu$), the pool resolves the shared shape at
$(mn)^{-2/3}\approx0.0035$, and each offset costs at most $1/n\approx0.06$. In the \emph{shift model}, where every deviation $g_j-g_0$ is a
constant ($V=0$), the rate is $(mn)^{-2/3}+\min\{\tau^2,1/n\}$, parametric in $n$ even when $\tau=1$:
heterogeneity is paid in variation, not in distance.

One procedure attains the rate without knowing $\tau$, $V$ or $\mu$ (\cref{thm:adaptive}): a pooled
isotonic fit with a bounded-variation correction, whose two size parameters are chosen once, for all
questions, on held-out labels (the \emph{compound hold-out}; plain isotonic fits when $n=1$ or
$m\le2$). That choice shares one more piece of information across questions: the \emph{size} of the
deviations. This sharing is necessary once $n$ and $m^2/n$ are large. In the shift model with
$m^2\ge216n$, even the best rule that tunes each question on its own data, and knows the shared shape, has
worst-case ratio to the rate of order $\min\{n,\log(m^2/n)\}$ (\cref{prop:separation}), while the
compound hold-out keeps it bounded.

\paragraph{What is necessary for decisions: an identification limit.}
Now let the maps $g_1,g_2,\dots$ be drawn independently from a prior $G$ on nondecreasing maps. The
natural target is then the Bayes risk $R^*_n(G)$ of the posterior mean, which can be far below the
minimax rate. If every map is the constant $1/4$ or $3/4$ with probability $1/2$, predicting $3/4$ when
at least half of a question's labels are positive and $1/4$ otherwise errs by $1/2$ with probability at most
$e^{-n/8}$ (Hoeffding's inequality), so $R^*_n(G)\le e^{-n/8}/4$; yet these maps lie in $\Theta(1/4,0)$,
whose minimax risk is at least a universal constant times $\min\{1/16,1/n\}$ for non-atomic $\mu$
(\cref{thm:main-rate}). Reaching this target needs the Bayes rule, which other questions need not
reveal: with infinitely many questions the law of one question's data is known exactly, but $G$ need not
be, and $n$ labels per question need not determine the Bayes rule. One label and constant maps (coins) already
show this: the priors $\delta_{1/2}$ and $\frac12\delta_0+\frac12\delta_1$ give the same label law,
$\Bern(1/2)$, but the Bayes rule predicts $1/2$ under the first and $Y$ under the second, and every rule
loses at least $1/16$ against one of them~\citep{robbins1964empirical,skibinsky1968minimax}. With $n$
labels the law of a question's data determines the product moments $\E_G[g(t_1)\cdots g(t_r)]$ for
$r\le n$, while the posterior mean needs $r=n+1$; for coins, \citet{skibinsky1968minimax} showed that,
among the priors with given first $n$ moments, the minimax risk exceeds each prior's Bayes risk by at
most $2^{-n}$. Let $\Gmono$ be the class of priors on nondecreasing maps. We measure the price by the
\emph{identification regret} $\Reg_n(G;\Gmono)$, the smallest excess over the Bayes risk that one rule
can guarantee simultaneously against every prior in $\Gmono$ (the second argument) inducing the same
law of the data as $G$. For random monotone maps and every output law (\cref{thm:main-eb}),
\begin{equation}\label{eq:intro-eb}
  \max\Bigl\{\frac{1}{8\binom{2n}{n}},\ \frac{\bstar^{-n}}{72\sqrt n}\Bigr\}
  \;\le\;\sup_{G\in\Gmono}\Reg_n(G;\Gmono)\;\le\;2^{-n},
  \qquad \bstar=2e^{2\Gcat/\pi}\approx3.583,
\end{equation}
where $\Gcat$ is Catalan's constant. The upper bound assumes the law of a question's data known
($m=\infty$); the lower bounds hold for every finite number of questions (\cref{cor:finite-m}).

\paragraph{Contributions (\cref{tab:results}).}
\begin{enumerate}[leftmargin=*,itemsep=1pt]
\item \textbf{The compound minimax rate and compound tuning}
  (\cref{thm:main-rate,thm:adaptive,prop:separation}). The rate~\eqref{eq:intro-rate} is exact up to
  universal constants, with no logarithmic factor, in every regime with $\tau\le1$ and $V\le2$, for every
  non-atomic $\mu$. Because all questions share the output law $\mu$, the pooled isotonic fit tracks the
  \emph{average} of $m$ different maps at the rate of all its labels, and a localized least-squares
  bound for independent, non-identically distributed pairs (\cref{prop:erm2}) removes the logarithmic
  factor that a global entropy bound puts on the $1/n$ offset term.
\item \textbf{The identification regret for monotone maps}
  (\cref{prop:chebradius,thm:main-eb,cor:finite-m}). The regret is the squared Chebyshev radius---the
  radius of the smallest enclosing ball---of the set of unidentified top moments in an explicit weighted
  $L_2$ space: a data-averaged, functional form of posterior-regret $\Gamma$-minimax estimation. The
  bound $2^{-n}$ has a short reason: the unidentified $(n+1)$-th moment ranges over an interval of
  length at most $4^{-n}$ (classical for coins; for monotone maps we prove it with a diagonal-domination
  inequality for sorted vectors), and each of the $2^n$ label patterns contributes at most this length.
  It holds for every output law, for the posterior mean of any monotone prior that reproduces the law of
  the data.
\item \textbf{The exact base for constant maps} (\cref{thm:betastar,cor:skibinsky}). For priors on
  constant maps (an uninformative predictor), with the regret computed among such priors, its supremum
  lies between $\bstar^{-n}/(72\sqrt n)$ and $\bstar^{-n}$: the natural guess, $4^{-n}$ up to polynomial
  factors, is false. The arcsine law attains the largest moment range, $4^{-n}$, which suggests base
  $4$; near-extremal priors instead trade range for an exponentially rare label pattern, which gives the
  smaller base $\bstar$. For monotone maps the base lies in $[2,\bstar]$ (\cref{conj:monotone-base}: it
  is $\bstar$). The lower bound also bounds Skibinsky's minimax-minus-Bayes gap, whose base appears
  numerically to be $2$.
\end{enumerate}

\begin{table}[t]
\centering\footnotesize
\setlength{\tabcolsep}{3pt}
\renewcommand{\arraystretch}{1.1}
\newcolumntype{L}[1]{>{\raggedright\arraybackslash}p{#1}}
\begin{tabular}{@{}L{1.32cm}L{1.95cm}L{1.15cm}L{2.3cm}L{4.85cm}L{2.53cm}@{}}
\toprule
Result & Benchmark & $m$ & Class & Bound & Tight? \\
\midrule
\multicolumn{6}{@{}l}{\emph{What can be shared} (fixed effects)}\\
Thm.~\ref{thm:main-rate} & minimax compound risk & every $m\ge1$
  & $\Theta(\tau,V)$, $\tau\le1$, $V\le2$, non-atomic $\mu$ & $\Mm_{m,n}(\tau,V;\mu)\asymp\Phi(m,n,\tau,V)$ & yes, up to universal constants \\
Prop.~\ref{prop:separation} & worst-case ratio $\kappa_{g_0}$ to $\Phi$ & $m^2\ge216n$
  & per-question rules, shift model $V=0$ & best per-question rule: $\asymp\min\{n,\log(m^2/n)\}$;
  compound hold-out: $O(1)$ (non-atomic $\mu$) & yes, in order \\
\midrule
\multicolumn{6}{@{}l}{\emph{What is necessary for decisions} (random effects)}\\
Thm.~\ref{thm:main-eb} & identification regret $\Reg_n$ & $m=\infty$
  & $\Gmono$, every $\mu$ & $\Reg_n(G;\Gmono)\le2^{-n}$ at every prior $G$, guaranteed by the posterior
  mean of any data-equivalent prior & open: base in $[2,\bstar]$ \\
Cor.~\ref{cor:finite-m} & worst-case excess over $R^*_n$ & every $m\ge1$
  & $\Gmono$, every $\mu$ & $\ge1/(8\tbinom{2n}{n})$ and $\ge\bstar^{-n}/(72\sqrt n)$
  & lower bound only (open problem~2, Sec.~\ref{sec:open}) \\
Thm.~\ref{thm:betastar} & $\Reg_n$ within $\Gconst$ & $m=\infty$
  & $\Gconst$ (constant maps) & $\bstar^{-n}/(72\sqrt n)\le\sup_H\Reg_n\le\bstar^{-n}$
  & exact base $\bstar$; factor $72\sqrt n$ open \\
Cor.~\ref{cor:skibinsky} & Skibinsky's gap $W_n$ (Rem.~\ref{rem:skibinsky}) & $m=\infty$
  & $\Gconst$ & $\bstar^{-n}/(72\sqrt n)\le\sup_HW_n\le2^{-n}$ & no: base in $[2,\bstar]$;
  numerically $2$ (E2) \\
Conj.~\ref{conj:monotone-base} & identification regret $\Reg_n$ & $m=\infty$
  & $\Gmono$, every $\mu$ & $(\sup_G\Reg_n)^{1/n}\to1/\bstar$ & open \\
\bottomrule
\end{tabular}
\caption{The results at a glance. \emph{Fixed effects}: the maps are arbitrary members of the class;
\emph{random effects}: they are independent draws from a prior, and $m=\infty$ means that the law of
one question's data is known. ``In order'': up to universal constants, which are not optimized.
Evidence grades (\cref{sec:numerics}): E1, seeded simulation sweep; E2, global search, values attained
but optimality not certified; E3, exact rational arithmetic.}
\label{tab:results}
\end{table}

\paragraph{Related work.}
Risk bounds for isotonic least squares~\citep{zhang2002risk,chatterjee2015risk,guntuboyina2018shape}
hold in the empirical norm; we prove population-norm bounds for independent, non-identically
distributed pairs by bracketing chaining~\citep{vandervaart1996weak,vandegeer2000empirical}. The rate~\eqref{eq:intro-rate} follows the
shared-plus-local template of multitask learning~\citep{baxter2000model}. Its closest parametric
instance is the adaptive and robust estimator of~\citet{duan2023adaptive}, with compound risk of order
$d/(mn)+\min\{\delta^2,d/n\}$ up to logarithmic factors for $d$-dimensional parameters within distance
$\delta$ of a common centre; \cref{thm:main-rate} is the shape-constrained, Bernoulli analogue, with
universal constants and no logarithmic factor. Limits of adaptation in multitask
learning~\citep{hanneke2022nofreelunch,hanneke2026limits} concern unknown task relatedness, while
\cref{prop:separation} prices tuning each question on its own data in a known class where one compound
hold-out adapts.

Our identification regret differs from the empirical-Bayes regret of~\citet{polyanskiy2021sharp}, the
excess risk of a data-driven procedure over the oracle that knows the prior: ours is measured against a
Bayes risk that the data cannot identify~\citep{skibinsky1989admissible}, so it is a floor, not the
estimation cost of a procedure. It is the posterior regret of $\Gamma$-minimax
estimation~\citep{riosinsua1995posterior}, with $\Gamma$ the priors that induce the same law of the
data, averaged over the data before the supremum over $\Gamma$ is taken. For one real estimand the
minimax action is the midpoint of the range of posterior means, and \cref{prop:chebradius} is the
functional version of this rule. Methods for recalibration with few labels per unit
exist~\citep{li2024fewshot,ji2020trust,raj2026pebs}, and calibration across groups is a large
literature~\citep{zhao2021calibrating,gopalan2024omnipredictors}; we ask what is achievable in the
minimax sense. \Cref{sec:related} gives a more detailed account.

\paragraph{Scope and organization.}
The risk is the excess Brier risk. It upper-bounds the $L_2$ calibration error, so our upper bounds
transfer to calibration error and our lower bounds do not (\cref{app:calibration}). We assume
monotone maps and one output law shared by all questions; \cref{thm:main-rate} also needs that law to
be non-atomic, while the empirical-Bayes results hold for every $\mu$. The empirical-Bayes upper bound
is a population statement, and a finite-$m$ procedure is open; \cref{app:scope} lists all limitations.
\Cref{sec:prelim} sets up the model and the risk. \Cref{sec:results} gives the minimax rate, the
procedure that attains it adaptively and the separation from per-question tuning, with proof outlines.
\Cref{sec:identification} gives the identification regret: its bounds, its representation as a
Chebyshev radius, the confinement for monotone maps, the exact base for constant maps, and its relation
to Skibinsky's gap. \Cref{sec:numerics} illustrates the results numerically, and \cref{sec:discussion}
compares the two benchmarks, draws practical conclusions and lists open problems. \Cref{app:supp} gives
the extended related work and the limitations, and \cref{app:rate,app:eb} the proofs.

%% file: sections/2_prelim.tex
\section{Setting}
\label{sec:prelim}

Throughout, $\mu$ is a Borel probability measure on $[0,1]$, $[n]\defeq\set{1,\dots,n}$, and
$\nmu{f}^2=\int_0^1 f^2\,d\mu$.

\subsection{Many questions, few labels}
\label{sec:model}

\begin{definition}[Model]\label{def:model}
Let $m,n\ge 1$. Question $j\in[m]$ carries a nondecreasing \emph{recalibration map}
$g_j:[0,1]\to[0,1]$ and $n$ labelled pairs $(P_{jk},Y_{jk})$, $k\in[n]$, with $P_{jk}\sim\mu$ and,
conditionally on $P_{jk}$, $Y_{jk}\sim\Bern(g_j(P_{jk}))$. All $mn$ pairs are mutually independent,
within and across questions.
\end{definition}

The output law $\mu$ is shared by all questions; only the maps differ. The upper bound uses this: with a
common $\mu$ the pooled fit tracks the plain average of the maps (\cref{app:model-remarks}). We write $\Fmono$ for the set of
nondecreasing maps $[0,1]\to[0,1]$. For $d:[0,1]\to\R$ the \emph{variation} $\TV(d)$ is the supremum of
$\sum_{l=1}^{L}\abs{d(x_l)-d(x_{l-1})}$ over all $L\ge1$ and $0\le x_0<x_1<\dots<x_L\le1$, a seminorm
(constant functions have variation zero).

\begin{definition}[Heterogeneity class]\label{def:class}
For $\tau,V\ge0$ let $\Theta(\tau,V)$ be the set of $m$-tuples $(g_1,\dots,g_m)$ of nondecreasing maps
$[0,1]\to[0,1]$ for which there is one nondecreasing \emph{witness} $g_0:[0,1]\to\R$ with
\[
  \sup_{x\in[0,1]}\abs{g_j(x)-g_0(x)}\le\tau
  \quad\text{and}\quad
  \TV(g_j-g_0)\le V
  \qquad\text{for all } j\in[m].
\]
The \emph{deviation class} is
$\Dcl(\tau,V)\defeq\set{d:[0,1]\to\R \text{ measurable} : \sup\abs{d}\le\tau,\ \TV(d)\le V}$; thus
$(g_j)\in\Theta(\tau,V)$ if and only if $g_j-g_0\in\Dcl(\tau,V)$ for all $j$ and some nondecreasing $g_0$.
\end{definition}

We call $g_0$ the \emph{shared shape}. The parameter $\tau$ measures how far the maps are from it, and
$V$ how much the deviations oscillate. For $\tau=0$ the problem is a single isotonic regression with
$mn$ samples; for $V=0$ every deviation $g_j-g_0$ is a constant of size at most $\tau$: the \emph{shift
model}. Requiring the shared shape to take values in $[0,1]$ gives a smaller class; for it, $\sup\abs{g_j-g_0}\le1$ and
$\TV(g_j-g_0)\le\TV(g_j)+\TV(g_0)\le2$, so the ranges $\tau\in[0,1]$ and $V\in[0,2]$ cover every case
for that class (\cref{app:model-remarks} explains the choice of sup-distance and variation).

\subsection{Risk and the minimax rate}
\label{sec:risk}
\label{sec:minimax}

An \emph{estimator} $\gh=(\gh_1,\dots,\gh_m)$ is a jointly measurable map from the $mn$ pairs (and,
possibly, an independent auxiliary randomization) to $m$ real functions on $[0,1]$; each $\gh_j$ may
use the data of all questions. Its \emph{compound risk} at $g=(g_1,\dots,g_m)$ is
\begin{equation}\label{eq:risk}
  \cR(\gh;g)\defeq\frac1m\sum_{j=1}^m\E\,\nmu{\gh_j-g_j}^2 .
\end{equation}
For a fresh pair $(P,Y)$ of question $j$, independent of the data,
$\E[(\gh_j(P)-Y)^2\mid\text{data}]-\E[(g_j(P)-Y)^2]=\nmu{\gh_j-g_j}^2$, because $\E[Y\mid P]=g_j(P)$
removes the cross term. So $\cR$ is the average excess Brier score of the recalibrated forecasts
$\gh_j(P)$ over the oracle forecasts $g_j(P)$. It is at least their average $L_2$ calibration error,
and it also asks that the recalibrated forecasts keep the resolution of the original model
(\cref{app:calibration}). The \emph{minimax compound risk} is
\begin{equation}\label{eq:minimax}
  \Mm_{m,n}(\tau,V;\mu)\defeq\inf_{\gh}\ \sup_{(g_j)\in\Theta(\tau,V)}\ \cR(\gh;g),
\end{equation}
the infimum taken over all estimators. Its order of magnitude will turn out to be
\begin{align}
  \Phi(m,n,\tau,V)&\defeq(mn)^{-2/3}+\min\set{\tau^2,\tfrac1n}+\min\set{\tau^2,V^2,(V/n)^{2/3}},
  \label{eq:Phi}\\
  \Psi(m,n,\tau,V)&\defeq(mn)^{-2/3}+\min\set{\tau^2,\tfrac1n+(V/n)^{2/3}} ,\label{eq:Psi}
\end{align}
two expressions that agree up to a factor $2$ (\cref{thm:main-rate}): $\Psi$ makes one learn-or-ignore
choice for the whole deviation, $\Phi$ one for each of its two parts (\cref{rem:regimes} uses $\Psi$).
Each term of $\Phi$ prices one object at its own sample size, through the metric entropy of the class
it lives in: the logarithm of the number of $\eps$-balls needed to cover that class. The shared shape is a monotone function, with entropy
of order $1/\eps$, learned from all $mn$ labels at rate $(mn)^{-2/3}$. A question's offset is one
number, learned from $n$ labels at rate $1/n$ or ignored at cost $\tau^2$. Its wiggle lives in a
variation ball, with entropy of order $V/\eps$, learned at rate $(V/n)^{2/3}$ or ignored at cost
$\min\set{\tau^2,V^2}$. The law $\mu$ does not enter $\Phi$, and for non-atomic $\mu$ it cannot enter
the minimax risk either: only the order of the outputs matters (\cref{rem:reparam}).

\subsection{Excess Brier risk and calibration error}
\label{app:calibration}

For fixed data, the forecast $q\defeq\gh_j(P)$ has squared calibration error
$\mathrm{CE}_j\defeq\E[(q-\E[Y\mid q])^2]$. Since $\sigma(q)\subseteq\sigma(P)$, the tower property gives
$\E[Y\mid q]=\E[g_j(P)\mid q]$, the orthogonal projection of $g_j(P)$ onto the $\sigma(q)$-measurable
square-integrable variables, a subspace that contains $q$. A projection does not increase distances,
so
\[
  \mathrm{CE}_j=\E\big[\big(\E[q-g_j(P)\mid q]\big)^2\big]\le\E\big[(q-g_j(P))^2\big]=\nmu{\gh_j-g_j}^2 ,
\]
with equality when $q$ determines $P$ (for instance, when $\gh_j$ is strictly increasing). Every upper
bound on $\cR$ in \cref{sec:results} is therefore an upper bound on the average calibration error. Lower bounds do
\emph{not} transfer. At $\tau=0$ all maps equal one map $g$; let every $\gh_j$ be the constant $\bar Y$, the
mean of all $mn$ labels. For fixed data the forecast is a constant, so $\E[Y\mid q]=\int g\,d\mu$ and
$\mathrm{CE}_j=(\bar Y-\int g\,d\mu)^2$. The $mn$ labels are i.i.d.\ $\Bern(\int g\,d\mu)$, so the expected
calibration error is $\Var(\bar Y)\le1/(4mn)$, far below the lower bound $c\,(mn)^{-2/3}$ of
\cref{thm:main-rate} once $mn$ is large. Calibration alone is cheap; the risk $\cR$ asks in addition that
the recalibrated forecasts keep the resolution of the original model.

\subsection{Remarks on the model}
\label{app:model-remarks}

\paragraph{The shared output law.}
The pooled fit of \cref{sec:rate} tracks the plain average $\gbar$ of the maps, not the shared shape $g_0$,
because every $P_{jk}$ has law $\mu$ (\cref{lem:pooled}); and, because that target is a plain average of
maps in the class, a question's deviation from it stays in the deviation class $\Dcl(2\tau,2V)$ of
\cref{def:class} (\cref{prop:oracle}, Step~2). If question $j$ had its own law $\mu_j$, with density
$p_j$ with respect to a common measure, the population least-squares target of the pooled pairs would
be $\sum_jp_jg_j/\sum_jp_j$, which need not be monotone (take $g_1\equiv0$, $g_2\equiv1$ and a ratio
$p_2/p_1$ that is not monotone), so the pooled isotonic fit would target its monotone projection, not
an average of the maps; this extension is open problem~6.

\paragraph{Why sup-distance and variation.} The two parameters do different jobs. The sup-distance $\tau$
bounds, pointwise and hence for every $\mu$, what a question loses by ignoring its deviation $g_j-g_0$;
measuring it in $\Lmu$ instead would make the class itself depend on the output law. The variation $V$
bounds the metric entropy of the deviations (of order $V/\eps$ at scale $\eps$, besides one offset
direction), and this is what lets $n$ labels estimate a deviation at the rate $(V/n)^{2/3}$. The class is
a nonparametric envelope, and for structured families it is coarse. A per-question temperature and
intercept, $g_j=\operatorname{logit}^{-1}(a_j+b_j\operatorname{logit}g_0)$ with $b_j\ge0$, is a
two-parameter family. Its variation is at most four times its sup-distance $\tau$ (proof below), so for
such a family the variation constraint adds little to the sup constraint, and the class does not
exploit that the family has only two parameters. A warping $g_j=g_0\circ\phi_j$ with $\phi_j$ a
nondecreasing self-map of $[0,1]$ always satisfies $\TV(g_j-g_0)\le\TV(g_j)+\TV(g_0)\le2$ for a
$[0,1]$-valued $g_0$, but its sup-distance can be as large as the largest jump of $g_0$, even for
$\phi_j$ close to the identity. \Cref{thm:main-rate} is the price of assuming none of these structures.

\paragraph{The factor $4$ for a temperature and an intercept.} In the coordinate
$t=\operatorname{logit}g_0(x)\in[-\infty,\infty]$ the deviation is
$\Delta(t)=\operatorname{logit}^{-1}(a+bt)-\operatorname{logit}^{-1}(t)$; since
$x\mapsto\operatorname{logit}g_0(x)$ is nondecreasing, the variation of the deviation in $x$ is the
variation of $\Delta$ along a subset of the extended line, and its sup-distance is the sup of
$\abs\Delta$ there. For $b>0$, $\Delta(\pm\infty)=0$ and, as the derivative of $\operatorname{logit}^{-1}$
at $u$ is $1/(4\cosh^2(u/2))$, $\Delta'(t)=0$ if and only if $\cosh((a+bt)/2)=\sqrt b\cosh(t/2)$; for
$b\ne1$ this is an exponential sum with exponent-ordered coefficient signs $(+,-,-,+)$ or $(-,+,+,-)$, so
it has at most two real zeros (Descartes' rule of signs for exponential sums; for $b=1$ two terms of
opposite sign remain, so at most one zero, unless $a=0$ and $\Delta\equiv0$). So $\Delta$ rises from $0$,
falls and rises back to $0$ (or the mirror image, or is one-signed), and along any subset of the line
its variation is at most $4$ times its sup there (adjoin $\pm\infty$, where $\Delta=0$; this neither
decreases the variation nor changes the sup, and a sequence from $0$ to $0$ with at most two turns has
variation $2(\max-\min)\le4\sup\abs\Delta$). For $b=0$, $\Delta$ is monotone and the ratio is at most
$2$. The constant $4$ is attained on the full line: for $a=0$ and $b\notin\{0,1\}$, $\Delta$ is odd, so
its maximum and minimum have the same size.

\begin{remark}[Only the order of the outputs matters]\label{rem:reparam}
Let $\mu$ be non-atomic, with distribution function $F$ and quantile function
$F^{\leftarrow}(u)\defeq\inf\set{x\in[0,1]:F(x)\ge u}$. Composition with a nondecreasing map preserves
monotonicity and does not increase sup-distances or variations; $F(P)$ is uniform for $P\sim\mu$;
$F^{\leftarrow}(F(P))=P$ almost surely; and $F(F^{\leftarrow}(u))=u$. Hence the substitutions
$(P,Y)\mapsto(F(P),Y)$, $g\mapsto g\circ F^{\leftarrow}$ and, in the other direction,
$(U,Y)\mapsto(F^{\leftarrow}(U),Y)$, $g\mapsto g\circ F$, carry the model, the class $\Theta(\tau,V)$ and
the risk~\eqref{eq:risk} for $\mu$ to those for the uniform law on $[0,1]$ and back (Step~0 of the proof of
\cref{lem:lower} spells out one direction). So $\Mm_{m,n}(\tau,V;\mu)$ is the same for every non-atomic
$\mu$, and constants free of $\mu$ in \cref{thm:main-rate} come at no extra cost. The procedure of
\cref{def:procedure} is built from pool-adjacent-violators and a variation-constrained fit, whose fitted
values depend on the design points only through their order.
\end{remark}

%% file: sections/3_rate.tex
\section{What can be shared: the compound minimax rate}
\label{sec:results}

This section answers the first question, for fixed effects. In one sentence: pooling buys the shared shape at the rate $(mn)^{-2/3}$ of all $mn$ labels, each
question still pays $\min\set{\tau^2,1/n}$ for its offset and the variation cost of its own deviation, and
a single compound hold-out attains this without knowledge of $\tau$, $V$ or $\mu$.
\Cref{sec:procedure} defines the procedure, \cref{sec:rate} states the minimax rate and outlines its
proof, \cref{sec:erm} isolates the least-squares bound that removes the logarithm from the offset term,
\cref{sec:adaptive} shows that the procedure attains the rate without knowing $\tau$, $V$ or $\mu$, and
\cref{sec:separation} shows why its tuning must be shared across questions.

\subsection{The procedure}
\label{sec:procedure}

\begin{definition}[Pooled isotonic fit, two-stage estimator, compound hold-out]\label{def:procedure}
\leavevmode
\begin{enumerate}[label=(\roman*)]
\item \emph{Isotonic fit.} The isotonic fit to a set of pairs is the least-squares fit over $\Fmono$
  taken as the right-continuous step interpolation of the pool-adjacent-violators values at the sorted
  design points, constant below the smallest and above the largest design point; on the null event of
  tied design points, the tied points are pooled into one weighted pool-adjacent-violators block, a rule
  that does not use $\mu$.
\item \emph{Two-stage estimator with radii $(\tau',V')\in(0,1]\times(0,2]$} (for $m\ge2$). For each
  $j$: let $\gh_0^{(-j)}$ be the isotonic fit to the pooled pairs of all questions $i\ne j$; let
  $\dhat_j$ be a non-randomized, jointly measurable least-squares selection over $\Dcl(\tau_2,V_2)$,
  $\tau_2\defeq\min\set{2\tau',1}$, $V_2\defeq\min\set{2V',2}$, fitted to the residuals
  $Y_{jk}-\gh_0^{(-j)}(P_{jk})$, $k\in[n]$, that does not use $\mu$ (for instance the right-continuous step
  interpolation of the unique fitted values at the sorted design points); and let
  $\gh_j\defeq\clip_{[0,1]}\big(\gh_0^{(-j)}+\dhat_j\big)$.
\item \emph{Compound hold-out} (for $m\ge3$, $n\ge2$). Split the pairs of question $j$ into $A_j$ (the
  first $\floor{n/2}$ pairs) and $B_j$ (the rest). Let $K\defeq\ceil{\log_2(mn)}$ and
  $\Lambda\defeq\set{(2^{-a},2^{1-b}):a,b\in\set{0,1,\dots,K}}$. For $\lambda\in\Lambda$ let
  $\gh^{\lambda}=(\gh^\lambda_1,\dots,\gh^\lambda_m)$ be the two-stage estimator with parameters $\lambda$
  computed from the $A$-halves only (its pooled stage uses the $A$-halves of the other questions). Select
  \[
    \widehat\lambda\in\argmin_{\lambda\in\Lambda}\ \sum_{j=1}^m\sum_{k\in B_j}\big(Y_{jk}-\gh^\lambda_j(P_{jk})\big)^2
  \]
  (ties broken by a fixed order) and output $\gh_j^{\widehat\lambda}$, $j\in[m]$.
\item \emph{The adaptive procedure $\gh^{\mathrm{ad}}$.} If $n=1$, every $\gh^{\mathrm{ad}}_j$ is the isotonic
  fit to all $m$ pairs; else, if $m\le2$, $\gh^{\mathrm{ad}}_j$ is the isotonic fit to the $n$ pairs of
  question $j$; else ($m\ge3$, $n\ge2$), $\gh^{\mathrm{ad}}$ is the compound hold-out estimator of (iii).
\end{enumerate}
\end{definition}

The procedure fits the shape on the pooled data and corrects each question on its own labels
(\cref{fig:procedure}). The radii are doubled because the target of the correction, the deviation
$g_j-\gbar^{(-j)}$ of question $j$ from the average map $\gbar^{(-j)}$ of the other questions, lies in
$\Dcl(2\tau,2V)$ when $(g_j)\in\Theta(\tau,V)$; the caps $1$ and $2$ are the largest sup-norm and
variation of a difference of two nondecreasing $[0,1]$-valued maps.

\begin{figure}[htb]
\centering
\resizebox{\linewidth}{!}{\input{figures/tikz_procedure}}
\caption{The compound hold-out procedure of \cref{def:procedure}(iii) ($m\ge3$, $n\ge2$). The shape is
learned from the pooled $A$-halves of the other questions, each question fits its own constrained
correction for every candidate $\lambda$, and a single $\widehat\lambda$, the size of the deviations, is
chosen from the hold-out halves of all questions together. For $n=1$ or $m\le2$ the procedure is a plain
isotonic fit (\cref{def:procedure}(iv)).}
\label{fig:procedure}
\end{figure}

\subsection{The exact compound minimax rate}
\label{sec:rate}

\begin{restatable}[Exact compound minimax rate]{theorem}{ThmRate}\label{thm:main-rate}
There are universal constants $0<c\le C<\infty$ such that for every non-atomic Borel probability measure
$\mu$ on $[0,1]$, all $m,n\ge1$, $\tau\in[0,1]$ and $V\in[0,2]$,
\[
  c\,\Phi(m,n,\tau,V)\ \le\ \Mm_{m,n}(\tau,V;\mu)\ \le\ C\,\Phi(m,n,\tau,V),
\]
whether or not the witness $g_0$ of \cref{def:class} is required to take values in $[0,1]$. The upper
bound is attained, up to the constant $C$, by one non-randomized jointly measurable procedure that uses
only the data and $(m,n)$, and neither $(\tau,V)$ nor $\mu$: a pooled isotonic fit with a
bounded-variation correction, tuned by a compound hold-out, when $m\ge3$ and $n\ge2$; the pooled isotonic
fit when $n=1$; and per-question isotonic fits when $m\le2$ and $n\ge2$ (\cref{def:procedure}). Moreover $\Phi/2\le\Psi\le2\Phi$.
\end{restatable}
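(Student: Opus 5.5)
By \cref{rem:reparam} we may take $\mu$ uniform on $[0,1]$ throughout. The plan has three parts: a lower bound, an upper bound, and the elementary comparison $\Phi/2\le\Psi\le2\Phi$. For the comparison, use $\min\{a,b\}+\min\{a,c\}\le 2\min\{a,b+c\}$ and $\min\{a,b+c\}\le\min\{a,b\}+\min\{a,c\}$ with $a=\tau^2$, $b=1/n$, $c=(V/n)^{2/3}$. Then check that dropping $V^2$ from the third term of $\Phi$ changes it by at most a factor $2$: when $V^2<(V/n)^{2/3}$ we have $V<1/\sqrt n$, so $V^2\le1/n$ and the term is absorbed by $\min\{\tau^2,1/n\}$.

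\textbf{Lower bound.} It suffices to bound $\Mm$ below by each of the three terms of $\Phi$ separately, since the maximum of three terms is at least a third of their sum. All three arguments use Assouad's lemma with Bernoulli perturbations.
\begin{itemize}
\item For $(mn)^{-2/3}$, take $\tau=0$. All maps equal one monotone $g$, so this is isotonic regression with $mn$ samples. Use the standard hypercube of $k\asymp(mn)^{1/3}$ local bumps of height $\asymp k^{-1}$ added to the ramp $1/4+x/2$, which keeps every perturbation monotone. The KL divergence per coordinate is $O(mn\cdot k^{-1}\cdot k^{-2})=O(1)$.
\item For $\min\{\tau^2,1/n\}$, use the shift model. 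Set $g_j=1/2+x/4+\sigma_j\delta$ with $\delta=\min\{\tau,1/\sqrt n\}/4$ and $\sigma\in\{\pm1\}^m$, with shape $g_0=1/2+x/4$. Only the $n$ labels of question $j$ carry information on $\sigma_j$, so a two-point bound applies per question and the compound risk averages these.
\item For the wiggle term, perturb each question independently with $k$ bumps of height $h$ on a common increasing base of slope $1/2$. Take $h\le\tau$, $kh\le V$, $h\lesssim1/k$ (to keep monotonicity) and $nh^2/k=O(1)$. Optimizing gives $h^2\asymp\min\{\tau^2,V^2,(V/n)^{2/3}\}$, up to the cap $h\lesssim 1/k$, which is harmless because $V\le2$. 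The case $k=1$ covers the regime $V^2$ or $\tau^2$.
\end{itemize}
Because the witness in each construction is $[0,1]$-valued, the lower bound holds for the smaller class too, and hence for both versions of the class.

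\textbf{Upper bound.} The main regime is $m\ge3$, $n\ge2$. First fix an oracle two-stage estimator with the right radii and bound its risk.
\begin{itemize}
\item Write $g_j=\gbar^{(-j)}+d_j$ with $d_j\in\Dcl(2\tau,2V)$.
\item The leave-one-out pooled isotonic fit estimates the monotone average $\gbar^{(-j)}$ at rate $((m-1)n)^{-2/3}$ in $L_2(\mu)$. This is the pooled-fit lemma, via \cref{prop:erm2} for independent, non-identically distributed pairs. It works because with a common $\mu$ the least-squares target of the pooled sample is exactly the average.
\item The correction is a least-squares fit over $\Dcl(\tau_2,V_2)$ with $n$ pairs, whose regression function is off by the pooled-fit error, which is independent of question $j$'s data. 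Apply the localized bound of \cref{prop:erm2} to this class. Its entropy is $\log(1/\eps)$ for the constant direction plus $V/\eps$, and localization yields $1/n+(V/n)^{2/3}$ with no logarithm. The misspecification adds the pooled error.
\item Clipping to $[0,1]$ only reduces the error. Ignoring the correction (radii near $0$) costs $\tau^2$.
\item Together this gives risk $\lesssim\Psi\le2\Phi$, whichever radii on the grid $\Lambda$ are near-optimal.
\end{itemize}
Next, the compound hold-out. Conditional on the $A$-halves, the $B$-half loss summed over all $m$ questions is a sum of $m\lceil n/2\rceil$ independent bounded terms. A Bernstein-type oracle inequality for choosing among $|\Lambda|=O(\log^2(mn))$ candidates then gives excess $O(\log|\Lambda|/(mn))$, which is $\lesssim (mn)^{-2/3}$. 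Halving $n$ costs only constants.

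The degenerate cases are short. For $n=1$, $\Phi\asymp 1$ whenever $\tau\gtrsim1$. More precisely, the pooled isotonic fit tracks $\gbar$ at rate $m^{-2/3}$, and $\nmu{\gbar-g_j}^2\le4\tau^2$, while also being at most $1$. This matches $\Phi$, since $\min\{\tau^2,1\}+\min\{\tau^2,V^2,V^{2/3}\}\gtrsim\min\{\tau^2,\ldots\}$; one needs $V$ with $\tau\le V$ or small, and $\tau\le 1$. For $m\le2$, per-question isotonic fits give $n^{-2/3}\lesssim(mn)^{-2/3}\le\Phi$.

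The main obstacle is the correction step. It requires a population-norm localized least-squares bound over the variation ball, with independent, non-identical design–response pairs and an independent, random misspecified centering, that yields exactly $1/n+(V/n)^{2/3}$ without a $\log n$ on the offset term. I would try to get it from the bracketing-chaining \cref{prop:erm2}, treating the offset direction by a separate one-dimensional argument.
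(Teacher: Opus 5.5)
Your plan follows the paper's proof closely. The lower bound is the same: the quantile reduction, three Assouad hypercubes (shared bumps, per-question offsets, per-question bumps of capped height), and the fact that the maximum of three terms is at least their average. The upper bound is also the same: the leave-one-out pooled isotonic fit tracking $\gbar^{(-j)}$, a least-squares correction over $\Dcl(\tau_2,V_2)$, one Bernstein selection among the $(K+1)^2$ radii on all hold-out halves, and the two degenerate branches.

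\textbf{The gap.} The step you leave open is the one the paper calls the crux, and as you state it, it does not go through. The entropy you quote for the offset, $\log(1/\eps)$, is a bound for the whole class. Fed to \cref{lem:erm1}, it returns exactly the factor $\log(en)$ you want to avoid. To use \cref{prop:erm2} with $A\asymp V$ and a constant $B$, the offset must cost only $B\log(e\delta/\eps)$ on the local ball $\mathcal F(\delta)$. That is a statement about the ball, not about $\Dcl(\tau_2,V_2)$.

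\textbf{The missing idea.} It needs neither a separate one-dimensional argument nor a change of estimator; a changed estimator would not prove the attainment clause, which concerns the least-squares correction of \cref{def:procedure}.
Condition on the other questions, so the pooled fit $h$ is fixed. Let $f^*_{\Dcl}$ be the $\Lmu$-projection of $g_j-h$ onto $\Dcl(\tau_2,V_2)$, and write $\mu(f)$ for $\int f\,d\mu$.
\begin{itemize}
\item \emph{The offset.} If $\nmu{d-f^*_{\Dcl}}\le\delta$, Cauchy--Schwarz gives $\abs{\mu(d)-\mu(f^*_{\Dcl})}\le\delta$. So $\lceil4\delta/\eps\rceil$ offsets spaced $\eps/2$ apart cover the mean, at cost $\log(1+4\delta/\eps)\le2\log(e\delta/\eps)$. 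This is $B=2$.
\item \emph{The centred part.} Write $d=\mu(d)+(v^+-\mu(v^+))-(v^--\mu(v^-))$, with Jordan parts $v^\pm$ nondecreasing and $[0,V_2]$-valued. Take $\eps/8$-brackets of $V_2\cdot\Fmono$ from the monotone bracketing bound, with $K_{\uparrow}$ its constant. Bracket the constants $\mu(v^\pm)$ by integrating those brackets. The widths add to at most $\eps$, and there are at most $e^{16K_{\uparrow}V_2/\eps}$ pairs of brackets. So $A\asymp V$, and the centring costs no logarithm.
\item \emph{Applying the proposition.} With $\Dcl$ closed in $\Lmu$ (Helly's selection theorem), \cref{prop:erm2} gives $\E\nmu{\hat d_j-f^*_{\Dcl}}^2\lesssim(V/n)^{2/3}+1/n$.
\item \emph{The cap.} The $\min\{\tau^2,\cdot\}$ is automatic: both functions lie in $\Dcl(\tau_2,V_2)$, so their distance is at most $2\tau_2$, which is at most $4\tau$ at the oracle radii. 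You do not need near-zero radii for it.
\item \emph{The pooled error.} Since $g_j-\gbar^{(-j)}\in\Dcl(\tau_2,V_2)$, the bound $\nmu{\hat d_j-(g_j-h)}^2\le2\nmu{\hat d_j-f^*_{\Dcl}}^2+2\nmu{h-\gbar^{(-j)}}^2$ brings it in.
\end{itemize}

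\textbf{A small slip elsewhere.} For $n=1$ no condition on $V$ is needed. Since $\tau\le1$, $\min\{\tau^2,1/n\}=\tau^2$, so $\Phi\ge m^{-2/3}+\tau^2$ already dominates the pooled fit's $O(m^{-2/3})+8\tau^2$.
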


Each question learns its offset and its wiggle from its own labels or ignores them, whichever is
cheaper (\cref{sec:minimax}); \cref{rem:regimes} and \cref{fig:regimes} read the rate in its regimes.

\paragraph{The picture behind the rate.}
\Cref{sec:minimax} prices each term of $\Phi$ through the metric entropy of one object at its own sample
size. Heterogeneity never enters the first term, and $m$ never enters the last two. The lower bound
makes this exact with a compound Assouad argument, one hypercube per object. On the upper side, two
facts carry the argument. First, least squares only needs independent observations with a common output
law, so the pooled fit tracks the average of $m$ different maps, and heterogeneity averages out. Second,
the offset is a single direction, and it should cost $1/n$, not $\log(en)/n$: the logarithm that a
global entropy count charges disappears once entropy is counted locally, around the target
(\cref{prop:erm2}). Finally, tuning each question on its own data cannot adapt. A rule that does so and
is to stay within a constant of the rate when all questions coincide must keep its risk at order
$(mn)^{-2/3}$, far below its own parametric $1/n$; by a constrained-risk
inequality~\citep{brown1996constrained} it then remains biased for shifts of order
$\sqrt{\log(m^2/n)/n}$, which its $n$ labels cannot reliably detect. A compound hold-out pays its
selection error with the hold-out labels of all questions, about $mn/2$ of them.

\begin{remark}[Reading the rate]\label{rem:regimes}
\leavevmode
\begin{enumerate}[label=(\alph*)]
\item \emph{Heterogeneity is paid in variation, not in distance.} If $\tau^2\ge 1/n+(V/n)^{2/3}$, then
  $\Psi=(mn)^{-2/3}+1/n+(V/n)^{2/3}$. A single question on its own has rate of order $n^{-2/3}$ (take
  $m=1$: $n^{-2/3}\le\Phi(1,n,\tau,V)\le4n^{-2/3}$). Pooling therefore helps through $V$: in the shift
  model $V=0$ the rate is $(mn)^{-2/3}+\min\set{\tau^2,1/n}$, parametric in $n$ even when $\tau=1$.
\item \emph{One label per question.} For $n=1$, $m^{-2/3}+\tau^2\le\Phi(m,1,\tau,V)\le m^{-2/3}+2\tau^2$,
  and the optimal procedure is the pooled isotonic fit, which ignores the identity of the questions.
\item \emph{Template.} $\Phi$ is the shape-constrained, nonparametric counterpart of the multi-task rate
  $d/(mn)+\min\set{\delta^2,d/n}$ (up to logarithmic factors and an outlier term) for $d$-dimensional
  Gaussian means within distance $\delta$ of a common centre \citep{duan2023adaptive}: the parametric
  $d/(mn)$ becomes the isotonic rate $(mn)^{-2/3}$, and the local term splits into an offset and a
  variation part.
\end{enumerate}
\end{remark}

\begin{figure}[t]
\centering
\input{figures/tikz_regimes}
\caption{Which term of $\Phi$ dominates, for fixed $m,n$ with $m^2\gg n$ (schematic; logarithmic axes;
boundaries up to constant factors, read off $\Psi$ in \cref{thm:main-rate}). The shift model $V=0$ is the
left edge. In the two shaded regions the pooled isotonic fit alone attains the rate (for non-atomic $\mu$), since its risk is
$O((mn)^{-2/3}+\tau^2)$ (\cref{lem:pooled}, and $\abs{g_j-\gbar}\le2\tau$); above them each question pays for
its own offset ($V\le n^{-1/2}$) or for its own deviation ($V\ge n^{-1/2}$). When $m^2\lesssim n$ the
pooling term $(mn)^{-2/3}\ge1/n$ absorbs the offset region.}
\label{fig:regimes}
\end{figure}

\paragraph{Outline of the proof of \cref{thm:main-rate}.} The lower bound is a compound Assouad argument over three hypercubes inside
$\Theta(\tau,V)$ (\cref{lem:lower}): bumps of height of order $(mn)^{-1/3}$ shared by all questions,
offsets $\pm\min\set{\tau,1/(8\sqrt n)}$ chosen independently per question, and per-question bumps whose
number $K_b\asymp\max\set{1,(nV^2)^{1/3}}$ and height of order $\min\set{\tau,V/K_b}$ exhaust the variation
budget; a quantile transform reduces every non-atomic $\mu$ to Lebesgue measure, which is where
non-atomicity enters the lower bound. The upper bound analyses a two-stage estimator. A leave-one-out pooled isotonic
fit tracks the average map $\gbar^{(-j)}$ of the other questions at rate $(mn)^{-2/3}$, although no
single pair has $\gbar^{(-j)}$ as its regression function (\cref{lem:pooled}); the deviation
$g_j-\gbar^{(-j)}$ lies in $\Dcl(2\tau,2V)$, and a constrained least-squares correction on question $j$'s
own labels pays the pooled error plus $\min\set{\tau^2,1/n+\min\set{V^2,(V/n)^{2/3}}}$ by an oracle
inequality (\cref{lem:local,prop:oracle}). The difficulty concentrates in the term $1/n$: a global
entropy bound (\cref{lem:erm1}) charges the offset direction $\log(en)/n$, and the localized bound of
\cref{prop:erm2} removes this logarithm, which is what makes the rate exact rather than exact up to
$\log n$. Adaptation costs only a constant factor because the tuning parameter is chosen once, for all
questions together (\cref{thm:adaptive}).

\subsection{Least squares with non-identical pairs and a common design}
\label{sec:erm}

Both stages of the estimator are least-squares fits to independent pairs that are \emph{not} identically
distributed: the pooled stage mixes questions with different maps, and the correction stage regresses
residuals around an estimated offset. What the pairs do share is the law of the design points. The bound
below is the analogue, for this design, of the localized rate theorem for least squares
\citep[Theorem~3.4.1]{vandervaart1996weak}, which we need in $\Lmu$ for independent, non-identically
distributed pairs; we state it for completeness. For a
class $\mathcal{F}$ of functions on $[0,1]$, the bracketing number $\Nbr(\eps,\mathcal{F},\Lmu)$ is the
least number of pointwise $\eps$-brackets covering $\mathcal{F}$: pairs $[l,u]$ of real measurable
functions with $\nmu{u-l}\le\eps$, where $f\in[l,u]$ means $l\le f\le u$ at every point of $[0,1]$; the
brackets need not lie in $\mathcal{F}$ \citep{vandervaart1996weak}.

\begin{restatable}[Localized least squares, non-identical pairs, common design]{proposition}{PropERMtwo}\label{prop:erm2}
Let $(P_i,Y_i)$, $i\in[N]$, be independent random pairs with values in $[0,1]\times[-1,1]$ such that
$P_i\sim\mu$ for every $i$; let $\eta_i(p)\defeq\E[Y_i\mid P_i=p]$ (a version) and
$\etabar\defeq\frac1N\sum_i\eta_i$. Let $\mathcal{F}$ be a nonempty convex class of measurable functions
$[0,1]\to[-1,1]$, closed in $\Lmu$, let $f^*$ be the $\Lmu$-projection of $\etabar$ onto $\mathcal{F}$,
and let $\fhat$ be a measurable selection of $\argmin_{f\in\mathcal{F}}\sum_{i=1}^N(Y_i-f(P_i))^2$
(assumed to exist). For $\delta>0$ let $\mathcal{F}(\delta)\defeq\set{f\in\mathcal{F}:\nmu{f-f^*}\le\delta}$,
and assume that for some $A,B\ge0$
\[
  \log\Nbr\big(\eps,\mathcal{F}(\delta),\Lmu\big)\ \le\ \frac{A}{\eps}+B\log\frac{e\delta}{\eps}
  \qquad\text{for all } 0<\delta\le2 \text{ and } 0<\eps\le\delta .
\]
Then, with a universal constant $C$ (the expectation may be read as an outer expectation, which equals
$\E$ whenever $\nmu{\fhat-f^*}$ is measurable),
\[
  \E\,\nmu{\fhat-f^*}^2\ \le\ C\Big[\Big(\frac{A}{N}\Big)^{2/3}+\frac{B+1}{N}\Big].
\]
\end{restatable}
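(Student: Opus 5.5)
The plan is to follow the peeling argument of \citet[Theorem~3.4.1]{vandervaart1996weak}, adapted to independent, non-identically distributed pairs with a common design law.

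\textbf{Basic inequality.} Write $\ell_i(f)\defeq(Y_i-f(P_i))^2$. Since $\fhat$ minimizes $\sum_i\ell_i$ and $f^*\in\mathcal{F}$,
\[
0\ge\frac1N\sum_i\big(\ell_i(\fhat)-\ell_i(f^*)\big).
\]
The conditional expectations are
\[
\E[\ell_i(f)-\ell_i(f^*)]=\nmu{f-\eta_i}^2-\nmu{f^*-\eta_i}^2 .
\]
Averaging over $i$, and using that all $P_i$ share the law $\mu$, this becomes
\[
\nmu{f-\etabar}^2-\nmu{f^*-\etabar}^2\ \ge\ \nmu{f-f^*}^2 ,
\]
where the last step is the projection inequality for the closed convex $\mathcal{F}$. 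This is the only place where the common design is used: it makes the averaged mean excess exactly the squared distance in $\Lmu$ to the projection of $\etabar$. Hence, with the centred empirical process
\[
\mathbb{G}_N(f)\defeq N^{-1/2}\sum_i\Big(m_f(P_i,Y_i)-\E\, m_f(P_i,Y_i)\Big),\qquad m_f\defeq\ell(f)-\ell(f^*),
\]
we get $\nmu{\fhat-f^*}^2\le N^{-1/2}\mathbb{G}_N(\fhat)$. Moreover
\[
m_f=(f-f^*)(P)\,\bigl(f+f^*-2Y\bigr)(P),
\]
with $\abs{f+f^*-2Y}\le4$. So $m_f$ is a bounded Lipschitz transform of $f-f^*$, and brackets for $\mathcal{F}(\delta)$ of size $\eps$ give brackets for $\{m_f\}$ of $L_2$-size $\lesssim\eps$, uniformly over the non-identical laws. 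This holds because each $P_i\sim\mu$: the average $\frac1N\sum_i\E[(u-l)^2(P_i)]$ equals $\nmu{u-l}^2$.

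\textbf{Modulus of continuity.} Next I bound
\[
\E^*\sup_{f\in\mathcal{F}(\delta)}\abs{\mathbb{G}_N(f)}\ \lesssim\ \phi_N(\delta)\defeq J(\delta)\Big(1+\frac{J(\delta)}{\delta^2\sqrt N}\Big),
\]
where $J(\delta)=\int_0^\delta\sqrt{1+\log\Nbr(\eps,\mathcal{F}(\delta),\Lmu)}\,d\eps$. For this I use Bernstein-type bracketing chaining \citep[Lemma~3.4.2]{vandervaart1996weak}. Its proof uses only independence, a uniform envelope ($\abs{m_f}\le 4\abs{f-f^*}\le8$), and the average $L_2$ norm $\frac1N\sum_i\E(\cdot)^2$ of brackets, so it extends verbatim to independent, non-identically distributed pairs. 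The entropy hypothesis gives
\[
J(\delta)\lesssim\sqrt{A\delta}+\delta\sqrt{B+1},
\]
because $\int_0^\delta\sqrt{\log(e\delta/\eps)}\,d\eps=\delta\int_0^1\sqrt{1+\log(1/u)}\,du\lesssim\delta$. The point of the form $B\log(e\delta/\eps)$ is exactly that the offset direction contributes $\delta\sqrt{B}$, with no $\log N$.

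\textbf{Peeling and solving.} The function $\phi_N(\delta)/\delta^{\alpha}$ is decreasing for some $\alpha<2$: $\sqrt{A\delta}$ has exponent $1/2$ and $\delta\sqrt B$ has exponent $1$, while the correction term $J^2/(\delta^2\sqrt N)$ is dominated once $\delta\ge r_N^{-1}$. Peeling over shells $2^{s-1}<r_N\nmu{\fhat-f^*}\le2^s$ (with $\delta\le2$, since functions take values in $[-1,1]$) then yields
\[
\P^*\bigl(r_N\nmu{\fhat-f^*}>2^M\bigr)\lesssim\sum_{s>M}2^{s\alpha-2s}.
\]
Here $r_N$ solves $r_N^2\phi_N(1/r_N)\le\sqrt N$, which gives
\[
r_N^{-2}\asymp(A/N)^{2/3}+(B+1)/N .
\]
Integrating the tail gives the stated bound on $\E\nmu{\fhat-f^*}^2$. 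If $r_N^{-1}\ge2$, the bound holds trivially, with $C\ge4$.

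\textbf{Main obstacle.} I expect the hard step to be the second one: checking that the bracketing chaining with the Bernstein norm holds for non-identical laws with constants free of $N$ and of the $\eta_i$, and handling outer measurability. I would first verify this by rereading the proof of \citep[Lemma~3.4.2]{vandervaart1996weak} with $P$ replaced by the average law $\frac1N\sum_i\Law(P_i,Y_i)$. Bernstein's inequality and the truncation step only involve that average variance.
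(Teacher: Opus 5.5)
Your basic inequality and modulus-of-continuity steps are sound. One small sign slip: the basic inequality is $\nmu{\fhat-f^*}^2\le-N^{-1/2}\mathbb{G}_N(\fhat)$. The gap is your last sentence. A tail obtained from Markov's inequality applied to $\E^*\sup_{\mathcal{F}(\delta)}\abs{\mathbb{G}_N}$ cannot be integrated into a second-moment bound at the rate $r_N^{-2}$. Put $D\defeq\nmu{\fhat-f^*}\le2$. Your shell bound $\P^*(r_ND>2^M)\lesssim2^{-M(2-\alpha)}$ only gives
\[
  \E^*(r_ND)^2\ \lesssim\ 1+\sum_{0\le M\le\log_2(2r_N)}4^M\,2^{-M(2-\alpha)}\ \asymp\ r_N^{\alpha},
\]
that is, $\E^*D^2\lesssim r_N^{\alpha-2}$. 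Here effectively $\alpha\ge1$, since $\phi_N(\delta)\ge J(\delta)\ge\delta$. Concretely, for $A=0$ the Markov route yields $\P^*(D>\delta)\lesssim\sqrt{(B+1)/N}/\delta+(B+1)/(N\delta^2)$. Then $\int_0^4\P^*(D>\sqrt t)\,dt$ is of order $\sqrt{(B+1)/N}$, which is the square root of the target $(B+1)/N$. Markov-based peeling, as in \citet[Theorem~3.2.5]{vandervaart1996weak}, proves $r_N\nmu{\fhat-f^*}=O_P^*(1)$, not the expectation bound the proposition asserts.

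What is missing is an exponential tail for the localized supremum, with a deviation level that may grow from shell to shell. The paper gets this by proving its chaining lemma in high-probability form (\cref{lem:chaining}), keeping Bernstein's deviation terms. On an event of probability at least $1-2e^{-x}$, simultaneously for all $f\in\mathcal F(\delta)$,
\[
  \abs{\mathbb{G}_Nm_f}\lesssim\sqrt{A\delta}+\delta\sqrt{B}+\delta+\delta\sqrt x+N^{-1/2}(A/\delta+B+x+1).
\]
Peeling with level $x+j$ on the $j$-th shell then gives $D\le r(x)$ outside an event of probability at most $4e^{-x}$, with $r(x)^2\asymp(A/N)^{2/3}+(B+1)/N+x/N$. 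Integrating over $x$ costs only an extra $O(1/N)$.

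Alternatively, keep your expected-modulus bound and add a Talagrand-type concentration inequality for suprema of bounded empirical processes; versions for independent, non-identically distributed summands exist. Use variance proxy $\sup_{f\in\mathcal F(\delta)}\frac1N\sum_i\E m_f(P_i,Y_i)^2\le16\delta^2$ and envelope $8$. This produces the same $\delta\sqrt x+x/\sqrt N$ terms, and the same peeling with confidence levels closes the argument.

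Either way, the step you flagged as the main obstacle is routine; transferring bracketing chaining to non-identical laws is exactly what \cref{lem:chaining} does. The real issue is passing from a rate in probability to a rate in expectation.
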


Three features of \cref{prop:erm2} matter for the first part of the paper.
\emph{(i) The target is the average.} The expected empirical risk is $N$ times the average of the
individual risks, whose minimizer over $\mathcal{F}$ is the projection of $\etabar$; the empirical
process is centred summand by summand, and its natural norm
$(\frac1N\sum_i\E h(P_i,Y_i)^2)^{1/2}$ reduces to the $\Lmu$ norm on functions of the design point
because every $P_i\sim\mu$. The heterogeneity of the $\eta_i$ therefore never enters the entropy, and this
is why the pooled fit tracks the average map (\cref{lem:pooled}).
\emph{(ii) Localization removes the logarithm.} As in the identically distributed case, a bracketing
chaining bound with Bernstein truncation, here proved for independent non-identically distributed summands,
controls the empirical process on $\mathcal{F}(\delta)$, and peeling over $\delta$ locates the fixed
point $\sqrt N\delta^2\asymp\sqrt{A\delta}+\delta\sqrt B$. The geometric quantity is the local entropy
integral: since $\int_0^\delta\sqrt{B\log(e\delta/\eps)}\,d\eps\le\sqrt2\,\delta\sqrt B$ is
scale-free, the finite-dimensional part costs $(B+1)/N$; under a global bound $B\log(e/\eps)$
(\cref{lem:erm1}) the same integral carries a factor $\sqrt{\log(e/\delta)}$, and the bound becomes
$(B+1)\log(eN)/N$.
\emph{(iii) The deviation class.} For $\Dcl(\tau,V)$ the hypothesis holds with $A$ of order $V$ and
$B=2$; the term $B\log(e\delta/\eps)$ is the entropy of the one-dimensional offset direction inside the
localized ball. This gives the correction stage its log-free rate $(V/n)^{2/3}+1/n$ (\cref{lem:local}).

\subsection{Adaptation by a compound hold-out}
\label{sec:adaptive}

\Cref{thm:adaptive} is the upper half of \cref{thm:main-rate}, stated for the procedure itself; we write
$C_{\mathrm{AD}}$ for its constant $C$.

\begin{restatable}[Adaptive upper bound]{theorem}{ThmAdaptive}\label{thm:adaptive}
Let $\mu$ be non-atomic, $m,n\ge1$, and let $\tau\in[0,1]$ and $V\in[0,2]$ be unknown. The procedure
$\gh^{\mathrm{ad}}$ of \cref{def:procedure}(iv) is non-randomized, jointly measurable in (data, $p$), and
uses only the data and $(m,n)$. There is a universal constant $C$, independent of $\mu,m,n,\tau,V$, such
that for every $(g_1,\dots,g_m)\in\Theta(\tau,V)$,
\[
  \frac1m\sum_{j=1}^m\E\,\nmu{\gh^{\mathrm{ad}}_j-g_j}^2\ \le\ C\,\Phi(m,n,\tau,V).
\]
\end{restatable}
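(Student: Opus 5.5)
We split by the three branches of \cref{def:procedure}(iv). Measurability and the fact that only the data and $(m,n)$ are used hold by construction: pool-adjacent-violators, the non-randomized least-squares selection $\dhat_j$, and an argmin over the finite grid $\Lambda$ with a fixed tie rule. We take the following ingredients as established earlier: \cref{prop:erm2}, the pooled bound (\cref{lem:pooled}), the local correction bound (\cref{lem:local}), and the oracle inequality for the two-stage estimator with fixed radii (\cref{prop:oracle}).

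\emph{Branch $n=1$.} Apply \cref{prop:erm2} to the $m$ pooled pairs with $\mathcal F$ the monotone $[0,1]$-valued maps. The entropy bound holds with $A$ universal and $B=0$. The fit targets the projection of $\gbar$, which is $\gbar$ itself since $\gbar$ is monotone, and the risk is $O(m^{-2/3})$. Since $\abs{g_j-\gbar}\le2\tau$ pointwise, the triangle inequality gives $O(m^{-2/3}+\tau^2)$. This is $O(\Phi(m,1,\tau,V))$ by \cref{rem:regimes}(b).

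\emph{Branch $m\le2$, $n\ge2$.} A per-question isotonic fit has risk $O(n^{-2/3})$ by the same proposition with $N=n$. We also have $\Phi\ge (mn)^{-2/3}\ge (2n)^{-2/3}$, which is of order $n^{-2/3}$.

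\emph{Main branch $m\ge3$, $n\ge2$.} For fixed $\lambda=(\tau',V')$, \cref{prop:oracle} run on the $A$-halves (about $n/2$ labels per question, pooled stage over $\ge (m-1)\lfloor n/2\rfloor\gtrsim mn$ pairs) gives
\[
\frac1m\sum_j\E\nmu{\gh^\lambda_j-g_j}^2\le C\Bigl[(mn)^{-2/3}+\min\{\tau^2,1/n+\min\{V^2,(V/n)^{2/3}\}\}\Bigr].
\]
This holds whenever $\tau'\ge\tau$ and $V'\ge V$, because then the doubled class $\Dcl(\tau_2,V_2)$ contains the target $g_j-\gbar^{(-j)}$. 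Otherwise we fall back on the ignore branch, using that the correction has sup-norm at most $2\tau'$. I would make this robust by a slightly stronger oracle bound: for every $\lambda$ the risk is at most $C[(mn)^{-2/3}+\tau'^2\wedge(1/n+(V'/n)^{2/3})+\text{misspecification}]$. Taking $\lambda=(\tau',V')$ with $\tau'\approx\tau$, or $\tau'=2^{-K}\le1/(mn)$ (effectively zero correction), and $V'$ the smallest grid value $\ge V$ or $2^{1-K}$, we conclude that some $\lambda^\star\in\Lambda$ has risk $O(\Psi)$, hence $O(\Phi)$ by $\Phi/2\le\Psi\le2\Phi$. The grid resolution loses at most a factor $4$ in $\tau'^2$. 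For $V$ below the grid, the term $(2^{1-K}/n)^{2/3}\le (mn)^{-2/3}\cdot O(1)$ is absorbed.

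\emph{Selection step.} This is the hold-out, and it is the step I expect to be the main obstacle. Conditionally on the $A$-halves, the candidates $\gh^\lambda$ are fixed functions, and the $B$-labels are $M\defeq m\lceil n/2\rceil$ independent pairs, all with design law $\mu$. The criterion has expectation $\sum_j\lceil n/2\rceil\nmu{\gh^\lambda_j-g_j}^2$ plus a term not depending on $\lambda$. The plan is the standard model-selection argument: bound the centred process $\sum_{j,k}[(Y_{jk}-\gh^\lambda_j)^2-(Y_{jk}-g_j)^2]$ minus its mean by Bernstein's inequality, using summands bounded by $4$ with variance at most $4$ times the mean. A union bound over $\abs\Lambda=(K+1)^2$ then yields
\[
\frac1m\sum_j\E\nmu{\gh^{\widehat\lambda}_j-g_j}^2\le 2\min_\lambda\frac1m\sum_j\E\nmu{\gh^\lambda_j-g_j}^2+C\frac{\log(K+1)}{mn}.
\]
The remainder is $O(\log\log(mn)/(mn))\le O((mn)^{-2/3})$. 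The key point is that the compound criterion normalizes by $mn$ rather than $n$, so selection error is charged to the pooled term. The delicate part is that the per-question losses are not identically distributed; Bernstein only needs independence and the shared design, so this is fine. Combining with the existence of $\lambda^\star$ gives the bound $C\Phi$ with $C$ universal.
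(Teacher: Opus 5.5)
Your proposal is correct and follows essentially the paper's proof. It uses the same three branches. For $n=1$ and $m\le2$ you use the pooled or per-question isotonic bound; invoking \cref{prop:erm2} with $B=0$ directly is what \cref{lem:pooled} packages, and it even avoids the harmless $\log(eN)/N$ term. For $m\ge3$, $n\ge2$ you argue as in \cref{prop:holdout}: a well-specified grid point $\tau'\in[\max\{\tau,2^{-K}\},2\max\{\tau,2^{-K}\})$, $V'\in[\max\{V,2^{1-K}\},2\max\{V,2^{1-K}\})$ is fed into \cref{prop:oracle} on the $A$-halves. This is combined with a selection inequality from conditional Bernstein and a union bound over $\abs{\Lambda}=(K+1)^2$, whose remainder is of order $\log\log(mn)/(mn)$.

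Your detour through a misspecification-robust oracle bound, for candidates with $\tau'<\tau$ or $V'<V$, is unnecessary. The selection inequality needs only one good candidate, and \cref{prop:oracle} at radii just above $(\tau,V)$ already delivers the learn-or-ignore minimum $\min\{\tau^2,1/n\}+\min\{\tau^2,V^2,(V/n)^{2/3}\}$.
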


\paragraph{Outline of the proof of \cref{thm:adaptive}.} For $m\ge3$ and $n\ge2$, Bernstein's inequality applied conditionally on the
$A$-halves, with a union bound over the $\abs{\Lambda}=(K+1)^2$ candidates, shows that the selected
candidate has average risk at most $5/3$ times the best candidate's plus $25(\log(2\abs\Lambda)+1)/(m\,\abs{B_j})$
in expectation (\cref{prop:holdout}). The grid contains a point within a factor $2$ of
$(\max\set{\tau,2^{-K}},\max\set{V,2^{1-K}})$, at which the two-stage estimator on half-samples attains
$C\Phi$ (\cref{prop:oracle}); the selection cost is of order $\log\log(mn)/(mn)$, below $(mn)^{-2/3}$. The
point of choosing \emph{one} $\lambda$ for all questions is that the selection cost is divided by the
total hold-out size $m\abs{B_j}\ge mn/2$; the same argument applied per question only gives an overhead
of order $\log\abs\Lambda/n$, which exceeds $\Phi$ when $\tau$ is small and $m$ is large. For $n=1$ the pooled fit pays
$m^{-2/3}$ plus a bias of at most $4\tau^2$, and for $m\le2$ the per-question fit pays $n^{-2/3}$; both are
$O(\Phi)$ in their regimes. \Cref{prop:separation} shows that the loss of per-question tuning is not an
artefact of this analysis.

\subsection{Per-question tuning cannot adapt}
\label{sec:separation}

The obvious alternative to a compound hold-out tunes each question on
its own labels; it cannot adapt. In words, a per-question estimator computes the map of question $j$
from that question's data and a randomization fixed in advance. Formally, call an estimator
\emph{per-question} if $\gh_j=T_j(D_j,\xi,\cdot)$ for every $j\in[m]$, where
$D_j\defeq((P_{jk},Y_{jk}))_{k\in[n]}$ is the data of question $j$; $\xi$ is an auxiliary random element
with values in a measurable space $\Xi$, independent of the $mn$ pairs, whose law is fixed in advance (it
depends on none of $(g_j)$, $\tau$, $V$, and it may be shared by the questions); and each
$T_j:([0,1]\times\set{0,1})^n\times\Xi\times[0,1]\to\R$ is jointly measurable. The maps $T_j$ may differ
across $j$ and may depend on $(m,n,\mu,j)$ and on anything fixed in advance, but not on the data of
other questions, on $(\tau,V)$, or on $(g_j)$. The lower bound is proved on shifts of a shared shape that the rules may know: for
a nondecreasing $g_0:[0,1]\to[\frac14,\frac34]$ and $\tau\in[0,\frac14]$, ranges that keep every shifted
map in $[0,1]$, let $\Theta_{g_0}(\tau)$ be the set of tuples $(g_0+c_1,\dots,g_0+c_m)$ with
$c_j\in[-\tau,\tau]$, and define the \emph{adaptivity ratio}, the worst-case ratio of the risk to the rate,
\[
  \kappa_{g_0}(\gh)\defeq\sup_{\tau\in[0,1/4]}\ \sup_{g\in\Theta_{g_0}(\tau)}\
  \frac{\cR(\gh;g)}{\Phi(m,n,\tau,0)}\ \in[0,\infty],
  \quad \Phi(m,n,\tau,0)=(mn)^{-2/3}+\min\set{\tau^2,\tfrac1n}.
\]

\begin{restatable}[Per-question tuning cannot adapt]{proposition}{PropSeparation}\label{prop:separation}
Let $\mu$ be any Borel probability measure on $[0,1]$, let $m,n\ge1$, let $g_0:[0,1]\to[\frac14,\frac34]$
be nondecreasing, and put $M\defeq(m^2/n)^{1/3}$.
\begin{enumerate}[label=(\roman*)]
\item (Lower bound.) If $m^2\ge216n$, then
  $\kappa_{g_0}(\gh)\ge\frac1{256}\min\set{3n,\log(m^2/n)}$ for every per-question estimator $\gh$.
  Consequently every per-question $\gh$, whether
  or not it uses $g_0$, satisfies
  \[
    \sup_{\tau\in[0,1]}\ \sup_{g\in\Theta(\tau,0)}\ \frac{\cR(\gh;g)}{\Phi(m,n,\tau,0)}
    \ \ge\ \frac1{256}\min\Bigl\{3n,\log\frac{m^2}n\Bigr\}.
  \]
\item (Tightness within per-question rules.) Let $\hat c_j\defeq\frac1n\sum_{k}(Y_{jk}-g_0(P_{jk}))$. The
  per-question rule $\gh_j\defeq g_0+\hat c_j\mathbf 1\set{\abs{\hat c_j}>t}$, with $t\defeq0$ if $M<e$ and
  $t\defeq(28\log M/(3n))^{1/2}$ if $M\ge e$, has $\kappa_{g_0}\le e/4$ if $M<e$ and
  $\kappa_{g_0}\le4+50\log(m^2/n)$ if $M\ge e$; the rule $\gh_j\defeq g_0$ has $\kappa_{g_0}\le\max\set{1,n/16}$.
  Hence, if $m^2\ge216n$,
  \[
    \frac1{256}\min\Bigl\{3n,\log\frac{m^2}n\Bigr\}\ \le\ \inf_{\gh\ \text{per-question}}\kappa_{g_0}(\gh)
    \ \le\ 51\min\Bigl\{n,\log\frac{m^2}n\Bigr\},
  \]
  and if $m^2<e^3n$ (that is, $M<e$), the rule above (with $t=0$) has $\kappa_{g_0}\le e/4$.
\item (Compound tuning adapts.) Let $\mu$ be non-atomic and let $C_{\mathrm{AD}}$ be the constant of
  \cref{thm:adaptive}. The procedure $\gh^{\mathrm{ad}}$ of \cref{def:procedure}(iv), which uses neither
  $g_0$, $\tau$, $V$ nor $\mu$ and for $m\ge3$ is not per-question, has
  $\kappa_{g_0}(\gh^{\mathrm{ad}})\le C_{\mathrm{AD}}$. Hence, if $m^2\ge216n$, every per-question procedure,
  even one given $g_0$ and $\mu$, has $\kappa_{g_0}(\gh)/C_{\mathrm{AD}}\ge\min\set{3n,\log(m^2/n)}/(256\,C_{\mathrm{AD}})$,
  a ratio that tends to infinity as $n\to\infty$ and $m^2/n\to\infty$; in particular, when
  $\min\set{3n,\log(m^2/n)}>256\,C_{\mathrm{AD}}$, no per-question procedure matches the guarantee of
  $\gh^{\mathrm{ad}}$.
\end{enumerate}
\end{restatable}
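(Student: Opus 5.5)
The proof has three parts, matching (i)–(iii). Parts (ii) and (iii) are direct risk computations. Part (i) is the real content: a constrained-risk argument that reduces to a one-dimensional Bernoulli/Gaussian-type testing problem for each question.

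\emph{Reduction for (i).} Fix a per-question estimator $\gh$ and condition on $\xi$. Project each $\gh_j$ onto the offset direction: for maps of the form $g_0+c$, the loss satisfies $\nmu{\gh_j-g_0-c}^2\ge(\hat c_j-c)^2$ with $\hat c_j\defeq\int(\gh_j-g_0)\,d\mu$. So it suffices to bound the risk of the per-question scalar estimators $\hat c_j=S_j(D_j,\xi)$ of $c_j$.

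\emph{Two constraints.} First, at $\tau=0$ (all $c_j=0$), the ratio bound $\kappa_{g_0}(\gh)\le\kappa$ forces $\frac1m\sum_j\E_0\hat c_j^2\le\kappa(mn)^{-2/3}$. Second, at shifts $c_j\equiv s$ common to all questions, with $\tau=s\le1/4$, it forces $\frac1m\sum_j\E_s(\hat c_j-s)^2\le\kappa\,\Phi(m,n,s,0)\le\kappa(M^{-2}/n+\min\{s^2,1/n\})$, using $(mn)^{-2/3}=M^{-2}/n$.

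\emph{Averaging over questions.} By convexity, I average over $j$ and treat a single rule, or apply the bound question by question and average. The constrained-risk inequality of \citet{brown1996constrained} then gives, for a single question, $\E_s(\hat c-s)^2\ge(s-\sqrt{\E_0\hat c^2}\,I_s)_+^2$, where $I_s^2=\E_0(dP_s/dP_0)^2$ is the chi-square affinity of the $n$-label laws. Because $g_0\in[\frac14,\frac34]$ and $s\le\frac14$, each label is Bernoulli with parameter bounded away from $0$ and $1$, so $I_s^2\le\exp(Cns^2)$ with an explicit $C$ (for instance $16/3$).

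\emph{Choosing the shift.} Take $s^2=\frac{a\log M^2}{n}$, capped at $1/4$, with small $a$. Then $\sqrt{\kappa}\,M^{-1}n^{-1/2}\,e^{Cns^2/2}\le s/2$ unless $\kappa$ is large, and this yields $\E_s(\hat c-s)^2\ge s^2/4$. Comparing with the allowed $\kappa(M^{-2}/n+1/n)$ gives $\kappa\gtrsim ns^2\asymp\min\{n,\log M^2\}$. Since $\log M^2=\frac23\log(m^2/n)$, this is where $\min\{3n,\log(m^2/n)\}$ and the constant $1/256$ come from. The condition $m^2\ge216n$ ensures $M\ge6$, so that $\log M$ dominates the constant terms.

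\emph{Main obstacle.} The delicate step is bookkeeping the two-branch inequality across the averaging over $j$. It must be done without assuming the rules are identical, so I apply Cauchy–Schwarz to $\frac1m\sum_j\sqrt{\E_0\hat c_j^2}\le\sqrt{\frac1m\sum_j\E_0\hat c_j^2}$. The case $\kappa\ge n$ must also be handled, where the cap $s\le1/4$ binds and gives the $3n$ branch. The "consequently" clause follows because $\Theta_{g_0}(\tau)\subseteq\Theta(\tau,0)$.

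For (ii), use the bias–variance decomposition of $\hat c_j\mathbf 1\{|\hat c_j|>t\}$, where $\hat c_j$ is unbiased for $c_j$ with variance at most $1/(4n)$, together with Hoeffding/Bernstein tails. For $|c|\le t/2$, the risk is at most $c^2+P(|\hat c-c|>t/2)\lesssim c^2+M^{-2}/n$ by the choice of $t$. For $|c|>t/2$, the risk is at most $1/n+t^2\lesssim\log M/n$, which is at most $\log(m^2/n)\cdot\min\{\tau^2,1/n\}$ up to constants. Dividing by $\Phi$ gives $4+50\log(m^2/n)$. When $M<e$ (so $t=0$), the risk $1/(4n)$ is at most $(e/4)\Phi$ since $\Phi\ge M^{-2}/n$. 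For $\gh_j=g_0$, the risk is $c^2\le\tau^2$, and $\tau^2/\Phi\le\max\{1,n\tau^2\}\le\max\{1,n/16\}$. Combining with (i) gives the displayed two-sided bound. Part (iii) is immediate from \cref{thm:adaptive}, since $\Theta_{g_0}(\tau)\subseteq\Theta(\tau,0)$; the stated ratio is then (i) divided by $C_{\mathrm{AD}}$.
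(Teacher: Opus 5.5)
Your plan for (i) and (iii) follows the paper's proof: projection onto the offset, the change-of-measure bound $\abs{\E_c\hat c_j}\le I_c(\E_0\hat c_j^2)^{1/2}$ with $I_c^2\le e^{16nc^2/3}$, Cauchy--Schwarz over questions, and a common shift with $nc^2\asymp\log M$ capped at $\frac14$. The bookkeeping, however, rests on a wrong identity. In fact $(mn)^{-2/3}=1/(nM)$, not $M^{-2}/n$, since $nM=(mn)^{2/3}$. The constraint at $\tau=0$ is therefore $\frac1m\sum_j\E_0\hat c_j^2\le\kappa/(nM)$. The quantity that must be at most $s/2$ is $\sqrt{\kappa/n}\,M^{-1/2}e^{8ns^2/3}$, not $\sqrt{\kappa/n}\,M^{-1}e^{8ns^2/3}$, so $e^{8ns^2/3}$ must stay below a power $M^{\gamma}$ with $\gamma<\frac12$. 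Take $s^2=\min\{\frac1{16},\frac{3\log M}{32n}\}$, so that $e^{8ns^2/3}\le M^{1/4}$. Comparing with $\Phi(m,n,s,0)$ then gives $\kappa\ge ns^2/8=\frac1{256}\log(m^2/n)$ when the cap is inactive. When $s=\frac14$ it gives $\kappa\ge3n/224$, using $\Phi(m,n,\frac14,0)\le\frac7{6n}$ for $M\ge6$. This is where $1/256$ comes from. The same slip breaks your case $M<e$ of (ii): $\Phi\ge M^{-2}/n$ yields only $e^2/4$, whereas $e/4$follows from $\Phi\ge1/(nM)>1/(en)$.

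The step that actually fails is in (ii), case $\abs{c}\le t/2$. Write $X\defeq\hat c_j-c$. You bound $\E[X^2\mathbf 1\{\abs{X}>t/2\}]$ by $\P(\abs{X}>t/2)$ and claim this is $O(M^{-2}/n)$. For the prescribed $t^2=28\log M/(3n)$, Bernstein gives $\P(\abs{X}\ge t/2)\le2e^{-\nu nt^2/4}=2M^{-2}$ with $\nu=\frac67$, with no factor $1/n$. No such factor can be gained: for bounded $M$ and $n\to\infty$ this probability stays bounded away from $0$ by the central limit theorem, while $(mn)^{-2/3}=1/(nM)\to0$. So your estimate gives an unbounded ratio already at $\tau=0$. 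What saves the rule is the factor $X^2$ you discarded, which is of order $1/n$. Since $\E X^4\le\int_0^\infty8s^3e^{-\nu ns^2}\,ds=4/(\nu^2n^2)$, Cauchy--Schwarz gives $\E[X^2\mathbf 1\{\abs{X}>t/2\}]\le\frac2{\nu n}\bigl(2e^{-\nu nt^2/4}\bigr)^{1/2}=\frac{2\sqrt2}{\nu nM}\approx3.30\,(mn)^{-2/3}$. The constant $28/3$ in $t$ is chosen exactly so that $e^{-\nu nt^2/8}=1/M$.

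A smaller point: your constraints are unconditional, so after conditioning on $\xi$ you would need the convexity of $x\mapsto(s-I_s\sqrt x)_+^2$ to undo the conditioning. It is simpler to apply the change of measure jointly in $(D_j,\xi)$, which is legitimate because $\xi$ has the same law under every shift.
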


\paragraph{Why per-question tuning fails.} At $\tau=0$ the class $\Theta_{g_0}(0)$ is the single tuple
$(g_0,\dots,g_0)$, so a per-question rule $\gh$ has risk at most $\kappa_{g_0}(\gh)(mn)^{-2/3}$ there,
far below the $1/n$ at which one question's own labels can estimate its offset; it must shrink every
estimated offset almost to zero. The constrained-risk inequality of
\citet[Theorem~1]{brown1996constrained}, carried to a Bernoulli shift family, measures the cost. For one
question, the likelihood ratio between shift $c$ and shift $0$ has second moment at most
$e^{16nc^2/3}$, so the mean, under shift $c$, of the offset that the rule estimates is at most
$e^{8nc^2/3}$ times its root risk under shift $0$. For $nc^2$ of order $\log(m^2/n)$ these means
stay below $c/2$ on average over the questions, and the rule pays $c^2/4$, of order $\log(m^2/n)/n$, against
$\Phi(m,n,c,0)\le2/n$. A compound procedure escapes the trade-off because the radius it uses is
selected on the hold-out labels of all questions together, about $mn/2$ of them
(\cref{thm:adaptive}).

\begin{remark}[Scope of \cref{prop:separation}]\label{rem:separation}
\leavevmode
\begin{enumerate}[label=(\alph*)]
\item The separation is asymptotic: since $C_{\mathrm{AD}}$ is not explicit, the proposition gives no
  concrete $(m,n)$ at which per-question rules lose to $\gh^{\mathrm{ad}}$; \cref{sec:numerics} gives
  evidence at moderate sizes.
\item Per-question rules use their own data only. Tuning each question around a shape estimated from
  the pooled data is not covered, since pooled statistics can carry information about the size of the
  shifts (\cref{app:scope}(vii); open problem~7 of \cref{sec:open}).
\item The inequality behind (i) is the constrained-risk inequality
  of~\citet[Theorem~1]{brown1996constrained}, with an exact chi-square computation for the Bernoulli
  shift family; we prove the form we use (\cref{app:separation}).
\end{enumerate}
\end{remark}

%% file: figures/tikz_procedure.tex
\begin{tikzpicture}[font=\small,
    box/.style={draw,rounded corners=2pt,align=center,inner sep=4pt,text width=3.4cm},
    shared/.style={box,fill=black!8},
    arr/.style={-{Stealth[length=5pt]},thick},
    cell/.style={draw,minimum height=0.45cm,minimum width=1.0cm,inner sep=0pt}]
  \foreach \lab/\y in {1/1.5,2/0.8,m/-0.6} {
    \node[cell] (A\lab) at (0.5,\y) {$A_{\lab}$};
    \node[cell]              (B\lab) at (1.5,\y) {$B_{\lab}$};
    \node[anchor=east,font=\footnotesize] at (-0.1,\y) {question $\lab$};
  }
  \node at (1.0,0.15) {$\vdots$};
  \node[shared] (pool) at (5.3,2.0)
    {pooled isotonic fit $\gh_0^{(-j)}$ to the $A$-halves of the questions $i\ne j$};
  \node[box] (corr) at (5.3,-0.9)
    {correction $\dhat_j$ fitted on $A_j$ over the deviation class indexed by $\lambda=(\tau',V')$, for each $\lambda$ in a dyadic grid};
  \node[shared] (hold) at (9.9,0.55)
    {one $\widehat\lambda$ for all questions: least squares summed over all hold-out halves $B_j$};
  \node[box,text width=1.75cm] (out) at (13.25,0.55) {output\\ $\gh_j^{\widehat\lambda}$, $j\in[m]$};
  \draw[arr] (A1.north) |- (pool.west);
  \draw[arr] (pool) -- node[right,font=\footnotesize,align=left] {residuals on $A_j$} (corr);
  \draw[arr] (corr.east) -| node[pos=0.25,below,font=\footnotesize] {candidates $\gh_j^{\lambda}$} (hold.south);
  \draw[arr] (Bm.south) -- ++(0,-1.75) -| ([xshift=1.1cm]hold.south);
  \draw[arr] (hold) -- (out);
  \node[font=\footnotesize,anchor=west] at (4.6,-3.05)
    {\tikz\draw[fill=black!8] (0,0) rectangle (0.3,0.2);\ pools data across questions\qquad
     \tikz\draw (0,0) rectangle (0.3,0.2);\ fitted to one question's labels};
\end{tikzpicture}

%% file: figures/tikz_regimes.tex
\begin{tikzpicture}[x=0.8cm,y=0.72cm,font=\small]
  \fill[black!5]  (0,0) rectangle (10,1.5);
  \fill[black!12] (0,1.5) -- (10,1.5) -- (10,5) -- (4,3) -- (0,3) -- cycle;
  \draw[black!45] (0,6) -- (10,6) -- (10,0);
  \draw[->] (0,0) -- (10.8,0) node[right] {$V$};
  \draw[->] (0,0) -- (0,6.8) node[above] {$\tau$};
  \draw[thick] (0,1.5) -- (10,1.5);
  \draw[thick] (0,3) -- (4,3) -- (10,5);
  \draw[thick] (4,3) -- (4,6);
  \foreach \y/\l in {1.5/{(mn)^{-1/3}},3/{n^{-1/2}},6/{1}} {\draw (0,\y) -- (-0.12,\y) node[left] {$\l$};}
  \foreach \x/\l in {4/{n^{-1/2}},10/{2}} {\draw (\x,0) -- (\x,-0.12) node[below] {$\l$};}
  \node at (5,0.75) {$\Phi\asymp(mn)^{-2/3}$:\ pool all labels};
  \node at (5,2.25) {$\Phi\asymp\tau^2$:\ shrink to the shared fit};
  \node[align=center] at (2,4.55) {$\Phi\asymp1/n$\\[2pt] {\footnotesize estimate}\\ {\footnotesize each offset}};
  \node[align=center] at (7.0,5.35) {$\Phi\asymp(V/n)^{2/3}$\\[2pt] {\footnotesize estimate each deviation}};
  \node[rotate=15.6,font=\footnotesize,inner sep=1pt] at (7.75,3.5) {$\tau=(V/n)^{1/3}$};
\end{tikzpicture}

%% file: sections/4_identification.tex
\section{What is necessary for decisions: the identification regret}
\label{sec:identification}

This section answers the second question, for random effects. In one sentence: not knowing the prior over monotone recalibration maps costs at most $2^{-n}$ in excess
Brier score, for every output law and for the posterior mean of every monotone prior consistent with the
law of the data, and in the worst case it costs at least $\bstar^{-n}/(72\sqrt n)$, with
$\bstar=2e^{2\Gcat/\pi}\approx3.583$.
\Cref{sec:eb} sets up random effects and defines the identification regret, and \cref{sec:barrier}
states the bounds. \Cref{sec:chebradius,sec:monotone} explain the upper bound: the regret is a Chebyshev
radius, and for monotone maps the unidentified moment is confined by a diagonal-domination inequality.
\Cref{sec:finite-m} shows that the lower bounds hold for every finite number of questions,
\cref{sec:constant} pins the exponential base for constant maps, \cref{app:skibinsky} relates the regret
to Skibinsky's minimax-minus-Bayes gap, and \cref{sec:conjecture} states our conjecture for monotone maps.

\subsection{Random effects and the identification regret}
\label{sec:eb}

Now the $g_j$ are independent with a common law $G$ on monotone maps. The Bayes rule uses $G$, which
other questions reveal only through the law of one question's data; the second part measures what is
lost even when this law is known exactly (the idealization $m=\infty$).

\begin{definition}[Random-effects setting]\label{def:seb}
Let $\mu$ be any Borel probability measure on $[0,1]$ (atoms allowed) and $n\ge1$. A \emph{prior} $G$
is the law of a random map $g:[0,1]\to[0,1]$ that is jointly measurable in (sample point, argument).
One question's data is $D=(X,Y)$, where $X=(P_1,\dots,P_n)$ has i.i.d.\ coordinates with law $\mu$ and
is independent of $g$ (precisely: $g$ is $\mathcal{A}\otimes\mathcal{B}([0,1])$-measurable for a
$\sigma$-field $\mathcal{A}$ independent of $X$), and, given $\mathcal{A}\vee\sigma(X)$, the labels
$Y_1,\dots,Y_n$ are independent with $Y_k\sim\Bern(g(P_k))$. Write $s(Y)\defeq\sum_kY_k$ and, for
$r\ge1$, $t\in[0,1]^r$, $x\in[0,1]^n$, $y\in\set{0,1}^n$, $p\in[0,1]$,
\begin{gather*}
  M^G_r(t)\defeq\E_G\big[g(t_1)\cdots g(t_r)\big],\qquad
  Z_G(x,y)\defeq\E_G\Big[\prod_{k=1}^n g(x_k)^{y_k}(1-g(x_k))^{1-y_k}\Big],\\
  N_G(p;x,y)\defeq\E_G\Big[g(p)\prod_{k=1}^n g(x_k)^{y_k}(1-g(x_k))^{1-y_k}\Big].
\end{gather*}
$Z_G$ is the density of $D$ with respect to $\mu^{\otimes n}\otimes(\text{counting measure on }\set{0,1}^n)$.
A \emph{rule} $\delta$ assigns to each $D$ a function $\delta(D):[0,1]\to[0,1]$ such that
$(p,D)\mapsto\delta(D)(p)$ is jointly measurable. Its risk is
$R(\delta,G)\defeq\E_G\E\,\nmu{\delta(D)-g}^2$, and the Bayes risk is
$R^*_n(G)\defeq\inf_\delta R(\delta,G)$. The \emph{posterior mean} is the rule
$\delta^*_G(D)(p)\defeq N_G(p;D)/Z_G(D)$ on $\set{Z_G(D)>0}$ and $0$ elsewhere.
For a convex set $\cG$ of priors and $G\in\cG$, the \emph{identified set} and the
\emph{identification regret} are
\begin{equation}\label{eq:reg}
  [G]\defeq\set{G'\in\cG:\Law_{G'}(D)=\Law_G(D)},
  \qquad
  \Reg_n(G;\cG)\defeq\inf_{\delta}\ \sup_{G'\in[G]}\big(R(\delta,G')-R^*_n(G')\big).
\end{equation}
Conventions: $0\cdot(+\infty)\defeq0$ and $1/0\defeq+\infty$.
\end{definition}

The posterior mean attains the Bayes risk, $R(\delta^*_G,G)=R^*_n(G)$ (\cref{prop:chebradius}(i)). The
regret $\Reg_n(G;\cG)$ is a minimax excess risk over the priors that the law of the data cannot
distinguish from $G$. At $G$ alone the question would be empty, since the posterior mean of $G$ has
zero excess there. It is a population quantity: that law is known exactly.

\paragraph{Two classes of priors.}
$\Gmono$ is the set of priors admitting a jointly measurable version whose paths are nondecreasing maps
$[0,1]\to[0,1]$. It is convex: a mixture of two such priors is the law of $Bg_1+(1-B)g_2$ with $B$ an
independent Bernoulli variable. $\Gconst\subseteq\Gmono$ is the set of priors on \emph{constant} maps,
$g\equiv\theta$ with $\theta\sim H$ for a Borel probability measure $H$ on $[0,1]$; we identify the
prior with $H$. This is the case of an \emph{uninformative predictor}, whose reported probability
carries no information about the label: the classical binomial empirical-Bayes
problem \citep{robbins1956empirical}.

\paragraph{Constant maps: moments.}
For constant maps the regret has a closed form (\cref{sec:constant}), built from two quantities. For $H\in\Gconst$
and $s=0,\dots,n$ let $b_s(H)\defeq\int\theta^s(1-\theta)^{n-s}\,dH(\theta)$, the prior probability of
one label sequence with $s$ positive labels, and let $\rho_n(H)$ be the length of the interval of
values of $\int\theta^{n+1}dH'$ over probability measures $H'$ on $[0,1]$ with the same first $n$
moments as $H$: the range of the one moment that $n$ labels leave free. Always $\rho_n(H)\le4^{-n}$,
with equality for the \emph{arcsine law} $\omega(dt)\defeq dt/(\pi\sqrt{t(1-t)})$ on $[0,1]$
(\citealp{skibinsky1967range}; see also \citealp[eq.~(17)]{skibinsky1968minimax} and \cref{prop:binomial}). For measurable $f>0$ we write
$\GM_\omega(f)\defeq\exp\int\log f\,d\omega$ for the geometric mean under $\omega$, and $\mathcal{P}_n$
for the real polynomials of degree at most $n$. Finally,
$\Gcat\defeq\sum_{k\ge0}(-1)^k/(2k+1)^2=0.91596\ldots$ is Catalan's constant, and
$\bstar\defeq2e^{2\Gcat/\pi}=3.58324\ldots$, which we round to $3.583$.

\paragraph{The coin example.} One label and constant maps (coins) already show the identification
problem. The priors $\delta_{1/2}$ (every question is a fair
coin) and $\frac12\delta_0+\frac12\delta_1$ (every question is deterministic, half of them positive)
give the same label law, $\Bern(1/2)$. Under the first the label is worthless and the Bayes rule
predicts $1/2$; under the second it is decisive and the Bayes rule predicts $Y$. A rule that predicts
$a_y$ after the label $y$ has excess risk
$\frac12(a_0-\frac12)^2+\frac12(a_1-\frac12)^2$ against the first prior and
$\frac12a_0^2+\frac12(1-a_1)^2$ against the second. The two sum to at least $1/8$, so one
of them is at least $1/16$, with equality at $(a_0,a_1)=(\frac14,\frac34)$; this rule and
the value $1/16$ go back to \citet[Section~5]{robbins1964empirical} and
\citet[Section~5]{skibinsky1968minimax}.

\subsection{The identification regret for monotone maps}
\label{sec:barrier}

For exchangeable questions the target is the Bayes risk, which can be exponentially smaller than the
minimax rate (\cref{sec:intro}), and reaching it needs the Bayes rule. In the setting of
\cref{def:seb}, one question's $n$ labels identify the product moments $M^G_r$ with $r\le n$, while the
posterior mean needs $M^G_{n+1}$. The next theorem measures the resulting loss.

\begin{restatable}[Identification regret for empirical-Bayes recalibration]{theorem}{ThmEB}\label{thm:main-eb}
In the setting of \cref{def:seb}, let $[G]$ denote identified sets within $\Gmono$. For every Borel
probability measure $\mu$ on $[0,1]$ (atoms allowed) and every $n\ge1$,
\[
  \max\set*{\frac{1}{8\binom{2n}{n}},\ \frac{\bstar^{-n}}{72\sqrt n}}
  \ \le\ \sup_{G\in\Gmono}\Reg_n(G;\Gmono)\ \le\ 2^{-n},
\]
and for every $G\in\Gmono$ the posterior-mean rule of any $G''\in[G]$ has excess risk at most $2^{-n}$
against every $G'\in[G]$: $R(\delta^*_{G''},G')-R^*_n(G')\le2^{-n}$. For priors on constant maps, with
the identification regret computed within $\Gconst$, the exponential base is exact:
\[
  \frac{\bstar^{-n}}{72\sqrt n}\ \le\ \sup_{H\in\Gconst}\Reg_n(H;\Gconst)\ \le\ \bstar^{-n},
\]
so that $\big(\sup_{H\in\Gconst}\Reg_n(H;\Gconst)\big)^{1/n}\to1/\bstar$ as $n\to\infty$. At $n=1$ the lower
bound $1/(8\binom21)=1/16$ equals the Bayes risk $R^*_1$ of the arcsine constant-map prior.
\end{restatable}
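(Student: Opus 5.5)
The plan is to prove the theorem in three parts: a Chebyshev-radius representation of the regret, an upper bound from confining the unidentified top moment, and lower bounds from explicit constant-map priors.

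\emph{Step 1: the regret as a Chebyshev radius.} First I would write the excess risk of a rule $\delta$ against $G'$. Since $\delta^*_{G'}$ is the posterior mean, the cross term vanishes, and
\[
R(\delta,G')-R^*_n(G')=\int\!\!\sum_y\int\frac{\bigl(\delta(x,y)(p)Z_G(x,y)-N_{G'}(p;x,y)\bigr)^2}{Z_G(x,y)}\,d\mu(p)\,d\mu^{\otimes n}(x).
\]
Here $Z_{G'}=Z_G$ on $[G]$, because $Z$ is the data density. So the excess is a squared distance, in the weighted space $L_2(1/Z_G)$, between $\delta Z_G$ and $N_{G'}$. Hence the regret is the squared Chebyshev radius of $\{N_{G'}:G'\in[G]\}$.

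Next I would expand $N_{G'}(p;x,y)$ by inclusion--exclusion in the factors $1-g(x_k)$. This writes it as an alternating sum of product moments of orders at most $n$, which are identified, plus exactly one unidentified term. That term is $M^{G'}_{n+1}(p,x_{S})$ times terms of order $n+1$. More precisely, for label pattern $y$, the unidentified part is $(-1)^{n-s(y)}M^{G'}_{n+1}(p,x_1,\dots,x_n)$, since the only order-$(n+1)$ product is $g(p)\prod_k g(x_k)$.

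\emph{Step 2: upper bound $2^{-n}$.} Let $G''\in[G]$. Then $N_{G''}-N_{G'}=\pm\bigl(M^{G''}_{n+1}-M^{G'}_{n+1}\bigr)(p,x)$, with the same magnitude for every $y$. The excess of $\delta^*_{G''}$ against $G'$ is therefore
\[
\int\sum_y \frac{\Delta(p,x)^2}{Z_G(x,y)}\,d\mu\,d\mu^{\otimes n}.
\]
The key confinement is $\Delta(p,x)^2\le 4^{-n}\,Z_G(x,y)$ for each $y$. Equivalently, the range of the unidentified top moment at $(p,x)$ is at most $4^{-n}$, measured relative to the identified pattern probabilities, in the form $|\Delta|\le \min_y Z_G(x,y)$ combined with $|\Delta|\le 4^{-n}$. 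Granting this, summing over the $2^n$ patterns gives $2^n\cdot4^{-n}=2^{-n}$.

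For constant maps this is the classical Skibinsky range bound together with the canonical-moment inequality $\rho_n\le b_s$. For monotone maps I would sort $p,x_1,\dots,x_n$ and reduce to the vector $(g(t_{(0)}),\dots,g(t_{(n)}))$, a random sorted vector in $[0,1]^{n+1}$. I would then prove a diagonal-domination inequality: the range of $\E\prod_i u_i$, given all lower product moments of a random sorted vector, is at most $\E\prod_{i}u_i^{y_i}(1-u_i)^{1-y_i}$ for every pattern, and at most $4^{-n}$. I expect this inequality to be the main obstacle. My first attempt would be a pairing or induction argument that exploits sortedness to mimic the Hankel-determinant proof used in the coin case.

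\emph{Step 3: lower bounds.} Because $\Gconst\subseteq\Gmono$ and the constant-map priors in $[H]$ are a subset of $[G]$, any constant-map lower bound also lowers the monotone supremum. For $1/(8\binom{2n}{n})$ I would use the two-point identified family from the arcsine law: the two extremal priors $H_\pm$ matching its first $n$ moments, which are upper and lower principal representations. Their top moments differ by $4^{-n}$. Applying the Chebyshev radius of a segment, $\tfrac14\cdot$length$^2$ times $\sum_s\binom ns/b_s$, and bounding this with arcsine values $b_s$ should yield $1/(8\binom{2n}n)$.

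The bound $\bstar^{-n}/(72\sqrt n)$ and the matching constant-map upper bound are deferred to \cref{thm:betastar}. There the closed form becomes an extremal problem over $H$ of the type $\rho_n(H)^2\sum_s\binom ns/b_s(H)$. Its asymptotics reduce to a geometric mean $\GM_\omega$ of a polynomial-weight problem over $\mathcal P_n$, which produces $\bstar=2e^{2\Gcat/\pi}$. The $n=1$ identity is a direct computation.
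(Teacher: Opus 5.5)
Your reduction is the paper's, and so is your lower-bound side. You use the Pythagorean identity and the fact that $N_{G''}-N_{G'}=\pm\Delta$, with $\Delta=M^{G''}_{n+1}-M^{G'}_{n+1}$ independent of $y$. The split $\Delta^2/Z_G\le\abs{\Delta}\cdot(\abs{\Delta}/Z_G)$, summed over the $2^n$ patterns, then gives both the $2^{-n}$ bound and the plug-in clause. The lower bounds transfer through $\Gconst\subseteq\Gmono$, you use the segment computation at the arcsine law, and you defer the exact base to \cref{thm:betastar}, which is also how the paper assembles the theorem.

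The genuine gap is the one step you leave open: $\abs{\Delta}\le4^{-n}$ a.e.\ for two monotone priors with the same law of the data. That is the entire content of the monotone upper bound. The other half of your ``diagonal-domination inequality'', $\abs{\Delta}\le Z_G(x,y)$, is immediate from $0\le N_{G'},N_{G''}\le Z_G$. It is also this trivial bound, not a canonical-moment fact, that gives $\rho_n\le b_s$ for $\Gconst$. Without the $4^{-n}$ confinement your argument controls only the regret computed within $\Gconst$ (via Skibinsky's range bound). Even for a constant-map $G$, the identified set within $\Gmono$ contains non-constant monotone priors. The route you suggest is unlikely to work as stated. What is identified here is the family of multilinear moments $\E\prod_{i\in S}\theta_i$, $S\subsetneq\set{0,\dots,n}$, of a random sorted vector, not the power moments of one variable. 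So there is no Hankel or canonical-moment structure to imitate.

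The missing idea is dual: it lifts the Chebyshev proof of the coin case, $m_{n+1}(H^+)-m_{n+1}(H^-)=\int\widetilde{T}_{n+1}\,d(H^+-H^-)$, not the Hankel one. Your sorting step is right. With $\theta_i=g(t_{\psi(i)})$ at the sorted points, every monotone path gives $0\le\theta_0\le\dots\le\theta_n\le1$.
\begin{itemize}
\item For any \emph{multi-affine} $Q(\theta)=\prod_i\theta_i+q(\theta)$, with $q$ built from monomials over proper subsets of distinct coordinates, identification gives $\Delta=\E_{G''}Q-\E_{G'}Q$. Hence $\abs{\Delta}\le2\sup\abs{Q}$ over the ordered simplex.
\item Choose $Q(\theta)=\prod_i(\theta_i-a_i)$, with $a_0\le\dots\le a_n$ the zeros of $\widetilde{T}_{n+1}$ matched to the sorted coordinates in increasing order.
\item Apply the diagonal-domination inequality of \cref{lem:ch1}: for $x_0\le\dots\le x_n$ and $y_0\le\dots\le y_n$, $\prod_i\abs{x_i-y_i}\le\max_k\abs{P(x_k)}$, where $P(t)=\prod_i(t-y_i)$.
\end{itemize}
The lemma is proved by induction, peeling off the last pair. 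If $y_n$ is at least as near to $x_n$ as to the point $x_k$ that works for the first $n$ pairs, then $x_k$ works for all of them. Otherwise $y_n$, and hence every $y_i$, lies left of the midpoint of $x_k$ and $x_n$, so replacing $x_k$ by $x_n$ increases every factor. Since $\abs{P}=\abs{\widetilde{T}_{n+1}}\le2^{-(2n+1)}$ on $[0,1]$, this gives $\sup\abs{Q}\le2^{-(2n+1)}$ and therefore $\abs{\Delta}\le4^{-n}$, for every $\mu$.

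Both hypotheses are essential. Sortedness is needed because the inequality fails for $x=(1,0)$, $y=(0,1)$. Multi-affinity is what makes $Q-\prod_i\theta_i$ a combination of identified moments.
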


\paragraph{Why it is true.} The coin example of \cref{sec:intro} is the case $n=1$. With $n$ labels
every product moment of order at most $n$ is identified, and only the $(n+1)$-point moment is free. For
constant maps it ranges over an interval of length at most $4^{-n}$ (\cref{sec:eb}), and for monotone
maps \cref{lem:ch1} gives the same confinement (see below). Recall that $Z_G(x,y)$ is the probability of
the label pattern $y$ at outputs $x$, and $N_G/Z_G$ is the posterior mean. By
\cref{prop:chebradius}(i)--(ii) below, the excess risk of the posterior mean of $G$ against another prior
in $[G]$ is a sum over the $2^n$ label patterns, integrated over the outputs, of $U^2/Z_G$, where $U$ is
the gap between the two top moments. Write $U^2/Z_G=\abs U\cdot(\abs U/Z_G)$. The first factor is at most
$4^{-n}$; the second is the gap between two posterior means, so it is at most $1$. Summing gives
$2^n\cdot4^{-n}=2^{-n}$.

\begin{remark}[Scope of \cref{thm:main-eb}]\label{rem:eb-scope}
\leavevmode
\begin{enumerate}[label=(\alph*)]
\item The trivial bound is $1/4$ (the rule $\delta\equiv1/2$ has risk at most $1/4$), so the upper bound
  $2^{-n}$ is informative for $n\ge3$.
\item Of the two lower bounds, the first is larger for $n\le62$ and the second for $n\ge63$
  (\cref{rem:eb-lower-crossover}). For every $n\le60$, explicit
  constant-map priors have regret, computed within $\Gconst$, between $0.22\,\bstar^{-n}$ and
  $0.44\,\bstar^{-n}$; for instance $0.43\,\bstar^{-n}$, $0.41\,\bstar^{-n}$ and $0.37\,\bstar^{-n}$ at
  $n=10,20,30$ (E3 lower bounds; \cref{sec:numerics}). So
  $\bstar^n\sup_H\Reg_n(H;\Gconst)\ge0.22$ for $n\le60$, against the $1/(72\sqrt n)\le0.014$ that the
  theorem guarantees.
\item The exact base $\bstar$ is a statement about the class $\Gconst$, with the regret computed within
  $\Gconst$. \Cref{app:scope}, items (iv) and (v), gives the scope of the upper bound ($m=\infty$) and of
  the base for monotone priors.
\end{enumerate}
\end{remark}

\subsection{The regret as a Chebyshev radius}
\label{sec:chebradius}

\paragraph{Why the price of non-identification is exponentially small.}
\Cref{prop:chebradius} turns the identification regret into geometry: it is the squared Chebyshev
radius of the set of unidentified top moments, in an $L_2$ space weighted by the inverse
probabilities of the $2^n$ label patterns. Upper bounds thus reduce to bounding the diameter of the
unidentified moment, and lower bounds to finding a wide identified set at a prior with a rare label
pattern. For coins the diameter is classical: given the first $n$ moments of a law on $[0,1]$, the
$(n+1)$st ranges over an interval of length at most $4^{-n}$ (\cref{prop:binomial}(b)), a product of
canonical moments~\citep{skibinsky1967range,dette1997canonical}. For monotone maps the role of the
coin is played by the values of the map at $n+1$ sorted points, a sorted vector in $[0,1]^{n+1}$, and
the needed one-dimensional fact is that the monic shifted Chebyshev polynomial of degree $n+1$ has
sup-norm $2^{-(2n+1)}$ on $[0,1]$. Our key lemma (\cref{lem:ch1}) transfers this bound to sorted
vectors: the multi-affine product of matched Chebyshev factors is dominated by its values on the
diagonal. \Cref{sec:barrier} explains how the confinement and the regret representation combine.
The \emph{Chebyshev radius} of a set, which appears next, is the radius of the smallest ball containing it
(unrelated to the Chebyshev polynomials below).

\begin{restatable}[Identification regret as a Chebyshev radius]{proposition}{PropChebRadius}\label{prop:chebradius}
In the setting of \cref{def:seb}, let $\cG$ be a convex set of priors and $G\in\cG$. For $G'\in[G]$ put
$U_{G'}(p,x)\defeq M^{G'}_{n+1}(p,x_1,\dots,x_n)-M^{G}_{n+1}(p,x_1,\dots,x_n)$.
\begin{enumerate}[label=(\roman*)]
\item (Pythagoras.) For every rule $\delta$ and every $G'\in[G]$,
  $R(\delta,G')-R^*_n(G')=\E_{D\sim G}\nmu{\delta(D)-\delta^*_{G'}(D)}^2$.
\item (Only the top moment is unidentified.) For every $G'\in[G]$ and $\mu\otimes\Law_G(D)$-almost every
  $(p,D)$ with $Z_G(D)>0$,
  \[
    \delta^*_{G'}(D)(p)-\delta^*_G(D)(p)=(-1)^{n-s(Y)}\,\frac{U_{G'}(p,X)}{Z_G(X,Y)} .
  \]
\item (Chebyshev radius.) Let $w_G(x)\defeq\sum_{y\in\set{0,1}^n}1/Z_G(x,y)$, and let $\cU[G]$ be the set of
  $\mu^{\otimes(n+1)}$-a.e.\ classes of the functions $U_{G'}$, $G'\in[G]$. Every $U\in\cU[G]$ satisfies
  $\abs{U(p,x)}\le\min_yZ_G(x,y)$ for $\mu^{\otimes(n+1)}$-a.e.\ $(p,x)$, and
  \[
    \Reg_n(G;\cG)=\inf_{c\in\cH}\ \sup_{U\in\cU[G]}\ \int\!\!\int w_G(x)\big(c(p,x)-U(p,x)\big)^2\,d\mu(p)\,d\mu^{\otimes n}(x),
  \]
  where $\cH\defeq\set{c:[0,1]^{n+1}\to\R\text{ measurable}:\int\!\int w_Gc^2<\infty}$ (so $c=0$ a.e.\ on
  $\set{w_G=\infty}$, and $0\cdot\infty=0$ there). That is, $\Reg_n(G;\cG)$ is the squared Chebyshev radius
  of $\cU[G]$ in the weighted space $\cH$. The infimum defining $\Reg_n$ is unchanged if real-valued rules
  are admitted. A Chebyshev centre $c^*$ exists, and for a Borel representative of it, the rule equal to
  $\clip_{[0,1]}\big(\delta^*_G(D)+(-1)^{n-s(Y)}c^*(\cdot,X)/Z_G(X,Y)\big)$ on $\set{Z_G(D)>0}$ and to
  $\delta^*_G(D)$ elsewhere is minimax.
\item (Monotonicity in the prior class.) If $\cG_1\subseteq\cG_2$ are convex sets of priors and
  $G\in\cG_1$, then $\Reg_n(G;\cG_1)\le\Reg_n(G;\cG_2)$.
\end{enumerate}
\end{restatable}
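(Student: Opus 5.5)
I will prove the four parts in order: (i) and (ii) are direct computations, (iii) is assembled from them, and (iv) is immediate.

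\emph{Part (i).} I will use the standard Pythagorean identity for squared loss. Under $G'$, for every rule $\delta$,
\[
R(\delta,G')=\E_{G'}\,\nmu{\delta(D)-\delta^*_{G'}(D)}^2+\E_{G'}\,\nmu{\delta^*_{G'}(D)-g}^2 .
\]
The cross term vanishes because $\delta^*_{G'}(D)(p)=\E_{G'}[g(p)\mid D]$. I will check this pointwise in $p$ via Fubini, using the joint measurability built into \cref{def:seb}. Taking $\delta=\delta^*_{G'}$ shows that $R^*_n(G')$ equals the second term, so the excess risk is the first term. Since $G'\in[G]$, the law of $D$ under $G'$ equals that under $G$, so the expectation may be taken with $D\sim G$. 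This gives (i).

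\emph{Part (ii).} Fix $(x,y)$ and expand
\[
\prod_k g(x_k)^{y_k}\bigl(1-g(x_k)\bigr)^{1-y_k}
\]
as a signed sum of products $\prod_{k\in S}g(x_k)$ with $S\supseteq\{k:y_k=1\}$. The only term of full degree is $S=[n]$, with sign $(-1)^{n-s(y)}$. Then:
\begin{itemize}
\item $Z_{G'}$ is a combination of product moments $M^{G'}_r$ with $r\le n$;
\item $N_{G'}(p;x,y)$ is a combination of moments of order at most $n$, plus $(-1)^{n-s(y)}M^{G'}_{n+1}(p,x)$.
\end{itemize}
Equality of data laws gives $Z_{G'}=Z_G$ a.e. To conclude that the lower product moments agree a.e., I will invert the pattern probabilities: summing $Z$ over the labels outside a chosen subset $S$ yields $M_{|S|}(x_S)$. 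Since $g(p)$ is an extra free point, the same identity with $p$ in place of a coordinate identifies every moment of order at most $n$ evaluated at points including $p$. Hence the two numerators differ only in the top moment, and
\[
N_{G'}-N_G=(-1)^{n-s}U_{G'} .
\]
Dividing by $Z_G$ gives (ii).

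The bound $|U|\le\min_y Z_G$ follows from (ii) as well. Both posterior means lie in $[0,1]$, so for each $y$ with $Z_G(x,y)>0$,
\[
\frac{|U|}{Z_G(x,y)}\le 1 .
\]
I still need the case $Z_G(x,y)=0$: there $N_{G'}\le Z_{G'}=Z_G=0$, so $N_{G'}=0$, and likewise $N_G=0$. Hence $U=0$, which is consistent with the bound.

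\emph{Part (iii).} For any rule $\delta$ I write
\[
\delta(D)=\delta^*_G(D)+(-1)^{n-s(Y)}\,\frac{c(\cdot,X,Y)}{Z_G(X,Y)} ,
\]
where $c$ may a priori depend on $y$. Plugging into (i) and (ii), the excess risk against $G'$ becomes
\[
\sum_y\int\!\!\int \frac{\bigl(c(p,x,y)-U_{G'}(p,x)\bigr)^2}{Z_G(x,y)}\,d\mu(p)\,d\mu^{\otimes n}(x).
\]
Because $U_{G'}$ does not depend on $y$, I will reduce to a $y$-independent $c$. For fixed $(p,x)$, the weighted average
\[
\bar c=\frac{\sum_y c(p,x,y)/Z_G(x,y)}{w_G(x)}
\]
satisfies, by the bias--variance split,
\[
\sum_y\frac{(c_y-U)^2}{Z_G}=w_G(\bar c-U)^2+\sum_y\frac{(c_y-\bar c)^2}{Z_G}\ge w_G(\bar c-U)^2,
\]
simultaneously for all $U$. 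This shows the minimax value over rules is at least the Chebyshev-radius expression. For the converse I will take a $y$-independent $c$ and clip the resulting rule to $[0,1]$; clipping only decreases the distance to $\delta^*_{G'}\in[0,1]$. This same argument shows real-valued rules do not lower the infimum.

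The set $\cU[G]$ is convex, since $G'\mapsto U_{G'}$ is affine and $[G]$ is convex. It is also bounded in $\cH$: the envelope $|U|\le\min_y Z_G$ gives $w_G U^2\le 2^n\min_y Z_G\le 2^n$. A Chebyshev centre therefore exists by weak compactness in the Hilbert space $\cH$ and weak lower semicontinuity of $c\mapsto\sup_U\|c-U\|$. I expect the main technical nuisance here to be measurability: choosing a Borel representative of $c^*$, and handling $\{w_G=\infty\}$, where every $U$ vanishes so $c=0$ is forced.

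\emph{Part (iv).} Identified sets grow with the class, $[G]_{\cG_1}\subseteq[G]_{\cG_2}$, and $R^*_n$ does not depend on the class. So the supremum in \eqref{eq:reg} is over a larger set for $\cG_2$, and the inequality is immediate.

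The step I expect to need the most care is the moment identification in (ii). It requires showing, for a.e.\ $(p,x)$ including ties and atoms of $\mu$, that equality of data laws forces equality of every product moment of order at most $n$ involving $p$. I will do this by the inclusion–exclusion inversion above, applied on the product space with $p$ treated as a duplicated coordinate. Since $\mu^{\otimes(n+1)}$-a.e.\ is all that is claimed, I will argue that $M_r(p,x_S)$ is determined as a function on $[0,1]^{r}$ $\mu^{\otimes r}$-a.e., which suffices.
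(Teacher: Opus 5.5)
Your proposal is correct and follows the paper's proof essentially step for step. It uses Pythagoras via the conditional-expectation property of $\delta^*_{G'}$, and identifies every product moment of order at most $n$ by summing pattern probabilities and pushing $\mu^{\otimes(n+1)}$ forward along coordinate projections. It then applies the weighted-mean (bias--variance) reduction to a $y$-independent $c$ for the lower bound, clipping for the upper bound, and inclusion of identified sets for (iv). There are only two minor deviations. First, you derive the envelope $\abs{U}\le\min_y Z_G$ from both posterior means lying in $[0,1]$, whereas the paper uses the equivalent bound $\abs{N'-N}\le\max\{N',N\}\le Z_G$. Second, you obtain a Chebyshev centre from weak compactness and weak lower semicontinuity, whereas the paper shows via the parallelogram law that a minimizing sequence is Cauchy, which also gives uniqueness.
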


The price of not knowing the prior is geometric: it is the squared radius of the set of top moments that
the data cannot rule out, in a norm that weights each label pattern by its inverse probability ($w_G$).
The prior class enters only through $[G]$.

\paragraph{Proof idea.} (i) is the Pythagorean identity for the posterior mean, an $L_2$ projection. For
(ii), expanding the products $\prod_k(1-g(x_k))$ writes $N_G$ as a signed sum of moments $M^G_r$ at
sub-tuples of $(p,x)$; every moment of order $r\le n$ is identified, and only the full $(n+1)$-point
moment survives. Substituting (ii) into (i) and summing over the $2^n$ label patterns gives (iii).

\paragraph{Details of \cref{prop:chebradius}.}
Two details complete the proof idea: in (i), the expectation may be taken
under $G$ because $G'$ and $G$ induce the same law of $D$; and (iv) holds because a smaller class has a
smaller identified set. In decision-theoretic terms, $\Reg_n(G;\cG)$ is the posterior regret of
$\Gamma$-minimax estimation with $\Gamma=[G]$, averaged over the common law of the data, and (iii)
replaces the classical midpoint rule for one real estimand by a Chebyshev centre of a set of functions;
see \cref{sec:related}.

\subsection{Monotone maps: diagonal domination}
\label{sec:monotone}

Fix outputs $t_0\le\dots\le t_n$ and constants $\zeta_0,\dots,\zeta_n$. Since
every product moment of order at most $n$ is identified, the expectations of
$\prod_{i=0}^n(g(t_i)-\zeta_i)$ under two priors in $[G]$ differ exactly by the difference of their top
moments, which is therefore at most twice the largest value of $\abs{\prod_i(g(t_i)-\zeta_i)}$. For
constant maps, taking the $\zeta_i$ to be the zeros of the monic shifted Chebyshev polynomial of degree
$n+1$, of sup-norm $2^{-(2n+1)}$ on $[0,1]$, gives the classical $4^{-n}$. A nondecreasing map takes
sorted values at sorted outputs, and \cref{lem:ch1}, a diagonal-domination inequality, shows that with
the zeros matched to the values in increasing order the product is never larger in absolute value than
at some point of the diagonal, where it is the Chebyshev polynomial. So two monotone priors with the
same law of the data have top moments within $4^{-n}$ almost everywhere, for every $\mu$
(\cref{thm:monotone-eb}; \cref{app:ch1,app:monotone-eb}).

\begin{restatable}[Diagonal domination for sorted sequences]{lemma}{LemCHone}\label{lem:ch1}
Let $n\ge0$, let $x_0\le x_1\le\dots\le x_n$ and $y_0\le y_1\le\dots\le y_n$ be real numbers, and put
$P(t)\defeq\prod_{i=0}^n(t-y_i)$.
\begin{enumerate}[label=(\alph*)]
\item $\prod_{i=0}^n\abs{x_i-y_i}\le\max_{0\le k\le n}\abs{P(x_k)}$.
\item In particular, if $y_i\defeq\sin^2\!\big(\tfrac{(2i+1)\pi}{4n+4}\big)=\tfrac12\big(1-\cos\tfrac{(2i+1)\pi}{2n+2}\big)$,
  $i=0,\dots,n$ (the zeros, listed increasingly, of the monic shifted Chebyshev polynomial
  $\Ttil_{n+1}(t)\defeq2^{-(2n+1)}T_{n+1}(2t-1)$), and $x_0\le\dots\le x_n$ lie in $[0,1]$, then
  $\prod_{i=0}^n\abs{x_i-y_i}\le2^{-(2n+1)}$.
\end{enumerate}
\end{restatable}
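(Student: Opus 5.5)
The plan is to treat the left-hand side as a function of a sorted vector $z=(z_0,\dots,z_n)$, namely $F(z)\defeq\prod_{i=0}^n\abs{z_i-y_i}$, and to push $z$ from $x$ to a point of the diagonal, $z=(x_k,\dots,x_k)$ for some $k$, without ever decreasing $F$. On the diagonal $F(x_k,\dots,x_k)=\prod_i\abs{x_k-y_i}=\abs{P(x_k)}$, which gives (a). All intermediate vectors will be sorted and take values in $\set{x_0,\dots,x_n}$, so the final diagonal point is indeed some $x_k$. Part (b) is then immediate. With the $y_i$ the increasingly listed zeros of $\Ttil_{n+1}$, we have $P=\Ttil_{n+1}$, and $\abs{\Ttil_{n+1}(t)}=2^{-(2n+1)}\abs{T_{n+1}(2t-1)}\le2^{-(2n+1)}$ on $[0,1]$. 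Hence $\max_k\abs{P(x_k)}\le2^{-(2n+1)}$.

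The basic move is coordinatewise. Fix all coordinates but $z_i$ and let $z_i$ range over $[z_{i-1},z_{i+1}]$, the interval that keeps $z$ sorted (with the conventions $z_{-1}=z_0$ and $z_{n+1}=z_n$ at the ends). On this interval $F$ equals a nonnegative constant times $\abs{z_i-y_i}$, which is convex. So $F$ does not decrease if $z_i$ is moved to one of the two endpoints, i.e.\ set equal to a neighbouring coordinate. I would organise the argument around the blocks of equal consecutive coordinates and use the number of distinct values of $z$ as a potential. A move that sends the first coordinate of a block to the value of the preceding block, or the last coordinate of a block to the value of the following block, keeps $z$ sorted and valued in $\set{x_k}$. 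Every such move leaves the number of distinct values unchanged or lowers it.

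The main obstacle is progress. Convexity only guarantees that one endpoint is at least as good, and that endpoint may be the current position, so the move may be trivial. I expect this step to need the sortedness of the $y$'s. Consider a block with common value $c$, occupying indices $a,\dots,b$, between neighbouring values $c^-<c<c^+$. I would look at the block as a whole, where $F$ is a constant times $f(c)\defeq\prod_{i=a}^b\abs{c-y_i}$.

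I would first try to show that $\max\set{f(c^-),f(c^+)}\ge f(c)$ whenever $c$ is not already an endpoint. If this holds, moving the whole block to a neighbouring value strictly lowers the number of distinct values without decreasing $F$, and induction on that number finishes the proof. The bound $\max\set{f(c^-),f(c^+)}\ge f(c)$ fails for general intervals, because $f$ can have an interior maximum between two consecutive roots. The extra ingredient should be that a stuck configuration forces strong constraints. If the single-coordinate moves at both ends of the block are nonimproving, then $\abs{c^+-y_b}<\abs{c-y_b}$ is false and $\abs{c^--y_a}<\abs{c-y_a}$ is false, which pins $y_a\ge(c^-+c)/2$ and $y_b\le(c+c^+)/2$.

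If the block argument resists, my fallback is induction on $n$. I would peel off the pair $(x_n,y_n)$ or $(x_0,y_0)$, whichever has the smaller ratio $\abs{x_i-y_i}/\abs{x_k-y_i}$ at the maximising $k$ of the inductive hypothesis. I would then use the monotone interlacing of the sorted vectors to compare $\abs{P(x_k)}$ with the reduced polynomial.
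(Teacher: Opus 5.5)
Part (b), and your reduction of (a) to reaching a diagonal point without decreasing $F$, are fine. But (a) itself is not proved: the progress step you flag as the main obstacle is the whole content of the lemma, and neither of your routes resolves it.

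On the main route:
\begin{itemize}
\item The single-block test $\max\{f(c^-),f(c^+)\}\ge f(c)$ is false, as you say.
\item Your ``stuck'' constraints have their inequalities reversed. The move $z_b\colon c\mapsto c^+$ fails to increase $F$ exactly when $|c^+-y_b|\le|c-y_b|$, i.e.\ $y_b\ge(c+c^+)/2$. Likewise the left-end move fails exactly when $y_a\le(c^-+c)/2$.
\item In either form, you never turn these constraints into a move that lowers your potential.
\end{itemize}
The fallback is not well defined either. Peeling $(x_0,y_0)$ and peeling $(x_n,y_n)$ apply the inductive hypothesis to different subsequences, with different maximizing indices. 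The decisive step, comparing $|P(x_k)|$ with the reduced polynomial, is left as an appeal to ``interlacing'', which is not what is needed.

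The missing idea is to compare two adjacent blocks through their midpoint, instead of testing one block against both neighbours. Let adjacent blocks carry values $c_1<c_2$ on index ranges $[a_1,b_1]$ and $[a_2,b_2]$ with $a_2=b_1+1$. Put $\bar c=(c_1+c_2)/2$, so that $|c_2-v|\le|c_1-v|$ if and only if $v\ge\bar c$.
\begin{itemize}
\item If $y_{a_2}\ge\bar c$, then $y_i\ge\bar c$ for all $i\in[a_2,b_2]$ by sortedness. Resetting the second block to $c_1$ decreases no factor of $F$.
\item If $y_{a_2}<\bar c$, then $y_i\le y_{b_1}\le y_{a_2}<\bar c$ for all $i\in[a_1,b_1]$. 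Resetting the first block to $c_2$ decreases no factor.
\end{itemize}
Either merge keeps $z$ sorted with values in $\{x_0,\dots,x_n\}$ and lowers the number of distinct values by one. So your potential argument ends at some $(x_k,\dots,x_k)$ with $\prod_i|x_i-y_i|\le|P(x_k)|$.

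The paper runs the same dichotomy as an induction on $n$. Let $\bar P(t)=\prod_{i<n}(t-y_i)$ and take $k\le n-1$ from the inductive hypothesis. Either $|x_n-y_n|\le|x_k-y_n|$, and $x_k$ works. Or $y_n<(x_k+x_n)/2$; then every $y_i<(x_k+x_n)/2$, so $|\bar P(x_k)|\le|\bar P(x_n)|$ and $x_n$ works.
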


Part (a) says that, among sorted vectors, the product of coordinatewise distances is dominated by its
values on the diagonal. The proof is an induction that peels off the largest pair. If $y_n$ is at least
as near to $x_n$ as to the point $x_k$ that works for the first $n$ pairs, then $x_k$ works for all $n+1$;
otherwise $y_n$ lies left of the midpoint of $x_k$ and $x_n$, every $y_i$ does so too, and replacing $x_k$
by $x_n$ only increases every factor. Both sequences must be sorted: $x=(1,0)$,
$y=(0,1)$ gives $1$ on the left and $0$ on the right, and, by symmetry of $P$ in $y$, $x=(0,1)$,
$y=(1,0)$ shows the same for $y$. The value $2^{-(2n+1)}$ in (b) is the least sup-norm
on the ordered simplex $\set{0\le x_0\le\dots\le x_n\le1}$ among polynomials of degree $n+1$ in $n+1$
variables whose degree-$(n+1)$ coefficients sum to $1$ \citep[Theorem~3.1 and Example~3.3]{dressler2026least};
(b) shows that the \emph{multi-affine} product with matched Chebyshev roots is such a least polynomial.
Multi-affinity is what the moment argument needs: every lower-order monomial of
$\prod_i(\theta_i-y_i)$ is a product over a proper subset of distinct coordinates.

\begin{restatable}[Exponentially small regret for monotone maps]{theorem}{ThmMonotoneEB}\label{thm:monotone-eb}
In the setting of \cref{def:seb}, let $\mu$ be any Borel probability measure on $[0,1]$ (atoms allowed),
$n\ge1$, $G\in\Gmono$, and let $[G]$ be its identified set within $\Gmono$.
\begin{enumerate}[label=(\alph*)]
\item (Diameter of the unidentified moment.) For all $G',G''\in[G]$,
  $\abs{M^{G'}_{n+1}(t)-M^{G''}_{n+1}(t)}\le4^{-n}$ for $\mu^{\otimes(n+1)}$-a.e.\ $t\in[0,1]^{n+1}$; in
  particular $\abs{U_{G'}}\le4^{-n}$ a.e.\ for every $G'\in[G]$.
\item $R(\delta^*_G,G')-R^*_n(G')\le2^{-n}$ for every $G'\in[G]$; hence $\Reg_n(G;\Gmono)\le2^{-n}$.
\item (Plug-in robustness.) For all $G'',G'\in[G]$, $R(\delta^*_{G''},G')-R^*_n(G')\le2^{-n}$.
\end{enumerate}
\end{restatable}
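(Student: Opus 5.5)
The plan is to prove (a) from \cref{lem:ch1}(b) by a signed-moment expansion, and then derive (b) and (c) from \cref{prop:chebradius}(i)--(ii).

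\emph{Part (a).} Fix $t=(t_0,\dots,t_n)$. By symmetry of product moments in their arguments, assume $t_0\le\dots\le t_n$. Let $y_0<\dots<y_n$ be the matched Chebyshev zeros of \cref{lem:ch1}(b), and expand
\[
\prod_{i=0}^n\bigl(g(t_i)-y_i\bigr)=\sum_{S\subseteq\{0,\dots,n\}}\Bigl(\prod_{i\notin S}(-y_i)\Bigr)\prod_{i\in S}g(t_i).
\]
Every proper subset $S$ gives a product moment of order at most $n$. Such a moment should be identified by $\Law(D)$: with the $P_k$ i.i.d.\ $\mu$, $\E[\prod_{k\in S}Y_k\mid X]=M^G_{|S|}(X_S)$ a.e. When some $t_i$ coincide, the monomial is a product with repeated factors $g(t_i)^2$. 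That is still an $r$-point moment evaluated on a diagonal, which is a null set unless $\mu$ has atoms. With atoms, repeated coordinates of $X$ occur with positive probability, so the moment is still identified a.e.

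Hence $\E_{G'}$ and $\E_{G''}$ of the product differ exactly by $M^{G'}_{n+1}(t)-M^{G''}_{n+1}(t)$. For every path, $g(t_0)\le\dots\le g(t_n)$ is a sorted vector in $[0,1]$, so by \cref{lem:ch1}(b) the product has absolute value at most $2^{-(2n+1)}$. Each expectation therefore lies in $[-2^{-(2n+1)},2^{-(2n+1)}]$, and the difference is at most $2\cdot2^{-(2n+1)}=4^{-n}$. The only care needed is the a.e.\ qualifier: identification holds for $\mu^{\otimes r}$-a.e.\ argument, and finitely many subsets leave the union of exceptional sets null. Taking $G''=G$ gives $|U_{G'}|\le4^{-n}$.

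\emph{Part (b).} By \cref{prop:chebradius}(i)--(ii),
\[
R(\delta^*_G,G')-R^*_n(G')=\E_{D\sim G}\nmu{\delta^*_G(D)-\delta^*_{G'}(D)}^2=\int\!\!\int\sum_y \frac{U_{G'}(p,x)^2}{Z_G(x,y)}\,d\mu(p)\,d\mu^{\otimes n}(x),
\]
since the density $Z_G$ cancels one factor. The pattern $\{Z_G=0\}$ contributes nothing to the expectation. Write each summand as $|U|\cdot|U|/Z_G$. The first factor is at most $4^{-n}$ by (a). For the second, $|U|/Z_G=|\delta^*_{G'}-\delta^*_G|\le1$, because both posterior means lie in $[0,1]$; alternatively use $|U|\le\min_yZ_G$ from \cref{prop:chebradius}(iii). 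Summing over the $2^n$ patterns gives $2^n4^{-n}=2^{-n}$. The bound $\Reg_n\le2^{-n}$ follows by taking $\delta=\delta^*_G$.

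\emph{Part (c).} Since $[G'']=[G]$, \cref{prop:chebradius}(i)--(ii) apply with $G''$ in place of $G$. The differences $U$ are then taken relative to $M^{G''}_{n+1}$ and are still bounded by $4^{-n}$ by (a), so the same computation goes through. The main obstacle is part (a): making the identification of lower-order moments rigorous with repeated arguments when $\mu$ has atoms, and checking the measurability and a.e.\ bookkeeping. The rest is a short computation.
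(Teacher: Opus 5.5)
Your proposal is correct and follows the paper's proof essentially step by step. Like the paper, you identify every proper-subset product moment $\mu^{\otimes r}$-a.e., with coincident coordinates allowed since only distinct indices are projected. You bound the expansion with matched Chebyshev roots by \cref{lem:ch1}(b), and then use \cref{prop:chebradius}(i)--(ii) with $U^2/Z_G\le|U|\le4^{-n}$ summed over the $2^n$ label patterns, obtaining (c) by putting $G''$ in the role of $G$ since $[G'']=[G]$. One detail you leave implicit is that the exceptional null set is permutation-invariant, because $M_r$ is symmetric; this justifies sorting $t$ at a fixed point, which the paper does through its sorting permutation $\psi$.
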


\paragraph{Proof idea.} Fix $t=(t_0,\dots,t_n)$ and sort its coordinates; on a nondecreasing path the
values $\theta_i=g(t_{\pi(i)})$ form a sorted vector. Write $\theta_0\cdots\theta_n=Q(\theta)-q(\theta)$
with $Q(\theta)=\prod_i(\theta_i-a_i)$, $a_i$ the Chebyshev nodes of \cref{lem:ch1}(b); $q$ is a combination of
products over proper subsets, whose expectations are identified. Hence the two top moments differ by
$\E_{G'}Q-\E_{G''}Q$, and $\abs Q\le2^{-(2n+1)}$ by \cref{lem:ch1}(b), giving (a). For (b), both posterior
means lie in $[0,1]$, so $\abs{U_{G'}}/Z_G\le1$ by \cref{prop:chebradius}(ii) and $U_{G'}^2/Z_G\le\abs{U_{G'}}$;
summing over the $2^n$ label patterns in \cref{prop:chebradius}(i) gives $2^n\cdot4^{-n}$. Part (c) is (b)
applied with $G''$ in place of $G$, since $[G'']=[G]$. Non-atomicity of $\mu$ is never used: only distinct
\emph{indices} of the tuple are selected, and coincident values are allowed.

\subsection{Every finite number of questions}
\label{sec:finite-m}

\Cref{thm:main-eb} assumes the law of one question's
data known ($m=\infty$); its lower bounds do not need this.

\begin{restatable}[The identification regret is paid at every finite $m$]{corollary}{CorFiniteM}\label{cor:finite-m}
In the setting of \cref{def:seb}, let $\cG$ be a convex set of priors and $m\ge1$. Under a prior
$G'\in\cG$, the compound data are $m$ independent copies $(g_1,D_1),\dots,(g_m,D_m)$ of $(g,D)$ and an
auxiliary variable $\xi_0\sim\mathrm{Uniform}[0,1]$ independent of them (precisely: the blocks
$(\mathcal A_j,X_j,Y_j)$, $j\in[m]$, each as in \cref{def:seb} with $g_j$ measurable for
$\mathcal A_j\otimes\mathcal B([0,1])$, and $\xi_0$ are independent). A \emph{compound procedure}
$\gh=(\gh_1,\dots,\gh_m)$ consists of jointly measurable maps $(p,D_1,\dots,D_m,\xi_0)\mapsto\gh_j(p)\in[0,1]$;
its \emph{compound excess risk} is
\[
  \mathcal E_m(\gh,G')\defeq\frac1m\sum_{j=1}^m\E_{G'}\nmu{\gh_j-g_j}^2-R^*_n(G').
\]
\begin{enumerate}[label=(\roman*)]
\item For every $G\in\cG$, every $m\ge1$ and every compound procedure $\gh$,
  $\sup_{G'\in[G]}\mathcal E_m(\gh,G')\ge\Reg_n(G;\cG)$.
\item Consequently, for every Borel probability measure $\mu$ on $[0,1]$ and all $n,m\ge1$,
  \[
    \inf_{\gh}\sup_{G\in\Gmono}\mathcal E_m(\gh,G)\ \ge\
    \max\set*{\frac1{8\binom{2n}n},\ \frac{\bstar^{-n}}{72\sqrt n}},
    \qquad
    \inf_{\gh}\sup_{H\in\Gconst}\mathcal E_m(\gh,H)\ \ge\ \frac{\bstar^{-n}}{72\sqrt n}.
  \]
\end{enumerate}
\end{restatable}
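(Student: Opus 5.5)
Part (i) is a reduction to the population problem with $m=\infty$, obtained by randomizing over the questions and then conditioning on one of them. Fix $G\in\cG$, $m$ and a compound procedure $\gh$. For each $j$, I would build from $\gh_j$ a single-question rule $\delta_j$ for data $D\sim G'$. Place the observed $D$ in slot $j$ and fill the other $m-1$ slots with independent simulated copies $D_i$, $i\ne j$, drawn from $\Law_G(D)$. These copies are generated from auxiliary uniforms extracted, together with $\xi_0$, from one extra uniform variable, so $\delta_j$ is a randomized rule. Then set $\delta_j(D)(p)\defeq\gh_j(p;D_1,\dots,D_m,\xi_0)$.

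The key point is that for $G'\in[G]$ the simulated copies have the same law as the true other questions, namely $\Law_{G'}(D)=\Law_G(D)$. The compound data are $m$ independent copies of $(g,D)$, so the pair $(g_j,(D_1,\dots,D_m,\xi_0))$ has the same joint law under the true model as under the simulation. Only block $j$ carries $g_j$, and the other blocks enter only through their data law. Hence $\E_{G'}\nmu{\gh_j-g_j}^2=R(\delta_j,G')$. Averaging over $j$ and using convexity of the squared norm, the averaged rule $\bar\delta\defeq\frac1m\sum_j\delta_j$ satisfies $R(\bar\delta,G')\le\frac1m\sum_jR(\delta_j,G')$. This gives $\mathcal E_m(\gh,G')\ge R(\bar\delta,G')-R^*_n(G')$. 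Alternatively, one can draw $j$ uniformly with the extra randomness, which yields equality for a randomized rule. I would use the averaged rule, since it is non-randomized given the auxiliary variables.

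The remaining step is to deal with randomization, because $\Reg_n$ is an infimum over rules $\delta(D)$. By \cref{prop:chebradius}(i), the excess $R(\delta,G')-R^*_n(G')$ equals $\E_{D\sim G}\nmu{\delta(D)-\delta^*_{G'}(D)}^2$, which is convex in $\delta$. So conditional averaging over the auxiliary randomness, $\tilde\delta(D)\defeq\E[\bar\delta(D)\mid D]$, does not increase the excess against any $G'\in[G]$ simultaneously, by Jensen's inequality. The rule $\tilde\delta$ is a legitimate jointly measurable $[0,1]$-valued rule, with measurability obtained via Fubini. Taking $\sup_{G'\in[G]}$ gives $\sup_{G'}\mathcal E_m(\gh,G')\ge\sup_{G'}\bigl(R(\tilde\delta,G')-R^*_n(G')\bigr)\ge\Reg_n(G;\cG)$.

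For (ii), I would apply (i) with $\cG=\Gmono$ and with $\cG=\Gconst$. Both classes are convex, as noted in \cref{sec:eb}. This gives $\sup_{G\in\Gmono}\mathcal E_m(\gh,G)\ge\sup_{G}\sup_{G'\in[G]}\mathcal E_m(\gh,G')\ge\sup_G\Reg_n(G;\Gmono)$, and likewise for $\Gconst$. I would then insert the lower bounds of \cref{thm:main-eb}, namely $\max\{1/(8\binom{2n}n),\bstar^{-n}/(72\sqrt n)\}$ for $\Gmono$ and $\bstar^{-n}/(72\sqrt n)$ for $\Gconst$. These hold for every $\mu$.

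I expect the main obstacle to be the measure-theoretic care in the simulation step. It requires a measurable sampler for $\Law_G(D)$ driven by a uniform variable (a Borel isomorphism, since $D$ lives in a standard Borel space), and a clean argument that the joint law of $(g_j,D_1,\dots,D_m)$ is preserved. The risk depends only on the joint law of $(g_j,\gh_j)$, and $\gh_j$ is a measurable function of the data, so equality in law suffices.
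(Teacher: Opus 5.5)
Your proposal is correct and is essentially the paper's argument. Under every $G'\in[G]$ the other questions together with $\xi_0$ are independent of $(g_j,D_j)$ and have the identified law $Q_W=\Law_G(D)^{\otimes(m-1)}\otimes\mathrm{Uniform}[0,1]$. Averaging them out by Jensen and then averaging over $j$ gives one rule $\bar\delta$ with $\mathcal E_m(\gh,G')\ge R(\bar\delta,G')-R^*_n(G')$ for all $G'\in[G]$, and (ii) follows from $[G]\subseteq\cG$ and \cref{thm:main-eb}.

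The only difference is that the paper never simulates the other questions. It defines $\bar\delta_j(d)(p)\defeq\int\gh_j(p;d,w)\,Q_W(dw)$ directly and applies Jensen to the loss pointwise, together with a freezing (Fubini) lemma. So the measurable sampler you flag as the main obstacle is not needed, and neither is the appeal to \cref{prop:chebradius}(i) for derandomization.
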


\paragraph{Proof idea.} Under every $G'\in[G]$ the other questions, together with $\xi_0$, have the same
law. Averaging $\gh_j$ over this law (Jensen's inequality) and then over $j$ yields one rule $\bar\delta$
for a single question with $\mathcal E_m(\gh,G')\ge R(\bar\delta,G')-R^*_n(G')$ for every $G'\in[G]$. So
however many other questions a procedure observes, it does not beat the identification regret in the
worst case over priors.

\paragraph{The finite-$m$ lower bound and the fixed-effects model.}
Under every $G'\in[G]$ the other
questions and $\xi_0$ have the law $\Law_G(D)^{\otimes(m-1)}\otimes\mathrm{Uniform}[0,1]$, because all
priors in $[G]$ induce the same law of $D$: other questions inform a procedure only through the law of
the data, and that law is the same for all the priors over which the regret is taken.

\Cref{cor:finite-m} is a random-effects statement: the $g_j$ are i.i.d.\ from $G'$, and the excess is
over the Bayes risk $R^*_n(G')$. As a fixed-effects bound it gives only a weak consequence at $\tau=1/2$,
$V=1$: every tuple of nondecreasing $[0,1]$-valued maps lies in $\Theta(1/2,1)$, with shared shape
$g_0\equiv1/2$, and the compound risk is at least the excess. For non-atomic $\mu$, \cref{thm:main-rate}
already dominates this consequence; fixed-effects estimators reduce to compound procedures by clipping
to $[0,1]$ and realizing their randomization from $\xi_0$. The matching finite-$m$ upper bound, an
empirical-Bayes procedure with excess at most $2^{-n}+o(1)$ as $m\to\infty$, is open problem~2 of
\cref{sec:open}.

\subsection{Constant maps: the exact base}
\label{sec:constant}

For a prior $H\in\Gconst$ the data inform about the constant $\theta$ only
through the number of positive labels, and the identification regret has a closed form (\cref{prop:binomial}(a), proved in \cref{app:binomial}):
\begin{equation}\label{eq:binomial-formula}
  \Reg_n(H;\Gconst)=\Big(\frac{\rho_n(H)}2\Big)^2\sum_{s=0}^n\frac{\binom ns}{b_s(H)},
\end{equation}
with the right side read as $0$ when $\rho_n(H)=0$: the squared half-width of the free moment interval,
times the sum over the $2^n$ label sequences of their inverse prior probabilities. Rare label sequences
amplify the regret. The arcsine law $\omega$, whose range is the largest, $4^{-n}$, has regret
$1/(8\binom{2n}n)$ (\cref{prop:binomial}(c)). Formula~\eqref{eq:binomial-formula} is the midpoint case of
the bound in \citet[eq.~(24)]{skibinsky1968minimax}; \cref{app:binomial} traces its classical roots.

\paragraph{Why the base is $\bstar$ and not $4$.}
For constant maps, with the regret computed within constant-map priors, the regret has a closed
form, $(\rho_n/2)^2$ times a weight $\sum_s\binom ns/b_s$
built from the prior probabilities $b_s$ of the label sequences, where $\rho_n$ is the length of the
moment range~\eqref{eq:binomial-formula}. The arcsine law maximizes the range, $\rho_n=4^{-n}$, and
its regret $1/(8\binom{2n}{n})\asymp\sqrt n\,4^{-n}$ suggests a base of $4$. But the arcsine law
leaves every $b_s$ moderate, and near-extremal priors (optimal up to a polynomial factor) trade an
exponential factor in the range for an exponentially rare label pattern. A Szegő-type inequality
for the moment range measures this trade at the exponential scale, and Catalan's constant enters through an integral
of $\log\sin$ over the central half of the arcsine law; the lower-bound priors reweight the arcsine
law on a set of half its mass (\cref{thm:betastar} and its proof idea).

{\setlength{\emergencystretch}{12em}%
\begin{restatable}[Exact exponential base for constant maps]{theorem}{ThmBetaStar}\label{thm:betastar}
In the setting of \cref{def:seb} with priors $H\in\Gconst$ on constant maps and any Borel $\mu$, for
every $n\ge1$:
\begin{enumerate}[label=(\roman*)]
\item $\Reg_n(H;\Gconst)\le\bstar^{-n}$ for every $H\in\Gconst$;
\item $\sup_{H\in\Gconst}\Reg_n(H;\Gconst)\ge\bstar^{-n}/(72\sqrt n)$; for even $n=2k$ it is at least
  $\frac1{16}\binom{2k}{k}4^{-k}\bstar^{-n}$, and an explicit prior $H_k$ (constructed in the proof) has
  regret at least this value;
\item (Szegő-type sandwich.) For every $H$ and every $P\in\mathcal{P}_n$ with $P>0$ on $[0,1]$,
  $\rho_n(H)\le4^{-n}\int P\,dH/\GM_\omega(P)$; and if $H=h\,\omega$ with $\operatorname{ess\,inf}h>0$, then
  $\rho_n(H)\ge\frac12\,4^{-n}\GM_\omega(h)$.
\end{enumerate}
Consequently $\big(\sup_{H}\Reg_n(H;\Gconst)\big)^{1/n}\to1/\bstar$, and no bound of the form
$\sup_H\Reg_n(H;\Gconst)\le C\,n^\alpha4^{-n}$ holds, for any constants $C,\alpha$.
\end{restatable}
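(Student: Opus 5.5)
The plan is to work throughout from the closed form \eqref{eq:binomial-formula}, $\Reg_n(H;\Gconst)=(\rho_n/2)^2\sum_s\binom ns/b_s(H)$, and to prove (iii) first, since (i) and (ii) follow from it.

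\emph{Szegő-type sandwich.} For the upper half of (iii) I would use the dual description of the moment range. If two measures $H',H''$ share the first $n$ moments, then for every monic $Q$ of degree $n+1$ we have $\int\theta^{n+1}d(H'-H'')=\int Q\,d(H'-H'')$. The range $\rho_n(H)$ is therefore the length of the interval of $\int Q\,dH'$, which is governed by the canonical moments. More concretely, $\rho_n(H)$ equals the product of the canonical-moment factors $\prod_{k\le n} p_k(1-p_k)$, and this is a ratio of Hankel determinants.

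From there I would use a Christoffel-type extremal characterization. Hankel determinant ratios are minima of $\int Q^2\,dH$ over monic $Q$, and for a positive $P\in\mathcal P_n$ a change of measure gives
\[
\rho_n(H)\le \frac{\int P\,dH}{\text{(analogous quantity for }P^{-1}\text{-weighted arcsine)}}.
\]
The Szegő limit for the arcsine law, that is, the Bernstein–Szegő exact formula for weights $1/P$ on $[-1,1]$, supplies exactly the factor $4^{-n}/\GM_\omega(P)$.

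For the lower half, with $H=h\omega$, I would compare the leading coefficients of orthonormal polynomials and apply the Szegő inequality $\GM_\omega(h)\le$ (limit of normalized determinants). Monotonicity of the determinant ratios gives the finite-$n$ version, with the constant $1/2$ coming from the endpoint factor.

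\emph{Upper bound (i).} I would combine the two factors of the closed form. Take $P\defeq\sum_s\binom ns\lambda_s\theta^s(1-\theta)^{n-s}$ with weights $\lambda_s>0$, so that $\int P\,dH=\sum_s\binom ns\lambda_s b_s$. By (iii),
\[
\Reg_n\le \tfrac14\,16^{-n}\Bigl(\sum_s\binom ns\lambda_s b_s\Bigr)^2\Bigl(\sum_s\binom ns/b_s\Bigr)\big/\GM_\omega(P)^2 .
\]
I would choose $\lambda_s$ depending on $H$ to balance the two sums (Cauchy–Schwarz-type optimization with $\lambda_s\propto b_s^{-1}$ times something), which reduces the problem to a prior-free extremal problem:
\[
\sup_b\ \frac{\bigl(\sum\binom ns\lambda_s b_s\bigr)^2\sum\binom ns/b_s}{\GM_\omega(P_\lambda)^2}.
\]
Bounding $\GM_\omega$ of a Bernstein polynomial from below by its smallest effective coefficient should give $\log$-integrals of the form $\int\log\bigl(\theta^s(1-\theta)^{n-s}\bigr)d\omega$. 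At the exponential scale, optimizing over the fraction $s/n$ yields $\int\log\sin$ over half the arcsine range, which is where $2e^{2\Gcat/\pi}$ arises. \textbf{This is the main obstacle}: getting exactly $\bstar^{-n}$ with no polynomial factor. I would first attempt it with $P$ equal to a single well-chosen Bernstein-type product $\theta^{a}(1-\theta)^{n-a}$ plus a small multiple of $1$, accepting constant losses only where the statement allows them.

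\emph{Lower bound (ii).} I would take $H_k=h\omega$ with $h$ equal to a large constant on the central half of the arcsine mass and a small constant (scaled by $\epsilon^n$) on the outer half, chosen so that $b_k$ for the balanced pattern $s=k$ becomes exponentially rare relative to the others. The lower half of (iii) bounds $\rho_n$ from below by $\tfrac12 4^{-n}\GM_\omega(h)$. The sum $\sum_s\binom ns/b_s\ge\binom{2k}{k}/b_k$ then trades against this, and optimizing the reweighting gives the product $4^{-2n}\GM_\omega(h)^2/b_k$. Computing $b_k=\int(\theta(1-\theta))^k h\,d\omega$ explicitly and $\GM_\omega(h)$ via $\int\log\sin$ over the central half produces $\frac1{16}\binom{2k}k4^{-k}\bstar^{-n}$. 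Odd $n$ then follows by monotonicity in $n$ up to the factor $72\sqrt n$.

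\emph{Consequences.} With (i) and (ii) in hand, $n$-th roots give the limit $1/\bstar$. Since $\bstar<4$, no bound of the form $Cn^\alpha4^{-n}$ can hold.
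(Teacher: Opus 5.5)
Your proposal has genuine gaps in (i) and (ii). \textbf{Lower bound (ii).} The prior you describe cannot give the stated value. Let $h$ equal $c_1$ on the central half $\mathcal C=[\sin^2\frac\pi8,\cos^2\frac\pi8]$ and $c_2$ off it. Since $\omega(\mathcal C)=\frac12$, normalization forces $c_1,c_2\le2$, and $\GM_\omega(h)^2=c_1c_2$. Also $b_k=c_1I_1+c_2I_2$, with $I_1=\int_{\mathcal C}v_k\,d\omega$, $I_2=\int_{\mathcal C^c}v_k\,d\omega$ and $v_k=(t(1-t))^k$. Hence $c_1c_2/b_k\le\min\{c_2/I_1,c_1/I_2\}\le4/b_k(\omega)=4\cdot16^k/\binom{2k}k$. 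So your estimate $\frac1{16}16^{-n}\GM_\omega(h)^2\binom{2k}k/b_k$ is at most $\frac14\,4^{-n}$ for every choice of the two levels: base $4$, not $\bstar$. Moreover, a large level in the center makes $b_k$ larger, not rarer, since $v_k$ peaks at $t=\frac12$. Catalan's constant can only come from $\int_{\mathcal C}\log v_k\,d\omega$, so $h$ must follow $1/v_k$ on the superlevel set of $v_k$. The paper takes $h=\kappa/v_k$ on $\mathcal C$ and $h=1$ off it. Then $b_k\le\kappa$ while $\GM_\omega(h)^2=\kappa\,e^{-2\int_{\mathcal C}\log v_k\,d\omega}$, so the ratio is free of $\kappa$. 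With $\int_{\mathcal C}\log v_k\,d\omega=n(\Gcat/\pi-\log2)$ and the lower half of (iii), this gives $\frac1{16}\binom{2k}k4^{-k}\bstar^{-n}$. Your odd case also fails. Monotonicity of $\sup_H\Reg_n(H;\Gconst)$ in $n$ is not established. Even granted, stepping down from $n+1$ costs a factor $\bstar$ and gives only about $\bstar^{-n}/(81\sqrt{n+1})$, short of $\bstar^{-n}/(72\sqrt n)$. The paper treats $n=2k+1$ directly, with $s=k$, $v_k=(t(1-t))^k(1-t)$ and a bathtub argument.

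\textbf{Upper bound (i).} You leave open exactly the step that carries the theorem. The missing idea is to apply the upper half of (iii) once for each $s$, with $P_s=1+v_s/b_s(H)$ and $v_s=t^s(1-t)^{n-s}$. Then $\int P_s\,dH=2$, and $\log(1+y)\ge\mathbf 1_{\mathcal C}\log y$ gives $\GM_\omega(P_s)\ge b_s^{-1/2}\exp\int_{\mathcal C}\log v_s\,d\omega$. The exponent $\frac12$, the $\omega$-mass of $\mathcal C$, makes $\rho_n^2\le4\cdot4^{-n}\,b_s\,e^{-2\Gcat n/\pi}$ proportional to $b_s$. It therefore cancels the factor $1/b_s$ term by term in $\frac14\rho_n^2\sum_s\binom ns/b_s$. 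Because $\int_{\mathcal C}\log v_s\,d\omega=n(\Gcat/\pi-\log2)$ for every $s$ (by the symmetry of $\mathcal C$), there is no optimization over $s/n$. Summing $\binom ns$ gives exactly $2^n4^{-n}e^{-2\Gcat n/\pi}=\bstar^{-n}$. Your fallback, a single Bernstein product plus a constant, works if it is $P_a$ with $a$ minimizing $b_s(H)$. The Cauchy--Schwarz balancing of a mixed Bernstein polynomial is unnecessary, and you did not carry it out. You also need the case $b_s(H)=0$, where the sum is infinite; then $\rho_n(H)=0$, for instance by (iii) with $P=1+v_s/\eps$ and $\eps\to0$.

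Your route to (iii) through canonical moments and Szeg\H{o} theory differs from the paper's self-contained proof. The paper uses Fej\'er--Riesz and weak duality for the upper half, and strong duality, $\int\log\abs{t-x}\,d\omega=-\log4$ and Jensen for the lower half. Your route can work: on the circle, $4^n\rho_n=\prod_{j<n}(1-\alpha_j^2)$ decreases to the Szeg\H{o} limit, which even gives the lower half with constant $1$. But the upper half must hold with no constant to spare, since (i) has none. On $[0,1]$, $\rho_n$ is not a single Hankel ratio but the sum of two Christoffel-type minima (the lower and upper principal representations). Bounding each separately by a weighted sup norm already loses a factor $2$ at $P\equiv1$. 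You need a joint construction, such as the paper's pair $A,B$ with $A+B=P/(4^n\GM_\omega(P))$.
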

}

\paragraph{Proof idea.} The sandwich (iii), a finite-$n$ form of Szegő's theorem (\cref{rem:szego}),
prices the trade between moment range and rare label patterns described above. For (i), apply its
upper half with the test polynomial $Q_s=1+v_s/b_s(H)$, $v_s(t)=t^s(1-t)^{n-s}$, so that
$\int Q_s\,dH=2$. Since $\log(1+u)\ge\max\{0,\log u\}$, restricting to the central half
$\mathcal C=[\sin^2\frac\pi8,\cos^2\frac\pi8]$ of the arcsine law gives
$\GM_\omega(Q_s)\ge b_s^{-1/2}e^{\int_{\mathcal C}\log v_s\,d\omega}$, where Catalan's constant enters
through $\int_{\mathcal C}\log v_s\,d\omega=n(\Gcat/\pi-\log2)$. Hence
$\rho_n^2\le4\cdot4^{-n}\,b_s\,e^{-2\Gcat n/\pi}$, and summing the terms of~\eqref{eq:binomial-formula}
gives $2^n4^{-n}e^{-2\Gcat n/\pi}=\bstar^{-n}$. For (ii), a prior equal to $\omega$ off a half-mass
superlevel set of $v_s$ and proportional to $\omega/v_s$ on it makes one term of the sum match this
bound up to a polynomial factor, by the lower half of (iii). The exact base matters because it shows
that the regret is not governed by the moment range alone.
Already at $n=2$ a symmetric prior with a smaller range than the arcsine law has regret
$(5\sqrt5-11)/8\approx0.0225$, above the arcsine value $1/48\approx0.0208$ (\cref{sec:numerics}).

\begin{remark}[Szegő asymptotics and the constant $\bstar$]\label{rem:szego}
Part (iii) is not new: both halves can be read off from the variational formula for the moment range of
\citet[Theorem~3.1 and Remark~3.1]{gamboa2004large}, which gives the lower half without the factor
$\frac12$; we include a self-contained proof because (i) and (ii) consume exactly these two inequalities.
It is a finite-$n$ form of the Szegő asymptotics $4^n\rho_n(h\,\omega)\to\GM_\omega(h)$, which in
canonical-moment form, $\prod_k4p_k(1-p_k)\to\GM_\omega(h)$, is the Szegő sum rule on $[0,1]$
\citep[Section~2.3]{gamboa2016sum}; on the unit circle, the $(N+1)$-th moment given the first $N$ ranges
over a disk of radius $\prod_{j<N}(1-\abs{\alpha_j}^2)$ in the Verblunsky coefficients $\alpha_j$, and
Szegő's theorem identifies the limit of the product with $\exp\int\log w\,\frac{d\theta}{2\pi}$
\citep[Sections~3 and~8]{simon2005opuc}. The new point of (i)--(ii), stated in \cref{sec:barrier}, is that
the regret is \emph{not} governed by the moment range alone: the arcsine law maximizes the range,
$\rho_n(\omega)=4^{-n}$, and its regret $1/(8\binom{2n}n)$ has base $4$ (\cref{prop:binomial}), while the
supremum has the smaller base $\bstar$, so near-extremal priors (optimal up to a polynomial factor in
$n$) trade moment range for weight in the product $\rho_n^2\sum_s\binom ns/b_s$. Two
equivalent forms of the base are
\[
  \bstar=2\Mah(1+x+y-xy)=4\exp\Big(\tfrac12\int_0^1\log\max\set{t,1-t}\,d\omega(t)\Big),
\]
where $\Mah$ is the Mahler measure; the first is the formula $\Gcat=\frac\pi2\log\Mah(1+x+y-xy)$
\citep[eq.~(1.8)]{kurokawa2006quantum}, and the second follows from the substitution $t=(1-\cos\vartheta)/2$,
which gives $\int_0^1\log\max\set{t,1-t}\,d\omega=\frac8\pi\int_{\pi/4}^{\pi/2}\log\sin u\,du=\frac{4\Gcat}\pi-2\log2$.
The second form makes the gap to the base $4$ of the moment range explicit: $\bstar/4\approx0.896$.
\end{remark}

\paragraph{Relative to the Bayes risk.} For priors on constant maps, the price relative to the Bayes risk is
first-order only for very few labels: Skibinsky's gap is $O(n^{-1/2})$ times the Bayes risk, uniformly
over priors~\citep{joshi1975minimax}, and for the arcsine prior, with the regret computed within
constant-map priors, the ratio $\Reg_n/R^*_n$ is $(n+1)/\binom{2n}{n}$, the reciprocal of the $n$-th
Catalan number (unrelated to Catalan's constant $\Gcat$)---$1$, $\frac12$, $\frac15$, $\frac1{14}$ for
$n=1,2,3,4$ (\cref{app:practice}). The Bayes risk is the benchmark of the second answer and may be
far below the minimax rate of the first (\cref{app:practice}).

\subsection{Skibinsky's gap and the identification regret}
\label{app:skibinsky}

\begin{remark}[Skibinsky's minimax-minus-Bayes gap]\label{rem:skibinsky}
\citet{skibinsky1968minimax} estimates $\theta$ by a rule $a(s)\in[0,1]$ of the count $s=s(Y)$; with
$R(a,H')$ its risk under $H'$ and $[H]$ the priors with the first $n$ moments of $H$ (the identified set
within $\Gconst$), his gap is $W_n(H)\defeq\inf_a\sup_{H'\in[H]}R(a,H')-R^*_n(H)$, and he proves
$\sup_HW_n(H)\le2^{-n}$ \citep[Theorem~3]{skibinsky1968minimax}. Since every function of the count is a
rule, the two quantities are ordered (as explained below):
\begin{equation}\label{eq:skibinsky-order}
  \Reg_n(H;\Gconst)\ \le\ \sup_{H'\in[H]}W_n(H')\qquad\text{for every }H\in\Gconst .
\end{equation}
\end{remark}

\Cref{rem:skibinsky} defines Skibinsky's gap $W_n$ and orders it above the identification regret. The
class of priors with the same first $n$ moments as $H$ is the identified set $[H]$ within $\Gconst$
(Step~1 of the proof of \cref{prop:binomial}), and $R^*_n(H)$ is also the least risk among rules of the
count, because the posterior mean of a constant-map prior depends on the data only through $s(Y)$. The
order~\eqref{eq:skibinsky-order} holds because every function of the count is a rule, so
\[
  \Reg_n(H;\Gconst)\ \le\ \inf_a\sup_{H'\in[H]}\bigl(R(a,H')-R^*_n(H')\bigr)
  \ \le\ \inf_a\sup_{H'\in[H]}R(a,H')-\inf_{H'\in[H]}R^*_n(H'),
\]
and the right side equals $\sup_{H'\in[H]}W_n(H')$ because $[H']=[H]$ for every $H'\in[H]$. So
Skibinsky's theorem gives $\Reg_n(H;\Gconst)\le2^{-n}$ for every $H\in\Gconst$, which
\cref{thm:betastar}(i) improves to $\bstar^{-n}$, and our lower bound transfers to his gap
(\cref{cor:skibinsky}). The two quantities answer different questions. An empirical-Bayes procedure
competes with the Bayes risk of the \emph{true} prior, which the data locate only up to its identified
set; $\Reg_n$ therefore charges the excess over each candidate prior's own Bayes risk, uniformly over
$[H]$, while $W_n$ compares the minimax risk over $[H]$ with the Bayes risk of the single prior $H$. For
the regret $\Reg_n$ the exact base over constant maps is $\bstar$ (\cref{thm:betastar}); for Skibinsky's
gap, \cref{cor:skibinsky} places the base in $[2,\bstar]$. To our knowledge no base other than $2$ had
been stated for either quantity. Relative to the Bayes risk, \citet{joshi1975minimax} proved Robbins'
conjecture that the gap is negligible: $W_n(H)/R^*_n(H)=O(n^{-1/2})$ uniformly over priors.

\begin{corollary}[A lower bound on Skibinsky's gap]\label{cor:skibinsky}
For every $n\ge1$, with the supremum over Borel probability measures $H$ on $[0,1]$,
\[
  \frac{\bstar^{-n}}{72\sqrt n}\ \le\ \sup_HW_n(H)\ \le\ 2^{-n}.
\]
Hence the $\liminf$ and $\limsup$ of $\big(\sup_HW_n(H)\big)^{-1/n}$ lie in $[2,\bstar]$.
\end{corollary}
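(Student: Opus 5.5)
The upper bound is Skibinsky's theorem, $\sup_H W_n(H)\le 2^{-n}$ \citep[Theorem~3]{skibinsky1968minimax}, so only the lower bound needs an argument. For it I will chain three results already stated: the order~\eqref{eq:skibinsky-order}, which gives $\Reg_n(H;\Gconst)\le\sup_{H'\in[H]}W_n(H')$ for every $H\in\Gconst$, and \cref{thm:betastar}(ii). Taking the supremum over $H$ in the order, the right side is bounded by $\sup_{H'}W_n(H')$, since each $H'\in[H]$ is itself a Borel probability measure on $[0,1]$. This yields
\[
  \sup_H W_n(H)\ \ge\ \sup_{H\in\Gconst}\Reg_n(H;\Gconst)\ \ge\ \frac{\bstar^{-n}}{72\sqrt n}.
\]

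The only point to check is that the objects match. Priors in $\Gconst$ are identified with Borel probability measures $H$ on $[0,1]$. The identified set within $\Gconst$ consists of exactly the priors with the same first $n$ moments, as in Step~1 of the proof of \cref{prop:binomial}, which is the class Skibinsky's $[H]$ ranges over. So the supremum in the corollary and the supremum over $\Gconst$ run over the same set.

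For the consequence, take $n$-th roots in the sandwich $\bstar^{-n}/(72\sqrt n)\le\sup_HW_n\le2^{-n}$. Inverting and taking $n$-th roots gives
\[
  2\ \le\ \Big(\sup_HW_n(H)\Big)^{-1/n}\ \le\ \bstar\,(72\sqrt n)^{1/n}.
\]
Since $(72\sqrt n)^{1/n}\to1$, both the $\liminf$ and the $\limsup$ lie in $[2,\bstar]$.

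I expect no real obstacle here. The step needing care is the order~\eqref{eq:skibinsky-order}, namely that $R^*_n(H')$ is also the least risk among rules of the count, and that $[H']=[H]$. Both are already established in the text preceding the corollary.
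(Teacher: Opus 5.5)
Your proposal is correct and follows the paper's proof essentially verbatim. It uses Skibinsky's Theorem~3 for the upper bound, and for the lower bound the order~\eqref{eq:skibinsky-order} maximized over $H$, combined with \cref{thm:betastar}(ii); the paper cites the same bound through the constant-map clause of \cref{thm:main-eb}. The consequence then follows by taking $n$-th roots and using $(72\sqrt n)^{1/n}\to1$.
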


\begin{proof}
The upper bound is \citet[Theorem~3]{skibinsky1968minimax}. Since every $H'\in[H]$ is itself a Borel
probability measure on $[0,1]$, \eqref{eq:skibinsky-order} gives
$\sup_{H''}W_n(H'')\ge\sup_{H'\in[H]}W_n(H')\ge\Reg_n(H;\Gconst)$ for every $H$; taking the supremum over
$H$ and using the constant-map clause of \cref{thm:main-eb},
$\sup_HW_n(H)\ge\sup_H\Reg_n(H;\Gconst)\ge\bstar^{-n}/(72\sqrt n)$. Take $n$-th roots and use
$(72\sqrt n)^{1/n}\to1$.
\end{proof}

\begin{remark}[Skibinsky's gap appears to decay with base $2$; numerical evidence, E2]\label{rem:skibinsky-base}
The two notions of the price of an unknown prior appear to differ at the exponential scale, and
Skibinsky's analysis shows why (\cref{fig:envelope}). On the identified set the Bayes risk is a concave
quadratic in the free moment $\int\theta^{n+1}dH'$, with leading coefficient $-\sum_s\binom ns/b_s(H)$;
the minimax risk over $[H]$ is its maximum over the moment interval, of length $\rho_n(H)$, and so the gap
maximized over $[H]$ is the range of the Bayes risk over the identified set
\citep[eqs.~(8)--(9), Theorem~1 and eq.~(24)]{skibinsky1968minimax}. Our
regret~\eqref{eq:binomial-formula}, $(\rho_n/2)^2\sum_s\binom ns/b_s$, is the range this quadratic would
have if its vertex sat at the midpoint of the interval: the curvature part alone. When the vertex lies
outside the interval, the range is instead $\rho_n(H)$ times the slope at the midpoint; Skibinsky's proof
of his Theorem~3 bounds the range by $2^n\rho_n(H)\le2^{-n}$. So $W_n$ also counts how the Bayes risk
varies with the free moment, while $\Reg_n$ counts only what a rule loses against each candidate prior's
own Bayes risk. This is why $\Reg_n$ is the empirical-Bayes quantity, and why its exact base $\bstar$
needed an argument of its own (\cref{thm:betastar}).

The numbers agree with this picture: the lower bound of \cref{cor:skibinsky} is valid but appears weak.
A global search over priors (E2; $n\le44$, with the values re-evaluated in exact arithmetic for
$n\le40$) indicates that $\sup_HW_n(H)$ decays with base $2$ up to a polynomial factor, and for $n\ge4$
the numerical maximizers have their vertex outside the interval, far outside for larger $n$. Write
$S^W_n$ for the best values found, which are lower estimates of $\sup_HW_n$: $2^nS^W_n$ falls only from
$0.125$ at $n=1$ to $0.0010$ at $n=44$, with a local power exponent near $2$ for $30\le n\le44$; the local
base $S^W_n/S^W_{n+1}$ is $2.08$ and $2.13$ at $n=40$ and $41$ and still falling on each parity class; and
a fit with a power term on $37\le n\le44$ gives base $2.008$.\footnote{The range $31\le n\le44$ and its
fit windows were added post hoc; on the run declared before the computations, $n\le30$, the same fit
gives base $2.002$ on $25\le n\le30$ and $2.046$ on $20\le n\le30$, and the decision rule fixed in
advance outputs base~$2$.} So Skibinsky's $2^{-n}$ appears to be sharp in the base, and the transferred
bound $\bstar^{-n}/(72\sqrt n)$ appears exponentially loose for $W_n$: at $n=40$ the search value
$S^W_{40}$ exceeds $\bstar^{-40}$, our upper bound on $\sup_H\Reg_{40}(H;\Gconst)$, by a factor of about
$1.7\cdot10^7$. Proving $\sup_HW_n(H)\ge2^{-n}/\poly(n)$ is open problem~9.
\end{remark}

\begin{figure}[t]
\centering
\input{figures/tikz_envelope}
\caption{The two prices of an unknown prior, for priors on constant maps (schematic). On the identified set
$[H]$ the risks depend on $H'\in[H]$ only through the free moment $\int\theta^{n+1}dH'$, which ranges over
an interval of length $\rho_n(H)\le4^{-n}$, and the Bayes risk is a concave quadratic in it
\citep[eqs.~(8)--(9)]{skibinsky1968minimax}; the risk of a fixed rule is affine in it, being linear in the
moments of $H'$. The Bayes rule at the midpoint therefore has the tangent there as its risk, and its
largest excess over the Bayes risk, at the endpoints, is the curvature term
$(\rho_n/2)^2\sum_s\binom ns/b_s=\Reg_n(H;\Gconst)$ of \eqref{eq:binomial-formula}. Skibinsky's gap,
maximized over $[H]$, is the whole range of the Bayes risk over the interval
\citep[Theorem~1]{skibinsky1968minimax}; when the vertex lies outside the interval, as for the numerical
maximizers with $n\ge4$ (\cref{rem:skibinsky-base}), the range is $\rho_n(H)$ times the absolute slope at
the midpoint and can be much larger.}
\label{fig:envelope}
\end{figure}

\subsection{The monotone base}
\label{sec:conjecture}

Numerically, Skibinsky's gap appears to have base $2$ (\cref{rem:skibinsky-base}). For monotone maps
the base of the regret lies in $[2,\bstar]$ in the same sense, by \cref{thm:main-eb}, and we conjecture
that monotone maps are no harder than constant ones (evidence below).

\begin{conjecture}[The monotone base is $\bstar$]\label{conj:monotone-base}
For every Borel probability measure $\mu$ on $[0,1]$,
$\lim_{n\to\infty}\big(\sup_{G\in\Gmono}\Reg_n(G;\Gmono)\big)^{1/n}=1/\bstar$.
\end{conjecture}

\paragraph{The conjecture on the monotone base.}
By \cref{thm:main-eb} the $\liminf$ is at least $1/\bstar$, so \cref{conj:monotone-base} asks for the matching upper
bound. \emph{Evidence} (numerical, evidence level E2; not a proof): for $n\le4$, a pointwise version of
the monotone problem---maximize the weighted squared range of $\E\prod_i\theta_i$ over pairs of laws on
ordered tuples $0\le\theta_0\le\dots\le\theta_n\le1$ with equal multilinear moments over every proper
subset of coordinates---was optimized on grids (from $401$ points per coordinate at $n=1$ to $21$ at
$n=4$), and no value above its restriction to the diagonal $\theta_0=\dots=\theta_n$, which is the
constant-map problem, was found; the grid values are lower estimates. What resists is the lift: the
proof of \cref{thm:monotone-eb} transfers the unweighted Chebyshev bound to sorted vectors through
matched-root products, while the base $\bstar$ comes from the weighted bounds of \cref{thm:betastar}, and
we have no transfer of these weighted bounds to sorted vectors.

%% file: figures/tikz_envelope.tex
\begin{tikzpicture}[x=0.8cm,y=0.62cm,font=\small,
    arr/.style={{Stealth[length=4pt]}-{Stealth[length=4pt]}}]
  \draw[->] (0.6,-0.4) -- (11.3,-0.4) node[right,align=left] {free\\moment};
  \draw[->] (0.6,-0.4) -- (0.6,6.6) node[above right=0pt and -2pt] {risk};
  \draw (2,-0.5) -- (2,-0.3) node[below=4pt] {$m^-_{n+1}(H)$};
  \draw (8,-0.5) -- (8,-0.3) node[below=4pt] {$m^+_{n+1}(H)$};
  \draw (5,-0.5) -- (5,-0.3) node[below=3pt] {midpoint};
  \draw[decorate,decoration={brace,amplitude=4pt,mirror}] (2,-1.25) -- (8,-1.25)
    node[midway,below=4pt] {$\rho_n(H)\le4^{-n}$};
  \draw[densely dotted,domain=1.2:2] plot (\x,{6.3-0.05*(\x-13)^2});
  \draw[densely dotted,domain=8:10.6] plot (\x,{6.3-0.05*(\x-13)^2});
  \draw[very thick,domain=2:8,samples=40] plot (\x,{6.3-0.05*(\x-13)^2});
  \node[anchor=north west] at (6.4,{6.3-0.05*(6.4-13)^2}) {$R^*_n(H')$};
  \draw[dashed,thick] (1.5,{3.1+0.8*(1.5-5)}) -- (8.6,{3.1+0.8*(8.6-5)});
  \fill (5,3.1) circle (1.6pt);
  \node[anchor=north west,font=\footnotesize,align=left] at (5.15,2.95) {risk of the Bayes rule\\ at the midpoint};
  \draw[arr] (8,5.05) -- (8,5.5);
  \node[anchor=east,font=\footnotesize,fill=white,inner sep=1pt] at (7.85,5.95) {$\tfrac{\rho_n^2}{4}\sum_s\tbinom ns/b_s=\Reg_n(H;\Gconst)$};
  \draw[thin] (7.9,5.75) -- (8,5.35);
  \draw[densely dashed,black!50] (2,0.25) -- (9.5,0.25);
  \draw[densely dashed,black!50] (8,5.05) -- (9.5,5.05);
  \draw[arr] (9.4,0.25) -- (9.4,5.05);
  \node[anchor=west,font=\footnotesize,align=left] at (9.5,2.4) {range $=\sup_{H'\in[H]}W_n(H')$};
\end{tikzpicture}

%% file: sections/5_numerics.tex
\section{Numerical illustrations}
\label{sec:numerics}

These computations illustrate the theorems; no proof uses them. They are graded E1--E3 as in
\cref{tab:results}; an E3 value that is the identification regret of an explicit prior is a rigorous
lower bound on the supremum. Seeds, parameter ranges and fit windows were fixed before each run (we
call such a run \emph{declared}). Later additions are marked \emph{post hoc}, and every fitted slope is an
in-sample fit to the runs it describes.

\paragraph{Rates (E1).}
Outputs were uniform on $[0,1]$ with Bernoulli labels around one fixed nondecreasing shape, and the
questions differed from it by random shifts or by random bump perturbations whose number of bumps is
the least-favorable one of the lower-bound construction; $n\in\{8,16,\dots,256\}$, $m\in\{10,30,100,300\}$, five
values of $\tau$ and two of $V$, and $30$ replications per configuration ($10{,}800$ runs). The terms of
$\Phi$ appear where the declared range can show them. At $\tau=0$ the pooled isotonic fit has risk
decaying in $mn$ with fitted exponent $-0.68$ ($R^2=0.998$), against the predicted $-2/3$. At
$(\tau,V)=(0.3,0.3)$ a two-stage estimator tuned with the true $(\tau,V)$ has exponent $-0.633$ in $n$
(bootstrap $95\%$ interval $[-0.64,-0.63]$), against $-2/3$. The interval excludes $-2/3$; we read the gap
as pre-asymptotic rather than as a different exponent. The risk of a specific estimator mixes terms with
different constants (at $n=8$ the variation term is still capped by $\tau^2=V^2=0.09$), so one slope
fitted over $n\in[8,256]$ need not equal any single exponent, and slopes on sliding windows of three
values of $n$ range from $-0.62$ to $-0.64$. Its risk is nearly free of $m$: $2.99$, $2.68$, $2.44$ and
$2.43\times10^{-3}$ at $n=256$ for $m=10,30,100,300$. With three constants fitted in sample to the same
runs, the three-term formula $\Phi$ reproduces the risk of the two-stage estimator tuned with the true
$(\tau,V)$ in every configuration within a factor $2.3$ (median deviation $12\%$); this describes the
runs and is not an out-of-sample test. The regime $\min\{\tau^2,V^2\}=V^2$ was not tested, and
per-question isotonic regression has not reached its asymptotic slope by $n=256$ (local exponent
$-0.81$).

\paragraph{The $1/n$ term (E1, post hoc).}
The term $1/n$ can dominate only when $V<\tau$, which the declared range did not contain. A post-hoc
supplement with pure shifts ($V=0$, $\tau=0.3$) gives exponents $-0.92$ and $-0.96$ for two
shift-correcting estimators, against $-1$, and at $\tau=0.1$ the local slope moves from $-0.51$ to
$-0.81$ across $n\approx1/\tau^2=100$, the predicted crossover from the cap $\tau^2$ to $1/n$.

\paragraph{Adaptivity (E1).}
\Cref{fig:e01-adaptivity} compares data-driven tuning rules. Choosing among the fixed
estimators separately for each question, by two-fold cross-validation on that question's
labels (panels (a) and (b)), costs a median factor $2.5$ to $2.6$ and up to $14.6$ over the
best fixed estimator, and at
$\tau=0$ its risk decays in $m$ with exponent only $-0.15$ to $-0.18$ instead of $-2/3$: the
selection noise of $n$ labels does not shrink with $m$. Choosing once for all questions, by
the cross-validation error summed over questions, stays within a factor $1.40$ of the best
fixed estimator everywhere in the declared range (median $1.05$), without knowledge of $(\tau,V)$
(panel (c)). This compound rule is a close relative of the compound hold-out of \cref{thm:adaptive}
(two-fold cross-validation in place of one split, a fixed grid of seven radius pairs in place of the
dyadic grid); it was added post hoc, after the failure of per-question tuning had been seen.

\begin{figure}[tbp]
  \centering
  \includegraphics[width=0.92\textwidth]{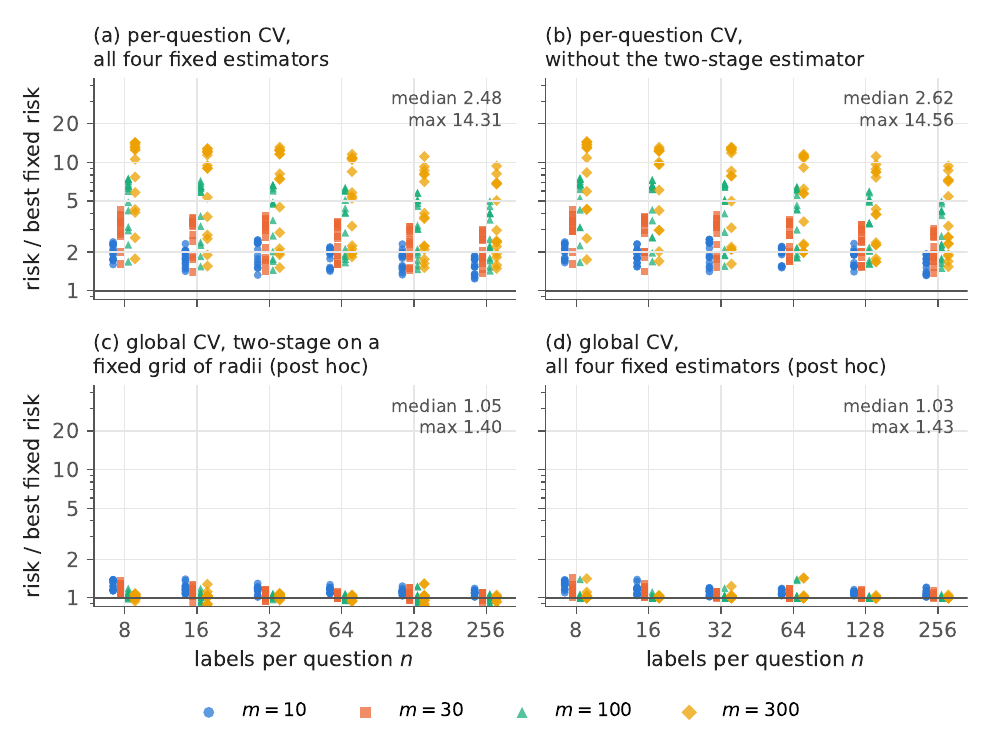}
  \caption{Per-question versus compound tuning (numerical evidence, E1; 30 replications per
  configuration of the declared range). Each marker is one configuration, colored and shaped
  by $m$: the risk of a data-driven rule divided by the smallest risk among four fixed
  estimators (the pooled isotonic fit; the pooled fit plus a per-question mean shift; the
  two-stage estimator with the true $(\tau,V)$; per-question isotonic regression). (a),~(b):
  one choice per question by two-fold cross-validation on that question's labels, (a) among
  all four fixed estimators, (b) without the two-stage estimator. (c),~(d): one choice for
  all questions by the cross-validation error summed over questions, (c) among the pooled
  fit, the pooled fit plus shift, per-question isotonic regression and the two-stage
  estimator on a fixed grid of seven radius pairs (no knowledge of $(\tau,V)$), (d) among all
  four fixed estimators. Panels (c) and (d) were added post hoc. The insets give the median
  and the maximum ratio over the range.}
  \label{fig:e01-adaptivity}
\end{figure}

\paragraph{The constant-map identification regret (E2, with E3 lower bounds).}
For priors on constant maps the regret has the closed form of \cref{prop:binomial}, which
we maximized over priors parametrized by their canonical moments: $68$ starts plus
differential evolution for each $n\le30$, and, post hoc, $10$ starts for each
$31\le n\le60$. One basin was found at every $n$. The best prior at each $n$ was
re-evaluated in exact rational arithmetic, so each reported value $S_n$ is a certified
lower bound on $\sup_H\Reg_n(H;\Gconst)$ (E3), while its optimality is not certified (E2).
Within $\Gconst$, the arcsine law is not extremal for $2\le n\le60$ (exact values, E3) and
for $n\ge62$: at $n=62$ the even-$n$ bound of \cref{thm:betastar}(ii) exceeds $1/(8\binom{124}{62})$
by a factor $3.3$, and for $n\ge63$ the bound $\bstar^{-n}/(72\sqrt n)$ exceeds $1/(8\binom{2n}n)$
(\cref{rem:eb-lower-crossover}: the ratio of the two is $1.017\ldots$ at $n=63$ and increases from
then on); $n=61$ is not covered. At $n=2$ the symmetric prior with second canonical moment
$(\sqrt5-1)/2$ has regret $(5\sqrt5-11)/8\approx0.02254$, above the arcsine value
$1/48\approx0.02083$. \Cref{fig:e03-barrier} places the values against
\cref{thm:betastar}. The normalized values $\bstar^nS_n$ rise to $0.44$ at $n=12$ and then
decrease slowly, to $0.29$ at $n=60$, well inside the range $[1/(72\sqrt n),1]$ that the
theorem allows, while the arcsine regret, normalized the same way, decays like
$\sqrt n\,(\bstar/4)^n$. The local base $S_n/S_{n+1}$ levels off far below $4$: it peaks at
$3.6205$ at $n=24$ and decreases to $3.607$ at $n=59$, within $0.7\%$ of $\bstar\approx3.583$ and
still decreasing. A base-$4$ model with a polynomial prefactor, $S_n\approx Cn^\alpha4^{-n}$, does
not fit: the fitted $\alpha$ doubles, from $1.37$ to $2.68$, across the declared fit windows. The
numerically optimal priors are symmetric (all odd canonical moments equal $1/2$); their even
canonical moments alternate above and below $1/2$ at low indices and approach $1/2$ at high
indices, so that their moment range falls short of $4^{-n}$ by a factor between about
$e^{-0.06n}$ and $e^{-0.1n}$ for $10\le n\le60$---the trade of moment range for a rare label
pattern behind \cref{thm:betastar}. Numerical evidence for \cref{conj:monotone-base} from the
same experiment is described in \cref{sec:conjecture}.

\begin{figure}[tbp]
  \centering
  \includegraphics[width=\textwidth]{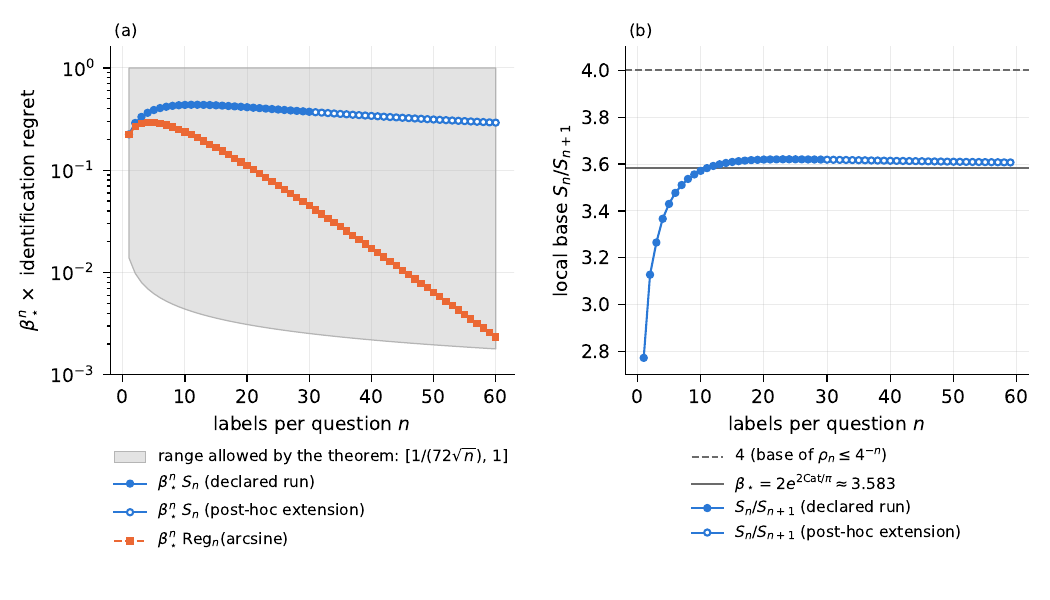}
  \caption{The constant-map identification regret against \cref{thm:betastar} (numerical
  evidence). $S_n$ is the largest value of $\Reg_n(H;\Gconst)$ found by global search; each
  $S_n$ is an exact-arithmetic lower bound on the supremum (E3), its optimality is not
  certified (E2). Filled markers: declared run ($n\le30$); open markers: post-hoc extension
  ($31\le n\le60$). (a) $\bstar^nS_n$, the range $[1/(72\sqrt n),1]$ allowed by
  \cref{thm:betastar}, and the arcsine regret $\bstar^n/(8\binom{2n}{n})$. (b) The local base
  $S_n/S_{n+1}$, with the base $4$ of the moment range and $\bstar$.}
  \label{fig:e03-barrier}
\end{figure}

%% file: sections/6_discussion.tex
\section{Discussion}
\label{sec:discussion}

Across questions, the shape is learned from all $mn$ labels, and the size of the deviations is tuned
once, for all questions (\cref{thm:main-rate,prop:separation}); within a question, what $n$ labels never
reveal about a random monotone map costs at most $2^{-n}$ ($m=\infty$), and no number of other questions
removes this cost (\cref{thm:main-eb,cor:finite-m}). The two answers use different benchmarks: the rate
$\Phi$ guards against every heterogeneous configuration, while for exchangeable questions the Bayes risk
can be exponentially smaller, and chasing it costs at most $2^{-n}$ at $m=\infty$.

\subsection{Two benchmarks, and a practical reading}
\label{app:practice}

\paragraph{Two benchmarks for one problem.}
Both parts study the same compound problem, $m$ questions with $n$ labels each, against two
benchmarks. \Cref{thm:main-rate} is a worst-case statement over $\Theta(\tau,V)$ and needs no
exchangeability: the maps are arbitrary members of the class. \Cref{thm:main-eb} applies when the
questions are exchangeable draws from a prior $G$; the natural target is then the Bayes risk
$R^*_n(G)$, and it can be far below $\Phi$: for the prior that picks the constant map $1/4$ or $3/4$
with probability $1/2$ each, $R^*_n(G)\le\frac14e^{-n/8}$, while the minimax risk of $\Theta(1/4,0)$, which
contains these maps, is at least a universal constant times $\min\{1/16,1/n\}$ for non-atomic $\mu$
(\cref{sec:intro}). The second part prices what
non-identification costs a practitioner who chases this Bayes target: at $m=\infty$, the posterior mean
of any monotone prior that reproduces the law of the data loses at most $2^{-n}$ against it
(\cref{thm:monotone-eb}(c)), and for every finite $m$ no procedure does better than the
identification regret in the worst case over priors (\cref{cor:finite-m}). With finitely many questions
the law of the data must itself be estimated, and a procedure that attains $2^{-n}$ up to a vanishing
term is open problem~2 of \cref{sec:open}.

\paragraph{What the results say for practice.}
Three recommendations follow (constants not optimized).

\emph{Pool the shape.} With $m=300$ questions and $n=16$ labels each (orders of magnitude,
ignoring constants), a question on its own resolves its map only at the scale
$n^{-2/3}\approx0.16$ (the rate for $m=1$, \cref{rem:regimes}), while the pool resolves the
shared shape at $(mn)^{-2/3}\approx0.0035$; what is left per question is its offset, of order
$\min\{\tau^2,1/n\}$ with $1/n\approx0.06$, and its wiggle. A pooled fit followed by a
per-question correction captures both, and the correction must be allowed to vanish: when
the questions differ by less than $n^{-1/2}$ in sup-distance, ignoring the differences is
rate-optimal.\footnote{By \cref{lem:pooled} (for non-atomic $\mu$, and with
$\log(eN)/N\le3e^{-2/3}N^{-2/3}$ for $N=mn$), the pooled isotonic fit $\gh_0$ to all $mn$ labels
satisfies $\E\nmu{\gh_0-\gbar}^2\le C_1(mn)^{-2/3}$ for the average map $\gbar$ and a universal $C_1$,
and $\abs{g_j-\gbar}\le2\tau$; so each question's risk is at most
$2C_1(mn)^{-2/3}+8\tau^2\le\max\{2C_1,8\}\,\Phi$ when $\tau^2\le1/n$.}

\emph{Tune once, for all questions.} The radii of the correction should be chosen once, from the
hold-out error summed over questions (\cref{def:procedure}). \Cref{thm:adaptive} covers this rule, and
\cref{prop:separation} shows that tuning each question on its own data cannot adapt: once $m^2/n$ and
$n$ are large, the best worst-case loss factor of such rules is of order $\min\{n,\log(m^2/n)\}$. The
simulations of \cref{sec:numerics} show per-question cross-validation among estimators that are mostly
built on a pooled shape, a class the proposition does not cover, failing by factors up to $14.6$ at
moderate sizes.

\emph{Empirical-Bayes identification is a few-label phenomenon.} By \cref{thm:main-eb},
not knowing the prior over monotone maps costs at most $2^{-n}$: less than $0.008$ for
$n\ge7$ and less than $10^{-3}$ for $n\ge10$. Relative to the Bayes risk the decay is even
sharper. For the arcsine prior on constant maps, $\theta\sim\mathrm{Beta}(\frac12,\frac12)$,
the posterior after $s$ positive labels is $\mathrm{Beta}(\frac12+s,\frac12+n-s)$, with mean
$(\frac12+s)/(n+1)$. Since $\Var\theta=\frac18$ and the Beta--binomial count has
$\Var s=n(n+1)/8$,
$R^*_n=\E\Var(\theta\mid s)=\Var\theta-\Var\bigl(\tfrac{1/2+s}{n+1}\bigr)=\frac18-\frac{n}{8(n+1)}=\frac1{8(n+1)}$.
Its identification regret within $\Gconst$ is $1/(8\binom{2n}{n})$ (\cref{prop:binomial}), so the ratio is
$(n+1)/\binom{2n}{n}$: $1$, $1/2$, $1/5$, $1/14$ and $1/42$ for $n=1,\dots,5$. So with one to
three labels per question the identification loss can be of the order of the Bayes risk
itself, and no amount of pooling removes it (\cref{cor:finite-m}); from about ten labels per question
on, the posterior mean of any monotone prior that reproduces the law of the data is within $10^{-3}$
of the Bayes risk (\cref{thm:monotone-eb}(c)). The ratio $(n+1)/\binom{2n}n$ is specific to the arcsine
prior; uniformly over priors on constant maps, Joshi's result says that Skibinsky's gap is
$O(n^{-1/2})$ times the Bayes risk~\citep{joshi1975minimax}. These upper bounds hold at $m=\infty$;
estimating the law of the data from finitely many questions is open problem~2.

\subsection{Open problems}
\label{sec:open}

\Cref{app:scope} lists the limitations of our results. In particular, the rate theorems bound the average
risk over questions, not the worst question's. The open problems they leave are the following.

\paragraph{1. The monotone base.}
Prove \cref{conj:monotone-base}: for priors on monotone maps the identification regret
decays with base $\bstar$, as for constant maps, rather than with base $2$. A
quantitative form would be $\sup_{\Gmono}\Reg_n\le\poly(n)\sup_{\Gconst}\Reg_n$. The upper
bound of \cref{thm:monotone-eb} uses only the diameter of the unidentified moment, which
\cref{lem:ch1} controls on sorted vectors; the base $\bstar$ comes instead from a weighted
inequality, the Szegő-type bound of \cref{thm:betastar}, in which the moment range is traded
against the probabilities of the label patterns. The natural route is a weighted analogue
of \cref{lem:ch1} on the ordered simplex, and we do not know one. Even for constant maps the
polynomial prefactor is open: our bounds differ by $72\sqrt n$, and the numerics of
\cref{fig:e03-barrier} suggest a prefactor that decays, but slowly.

\paragraph{2. Empirical Bayes with finitely many questions.}
Give an empirical-Bayes procedure with compound risk at most
$R^*_n(G)+2^{-n}+\eps_m$, with an explicit $\eps_m\to0$, for every prior $G\in\Gmono$---for
instance the posterior mean of a nonparametric maximum-likelihood prior fitted to the
pooled data, at least for a finite output grid. \Cref{thm:monotone-eb}(c) handles the limit
case, a fitted prior that reproduces the law of the data exactly, and \cref{cor:finite-m} shows that
the term $2^{-n}$ cannot be replaced by anything below the identification regret, whatever $m$. What is missing is
stability: if $\Law_{G''}(D)$ is only $\eps$-close to $\Law_G(D)$, say in Hellinger distance,
how far is the posterior mean of $G''$ from the Bayes rule of $G$? The difficulty is that the
posterior mean divides by the pattern probabilities $Z_G$, and the rare label patterns that
drive the lower bound of \cref{thm:betastar} are exactly where small errors in the law of the
data are amplified.

\paragraph{3. Discrete outputs.}
Deployed models often report probabilities on a finite grid, and then \cref{thm:main-rate}
does not apply: for a point mass $\mu=\delta_{1/2}$ and $\tau=0$, the pooled label mean
already has risk at most $1/(4mn)$, far below $(mn)^{-2/3}$. Determine the compound minimax
rate when $\mu$ has $L$ atoms, as a function of $(m,n,L,\tau,V)$. Per-atom averaging suggests a
parametric regime when $L$ is small compared with $(mn)^{1/3}$, and the answer should
interpolate between it and $\Phi$. Our lower bound does not help: it transports every
non-atomic $\mu$ to the uniform law by a quantile transform, which atoms break, and its
shared hypercube uses about $(mn)^{1/3}$ bumps, which need that many distinct output values.

\paragraph{4. Lower bounds for calibration error.}
Our lower bounds are for the excess Brier risk; for calibration error alone no lower bound
of the order of $\Phi$ can hold, since at $\tau=0$ the pooled label mean has expected
calibration error at most $1/(4mn)$ (\cref{app:calibration}). The meaningful
question is a trade-off: among estimators whose excess Brier risk is within a constant
factor of $\Phi$, how small can the compound calibration error be, in $L_2$ or in the $L_1$
and smooth calibration errors used in practice? Our upper bounds give $\Phi$ for the $L_2$
version, and the isotonic fits we use are step functions, for which calibration error can be
strictly smaller than the risk. The obstacle to a lower bound is that calibration error
depends on the level sets of the estimate and not only on its distance to the truth, while
the Assouad constructions of \cref{thm:main-rate} control distances.

\paragraph{5. Is monotonicity necessary?}
\Cref{rem:monotone-needed} shows that without monotonicity in a known direction the
unidentified moment is no longer confined to an interval of length $4^{-n}$. Does the
identification regret still decay exponentially, or at all, for priors on arbitrary
measurable maps $[0,1]\to[0,1]$? Without an order, the values of the map at $n+1$ points form
an arbitrary point of the cube, and \cref{lem:ch1} has no analogue; an answer in either
direction would show whether shape constraints are what makes empirical-Bayes
recalibration nearly free of identification loss.

\paragraph{6. Questions with their own output laws.}
All our results for the compound rate assume one output law $\mu$ shared by all questions. With
question-specific laws $\mu_j$ the pooled isotonic fit targets the monotone projection of a weighted
average of the maps, not their average, and our argument does not control the deviations of the
questions from it (\cref{app:model-remarks}). Determine the
compound minimax rate when the $\mu_j$ differ, for instance when they have densities with
respect to a common law that are bounded above and below by constants.

\paragraph{7. Per-question tuning around a pooled shape.}
\Cref{prop:separation} covers procedures that tune each question on its own data only. A natural
intermediate procedure estimates the shape from the pooled data and then tunes each question's
correction on that question's labels alone. Does such a rule attain $\Phi$ adaptively, or does it also
lose a factor that grows with $m^2/n$? Our lower bound does not apply: the pooled shape estimate is a
statistic of all questions and can carry information about the size of the shifts, so the
likelihood-ratio argument of \cref{app:separation}, which compares the laws of one question's data
alone, breaks down. The evidence points to a loss: panels (a) and (b) of \cref{fig:e01-adaptivity} (E1)
show such per-question choices costing up to a factor $14.6$ at moderate sizes, but no rate in $m^2/n$
can be read off a box of four values of $m$.

\paragraph{8. The worst question.}
Our fixed-effects theorems bound the average risk over questions (\cref{app:scope}(viii)): the compound
hold-out of \cref{def:procedure} selects one pair of radii by the hold-out error summed over questions, and its
guarantee (\cref{prop:holdout}) controls only the average. Does some adaptive procedure attain
$\max_j\E\nmu{\gh_j-g_j}^2\le C\,\Phi(m,n,\tau,V)$ over $\Theta(\tau,V)$, and if not, what is the price of
a worst-question guarantee?

\paragraph{9. The base of Skibinsky's gap.}
Prove that $\sup_HW_n(H)\ge2^{-n}/\poly(n)$, so that Skibinsky's bound $2^{-n}$ is sharp in the base, as
the numerics of \cref{rem:skibinsky-base} suggest; \cref{cor:skibinsky} gives only
$\bstar^{-n}/(72\sqrt n)$. The
lower bound needs priors for which the Bayes risk changes steeply along the moment interval while the
interval stays within a polynomial factor of its largest length $4^{-n}$. The numerical maximizers (E2)
point to a shape: their posterior means jump from about $1/4$ to about $3/4$ near $s=n/2$, as for a prior
with two clusters, and their canonical moments of high order are close to $1/2$.

%% file: sections/9_appendix.tex
\section{Related work and limitations}
\label{app:supp}
\input{sections/9s1_related}
\input{sections/9s2_scope}

\section{Proofs for the compound minimax rate and the separation}
\label{app:rate}

\Cref{fig:roadmap} shows how the results depend on each other. This appendix proves the results of
\cref{sec:results}, and \cref{app:eb} those of \cref{sec:identification}.

\begin{figure}[t]
\centering
\resizebox{\linewidth}{!}{\input{figures/tikz_roadmap}}
\caption{How the results depend on each other: an arrow points from a result to a result whose proof
uses it (arrows implied by transitivity are omitted). Shaded boxes are the main statements; the dashed
box is a cited theorem.}
\label{fig:roadmap}
\end{figure}

\input{sections/9a_proofs_rate}

\section{Proofs for the identification regret}
\label{app:eb}

\input{sections/9b_proofs_eb}

%% file: sections/9s1_related.tex
This appendix gives a more detailed account of related work (\cref{sec:related}) and the full list of the
limitations of our results (\cref{app:scope}).

\subsection{Extended related work}
\label{sec:related}

\paragraph{Shape-constrained regression.}
The worst-case risk of isotonic least squares, of order $(V/n)^{2/3}+\log(en)/n$ for a
nondecreasing signal of variation $V$ and unit noise variance, is due to~\citet{zhang2002risk} (for Gaussian errors it
appeared in~\citealp{meyer2000degrees}); adaptive bounds of order $(k+1)\log(en)/n$ for monotone
signals with $k$ jumps are due to~\citet{chatterjee2015risk}, analogous bounds for total-variation
denoising to~\citet{guntuboyina2020adaptive}, and isotonic regression in dimension $d\ge2$, on a
lattice or with random design, is treated by~\citet{han2019isotonic}; see~\citet{guntuboyina2018shape}
for a survey. These results bound the risk at the design points, in the empirical norm (which is
$L_2(P_n)$ in the random-design case). We need the population norm $L_2(\mu)$ and observations that
are independent but not identically distributed, so we prove the bounds we use
(\cref{lem:erm1,prop:erm2}) by bracketing chaining with Bernstein's inequality, followed by peeling, in
the tradition of~\citet{vandervaart1996weak} and~\citet{vandegeer2000empirical}. The shift model
$V=0$, $g_j=g_0+c_j$, is the vertical-shift special case of the shape-invariant model
of~\citet{kneip1995model}, in which every curve is a common shape under a linear change of both axes;
there the shape and the curve parameters are estimated by alternating least squares and kernel
smoothing, and the results concern asymptotic distributions and asymptotic bias and variance.

\paragraph{Multitask and transfer learning.}
Learning many related tasks together goes back at least to~\citet{baxter2000model}, and the shape
of~\eqref{eq:intro-rate} follows the shared-plus-local template. Its closest parametric instance is
the adaptive and robust multitask estimator of~\citet{duan2023adaptive}: for $m$ tasks with
$d$-dimensional parameters within distance $\delta$ of a common center, apart from a small fraction of
outlier tasks, it attains compound risk of order $d/(mn)+\min\{\delta^2,d/n\}$ (plus a term for the
outliers) adaptively, up to logarithmic factors, and this matches a minimax lower bound for multitask
Gaussian mean estimation. \citet{huang2025optimal} obtain optimal rates, up to logarithmic factors,
for multitask linear regression when the task parameters differ from a common one in at most $s$
coordinates; \citet{tian2025learning} give nearly minimax rates, adaptive to the degree of
similarity, when the tasks share a linear representation up to a perturbation;
and~\citet{chen2023minimax} find the phase transition between federated averaging and local
training in personalized federated learning. In nonparametric transfer learning,
\citet{cai2021classification} give the minimax rate and an adaptive classifier for classification,
and~\citet{cai2024transfer} minimax rates, up to logarithmic factors, and an adaptive procedure for
regression under posterior drift. \Cref{thm:main-rate} is a shape-constrained, random-design,
Bernoulli analogue of this template (\cref{rem:regimes}): the rate holds with universal constants and
with no logarithmic factors, and adaptation is achieved by one compound hold-out choice.

Limits of adaptation in multitask learning have a different source. In multitask classification with a
common optimal classifier, \citet{hanneke2022nofreelunch} show that the minimax rate improves with the
number of tasks but that, without distributional information, no algorithm can adapt in general: for
noisy classes (Bernstein parameter $\beta<1$) and fewer than $2/\beta-1$ samples per task it is stuck at
the rate of the target task alone; \citet{hanneke2026limits}
extend this impossibility to arbitrarily large samples per task when the tasks are super-exponentially
many. For parametric multi-source transfer with biased sources, \citet{chakraborty2026cost} define an
intrinsic cost of adaptation, the smallest, over bias-agnostic estimators, worst-case ratio of the risk
to the oracle minimax rate; it is free with one source and has a phase transition with two or more. These limits
concern unknown task relatedness or source quality. \Cref{prop:separation} is of another kind: in a
known shared-plus-local class, where one compound hold-out adapts, it is the cost, of order
$\min\{n,\log(m^2/n)\}$, of tuning each question on its own data.

\paragraph{Compound decisions and empirical Bayes.}
Robbins' compound decision problem~\citep{robbins1951asymptotically} treats many independent problems
of the same form jointly, and his empirical-Bayes program~\citep{robbins1956empirical} models the
unknown parameters as draws from an unknown prior; \citet{zhang2003compound} surveys both and
stresses that stochastically independent experiments need not be uninformative about one another.
Our two parts are the compound and the empirical-Bayes view of one problem. For binomial data the
empirical-Bayes rule is not identified: after $n$ trials the Bayes rule depends on the first $n+1$
moments of the prior~\citep{skibinsky1989admissible}, while only $n$ are estimable.
\citet[Section~5]{robbins1964empirical} works out the case $n=1$. \citet{skibinsky1968minimax}
quantifies the price for every $n$: over the class of priors with given first $n$ moments, the
minimax risk exceeds the Bayes risk of each member by at most $2^{-n}$. He records Robbins'
conjecture that this gap is negligible relative to the Bayes risk, which \citet{joshi1975minimax}
proved, with relative gap $O(n^{-1/2})$ uniformly over priors. \Cref{rem:skibinsky} and
\cref{app:skibinsky} relate these quantities to ours. Our identification regret differs from the
empirical-Bayes regret of \citet{polyanskiy2021sharp}, the total excess risk of a data-driven procedure
over $m$ coordinates relative to $m$ times the Bayes risk of the oracle that knows the prior; Robbins
showed that it can be made $o(m)$, and they determine its optimal order for Poisson models and give
lower bounds for normal means. Ours is measured against a
Bayes risk that the data cannot identify even as the number of questions grows, so no procedure beats
it at any $m$ (\cref{cor:finite-m}): it is a floor, not the estimation cost of a procedure.
\citet{sivaganesan1993robust} study the range of posterior means over all priors with the estimable
moments, and~\citet{ignatiadis2022confidence}, constructing confidence intervals for nonparametric
empirical-Bayes estimands, note that the binomial posterior mean is only partially identified. Two
contrasts are instructive. \citet{tian2017learning} show that a population of coin biases is
recoverable from $t$ flips per coin only to Wasserstein error of order $1/t$, and the maximum
likelihood estimator attains this order when $t$ is small compared with the logarithm of the number
of coins~\citep{vinayak2019maximum}; the Bayes rule, in contrast, is identified up to an
exponentially small regret---identifying the prior and identifying the Bayes rule are very different
problems. In a replicated-data setting with nonparametric noise, \citet{ignatiadis2023aurora} bound
the loss from using $K-1$ of $K$ replicates by a quantity of order $1/K^2$ in regular problems.
\citet{chen2025compound} study compound estimation for binomials; a bias bound of order $2^{-n}$ in
their work has the same source as ours, the unidentified $(n+1)$-th moment.

\paragraph{Partial identification and $\Gamma$-minimax.}
Restricting attention to the priors consistent with the data is the moment class of robust Bayesian
analysis~\citep{berger1994overview,sivaganesan1993robust}. Our regret is the posterior regret of
$\Gamma$-minimax estimation~\citep{riosinsua1995posterior}, with $\Gamma=[G]$ the identified set,
averaged over the common law of the data and with the supremum over $\Gamma$ taken after averaging.
Under squared loss and for a single real estimand, the posterior-regret $\Gamma$-minimax action is the
midpoint of the range of posterior means over $\Gamma$ (see, e.g.,
\citealp[Example~1]{jafarijozani2013intrinsic}),
just as the minimax-regret predictor under partial identification is the midpoint of the identified
interval~\citep{manski2021econometrics}; minimax-regret rules for treatment choice under partial
identification are studied by~\citet{montielolea2026decision}. \Cref{prop:chebradius} is the
integrated, functional version of the midpoint rule: the identified Bayes rules form a set of
functions of $(p,X)$ governed by one unidentified $(n+1)$-point moment function, the regret is
integrated over the data with an explicit weight $w_G$, the midpoint becomes a Chebyshev centre in a
weighted $L_2$ space, and a measurable centre, hence a minimax rule, exists.

\paragraph{Moment spaces and Szegő asymptotics.}
The moment range is a product of canonical moments,
$\rho_n=\prod_{k\le n}p_k(1-p_k)$~\citep{skibinsky1967range}; see~\citet{dette1997canonical} for the
theory, in which the arcsine law, all of whose canonical moments equal
$1/2$~\citep{skibinsky1969striking,dette2017hankel}, is the center of the moment space. The
Szegő-type sandwich in \cref{thm:betastar}(iii) is classical: it can be read off from
\citet[Theorem~3.1 and Remark~3.1]{gamboa2004large}, and through the identification of canonical
moments with Verblunsky coefficients~\citep{gamboa2010canonical,dette2010matrix} it is a finite-$n$
form of Szegő's theorem for orthogonal polynomials on the unit
circle~\citep{simon2005opuc,gamboa2016sum}; see \cref{rem:szego}. What is new is its use: the
supremum of the identification regret, its exact exponential base, and near-extremal priors.
Finally, the value $2^{-(2n+1)}$ in \cref{lem:ch1} is the least Chebyshev norm on the ordered simplex
computed by~\citet{dressler2026least}, whose exhibited extremal polynomials are univariate; we did not
find in
the literature the diagonal-domination inequality for sorted vectors, or the fact that the matched
multi-affine product, which is what our moment argument needs, attains this value.

\paragraph{Calibration and recalibration.}
Calibration of a single predictor across groups and decisions is a large
literature~\citep{zhao2021calibrating,gopalan2024omnipredictors}; our object is different---many
recalibration maps, one per question, each with few labels. Recalibration with little labelled data
per unit has been approached by predicting a slice's recalibration curve from a few unlabelled
examples~\citep{li2024fewshot}, by hierarchical Bayesian models for per-group calibration with few
labels~\citep{ji2020trust}, and, in a workshop paper, by empirical-Bayes shrinkage of per-rater affine
calibrators of reward models~\citep{raj2026pebs}; these works propose and evaluate methods, while we
ask what is achievable in the minimax sense. \citet{emmenegger2026ppi} study
prediction-powered inference across many tasks. That $k$ labels per instance identify $k$ moments of the conditional law also
drives higher-order calibration~\citep{ahdritz2025provable} and the equivalence between second-order
calibration and calibration over pairs of labels~\citep{johnson2024experts},
and~\citet{ciosek2026minimax} determine the minimax rate for estimating a perturbed second-order
calibration error.

\paragraph{Applications.}
\Cref{sec:intro} lists the applications that motivate the model; in each, a shared predictor is
audited with a few labels per unit, and personalized prediction more broadly has the same shape. For
the decision models Jev and Laya, the probabilities have been examined empirically for coherence and
for calibration across questions~\citep{joy2026addup,sharma2026typedbench}.

%% file: sections/9s2_scope.tex
\subsection{Scope and limitations}
\label{app:scope}

\paragraph{Limitations.}
\begin{enumerate}[label=(\roman*)]
\item The risk is the excess Brier, or $L_2$ estimation, risk of the recalibration maps. It bounds the
  $L_2$ calibration error of the recalibrated forecasts from above, so our upper bounds transfer to
  calibration error; our lower bounds do not: at $\tau=0$ the forecast equal to the mean of all $mn$
  labels has expected calibration error at most $1/(4mn)$, of smaller order than $(mn)^{-2/3}$
  (\cref{app:calibration}).
\item One output law $\mu$ shared by all questions, and monotone maps, are assumptions; the upper bound
  uses the first (\cref{app:model-remarks}). The lower bound of \cref{thm:main-rate} is a supremum over
  $\Theta(\tau,V)$ whose first term comes from perturbing the shared shape; it says nothing about
  sub-models in which the average map is known, for instance the identity when the predictor is
  calibrated on the same mixture of questions.
\item \Cref{thm:main-rate} needs a non-atomic output law. On a coarse output grid the rate is
  different: for a point mass and $\tau=0$ the pooled label mean has risk at most $1/(4mn)$ (open
  problem~3). The empirical-Bayes results hold for every $\mu$.
\item The empirical-Bayes results are random-effects statements. The upper bound $2^{-n}$ assumes that
  the law of a question's data is known (the population idealization $m=\infty$); a finite-$m$
  empirical-Bayes procedure, and a rate in $m$, are open problem~2. The lower bounds hold for every $m$
  (\cref{cor:finite-m}); they concern $\sup_G\Reg_n(G;\Gmono)$, while the upper bound $2^{-n}$ holds
  prior by prior. $\Reg_n(G;\Gmono)$ is a minimax quantity over the identified set of $G$, not the
  excess risk at the true prior, where the question is empty. These results concern all monotone
  priors, not $\Theta(\tau,V)$, and they are not the $m=\infty$ limit of \cref{thm:main-rate}
  (\cref{app:practice} compares the two benchmarks).
\item For monotone priors the exponential base lies in $[2,\bstar]$. The hard instances of our lower
  bound are priors on constant maps, an uninformative predictor; whether a lower bound of the same order
  holds in subclasses that exclude them (for instance, maps with slope bounded below) is open. For
  constant maps the base is exactly $\bstar$, and the polynomial prefactor is open: our two bounds differ
  by the factor $72\sqrt n$.
\item ``Exact'' means up to universal constants, which we do not optimize; nor do we optimize the
  constants $1/256$ of \cref{prop:separation} and $1/72$ of \cref{thm:main-eb}. The bound $2^{-n}$ is
  below the trivial value $1/4$ only for $n\ge3$ (\cref{rem:eb-scope}).
\item \Cref{prop:separation} concerns procedures that treat each question on its own data; per-question
  tuning around a shape estimated from the pooled data is open problem~7. For that class there is
  numerical evidence (E1) that it also loses at moderate sizes (\cref{fig:e01-adaptivity}(a),(b)).
\item The fixed-effects theorems concern the average risk over questions,
  $\frac1m\sum_j\E\nmu{\gh_j-g_j}^2$. A worst-question guarantee, on $\max_j\E\nmu{\gh_j-g_j}^2$, for
  the adaptive procedure is open problem~8.
\end{enumerate}
The open problems are those of \cref{sec:open}.

%% file: figures/tikz_roadmap.tex
\begin{tikzpicture}[font=\footnotesize,x=1cm,y=1cm,
    res/.style={draw,rounded corners=2pt,align=center,inner sep=2.5pt,text width=3.3cm,minimum height=0.8cm},
    head/.style={res,very thick,fill=black!8},
    ext/.style={res,dashed},
    arr/.style={-{Stealth[length=4pt]}}]
  \node[font=\small\bfseries] at (3.5,1.0) {Compound minimax rate};
  \node[res] (erm1) at (1.7,0)    {\Cref{lem:erm1}\\ global LSE bound};
  \node[res] (erm2) at (1.7,-1.4) {\Cref{prop:erm2}\\ localized LSE bound};
  \node[res] (pool) at (5.3,-1.4) {\Cref{lem:pooled}\\ pooled fit tracks the average};
  \node[res] (loc)  at (1.7,-2.8) {\Cref{lem:local}\\ log-free correction};
  \node[res] (orc)  at (3.5,-4.2) {\Cref{prop:oracle}\\ two-stage estimator};
  \node[res] (hold) at (3.5,-5.6) {\Cref{prop:holdout}\\ compound hold-out};
  \node[res] (low)  at (1.7,-7.0) {\Cref{lem:lower}\\ compound Assouad};
  \node[res] (ad)   at (5.3,-7.0) {\Cref{thm:adaptive}\\ adaptive bound};
  \node[head] (rate) at (1.7,-8.7) {\Cref{thm:main-rate}\\ exact rate $\Phi$};
  \node[head] (sep)  at (5.3,-8.7) {\Cref{prop:separation}\\ per-question tuning fails};
  \draw[arr] (erm1) -- (erm2);  \draw[arr] (erm1) -- (pool);
  \draw[arr] (erm2) -- (loc);
  \draw[arr] (loc) -- (orc);    \draw[arr] (pool) -- (orc);
  \draw[arr] (orc) -- (hold);   \draw[arr] (hold) -- (ad);
  \draw[arr] (low) -- (rate);   \draw[arr] (ad) -- (sep);
  \draw[arr] (ad) -- (rate);
  \begin{scope}[xshift=8.7cm]
  \node[font=\small\bfseries] at (3.5,1.0) {Identification regret};
  \node[res] (cheb) at (1.7,0)    {\Cref{prop:chebradius}\\ Chebyshev radius};
  \node[res] (ch1)  at (5.3,0)    {\Cref{lem:ch1}\\ sorted-vector lemma};
  \node[res] (bin)  at (1.7,-1.4) {\Cref{prop:binomial}\\ constant-map formula};
  \node[res] (beta) at (1.7,-2.8) {\Cref{thm:betastar}\\ exact base $\bstar$};
  \node[res] (mono) at (5.3,-2.8) {\Cref{thm:monotone-eb}\\ $2^{-n}$ for every $\mu$};
  \node[res] (ebl)  at (1.7,-4.2) {\Cref{lem:eb-lower}\\ lower bounds};
  \node[head] (eb)  at (1.7,-5.6)  {\Cref{thm:main-eb}\\ identification regret};
  \node[ext] (sk68) at (5.3,-5.6)  {Skibinsky (1968),\\ Theorem 3};
  \node[head] (fin) at (1.7,-7.0) {\Cref{cor:finite-m}\\ every finite $m$};
  \node[head] (skc) at (5.3,-7.0) {\Cref{cor:skibinsky}\\ Skibinsky's gap};
  \draw[arr] (cheb) -- (bin);   \draw[arr] (bin) -- (beta);
  \draw[arr] (cheb) -- (mono);  \draw[arr] (ch1) -- (mono);
  \draw[arr] (beta) -- (ebl);
  \draw[arr] (ebl) -- (eb);     \draw[arr] (mono) -- (eb);
  \draw[arr] (eb) -- (fin);     \draw[arr] (eb) -- (skc);
  \draw[arr] (sk68) -- (skc);
  \end{scope}
\end{tikzpicture}

%% file: sections/9a_proofs_rate.tex
\paragraph{Conventions.}
Throughout this appendix $\lambda$ is Lebesgue measure on $[0,1]$ and $\log$ is the natural logarithm.
For a Borel probability $\mu$ on $[0,1]$, $\norm{\cdot}$ is the norm of $L^2(\mu)$ and $\inner{\cdot}{\cdot}$
its inner product, unless another measure is indicated. The variation of $d:[0,1]\to\R$ is
$\mathrm{TV}(d)\defeq\sup\sum_{l=1}^{L}\abs{d(x_l)-d(x_{l-1})}$, the supremum over finite partitions
$0\le x_0<\dots<x_L\le 1$; it is a seminorm, and constants have variation $0$. We write
$\mathcal F_{\uparrow}$ for the nondecreasing measurable $f:[0,1]\to[0,1]$, and
$\mathcal D(\tau,V)\defeq\set{d:[0,1]\to\R \text{ measurable}:\ \sup\abs{d}\le\tau,\ \mathrm{TV}(d)\le V}$.

In Model~M there are $m$ questions with $n$ pairs each. All $mn$ pairs $(P_{jk},Y_{jk})$ are mutually
independent, $P_{jk}\sim\mu$ and $Y_{jk}\mid P_{jk}\sim\mathrm{Bern}(g_j(P_{jk}))$. The class
$\Theta(\tau,V)$ consists of the tuples $(g_1,\dots,g_m)$ of nondecreasing maps $[0,1]\to[0,1]$ for which
one nondecreasing witness $g_0:[0,1]\to\R$ satisfies $\sup_{[0,1]}\abs{g_j-g_0}\le\tau$ and
$\mathrm{TV}(g_j-g_0)\le V$ for all $j$. The subclass $\Theta_{\mathrm{LB}}(\tau,V)$ is defined by the same
predicate with $g_0$ in addition $[0,1]$-valued. An estimator is a jointly measurable map from the $mn$ pairs
(and possibly an independent auxiliary randomization with a parameter-free law) to $m$ functions on $[0,1]$.
Its compound risk is
$\mathcal R_\mu(\hat g;g)\defeq\frac1m\sum_{j=1}^m\E\norm{\hat g_j-g_j}^2_{L^2(\mu)}$. We write
$\mathfrak M_{m,n}(\tau,V)\defeq\inf_{\hat g}\sup_{g\in\Theta(\tau,V)}\mathcal R_\mu(\hat g;g)$ (with
$\mu$ fixed), and
$\Phi(m,n,\tau,V)\defeq(mn)^{-2/3}+\min\set{\tau^2,1/n}+\min\set{\tau^2,V^2,(V/n)^{2/3}}$.

Brackets are \emph{pointwise}, following \citet{vandervaart1996weak}. For a seminorm $\rho$, an
$\eps$-bracket $[l,u]$ is a pair of real measurable functions $l\le u$ with $\rho(u-l)\le\eps$. A function
$f$ lies in $[l,u]$ if $l\le f\le u$ at \emph{every} point. $N_{[\,]}(\eps,\mathcal F,\rho)$ is the least
number of $\eps$-brackets that cover $\mathcal F$; the brackets need not lie in $\mathcal F$.

\paragraph{Roadmap.}
We give the proofs in dependency order. \Cref{app:rate-lower} proves the lower bound (\cref{lem:lower}).
\Cref{app:rate-engine} proves a least-squares bound for independent, non-identically distributed
observations that share a design law (here the output law $\mu$). It is proved first under global bracketing entropy (\cref{lem:erm1}),
then under localized bracketing entropy (\cref{prop:erm2}). \Cref{app:rate-stages} applies these bounds to
the pooled isotonic stage (\cref{lem:pooled}) and to the local bounded-variation stage (\cref{lem:local}).
\Cref{app:rate-adaptive} assembles the oracle estimator (\cref{prop:oracle}), the compound hold-out
(\cref{prop:holdout}), the adaptive bound for all $m,n$ (\cref{thm:adaptive}) and \cref{thm:main-rate}.
\Cref{app:separation} proves that per-question tuning cannot match this guarantee (\cref{prop:separation}).

\subsection{The lower bound}
\label{app:rate-lower}

\begin{lemma}[Compound minimax lower bound]\label{lem:lower}
There is a universal constant $c>0$ such that, for every non-atomic Borel probability $\mu$ on $[0,1]$
($c$ does not depend on $\mu$), all $m\ge1$, $n\ge1$ and all $\tau,V\in[0,1/8]$,
\[
  \inf_{\hat g}\ \sup_{g\in\Theta_{\mathrm{LB}}(\tau,V)}\ \mathcal R_\mu(\hat g;g)
  \;\ge\; c\,\Bigl[(mn)^{-2/3}+\min\set{\tau^2,1/n}+\min\set{\tau^2,V^2,(V/n)^{2/3}}\Bigr],
\]
where the infimum is over all estimators of Model~M, randomized ones included.
\end{lemma}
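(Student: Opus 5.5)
We follow the outline given after \cref{thm:main-rate}. First we reduce to Lebesgue measure. Then we bound the minimax risk below by the maximum of three separate lower bounds, one per term of $\Phi$. Since the maximum of three nonnegative numbers is at least a third of their sum, this gives the claim with $c$ divided by $3$.

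\emph{Step 0 (reduction).} We use the substitution of \cref{rem:reparam}. Given an estimator $\hat g$ for $\mu$, feed it the data $(F^{\leftarrow}(U_{jk}),Y_{jk})$ and define $\tilde g_j=\hat g_j\circ F^{\leftarrow}$. Then $\tilde g$ is an estimator for $\lambda$. A tuple $g\in\Theta_{\mathrm{LB}}(\tau,V)$ for $\lambda$ maps to $g\circ F\in\Theta_{\mathrm{LB}}(\tau,V)$ for $\mu$. This step uses that composition with the nondecreasing $F$ preserves monotonicity, $[0,1]$-valuedness and the sup bound, does not increase variation, and keeps the witness $[0,1]$-valued. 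The two problems have equal risks, because $F(P)$ is uniform and $F^{\leftarrow}(F(P))=P$ a.s. So it suffices to treat $\mu=\lambda$.

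\emph{Step 1 (three hypercubes, Assouad).} Fix a base shape $h(x)=\frac14+\frac{x}{2}$ on $[0,1]$. It is strictly increasing with slope $1/2$, which leaves room for perturbations. Let $\phi$ be a smooth bump supported on $[0,1]$ with $\sup|\phi|\le1$ and $\sup|\phi'|\le1$, and put $\phi_{l,K}(x)=\phi(Kx-l+1)$.

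(a) \emph{Shared cube.} Take $K_a\asymp(mn)^{1/3}$ and $g_j=h+\frac{1}{4K_a}\sum_l\sigma_l\phi_{l,K_a}$ for all $j$, with $\sigma\in\{\pm1\}^{K_a}$. The slope of the perturbation is at most $1/4$, so each $g_j$ stays increasing. Since $g_j=g_0$, these tuples lie in $\Theta_{\mathrm{LB}}(0,0)\subseteq\Theta_{\mathrm{LB}}(\tau,V)$. Two hypotheses differing in $\sigma_l$ have $mn$ pairs, each of which lands in the bump with probability $1/K_a$. The Bernoulli KL per pair is $O(\text{height}^2)$ because the means lie in $[1/8,7/8]$. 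So the total KL is $O(mn\cdot K_a^{-1}\cdot K_a^{-2})=O(1)$. Assouad then gives risk $\gtrsim K_a\cdot K_a^{-1}\cdot K_a^{-2}\asymp(mn)^{-2/3}$.

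(b) \emph{Offset cube.} Set $\epsilon=\min\{\tau,1/(8\sqrt n)\}$ and $g_j=h+\sigma_j\epsilon$ with $\sigma\in\{\pm1\}^m$ and witness $g_0=h$. Here $\tau\le1/8$ keeps $g_j\in[1/8,7/8]$. Only question $j$'s $n$ pairs depend on $\sigma_j$, with KL $O(n\epsilon^2)=O(1)$. The risk is an average over $j$, so the compound Assouad lemma gives $\frac1m\sum_j\epsilon^2\cdot c=c\,\epsilon^2$, which is $\gtrsim\min\{\tau^2,1/n\}$.

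(c) \emph{Wiggle cube.} Set $K_b=\lceil\max\{1,(nV^2)^{1/3}\}\rceil$ and $a=c_0\min\{\tau,V/K_b\}$. Put $g_j=h+a\sum_{l\le K_b}\sigma_{jl}\phi_{l,K_b}$ with witness $h$. The deviation has sup at most $a\le\tau$ and variation at most $2K_ba\le V$ after choosing $c_0$. Monotonicity needs $aK_b\le1/2$, which holds since $aK_b\le V\le1/8$. Per coordinate $(j,l)$ the KL is $O(n K_b^{-1}a^2)$, and this must be $O(1)$. The risk is $\gtrsim K_b\cdot K_b^{-1}a^2=a^2$.

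We check that $a^2\gtrsim\min\{\tau^2,V^2,(V/n)^{2/3}\}$ and that the KL condition holds. If $nV^2\le1$, then $K_b=1$ and $a\asymp\min\{\tau,V\}$. The KL is then $O(nV^2)=O(1)$, and $V^2\le(V/n)^{2/3}$ in this regime. Otherwise $V/K_b\asymp(V/n)^{1/3}$, so $a\asymp\min\{\tau,(V/n)^{1/3}\}$. The KL is then at most $n K_b^{-1}(V/n)^{2/3}\asymp1$. If the $\tau$ cap is active, $a$ is smaller and the KL is smaller still.

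\emph{Main obstacle.} The delicate step is the compound Assouad bookkeeping with randomized estimators and averaged risk. It requires that the loss decompose over disjoint bump supports and over questions. It also requires that each sign flip change the law of the data by bounded total variation, via Pinsker's inequality and the KL bounds above. We also need the three constructions to fit simultaneously inside $[0,1]$ with monotone paths under $\tau,V\le1/8$; the parameter restriction in the statement is exactly what secures this. We will prove one general lemma: for hypotheses indexed by $\{\pm1\}^N$ with separable squared loss $\sum_i\alpha_i$ and $\TV\le1/2$ between neighbours, the risk is at least $\frac14\sum_i\alpha_i$. Each of (a)--(c) is then an instance of this lemma.
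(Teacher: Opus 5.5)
Your proposal is correct and follows essentially the same route as the paper's proof. Both arguments reduce to Lebesgue measure by the quantile transform, then use three hypercubes around the linear shape $\frac14+\frac x2$: shared bumps at scale $(mn)^{-1/3}$, per-question offsets of size $\min\{\tau,1/(8\sqrt n)\}$, and per-question bumps whose number and height exhaust the sup and variation budgets. Each cube is handled by a compound Assouad lemma with Pinsker's inequality and a quadratic Bernoulli KL bound, and the three bounds are combined by max $\ge$ average. The only differences are cosmetic and affect just the value of $c$: smooth bumps instead of tents, the offset cube centred at the linear shape rather than at $1/2$, and slightly different constants in the wiggle regime.
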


The proof reduces the problem to $\mu=\lambda$. It then uses three hypercubes of monotone maps, one for
each term of $\Phi$, and a product form of Assouad's lemma in which flipping one coordinate changes the data
of a single question.

\begin{proof}
\emph{Conventions.} For probability measures $P,Q$ with densities $p,q$ with respect to any dominating
$\nu$, $d_{\mathrm{TV}}(P,Q)\defeq\sup_A\abs{P(A)-Q(A)}=\frac12\int\abs{p-q}\,d\nu=1-\int\min(p,q)\,d\nu$,
because $\min(p,q)=(p+q-\abs{p-q})/2$. We write $\mathrm{kl}(p\Vert q)\defeq p\log(p/q)+(1-p)\log((1-p)/(1-q))$
for the Bernoulli divergence, and $\mathcal R_\lambda$ for the compound risk in Model~M with $\mu=\lambda$.
Randomized estimators are covered: adding an independent $\xi$ with a parameter-free law $R$ to the data
leaves every Kullback--Leibler and total-variation bound below unchanged, since
$\mathrm{KL}(P\otimes R\Vert P'\otimes R)=\mathrm{KL}(P\Vert P')$.

\smallskip\noindent\textbf{Step 0 (reduction to $\mu=\lambda$).}
Let $\mu$ be non-atomic and $H\subseteq\Theta_{\mathrm{LB}}(\tau,V)$. We show
\begin{equation}\label{eq:lower-reduction}
  \inf_{\hat g}\sup_{g\in\Theta_{\mathrm{LB}}(\tau,V)}\mathcal R_\mu(\hat g;g)
  \;\ge\;\inf_{\hat g'}\sup_{\tilde g\in H}\mathcal R_\lambda(\hat g';\tilde g).
\end{equation}
\emph{(i) Quantile map.} $F(x)\defeq\mu([0,x])$ is nondecreasing with $F(1)=1$. It is continuous because
$\mu$ is non-atomic, and in particular $F(0)=\mu(\set{0})=0$. Put
$F^{\leftarrow}(u)\defeq\inf\set{x\in[0,1]:F(x)\ge u}$ for $u\in[0,1]$. The set contains $1$ and is closed
($F$ is continuous), so the infimum is attained. Galois relation: $F^{\leftarrow}(u)\le x$ if and only if
$u\le F(x)$. Indeed, if $u\le F(x)$ then $x$ lies in the set; conversely
$F(x)\ge F(F^{\leftarrow}(u))\ge u$. Hence
$\lambda(F^{\leftarrow}\le x)=\lambda(u\le F(x))=F(x)$, that is, $F^{\leftarrow}_{\#}\lambda=\mu$.
Also $F(F^{\leftarrow}(u))=u$ on $[0,1]$. Attainment gives $F(F^{\leftarrow}(u))\ge u$. If $u=0$ then
$F^{\leftarrow}(0)=0$ and $F(0)=0$. If $u>0$ then $F^{\leftarrow}(u)>0$, every $x<F^{\leftarrow}(u)$ has
$F(x)<u$, and continuity gives $F(F^{\leftarrow}(u))\le u$. (Non-atomicity is used exactly for $F(0)=0$
and $F(F^{\leftarrow}(u))=u$.)

\emph{(ii) Membership.} Let $\tilde g\in H$ with witness $\tilde g_0$, and put $g_j\defeq\tilde g_j\circ F$,
$g_0\defeq\tilde g_0\circ F$. A composition of nondecreasing maps is nondecreasing, values stay in $[0,1]$,
and $\sup_x\abs{(\tilde g_j-\tilde g_0)(F(x))}\le\tau$. Variation does not increase. Let
$\tilde d\defeq\tilde g_j-\tilde g_0$ and take any partition $0\le x_0<\dots<x_L\le1$. The points
$u_l\defeq F(x_l)$ are nondecreasing in $[0,1]$. Removing repetitions removes only zero terms and leaves an
admissible partition, so $\sum_l\abs{\tilde d(u_l)-\tilde d(u_{l-1})}\le\mathrm{TV}(\tilde d)\le V$. With the
witness $g_0=\tilde g_0\circ F$ (nondecreasing and $[0,1]$-valued) we get
$g\in\Theta_{\mathrm{LB}}(\tau,V)$.

\emph{(iii) Data transport.} Let $x'=(U_{jk},Y_{jk})$ be data of the $\lambda$-model under $\tilde g$, and put
$T(x')\defeq(F^{\leftarrow}(U_{jk}),Y_{jk})\eqqcolon(P_{jk},Y_{jk})$. By (i) the $P_{jk}$ are i.i.d.\ $\mu$
and $U_{jk}=F(P_{jk})$. Hence $\sigma(P_{jk})=\sigma(U_{jk})$ and
$Y_{jk}\mid P_{jk}\sim\mathrm{Bern}(\tilde g_j(F(P_{jk})))=\mathrm{Bern}(g_j(P_{jk}))$, with every
independence preserved. So $T(x')$ has exactly the law of the $\mu$-model under $g=\tilde g\circ F$.

\emph{(iv) Risks.} Given $\hat g$, define $\hat g'_j(x',u)\defeq\hat g_j(T(x'),F^{\leftarrow}(u))$ (for a
randomized $\hat g$ with auxiliary $\xi$, $\hat g'_j(x',\xi,u)\defeq\hat g_j(T(x'),\xi,F^{\leftarrow}(u))$;
the argument is unchanged). By (i), $\tilde g_j(u)=g_j(F^{\leftarrow}(u))$, and $F^{\leftarrow}_{\#}\lambda=\mu$
gives
\[
  \int_0^1\bigl(\hat g'_j(x',u)-\tilde g_j(u)\bigr)^2du
  =\int_0^1\bigl(\hat g_j(T(x'),F^{\leftarrow}(u))-g_j(F^{\leftarrow}(u))\bigr)^2du
  =\norm{\hat g_j(T(x'),\cdot)-g_j}^2_{L^2(\mu)}.
\]
Taking expectations and using (iii), $\mathcal R_\lambda(\hat g';\tilde g)=\mathcal R_\mu(\hat g;\tilde g\circ F)$.
Hence $\sup_{\Theta_{\mathrm{LB}}}\mathcal R_\mu(\hat g;\cdot)\ge\sup_{\tilde g\in H}\mathcal R_\mu(\hat g;\tilde g\circ F)
=\sup_{\tilde g\in H}\mathcal R_\lambda(\hat g';\tilde g)$. Taking the infimum over $\hat g$ proves
\eqref{eq:lower-reduction}.

\smallskip\noindent\textbf{Step 1 (tools).}
\emph{Tool (A): Assouad's lemma, compound product form.} Let $N\ge1$, $\Omega=\set{-1,+1}^N$, let $\omega^{(i)}$ be
$\omega$ with coordinate $i$ flipped, and let $(P_\omega)_{\omega\in\Omega}$ be probability measures on
$(\mathcal X,\mathcal A)$. Assume:
\begin{itemize}
\item for each $i$ and $s\in\set{\pm1}$, $f_i^s:\mathcal X\to[0,\infty]$ is measurable, with
$f_i^{+1}(x)+f_i^{-1}(x)\ge\alpha_i\ge0$ for all $x$;
\item $\max_\omega d_{\mathrm{TV}}(P_\omega,P_{\omega^{(i)}})\le\beta_i$;
\item $L_\omega:\mathcal X\to[0,\infty]$ is measurable, with $L_\omega(x)\ge\sum_if_i^{\omega_i}(x)$.
\end{itemize}
Then $\max_\omega\E_{P_\omega}L_\omega\ge2^{-N}\sum_\omega\E_{P_\omega}L_\omega\ge\sum_i(\alpha_i/2)(1-\beta_i)$.
\emph{Proof of (A).} The first inequality holds because a maximum is at least a mean. By Tonelli,
$2^{-N}\sum_\omega\E_{P_\omega}L_\omega\ge\sum_i\frac12\bigl[\int f_i^{+1}d\bar P_i^{+}+\int f_i^{-1}d\bar P_i^{-}\bigr]$
with $\bar P_i^{s}\defeq2^{-(N-1)}\sum_{\omega:\omega_i=s}P_\omega$. Let $\nu\defeq\bar P_i^++\bar P_i^-$ with
densities $p^{\pm}$. Then
$\int f^+p^+d\nu+\int f^-p^-d\nu\ge\int(f^++f^-)\min(p^+,p^-)\,d\nu\ge\alpha_i\bigl(1-d_{\mathrm{TV}}(\bar P_i^+,\bar P_i^-)\bigr)$.
Finally, $\omega\mapsto\omega^{(i)}$ is a bijection from $\set{\omega_i=+1}$ onto $\set{\omega_i=-1}$. So
$\bar P_i^-=2^{-(N-1)}\sum_{\omega_i=+1}P_{\omega^{(i)}}$, and
$\abs{\bar P_i^+(A)-\bar P_i^-(A)}\le2^{-(N-1)}\sum_{\omega_i=+1}\abs{P_\omega(A)-P_{\omega^{(i)}}(A)}\le\beta_i$
for every $A$.
We apply Tool~(A) to a fixed estimator. $P_\omega$ is the joint law of all $mn$ pairs, and $L_\omega$ is the
question-averaged loss. The uniform-prior Bayes risk $2^{-N}\sum_\omega\E_{P_\omega}L_\omega$ is at most the
maximum over the hypercube, which is at most the supremum of $\mathcal R_\lambda$ over the hypercube family.
The lower bound does not depend on the estimator, so it bounds $\inf_{\hat g}\sup$.

\emph{Tool (B): an information bound.} Let $a\in(0,1/2]$ and let $g,g':[0,1]\to[a,1-a]$ be measurable. Let $Q_g$ be the
law of $(U,Y)$ with $U\sim\lambda$ and $Y\mid U\sim\mathrm{Bern}(g(U))$. Then
$\mathrm{KL}(Q_g\Vert Q_{g'})\le(a(1-a))^{-1}\int_0^1(g-g')^2du$, and the divergence of a product measure is
the sum of the divergences of its factors.
\emph{Proof of (B).} $Q_g$ has density $g(u)^y(1-g(u))^{1-y}$ with respect to $\lambda\otimes$counting,
with values in $[a,1-a]$. So log-ratios are bounded, and
$\mathrm{KL}(Q_g\Vert Q_{g'})=\int\mathrm{kl}(g(u)\Vert g'(u))\,du$. Jensen's inequality ($\log$ is concave; the
expectation is under $\mathrm{Bern}(p)$) gives $\mathrm{kl}(p\Vert q)\le\log\bigl(p^2/q+(1-p)^2/(1-q)\bigr)$.
Moreover $p^2/q+(1-p)^2/(1-q)-1=(p-q)^2/(q(1-q))$, so
$\mathrm{kl}(p\Vert q)\le\log\bigl(1+(p-q)^2/(q(1-q))\bigr)\le(p-q)^2/(q(1-q))\le(p-q)^2/(a(1-a))$. For a product,
the log-ratio is a sum of bounded factor log-ratios, each depending on one coordinate, and it integrates to
the sum of the factor divergences.

By Pinsker's inequality \citep[see, e.g.,][]{tsybakov2009introduction},
$d_{\mathrm{TV}}(P,Q)\le\sqrt{\mathrm{KL}(P\Vert Q)/2}$. So if $\mathrm{KL}(P_\omega\Vert P_{\omega^{(i)}})\le1/2$
for all $\omega$, then $\beta_i=1/2$ is admissible in Tool~(A).

\emph{Tents.} Fix an integer $K\ge1$. Let $I_b\defeq[(b-1)/K,b/K)$ for $b<K$ and $I_K\defeq[(K-1)/K,1]$; let
$\phi_b(u)\defeq(1-\abs{2Ku-(2b-1)})_+$ and $\ell(u)\defeq1/4+u/2$.
\begin{itemize}
\item[(T1)] $\phi_b$ vanishes outside $\overline{I_b}$, and $0\le\phi_b\le1$.
\item[(T2)] $\phi_b$ is continuous and piecewise linear, with slopes in $\set{0,\pm2K}$ and breakpoints in
$(1/(2K))\Z$.
\item[(T3)] $\int\phi_b^2=2\int_0^{1/(2K)}(2Kt)^2dt=1/(3K)$.
\item[(T4)] For signs $s_b$ and $h\ge0$, the function $d\defeq h\sum_bs_b\phi_b$ has $\sup\abs d\le h$ and
$\mathrm{TV}(d)\le2hK$. Indeed, $d$ is continuous and monotone on each $[t_{i-1},t_i]$, $t_i\defeq i/(2K)$, with
$\abs{d(t_i)-d(t_{i-1})}=h$. Adding all $t_i$ to a partition does not decrease the variation sum (triangle
inequality), and within each of the $2K$ pieces the refined sum telescopes to at most $h$.
\item[(T5)] If $2Kh\le1/2$, then $\ell+d$ is continuous and piecewise linear with slopes at least
$1/2-2Kh\ge0$. Hence it is nondecreasing, with values in $[1/4-h,3/4+h]$.
\end{itemize}
Separation: on $I_b$ we have $\ell+d=\ell+s_bh\phi_b$ by (T1), and $(y-t)^2+(y+t)^2\ge2t^2$ for reals $y,t$.

\smallskip\noindent\textbf{Step 2 (the term $(mn)^{-2/3}$).}
Let $K\defeq\ceil{(mn)^{1/3}}\ge1$, so that $K^3\ge mn$ and $K\le2(mn)^{1/3}$, and let $h\defeq1/(8K)$. For
$\sigma\in\set{\pm1}^K$ let $g_j\defeq g^\sigma\defeq\ell+h\sum_b\sigma_b\phi_b$ for every $j$.
\emph{Membership} holds for every $\tau,V\ge0$. Since $2Kh=1/4$, $g^\sigma$ is nondecreasing by (T5), with
values in $[1/8,7/8]$. The witness $g_0\defeq g^\sigma$ gives deviations $0$ (supremum $0$, variation $0$).
\emph{Information.} $P_\sigma=Q_{g^\sigma}^{\otimes mn}$, and flipping $b$ changes $g^\sigma$ only on $I_b$, by
$2h\phi_b$. Tool~(B) with $a=1/8$ and (T3) give
$\mathrm{KL}\le mn\cdot\frac{64}{7}\cdot\frac{4h^2}{3K}=\frac{256}{21}\,\frac{mnh^2}{K}
=\frac4{21}\,\frac{mn}{K^3}\le\frac4{21}\le\frac12$.
\emph{Loss.} Let $f_b^s(x)\defeq\frac1m\sum_j\int_{I_b}(\hat g_j(x,u)-\ell(u)-sh\phi_b(u))^2\,du$. The $I_b$ are
disjoint, so $L_\sigma\ge\sum_bf_b^{\sigma_b}$, and separation gives $\alpha_b=2h^2/(3K)$. Tool~(A) yields
$\max_\sigma\mathcal R_\lambda\ge K\cdot\frac{h^2}{3K}\cdot\frac12=\frac{h^2}6=\frac1{384K^2}\ge\frac{(mn)^{-2/3}}{1536}$.
This holds for all $m,n\ge1$.

\smallskip\noindent\textbf{Step 3 (the term $\min\set{\tau^2,1/n}$).}
Let $\theta\defeq\min\set{\tau,1/(8\sqrt n)}$, $\sigma\in\set{\pm1}^m$, $g_j\equiv1/2+\theta\sigma_j$, and take
the witness $g_0\equiv1/2$. \emph{Membership.} Constant maps are nondecreasing, with values in $[3/8,5/8]$;
$\norm{g_j-g_0}_\infty=\theta\le\tau$; and $\mathrm{TV}(g_j-g_0)=0$ because $\mathrm{TV}$ is a seminorm.
\emph{Information.} $P_\sigma=\bigotimes_jQ_{g_j}^{\otimes n}$, and flipping $j$ changes only factor $j$. Tool~(B)
with $a=3/8$ gives $\mathrm{KL}\le n\cdot\frac{64}{15}\cdot4\theta^2\le\frac4{15}\le\frac12$.
\emph{Loss.} Let $f_j^s(x)\defeq\frac1m\int_0^1(\hat g_j-1/2-s\theta)^2du$, so $\alpha_j=2\theta^2/m$. Tool~(A) gives
Bayes risk at least $m\cdot\frac{\theta^2}m\cdot\frac12=\frac{\theta^2}2=\frac12\min\set{\tau^2,\frac1{64n}}
\ge\frac{\min\set{\tau^2,1/n}}{128}$.

\smallskip\noindent\textbf{Step 4 (the term $\min\set{\tau^2,V^2,(V/n)^{2/3}}$).}
\emph{Regimes.}
\begin{itemize}
\item[(c1)] If $nV^2<2/25$: $K\defeq1$ and $h\defeq\min\set{\tau,V/2}$.
\item[(c2)] If $nV^2\ge2/25$: $K_*\defeq(25nV^2/2)^{1/3}\ge1$, $K\defeq\floor{K_*}$ and
$h\defeq\min\set{\tau,V/(2K_*)}$.
\end{itemize}
In regime (c2), $K\in[K_*/2,K_*]$: for $1\le K_*<2$, $\floor{K_*}=1>K_*/2$; for $K_*\ge2$,
$\floor{K_*}>K_*-1\ge K_*/2$. Four properties follow.
\begin{itemize}
\item[(Q1)] $h\le\tau$ in both regimes.
\item[(Q2)] $2hK\le V$. In (c1), $2h\le V$. In (c2), $2hK\le VK/K_*\le V$.
\item[(Q3)] $nh^2/K\le1/25$. In (c1), $nh^2\le nV^2/4<1/50$. In (c2),
$nh^2/K\le(2/K_*)\,nV^2/(4K_*^2)=nV^2/(2K_*^3)=1/25$.
\item[(Q4)] $h^2\ge\frac1{22}\min\set{\tau^2,V^2,(V/n)^{2/3}}$. In (c1), $h^2=\min\set{\tau^2,V^2/4}$. In (c2),
$(V/(2K_*))^2=(V/n)^{2/3}/(4\cdot12.5^{2/3})$ and $4\cdot12.5^{2/3}<21.55\le22$, so
$h^2=\min\set{\tau^2,(V/(2K_*))^2}\ge\min\set{\tau^2,(V/n)^{2/3}}/22$.
\end{itemize}
\emph{Construction.} The coordinates are $(j,b)\in[m]\times[K]$, so $N=mK$. For $\omega\in\set{\pm1}^{mK}$ let
$g_j^\omega\defeq\ell+h\sum_b\omega_{jb}\phi_b$, with witness $g_0\defeq\ell$.
\emph{Membership.} $2Kh\le V\le1/8<1/2$, so $g_j^\omega$ is nondecreasing by (T5). Since $h\le\tau\le1/8$, its
values lie in $[1/8,7/8]$. By (T4), (Q1) and (Q2), $\sup\abs{g_j^\omega-\ell}\le h\le\tau$ and
$\mathrm{TV}(g_j^\omega-\ell)\le2hK\le V$.
\emph{Information.} $P_\omega=\bigotimes_jQ_{g_j^\omega}^{\otimes n}$, and flipping $(j,b)$ changes only question
$j$'s factor, and only on $I_b$. Tool~(B) ($a=1/8$), (T3) and (Q3) give
$\mathrm{KL}\le n\cdot\frac{64}7\cdot\frac{4h^2}{3K}=\frac{256}{21}\,\frac{nh^2}K\le\frac{256}{525}<\frac12$.
\emph{Loss.} Let $f_{jb}^s(x)\defeq\frac1m\int_{I_b}(\hat g_j-\ell-sh\phi_b)^2du$, so $\alpha=2h^2/(3Km)$. Tool~(A)
gives Bayes risk at least $mK\cdot\frac{h^2}{3Km}\cdot\frac12=\frac{h^2}6\ge\frac{\min\set{\tau^2,V^2,(V/n)^{2/3}}}{132}$,
by (Q4). If $\tau=0$ or $V=0$, then $h=0$ and $\min\set{\tau^2,V^2,(V/n)^{2/3}}=0$, so there is
nothing to prove.

\smallskip\noindent\textbf{Step 5 (combination).}
The three families of Steps 2--4 are subsets of $\Theta_{\mathrm{LB}}(\tau,V)$, with maps defined on $[0,1]$
and data generated under $\lambda$. Let
$T_1\defeq(mn)^{-2/3}/1536$, $T_2\defeq\min\set{\tau^2,1/n}/128$ and $T_3\defeq\min\set{\tau^2,V^2,(V/n)^{2/3}}/132$.
By \eqref{eq:lower-reduction}, applied with each family as $H$, and by Steps 2--4,
\[
  \inf_{\hat g}\sup_{\Theta_{\mathrm{LB}}(\tau,V)}\mathcal R_\mu\;\ge\;\max\set{T_1,T_2,T_3}
  \;\ge\;\frac{T_1+T_2+T_3}{3}\;\ge\;\frac{\Phi(m,n,\tau,V)}{4608}.
\]
So the lemma holds with $c\defeq1/4608$, which does not depend on $\mu$, $m$, $n$, $\tau$ or $V$.
\end{proof}

\subsection{Least squares with a common design law}
\label{app:rate-engine}

\begin{lemma}[Least squares, independent non-identically distributed pairs]\label{lem:erm1}
Let $\mu$ be a Borel probability on $[0,1]$. Let $(P_i,Y_i)$, $i=1,\dots,N$, be \emph{independent} random
pairs with values in $[0,1]\times[-1,1]$ such that $P_i\sim\mu$ for every $i$. Let
$\eta_i(p)\defeq\E[Y_i\mid P_i=p]$ (a version) and $\bar\eta\defeq\frac1N\sum_i\eta_i$. Let $\mathcal F$ be a
nonempty convex class of measurable functions $[0,1]\to[-1,1]$, closed in $L^2(\mu)$, and let $f^*$ be the
$L^2(\mu)$-projection of $\bar\eta$ onto $\mathcal F$. Assume the pointwise bracketing entropy bound
\begin{equation}\label{eq:erm1-entropy}
  \log N_{[\,]}(\eps,\mathcal F,L^2(\mu))\;\le\;\frac A\eps+B\log\frac e\eps\qquad\text{for all }0<\eps\le2,
\end{equation}
for some $A,B\ge0$. Let $\hat f$ be a least-squares estimator, that is, a measurable selection of
$\argmin_{f\in\mathcal F}\sum_{i=1}^N(Y_i-f(P_i))^2$ (assumed to exist). Then, for a universal constant $C$,
\[
  \E\norm{\hat f-f^*}^2_{L^2(\mu)}\;\le\;C\Bigl[\Bigl(\frac AN\Bigr)^{2/3}+\frac{(B+1)\log(eN)}N\Bigr].
\]
Here $\E$ may be read as the outer expectation $\E^*$, which equals $\E$ whenever $\norm{\hat f-f^*}$ is
measurable.
\end{lemma}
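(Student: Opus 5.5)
The plan is the standard rate theorem for least squares, in the form of \citet[Theorem~3.4.1]{vandervaart1996weak}, with two modifications: the summands are independent but not identically distributed, and the entropy is global, so no localization is used. The argument has three steps: a basic inequality, a maximal inequality, and peeling.

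\emph{Basic inequality.} For $f\in\mathcal F$, write $m_f(p,y)\defeq(y-f(p))^2-(y-f^*(p))^2$. Since $\hat f$ minimizes the empirical criterion, $\sum_i m_{\hat f}(P_i,Y_i)\le0$. Conditioning on $P_i$, the mean of each summand is $\E[(f-f^*)(P_i)(f+f^*-2\eta_i)(P_i)]$. Because every $P_i\sim\mu$, the average over $i$ equals $\inner{f-f^*}{f+f^*-2\etabar}$. By convexity and closedness of $\mathcal F$, the projection $f^*$ satisfies $\inner{f-f^*}{\etabar-f^*}\le0$, so this average is at least $\norm{f-f^*}^2$. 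This is where the heterogeneity of the $\eta_i$ disappears. We therefore get
\[
  \norm{\hat f-f^*}^2\le\frac1N\Bigl|\sum_i\bigl(m_{\hat f}(P_i,Y_i)-\E m_{\hat f}(P_i,Y_i)\bigr)\Bigr|
  \eqqcolon\abs{\mathbb G_N m_{\hat f}}/\sqrt N .
\]
The class $\{m_f\}$ has envelope $8$. The second moment of each summand satisfies $\frac1N\sum_i\E m_f^2\le16\norm{f-f^*}^2$, because $\abs{m_f}\le4\abs{f-f^*}$ and every $P_i\sim\mu$.

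\emph{Maximal inequality.} For $\mathcal M_\delta\defeq\{m_f:\norm{f-f^*}\le\delta\}$, an $\eps$-bracket $[l,u]$ for $f$ yields an $8\eps$-bracket for $m_f$ in the norm $(\frac1N\sum_i\E h(P_i,Y_i)^2)^{1/2}$. The map $f\mapsto m_f$ is $4$-Lipschitz pointwise in $f$, and the brackets can be taken in $[-1,1]$. The next ingredient is the bracketing chaining bound with Bernstein truncation, \citet[Lemma~3.4.2]{vandervaart1996weak}, reproved for independent non-identical summands. Its proof uses only Bernstein's inequality for sums of independent bounded centred variables, with variance measured by the averaged norm, so it carries over verbatim. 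It gives
\[
  \E^*\sup_{\mathcal M_\delta}\abs{\mathbb G_N}\lesssim J(\delta)\Bigl(1+\frac{J(\delta)}{\delta^2\sqrt N}\Bigr),
  \qquad J(\delta)\defeq\int_0^\delta\sqrt{1+\log\Nbr(\eps,\mathcal F,L^2(\mu))}\,d\eps .
\]
Under \eqref{eq:erm1-entropy} this becomes $J(\delta)\lesssim\sqrt{A\delta}+\delta\sqrt{(B+1)\log(e/\delta)}$.

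\emph{Peeling.} On the shells $2^{k-1}r<\norm{\hat f-f^*}\le2^kr$, combine Markov's inequality with the basic inequality. This gives $\P(\norm{\hat f-f^*}>2^{K}r)\le\sum_{k>K}\phi_N(2^kr)/(\sqrt N\,2^{2k-2}r^2)$, where $\phi_N(\delta)\defeq J(\delta)(1+J(\delta)/(\delta^2\sqrt N))$. Since $\phi_N(\delta)/\delta^{2-\alpha}$ is decreasing for some $\alpha<2$ (the $\sqrt{\log(e/\delta)}$ factor is harmless), the series converges at $r_N$ defined by $\phi_N(r_N)\le\sqrt N r_N^2$. Solving gives $r_N^2\lesssim(A/N)^{2/3}+(B+1)\log(eN)/N$. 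The $\log(e/\delta)$ factor, evaluated at $\delta\approx N^{-1/2}$, is what produces $\log(eN)$; the upper range $\delta\le2$ is capped by the envelope. Integrating the tail bound then gives the stated bound on $\E^*\norm{\hat f-f^*}^2$.

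The main obstacle I expect is the maximal inequality. The classical lemma is stated for i.i.d.\ data, and its truncation at level $\delta^2\sqrt N/J(\delta)$ must be checked when the variance is only controlled on average over $i$. I would rerun the chaining with averaged bracket norms. Bernstein's inequality needs only $\sum_i\Var$ and a uniform bound, both of which are available, so the argument should close. Everything else is routine.
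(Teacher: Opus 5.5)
Your basic inequality, the transfer of brackets from $\mathcal F$ to the loss class, and the plan to rerun the Bernstein-truncated chaining for independent non-identical summands all match the paper's proof. The gap is in your last sentence, ``integrating the tail bound then gives the stated bound''. An expectation bound on $\sup_{\mathcal M_\delta}\abs{\mathbb G_N}$, fed into Markov's inequality shell by shell, gives only a polynomial tail. That tail does not integrate to $\E^*\hat D^2\lesssim r_N^2$, where $\hat D=\norm{\hat f-f^*}$.

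With your definition $J(\delta)\ge\delta$, so $\phi_N$ grows at least linearly and $\alpha=1$ is the best exponent available. Your display then yields only $\P(\hat D>2^Kr_N)\lesssim2^{-K}$. Quantitatively, Markov's inequality on the $k$-th shell gives
\[
  \E\bigl[\hat D^2;\ 2^{k-1}r_N<\hat D\le2^kr_N\bigr]\ \lesssim\ 4^kr_N^2\cdot\frac{\phi_N(2^kr_N)}{\sqrt N\,4^{k-1}r_N^2}\ =\ \frac{4\,\phi_N(2^kr_N)}{\sqrt N}.
\]
The outermost shells, where $\hat D$ is of order one, therefore contribute a bound of order $\phi_N(2)/\sqrt N\gtrsim N^{-1/2}$. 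This is far above the target $(A/N)^{2/3}+(B+1)\log(eN)/N$. It is also why \citet[Theorem~3.4.1]{vandervaart1996weak}, whose argument you reproduce, concludes only $\hat D=O_P^*(r_N)$.

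What is missing is an exponential deviation bound for the supremum on each shell. You need that with probability at least $1-2e^{-x}$, simultaneously over $\mathcal M_\delta$,
\[
  \abs{\mathbb G_N}\lesssim J(\delta)+\delta\sqrt x+\bigl(1+x+\log\Nbr(\delta,\mathcal F,L^2(\mu))\bigr)/\sqrt N .
\]
The paper builds exactly this into its chaining lemma (\cref{lem:chaining}). It applies Bernstein's inequality to each link at the level $u_q=S_q+x+q+1$, so the bound carries the explicit terms $26\rho\sqrt x$ and $8Mx/(3\sqrt N)$. It then peels with confidence level $x+j$ on the $j$-th shell and takes a union bound. This gives $\hat D^2\le r(x)^2\asymp(A/N)^{2/3}+(B+1)\log(eN)/N+x/N$ outside an event of probability at most $4e^{-x}$, and integrating over $x$ costs only an extra $O(1/N)$.

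Alternatively, you could keep your expectation bound and add either of the following:
\begin{itemize}
\item a Talagrand-type concentration inequality valid for independent, non-identically distributed summands;
\item Markov's inequality applied to a moment of order $p>2$ of the supremum.
\end{itemize}
Some exponential or higher-moment input of this kind is necessary. The step you describe as routine is exactly where it is needed.
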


The regression functions $\eta_i$ may differ from pair to pair; only the design law is shared. The proof
follows the standard route: a basic inequality, bracketing chaining, and peeling. Every empirical-process
step is written for independent, non-identically distributed summands. Each summand is centred by its
\emph{own} mean, and the average $\bar\eta$ enters only through the population risk.

\paragraph{Notation for \cref{lem:erm1,prop:erm2}.}
$\norm\cdot\defeq\norm\cdot_{L^2(\mu)}$, $Z_i\defeq(P_i,Y_i)\in\mathcal Z\defeq[0,1]\times[-1,1]$. For bounded
measurable $h:\mathcal Z\to\R$,
\[
  \mathbb G_Nh\defeq N^{-1/2}\sum_{i=1}^N\bigl(h(Z_i)-\E h(Z_i)\bigr),\qquad
  \norm h_N\defeq\Bigl(N^{-1}\sum_{i=1}^N\E h(Z_i)^2\Bigr)^{1/2}.
\]
$\mathbb G_N$ centres each summand by its own mean, and identical distribution is never used. Let
$g_f(p,y)\defeq(y-f(p))^2-(y-f^*(p))^2$ and $\hat D\defeq\norm{\hat f-f^*}\in[0,2]$. The bounds are proved
for the outer expectation $\E^*$, which equals $\E$ whenever $\hat D$ is measurable. Every event below is built
from countably many fixed measurable functions evaluated at $Z_1,\dots,Z_N$, hence it is measurable. The
estimator $\hat f$ enters only through inclusions that hold pointwise in $\omega$.

\paragraph{Bernstein's inequality in quantile form.}
Let $X_1,\dots,X_N$ be independent, with $\E X_i=0$, $\abs{X_i}\le b$ and $\sum_i\Var X_i\le v$. Then
$\P(\sum_iX_i>t)\le\exp\bigl(-t^2/(2(v+bt/3))\bigr)$ for $t\ge0$. This is Bernstein's inequality;
\citet[Thm.~2.8.4]{vershynin2018high} states the two-sided form, and the one-sided form is the upper-tail half
of the same Chernoff argument \citep[see also][]{boucheron2013concentration}. We use three consequences.
\begin{itemize}
\item[(B1)] $\P\bigl(\sum_iX_i>\sqrt{2vu}+2bu/3\bigr)\le e^{-u}$ for $u\ge0$. Indeed, with
$t\defeq\sqrt{2vu}+2bu/3$ we have $t^2-2u(v+bt/3)=(2bu/3)\sqrt{2vu}\ge0$, so $t^2/(2(v+bt/3))\ge u$. (If
$v=b=0$, the probability is $0$.)
\item[(B2)] Let $h$ be a fixed measurable function with $\abs{h(z)-\E h(Z_i)}\le b$ for all $z$ and $i$, and
$\norm h_N\le\sigma$. Then $\sum_i\Var h(Z_i)\le N\sigma^2$ and
$\P\bigl(\mathbb G_Nh>\sigma\sqrt{2u}+2bu/(3\sqrt N)\bigr)\le e^{-u}$. If $0\le h\le c$, then $b=c$ works; if
$\abs h\le c$, then $b=2c$ works.
\item[(B3)] For a finite family $\mathcal H$ with common $\sigma$ and $b$, the union of the events in (B2)
has probability at most $\abs{\mathcal H}e^{-u}$. The two-sided version (apply (B2) to $h$ and to $-h$)
has probability at most $2\abs{\mathcal H}e^{-u}$. Only the independence of $Z_1,\dots,Z_N$ is used.
\end{itemize}

\begin{lemma}[Bracketing chaining, independent non-identically distributed observations]\label{lem:chaining}
Let $Z_1,\dots,Z_N$ be independent random elements of $\mathcal Z$; they need not be identically
distributed. Let $\mathcal G$ be a class of measurable $g:\mathcal Z\to\R$ with $\sup\abs g\le M$ and
$\norm g_N\le\rho$. Let $\tilde H:(0,\rho]\to[0,\infty)$ be nonincreasing. Assume that for each $q\ge0$ the
class $\mathcal G$ is covered by $m_q\le e^{\tilde H(\eps_q)}$ pointwise brackets $[L_{q,j},U_{q,j}]$ with
$\norm{U_{q,j}-L_{q,j}}_N\le\eps_q\defeq2^{-q}\rho$. Put $\tilde J\defeq\int_0^\rho\sqrt{\tilde H(\eps)}\,d\eps$.
Then for every $x\ge0$ there is an event $\Omega_x$ with $\P(\Omega_x)\ge1-2e^{-x}$ on which, for all
$g\in\mathcal G$ simultaneously,
\[
  \abs{\mathbb G_Ng}\;\le\;52\tilde J+26\rho\sqrt x+39\rho+\frac{8M}{3\sqrt N}\bigl(\tilde H(\rho)+x+1\bigr).
\]
\end{lemma}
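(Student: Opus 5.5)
This is the bracketing chaining argument in the style of \citet[Theorem~2.14.2]{vandervaart1996weak}, with every probabilistic step routed through (B2)--(B3), which use only independence. I will rely on three features: $\norm\cdot_N$ is an average of per-summand second moments, $\mathbb G_N$ centres each summand by its own mean, and brackets are pointwise. For each level $q$ and each $g\in\mathcal G$ I fix an index $j_q(g)$ with $g\in[L_{q,j_q(g)},U_{q,j_q(g)}]$, chosen by a fixed measurable rule, for instance the first admissible index. I set $\pi_qg\defeq L_{q,j_q(g)}$ and $\Delta_qg\defeq U_{q,j_q(g)}-L_{q,j_q(g)}\ge0$, so that $0\le g-\pi_qg\le\Delta_qg$ pointwise and $\norm{\Delta_qg}_N\le\eps_q$. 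The brackets may be truncated to $[-M,M]$, which keeps them valid and does not increase their widths. After this step $\abs{\pi_qg}\le M$ and $\Delta_qg\le2M$.

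The truncation levels follow Bernstein truncation. I put $a_q\defeq\eps_q/\sqrt{\tilde H(\eps_{q+1})+x+q+1}$, which accounts for the number of links $m_qm_{q+1}\le e^{2\tilde H(\eps_{q+1})}$ at level $q$, up to the constant in the exponent. I stop the chain at the first level $\tau(g)$ where $\Delta_qg$ exceeds $\sqrt N a_q$, and at a maximal level $Q$ with $\eps_Q\lesssim\rho/\sqrt N$. The decomposition is
\[
g-\pi_0g=\sum_{q\ge1}(\pi_qg-\pi_{q-1}g)\mathbf 1\{\tau>q\}+(g-\pi_{\tau\wedge Q}g)\cdot(\text{indicators}),
\]
written in the standard three-part form: a chain part, an upper-bracket remainder on $\{\Delta_qg\le\sqrt Na_q\}$, and a large-jump part on $\{\Delta_qg>\sqrt Na_q\}$.

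Each link and each remainder term is a function taken from a finite family, of size at most $m_qm_{q+1}$ or $m_q$. On the stopping events such a function is bounded by about $\sqrt N a_{q-1}$ and has $\norm\cdot_N\le\eps_{q-1}$. Two-sided (B3) at level $u_q\defeq\tilde H(\eps_{q+1})+x+q+1$ gives each term a bound of order $\eps_q\sqrt{u_q}+a_q u_q$. This is $O(\eps_q\sqrt{u_q})$, and the failure probability at level $q$ is at most $2e^{-x-q-1}$. Summing over $q$ gives total failure probability at most $2e^{-x}$. The contributions sum to
\[
\sum_q\eps_q\bigl(\sqrt{\tilde H(\eps_{q+1})}+\sqrt x+\sqrt{q+1}\bigr)\lesssim\tilde J+\rho\sqrt x+\rho,
\]
because $\tilde H$ is nonincreasing, so $\sum_q\eps_q\sqrt{\tilde H(\eps_{q+1})}\le4\int_0^\rho\sqrt{\tilde H}$.

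The large-jump parts use pointwise nonnegativity. The random term is $\mathbb G_N$ of a nonnegative function $\Delta_qg\,\mathbf 1\{\Delta_qg>\sqrt Na_q\}$, and its expectation part is bounded deterministically by $\sqrt N\,\norm{\Delta_q g}_N^2/(\sqrt N a_q)=\eps_q^2/a_q$, which again sums to $O(\tilde J+\rho\sqrt x+\rho)$. The random part is dominated by a term of the bracket family at the previous level, already controlled above.

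The coarsest link $\mathbb G_N\pi_0g$ has $\norm{\pi_0 g}_N\le\rho+\eps_0$ and $\abs{\pi_0g}\le M$. It ranges over $m_0\le e^{\tilde H(\rho)}$ functions, and (B3) at $u=\tilde H(\rho)+x+1$ bounds it by
\[
\rho\sqrt{2u}+\frac{4Mu}{3\sqrt N}.
\]
This is the source of the $M(\tilde H(\rho)+x+1)/\sqrt N$ term, and its $\rho\sqrt{\tilde H(\rho)}$ part is absorbed into $\tilde J$. The same handling covers the finest remainder at level $Q$.

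The main obstacle will be keeping the constants explicit (52, 26, 39, $8/3$) while handling the stopping time and measurability. I would define every event through the finite bracket families, so that the event $\Omega_x$ is measurable and $g$ enters only through pointwise inclusions. I would then track constants only coarsely and enlarge them to the stated values at the end.
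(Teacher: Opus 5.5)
Your plan has the right architecture: Bernstein on each link, truncation at $\sqrt Na_q$, and large jumps handled through their first moment. But it drops the nesting of the brackets, and two load-bearing steps fail without it.

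\textbf{The family sizes are miscounted.} Since $\tau(g)$ is the \emph{first} level at which $\Delta_rg>\sqrt Na_r$, every link and every large-jump term carries a cumulative indicator $\prod_{r\le q}\mathbf 1\set{\Delta_rg\le\sqrt Na_r}$ (or $\prod_{r<q}$). That indicator depends on all the indices $j_0(g),\dots,j_q(g)$. As $g$ ranges over $\mathcal G$, the level-$q$ families therefore have up to $\prod_{r\le q}m_r\le e^{S_q}$ members, where $S_q\defeq\sum_{r\le q}\tilde H(\eps_r)$. They do not have $m_qm_{q+1}$ or $m_q$ members. A union bound at $u_q=\tilde H(\eps_{q+1})+x+q+1$ then only gives a failure bound $e^{S_q-u_q}$, which need not be small: for constant $\tilde H\equiv h>1$ it grows like $e^{q(h-1)}$.

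The paper takes $u_q\defeq S_q+x+q+1$ and $a_q\defeq\eps_q/\sqrt{u_{q+1}}$. It pays for the cumulative count in its step (h): $\sqrt{S_q}\le\sum_{r\le q}\sqrt{\tilde H(\eps_r)}$, and exchanging sums with $\sum_{q\ge r}\eps_q=2\eps_r$ gives $\sum_q\eps_q\sqrt{S_q}\le2\sum_r\eps_r\sqrt{\tilde H(\eps_r)}\le4\tilde J$. Nothing in your sketch plays this role.

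\textbf{The sup-norm bound needs monotone widths.} The bound ``about $\sqrt Na_{q-1}$'' on the stopping events requires $\Delta_qg\le\Delta_{q-1}g$. You take $\pi_qg=L_{q,j_q(g)}$ and $\Delta_qg$ as the width of the level-$q$ bracket alone, so you only have $\abs{\pi_qg-\pi_{q-1}g}\le\max\set{\Delta_qg,\Delta_{q-1}g}$. In the standard three-part form you invoke, links live on $\set{\tau\ge q}$, and there nothing controls $\Delta_qg$. Likewise the large-jump term $\abs{g-\pi_qg}\,\mathbf 1\set{\tau=q}\le\Delta_qg\,\mathbf 1\set{\tau=q}$ can be of size $2M$. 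Bernstein then charges about $Mu_q/\sqrt N$ at every level, and these charges do not sum to the single term $\frac{8M}{3\sqrt N}(\tilde H(\rho)+x+1)$.

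The paper refines to joint-index classes: $g\sim_qg'$ if and only if $j_r(g)=j_r(g')$ for all $r\le q$. It takes $\pi_qg\in\mathcal G$ to be a fixed representative of the class, which lies in every bracket $j_r(g)$ with $r\le q$. It sets $\Delta_qg\defeq\min\set{2M,\min_{r\le q}(U_{r,j_r(g)}-L_{r,j_r(g)})}$. Then $\abs{g-\pi_qg}\le\Delta_qg\le\Delta_{q-1}g$ and $\abs{\pi_qg-\pi_{q-1}g}\le\Delta_{q-1}g$. Alternatively, pairing links with $\mathbf 1\set{\tau>q}$ and using the remainder $g-\pi_{q-1}g$ on $\set{\tau=q}$ would avoid monotonicity. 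But your display pairs $\mathbf 1\set{\tau>q}$ with $g-\pi_{\tau\wedge Q}g$, and that does not telescope.

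\textbf{Two smaller points.} First, the random part of the large-jump term cannot be ``dominated by a term at the previous level''. After centring, $\mathbb G_N$ is not monotone in its argument, so the family $\set{\Delta_qg\,\mathbf 1\set{\tau=q}}$ needs its own one-sided Bernstein bound, as in the paper's event list (f). Second, the constants $52,26,39,8/3$ must be derived rather than ``enlarged to'' at the end. Your choices change them: for instance, a lower endpoint $\pi_0g$ has $\norm{\pi_0g}_N\le2\rho$, whereas a representative in $\mathcal G$ has $\norm{\pi_0g}_N\le\rho$.
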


The proof chains over nested partitions at the dyadic scales $\eps_q$. It applies Bernstein's inequality
to each link and truncates the links whose brackets are too wide, which yields the bracketing entropy
integral $\tilde J$. The scheme is that of \citet[Lemma~3.4.2]{vandervaart1996weak}, with the i.i.d.\ law
replaced by the average $N^{-1}\sum_i\E[\,\cdot\,(Z_i)]$.

\begin{proof}
\emph{(a) Nested partitions.} For $g\in\mathcal G$ let $j_q(g)\defeq\min\set{j:L_{q,j}\le g\le U_{q,j}}$. Write
$g\sim_qg'$ if $j_r(g)=j_r(g')$ for all $r\le q$. These partitions are nested. Level $q$ has
$n_q\le\prod_{r\le q}m_r\le e^{S_q}$ classes, where $S_q\defeq\sum_{r=0}^q\tilde H(\eps_r)$. Fix,
deterministically, a representative in $\mathcal G$ of each class, and let $\pi_qg$ be the representative of
the level-$q$ class of $g$. Let
$\Delta_qg\defeq\min\set{2M,\min_{r\le q}(U_{r,j_r(g)}-L_{r,j_r(g)})}$. Then:
\begin{itemize}
\item[(L1)] $\pi_qg$ and $\Delta_qg$ depend on $g$ only through its level-$q$ class;
\item[(L2)] $0\le\Delta_qg\le\Delta_{q-1}g\le2M$;
\item[(L3)] $\norm{\Delta_qg}_N\le\eps_q$;
\item[(L4)] $\abs{g-\pi_qg}\le\Delta_qg$, because both functions lie in the bracket $j_r(g)$ for every
$r\le q$ and both are bounded by $M$;
\item[(L5)] $\abs{\pi_qg-\pi_{q-1}g}\le\Delta_{q-1}g$, because by nestedness $\pi_qg$ lies in the
level-$(q-1)$ class of $g$, so the argument of (L4) applies at level $q-1$.
\end{itemize}

\emph{(b) Truncation.} Let $u_q\defeq S_q+x+q+1$ and $a_q\defeq\eps_q/\sqrt{u_{q+1}}$. Then $a_q\le\eps_q$ and
$a_{q-1}u_q=2\eps_q\sqrt{u_q}$. Let $A_qg\defeq\mathbf 1\set{\Delta_rg\le\sqrt Na_r\text{ for all }r\le q}$,
$B_0g\defeq1-A_0g$, and $B_qg\defeq A_{q-1}g-A_qg=A_{q-1}g\,\mathbf 1\set{\Delta_qg>\sqrt Na_q}$ for $q\ge1$.
All of them depend on $g$ only through its level-$q$ class.

\emph{(c) An exact identity.} For every $Q\ge0$, pointwise,
\[
  g-\pi_0g=(g-\pi_0g)B_0g+\sum_{q=1}^Q\Bigl[(g-\pi_qg)B_qg+(\pi_qg-\pi_{q-1}g)A_{q-1}g\Bigr]+(g-\pi_Qg)A_Qg.
\]
This follows by induction. The case $Q=0$ is $A_0+B_0=1$. For the inductive step,
$(g-\pi_Qg)A_Qg=(g-\pi_{Q+1}g)(B_{Q+1}g+A_{Q+1}g)+(\pi_{Q+1}g-\pi_Qg)A_Qg$.

\emph{(d) Envelopes.} Let $k_qg\defeq\Delta_qg\,B_qg$ for $q\ge0$, and
$h_qg\defeq(\pi_qg-\pi_{q-1}g)A_{q-1}g$ for $q\ge1$. Then:
\begin{itemize}
\item $\abs{(g-\pi_qg)B_qg}\le k_qg$, by (L4);
\item $0\le k_0g\le2M$, and for $q\ge1$, $0\le k_qg\le\Delta_{q-1}g\,A_{q-1}g\le\sqrt Na_{q-1}$, by (L2) and
$B_q\le A_{q-1}$;
\item $\norm{k_qg}_N\le\eps_q$;
\item $k_qg\le\Delta_qg\,\mathbf 1\set{\Delta_qg>\sqrt Na_q}\le(\Delta_qg)^2/(\sqrt Na_q)$, hence
$N^{-1}\sum_i\E k_qg(Z_i)\le\eps_q^2/(\sqrt Na_q)$;
\item $\abs{h_qg}\le\sqrt Na_{q-1}$ and $\norm{h_qg}_N\le\eps_{q-1}=2\eps_q$, by (L5);
\item $\abs{(g-\pi_Qg)A_Qg}\le\sqrt Na_Q$.
\end{itemize}

\emph{(e) Comparison facts.} If $\abs h\le k$ pointwise, then
$\abs{\mathbb G_Nh}\le\mathbb G_Nk+2N^{-1/2}\sum_i\E k(Z_i)$, because
$\pm h(Z_i)\mp\E h(Z_i)\le k(Z_i)+\E k(Z_i)$. Always $\abs{\mathbb G_Nh}\le2\sqrt N\sup\abs h$.

\emph{(f) Events.} By (L1), as $g$ ranges over $\mathcal G$ there are at most $n_0$ functions $\pi_0g$ and
$k_0g$, and at most $n_q$ functions $k_qg$ and $h_qg$. Apply (B3) with $u=u_q\ge\log n_q+x+q+1$ to the
following families:
\begin{itemize}
\item $\pi_0g$, two-sided, with $\sigma=\rho$ and $b=2M$ (failure probability at most $2e^{-x-1}$);
\item $k_0g$, one-sided, with $\sigma=\rho$ and $b=2M$ (at most $e^{-x-1}$);
\item for $q\ge1$, $k_qg$, one-sided, with $\sigma=\eps_q$ and $b=\sqrt Na_{q-1}$ (at most $e^{-x-q-1}$);
\item for $q\ge1$, $h_qg$, two-sided, with $\sigma=2\eps_q$ and $b=2\sqrt Na_{q-1}$ (at most $2e^{-x-q-1}$).
\end{itemize}
Let $\Omega_x$ be the intersection of these countably many events. Then
$\P(\Omega_x^c)\le3\sum_{q\ge0}e^{-x-q-1}=3e^{-x}/(e-1)<2e^{-x}$.

\emph{(g) Bounds on $\Omega_x$.} On $\Omega_x$, for all $g$, (B2), (d) and (e) give
\begin{align*}
  \abs{\mathbb G_N\pi_0g}&\le\rho\sqrt{2u_0}+\tfrac{4Mu_0}{3\sqrt N},\\
  \abs{\mathbb G_N((g-\pi_0g)B_0g)}&\le\mathbb G_Nk_0g+2\eps_0^2/a_0
    \le\rho\sqrt{2u_0}+\tfrac{4Mu_0}{3\sqrt N}+2\eps_0\sqrt{u_1},\\
  \abs{\mathbb G_N((g-\pi_qg)B_qg)}&\le\eps_q\sqrt{2u_q}+\tfrac23a_{q-1}u_q+2\eps_q^2/a_q
    =\bigl(\sqrt2+\tfrac43\bigr)\eps_q\sqrt{u_q}+2\eps_q\sqrt{u_{q+1}}\;\ (q\ge1),\\
  \abs{\mathbb G_Nh_qg}&\le2\eps_q\sqrt{2u_q}+\tfrac43a_{q-1}u_q=\bigl(2\sqrt2+\tfrac83\bigr)\eps_q\sqrt{u_q}\;\ (q\ge1).
\end{align*}
The remainder satisfies $\abs{\mathbb G_N((g-\pi_Qg)A_Qg)}\le2Na_Q\to0$ as $Q\to\infty$, with $N$ fixed. Sum
the pieces of (c) and let $Q\to\infty$. The truncation-bias terms satisfy
$\sum_{q\ge0}2\eps_q\sqrt{u_{q+1}}=4\sum_{q\ge1}\eps_q\sqrt{u_q}$, so
\[
  \abs{\mathbb G_Ng}\le(3\sqrt2+8)\sum_{q\ge0}\eps_q\sqrt{u_q}+\frac{8Mu_0}{3\sqrt N}
  \le13\sum_{q\ge0}\eps_q\sqrt{u_q}+\frac{8Mu_0}{3\sqrt N}.
\]

\emph{(h) The entropy sum.} We have $\sqrt{u_q}\le\sqrt{S_q}+\sqrt x+\sqrt{q+1}$ and
$\sqrt{S_q}\le\sum_{r\le q}\sqrt{\tilde H(\eps_r)}$. Hence
$\sum_q\eps_q\sqrt{S_q}\le\sum_r\sqrt{\tilde H(\eps_r)}\sum_{q\ge r}\eps_q=2\sum_r\eps_r\sqrt{\tilde H(\eps_r)}$.
Since $\tilde H$ is nonincreasing, $\eps_r\sqrt{\tilde H(\eps_r)}\le2\int_{\eps_{r+1}}^{\eps_r}\sqrt{\tilde H}$, so
$\sum_q\eps_q\sqrt{S_q}\le4\tilde J$. Further, $\sum_q\eps_q\sqrt x=2\rho\sqrt x$,
$\sum_q\eps_q\sqrt{q+1}=\rho\sum_q2^{-q}\sqrt{q+1}\le2.70\rho\le3\rho$, and $u_0=\tilde H(\rho)+x+1$.
Combined with (g), these bounds give the lemma.
\end{proof}

\begin{proofof}{\cref{lem:erm1}}
\textbf{Step 1 (population geometry).}
(i) $\abs{Y_i}\le1$ gives $\abs{\eta_i}\le1$ $\mu$-a.e.\ (as $P_i\sim\mu$), so $\bar\eta\in L^2(\mu)$.
(ii) $\mathcal F$ is nonempty and convex, and its set of $\mu$-classes is closed in $L^2(\mu)$. By the Hilbert
projection theorem a unique nearest class to $\bar\eta$ exists. Fix a representative $f^*\in\mathcal F$; the
quantity in the lemma does not depend on the choice. For $f\in\mathcal F$ and $t\in(0,1]$ we have
$f^*+t(f-f^*)\in\mathcal F$, so $\norm{\bar\eta-f^*-t(f-f^*)}^2\ge\norm{\bar\eta-f^*}^2$, that is,
$-2t\inner{\bar\eta-f^*}{f-f^*}+t^2\norm{f-f^*}^2\ge0$. Dividing by $t$ and letting $t\to0$ gives
$\inner{\bar\eta-f^*}{f-f^*}\le0$.
(iii) Let $\bar R(f)\defeq N^{-1}\sum_i\E(Y_i-f(P_i))^2$. For bounded measurable $f$, the tower property and
$P_i\sim\mu$ give $\E(Y_i-f(P_i))^2=\E Y_i^2-2\inner{\eta_i}f+\norm f^2$. Averaging,
$\bar R(f)=c+\norm{f-\bar\eta}^2$ with $c\defeq N^{-1}\sum_i\E Y_i^2-\norm{\bar\eta}^2$, which does not depend on $f$.
This is where the common design law is used.
(iv) Hence, by (ii), for every $f\in\mathcal F$,
\begin{equation}\label{eq:erm-excess}
  \bar R(f)-\bar R(f^*)=\norm{f-f^*}^2+2\inner{f-f^*}{f^*-\bar\eta}\;\ge\;\norm{f-f^*}^2.
\end{equation}
(v) For every $f\in\mathcal F$,
\begin{equation}\label{eq:erm-mean}
  \textstyle\sum_{i=1}^N\E g_f(Z_i)=N\bigl(\bar R(f)-\bar R(f^*)\bigr).
\end{equation}

\textbf{Step 2 (the loss-difference class).}
We have $g_f=(f^*-f)(2y-f-f^*)$ and $\abs{2y-f-f^*}\le4$. Moreover
$\E g_f(Z_i)^2\le16\,\E(f-f^*)^2(P_i)=16\norm{f-f^*}^2$ for each $i$, since $P_i\sim\mu$. Therefore
\begin{equation}\label{eq:erm-envelope}
  \abs{g_f}\le4\abs{f-f^*}\le8\ \text{pointwise},\qquad\norm{g_f}_N\le4\norm{f-f^*}.
\end{equation}
\emph{Brackets} (no monotonicity of $f\mapsto g_f$ is used). Fix $\eps\in(0,8]$ and a minimal cover of
$\mathcal F$ by $\eps/4$-brackets $[l_j,u_j]$ in $L^2(\mu)$. Drop the brackets that contain no element of
$\mathcal F$. For each remaining bracket put $l'_j\defeq\max(l_j,-1)$ and $u'_j\defeq\min(u_j,1)$. Some
$f\in\mathcal F$ with values in $[-1,1]$ lies in the bracket, so $-1\le l'_j\le u'_j\le1$ and
$u'_j-l'_j\le u_j-l_j$. Every $f\in\mathcal F$ that lies in $[l_j,u_j]$ also lies in $[l'_j,u'_j]$. Define
\[
  U_j(p,y)\defeq\max\set{(y-l'_j(p))^2,(y-u'_j(p))^2}-(y-f^*(p))^2,\quad
  L_j(p,y)\defeq(y-\kappa_j)^2-(y-f^*(p))^2,
\]
where $\kappa_j\defeq\min(\max(y,l'_j(p)),u'_j(p))$. Both are measurable. Since $t\mapsto(y-t)^2$ is convex,
they are the maximum and the minimum of $t\mapsto(y-t)^2-(y-f^*(p))^2$ over $t\in[l'_j(p),u'_j(p)]$. Hence
$f\in\mathcal F\cap[l_j,u_j]$ implies $L_j\le g_f\le U_j$. For the width, take $s,t\in[l'_j,u'_j]\subseteq[-1,1]$.
Then $\abs{(y-s)^2-(y-t)^2}=\abs{s-t}\abs{2y-s-t}\le4(u'_j-l'_j)$, so $0\le U_j-L_j\le4(u_j-l_j)(p)$. Since
$P_i\sim\mu$, $\norm{U_j-L_j}_N^2\le16N^{-1}\sum_i\E(u_j-l_j)^2(P_i)=16\norm{u_j-l_j}^2\le\eps^2$.
\emph{Consequence.} Let $\delta>0$ and $\eps\in(0,8]$. The class
$\mathcal G_\delta\defeq\set{g_f:f\in\mathcal F,\ \norm{f-f^*}\le\delta}$ is covered by at most
$N_{[\,]}(\eps/4,\mathcal F,L^2(\mu))\le e^{H(\eps)}$ pointwise brackets of $\norm\cdot_N$-width at most $\eps$.
Here $H(\eps)\defeq4A/\eps+B\log(4e/\eps)$, from \eqref{eq:erm1-entropy} at $\eps/4\in(0,2]$. $H$ is
nonincreasing and nonnegative on $(0,8]$, because $\log(4e/\eps)\ge\log(e/2)>0$.

\textbf{Step 3 (a localized bound).}
For $\delta\in(0,2]$ and $x\ge0$ there is an event $\Omega_{\delta,x}$ with probability at least $1-2e^{-x}$ on
which, for all $f\in\mathcal F$ with $\norm{f-f^*}\le\delta$,
\begin{equation}\label{eq:erm1-local}
\begin{split}
  \abs{\mathbb G_Ng_f}\le\Gamma_N(\delta,x)\defeq{}&416\sqrt{A\delta}+208\,\delta\sqrt{B(1+\log(e/\delta))}+104\,\delta\sqrt x\\
  &+156\,\delta+\frac{22}{\sqrt N}\Bigl(\frac A\delta+B\log\frac e\delta+x+1\Bigr).
\end{split}
\end{equation}
Indeed, apply \cref{lem:chaining} to $\mathcal G_\delta$ with $M=8$, $\rho=4\delta\le8$ and $\tilde H=H$. By
Step 2, $\norm{g_f}_N\le4\delta$, and the covers of Step 2 exist at every level
$\eps_q=2^{-q}\cdot4\delta\in(0,8]$. By $\sqrt{a+b}\le\sqrt a+\sqrt b$ and Cauchy--Schwarz,
$\int_0^\delta\sqrt{\log(e/s)}\,ds\le\bigl(\delta\int_0^\delta\log(e/s)\,ds\bigr)^{1/2}=\delta\sqrt{1+\log(e/\delta)}$, so
\[
  \tilde J=4\int_0^\delta\sqrt{A/s+B\log(e/s)}\,ds\le4\Bigl[2\sqrt{A\delta}+\delta\sqrt{B(1+\log(e/\delta))}\Bigr].
\]
Also $H(4\delta)=A/\delta+B\log(e/\delta)$, and $64/3\le22$.

\textbf{Step 4 (basic inequality).}
$\hat f$ minimizes the empirical risk and $f^*\in\mathcal F$, so $\sum_ig_{\hat f}(Z_i)\le0$ for every $\omega$. For
each fixed $f\in\mathcal F$ and $\omega$, \eqref{eq:erm-mean} gives
$\sum_ig_f(Z_i)=\sqrt N\,\mathbb G_Ng_f+N(\bar R(f)-\bar R(f^*))$. At $f=\hat f(\omega)$, with \eqref{eq:erm-excess},
$N\hat D^2\le N(\bar R(\hat f)-\bar R(f^*))\le-\sqrt N\,(\mathbb G_Ng_f)|_{f=\hat f}$, hence
\begin{equation}\label{eq:erm-basic}
  \sqrt N\,\hat D^2\;\le\;\abs{\mathbb G_Ng_f}\big|_{f=\hat f(\omega)}\qquad\text{for every }\omega.
\end{equation}
This is the only place where the aggregate centring enters; $Y_i-\bar\eta(P_i)$ is never centred individually.

\textbf{Step 5 (peeling).}
Let $K_0\defeq2\cdot2496^2=12{,}460{,}032$ and
$r(x)^2\defeq K_0\bigl[(A/N)^{2/3}+(B+1)\log(eN)/N+x/N\bigr]$. We claim that there is an event $\Omega'_x$
with $\P(\Omega'_x)\ge1-4e^{-x}$ on which $\hat D\le r\defeq r(x)$. If $r\ge2$, take $\Omega'_x\defeq\Omega$,
since $\hat D\le2$. If $r<2$, then for each $j\ge0$ with $2^jr<2$ let
$\delta_j\defeq\min(2^{j+1}r,2)\in[r,2]$, and let $\Omega'_x\defeq\bigcap_j\Omega_{\delta_j,x+j}$. Then
$\P(\Omega'^c_x)\le\sum_j2e^{-x-j}=(2e/(e-1))e^{-x}<4e^{-x}$. Suppose $\hat D>r$ at some point of $\Omega'_x$.
Since $\hat D\le2$, some $j$ has $2^jr<\hat D\le\delta_j$. By Step 4 and \eqref{eq:erm1-local},
\[
  T\defeq\sqrt N\,4^jr^2<\sqrt N\,\hat D^2\le\Gamma_N(\delta_j,x+j).
\]
We use the following facts. $r^2\ge K_0/N\ge1/N$, so $\log(e/\delta_j)\le\log(e/r)\le1+\frac12\log N\le\log(eN)$
and $1+\log(e/\delta_j)\le2\log(eN)$. Also $\delta_j\le2^{j+1}r$, $2^{j/2}\le2^j\le4^j$,
$\sqrt{x+j}\le2^j(\sqrt x+1)$ and $j+1\le4^j$. Term by term:
\begin{itemize}
\item[(a)] $416\sqrt{A\delta_j}\le416\cdot4^j\sqrt{2Ar}\le T/6$, because $r^3\ge K_0^{3/2}A/N\ge2\cdot2496^2A/N$;
\item[(b)] $208\,\delta_j\sqrt{B(1+\log(e/\delta_j))}\le416\cdot4^jr\sqrt{2B\log(eN)}\le T/6$, because
$r^2\ge K_0B\log(eN)/N$;
\item[(c)] $104\,\delta_j\sqrt{x+j}\le208\cdot4^jr(\sqrt x+1)\le T/6$, because
$r^2\ge K_0(1+x)/N\ge2\cdot1248^2(x+1)/N$;
\item[(d)] $156\,\delta_j\le312\cdot4^jr\le T/6$, because $K_0\ge1872^2$;
\item[(e)] $\frac{22}{\sqrt N}\bigl(A/\delta_j+B\log(e/\delta_j)+x+j+1\bigr)\le\frac{22}{\sqrt N}\bigl(A/r+B\log(eN)+4^j(x+1)\bigr)$,
and each of the three pieces is at most $T/18$, because $r^3\ge396A/N$, $r^2\ge396B\log(eN)/N$ and
$r^2\ge396(x+1)/N$.
\end{itemize}
The total is at most $(4/6+3/18)T=(5/6)T<T$, a contradiction. So $\hat D\le r$ on $\Omega'_x$.

\textbf{Step 6 (from high probability to expectation).}
Let $W(\omega)\defeq\min\bigl(\set{r(k)^2:k\in\Z_{\ge0},\ \omega\in\Omega'_k}\cup\set4\bigr)$. It is measurable
(countably many measurable events) and $\hat D^2\le W$ pointwise. Moreover
$\P(W>r(k)^2)\le\P((\Omega'_k)^c)\le4e^{-k}$. Hence
\[
  \E^*\hat D^2\le\E W=\int_0^\infty\P(W>t)\,dt\le r(0)^2+\sum_{k\ge0}\bigl(r(k+1)^2-r(k)^2\bigr)4e^{-k}
  =r(0)^2+\frac{4e}{e-1}\cdot\frac{K_0}N.
\]
Since $1/N\le(B+1)\log(eN)/N$,
$\E\hat D^2=\E^*\hat D^2\le K_0\bigl(1+\frac{4e}{e-1}\bigr)\bigl[(A/N)^{2/3}+(B+1)\log(eN)/N\bigr]
\le10^8\bigl[(A/N)^{2/3}+(B+1)\log(eN)/N\bigr]$. The constant $C=10^8$ does not depend on $A$, $B$, $N$, $\mu$ or
the $\eta_i$.
\end{proofof}

\paragraph{Localized least squares under a common design.}
We restate \cref{prop:erm2}, discussed in \cref{sec:erm}, and prove it.

\PropERMtwo*

In the proof of \cref{lem:erm1}, the entropy hypothesis is used only to cover the classes
$\mathcal G_\delta$, which involve only those $f$ with $\norm{f-f^*}\le\delta$. Under a hypothesis on these
local classes, the parametric part of the entropy integral over a shell of radius $\delta$ is of order
$\delta\sqrt B$ instead of $\delta\sqrt{B\log(e/\delta)}$. Every peeling threshold is then free of $\delta$,
and the $\log(eN)$ factor disappears.

\begin{proof}
The setting and notation are those of \cref{lem:erm1} and its proof. In addition,
$\mathcal F(\delta)\defeq\set{f\in\mathcal F:\norm{f-f^*}\le\delta}$, and the localized hypothesis reads
$\log N_{[\,]}(\eps,\mathcal F(\delta),L^2(\mu))\le A/\eps+B\log(e\delta/\eps)$ for $0<\delta\le2$ and
$0<\eps\le\delta$.

\emph{Imports.} Four ingredients of the proof of \cref{lem:erm1} use only the common setting: independence,
$P_i\sim\mu$, $\abs{Y_i}\le1$, $\mathcal F$ nonempty, convex, closed and $[-1,1]$-valued, and $\hat f$ a
least-squares estimator. None of them uses the entropy hypothesis \eqref{eq:erm1-entropy}. They are:
\begin{itemize}
\item[(I1)] Step 1: the representative $f^*\in\mathcal F$ exists, and \eqref{eq:erm-excess} and
\eqref{eq:erm-mean} hold;
\item[(I2)] the envelope bounds \eqref{eq:erm-envelope};
\item[(I3)] \cref{lem:chaining}, together with Bernstein's inequality (B1)--(B3);
\item[(I4)] the basic inequality \eqref{eq:erm-basic}.
\end{itemize}
The parts of that proof that used the global hypothesis (the brackets of Step 2, and Steps 3, 5 and 6) are
replaced by Steps 1$'$--4$'$ below.

\textbf{Step 1$'$ (localized loss brackets).}
Fix $\delta\in(0,2]$ and $\eps\in(0,4\delta]$. Since $0<\eps/4\le\delta$, the hypothesis at $(\delta,\eps/4)$
gives a cover of $\mathcal F(\delta)$ by at most $\exp(4A/\eps+B\log(4e\delta/\eps))$ pointwise
$\eps/4$-brackets $[l_j,u_j]$ in $L^2(\mu)$. Drop the brackets that contain no element of $\mathcal F(\delta)$.
For each remaining $j$ put $l'_j\defeq\max(l_j,-1)$ and $u'_j\defeq\min(u_j,1)$. Some $f\in\mathcal F(\delta)$
with values in $[-1,1]$ lies in $[l_j,u_j]$, so $-1\le l'_j\le u'_j\le1$ everywhere and
$0\le u'_j-l'_j\le u_j-l_j$. Every $f\in\mathcal F$ that lies in $[l_j,u_j]$ also lies in $[l'_j,u'_j]$. Define
$U_j$, $L_j$ and $\kappa_j$ as in Step 2 of the proof of \cref{lem:erm1}. For fixed $(p,y)$, the function
$\varphi(t)\defeq(y-t)^2-(y-f^*(p))^2$ is convex in $t$. On $[l'_j(p),u'_j(p)]$ its maximum is attained at an
endpoint, where it equals $U_j(p,y)$. Its minimum is attained at the projection $\kappa_j$ of $y$ onto the
interval, where it equals $L_j(p,y)$. Hence $f\in\mathcal F(\delta)\cap[l_j,u_j]$ implies $L_j\le g_f\le U_j$ on
$\mathcal Z$. For the width, $U_j-L_j=\varphi(s)-\varphi(t)$ for some
$s,t\in[l'_j(p),u'_j(p)]\subseteq[-1,1]$, and $\abs{\varphi(s)-\varphi(t)}=\abs{s-t}\abs{2y-s-t}\le4(u'_j-l'_j)(p)$.
So $0\le U_j-L_j\le4(u_j-l_j)(p)$ and, since $P_i\sim\mu$,
$\norm{U_j-L_j}_N^2\le16N^{-1}\sum_i\E(u_j-l_j)^2(P_i)=16\norm{u_j-l_j}^2\le\eps^2$.
\emph{Consequence.} For every $\eps\in(0,4\delta]$, the class $\mathcal G_\delta\defeq\set{g_f:f\in\mathcal F(\delta)}$
is covered by at most $\exp(\tilde H_\delta(\eps))$ pointwise brackets of $\norm\cdot_N$-width at most $\eps$,
where $\tilde H_\delta(\eps)\defeq4A/\eps+B\log(4e\delta/\eps)$. On $(0,4\delta]$, $\tilde H_\delta$ is
nonincreasing and nonnegative, since $\log(4e\delta/\eps)\ge1$ there.

\textbf{Step 2$'$ (a localized bound).}
For $\delta\in(0,2]$ and $x\ge0$ there is an event $\Omega_{\delta,x}$ with $\P(\Omega_{\delta,x})\ge1-2e^{-x}$
on which, for all $f\in\mathcal F(\delta)$,
\begin{equation}\label{eq:erm2-local}
  \abs{\mathbb G_Ng_f}\le\Gamma'_N(\delta,x)\defeq416\sqrt{A\delta}+208\sqrt2\,\delta\sqrt B+104\,\delta\sqrt x+156\,\delta
  +\frac{22}{\sqrt N}\Bigl(\frac A\delta+B+x+1\Bigr).
\end{equation}
Indeed, apply (I3) to $\mathcal G\defeq\mathcal G_\delta$ with $M\defeq8$, $\rho\defeq4\delta$ and
$\tilde H\defeq\tilde H_\delta$ on $(0,\rho]$. By (I2), $\sup\abs{g_f}\le8$ and $\norm{g_f}_N\le4\delta$ on
$\mathcal F(\delta)$. For each $q\ge0$ we use the cover of Step 1$'$ at $\eps_q=2^{-q}\cdot4\delta\in(0,4\delta]$.
For the entropy integral, substitute $\eps=4s$, use $\sqrt{a+b}\le\sqrt a+\sqrt b$, and then substitute
$s=\delta t$:
\begin{align*}
  \tilde J=4\int_0^\delta\sqrt{A/s+B\log(e\delta/s)}\,ds
  &\le4\Bigl[2\sqrt{A\delta}+\delta\sqrt B\int_0^1\sqrt{1+\log(1/t)}\,dt\Bigr]\\
  &\le8\sqrt{A\delta}+4\sqrt2\,\delta\sqrt B.
\end{align*}
The last step is Cauchy--Schwarz on $[0,1]$:
$\int_0^1\sqrt{1+\log(1/t)}\,dt\le\bigl(\int_0^1(1+\log(1/t))\,dt\bigr)^{1/2}=\sqrt2$. Also
$\tilde H(\rho)=A/\delta+B$. Inserting these into (I3) gives $52\cdot8=416$, $52\cdot4\sqrt2=208\sqrt2$,
$26\cdot4=104$, $39\cdot4=156$ and $8M/3=64/3\le22$.

\textbf{Step 3$'$ (peeling).}
Let $K_0\defeq2\cdot2496^2=12{,}460{,}032$ and $r(x)^2\defeq K_0\bigl[(A/N)^{2/3}+(B+1)/N+x/N\bigr]$. We claim
that there is an event $\Omega'_x$ with $\P(\Omega'_x)\ge1-4e^{-x}$ on which $\hat D\le r\defeq r(x)$. If
$r\ge2$, take $\Omega'_x\defeq\Omega$, since $\hat D\le2$ always. If $r<2$, let $J\defeq\set{j\ge0:2^jr<2}$, which
is finite and contains $0$. Put $\delta_j\defeq\min(2^{j+1}r,2)$, which lies in $(2^jr,2^{j+1}r]\cap(0,2]$, and let
$\Omega'_x\defeq\bigcap_{j\in J}\Omega_{\delta_j,x+j}$. Then
$\P((\Omega'_x)^c)\le\sum_{j\ge0}2e^{-x-j}=(2e/(e-1))e^{-x}<4e^{-x}$.
Suppose $\omega\in\Omega'_x$ and $\hat D>r$. Since $r<\hat D\le2$, the integer $j$ with $2^jr<\hat D\le2^{j+1}r$
satisfies $j\ge0$, $j\in J$ and $\hat D\le\delta_j$, so $\hat f(\omega)\in\mathcal F(\delta_j)$. By (I4) and
\eqref{eq:erm2-local} at $(\delta_j,x+j)$,
\[
  T\defeq\sqrt N\,4^jr^2<\sqrt N\,\hat D^2\le\abs{\mathbb G_Ng_f}\big|_{f=\hat f}\le\Gamma'_N(\delta_j,x+j).
\]
We use the following facts: $r>0$; $r\le\delta_j\le2^{j+1}r$; $2^{j/2}\le2^j\le4^j$;
$\sqrt{x+j}\le\sqrt x+\sqrt j$ and $\sqrt j\le2^j$; $x+j+1\le4^j(x+1)$; and $r^2\ge K_0(1+x)/N$,
$r^2\ge K_0B/N$, $r^3\ge K_0^{3/2}A/N\ge K_0A/N$. Term by term:
\begin{itemize}
\item[(a)] $416\sqrt{A\delta_j}\le416\sqrt2\cdot4^j\sqrt{Ar}\le T/6$, since $Nr^3\ge2\cdot2496^2A=K_0A$;
\item[(b)] $208\sqrt2\,\delta_j\sqrt B\le416\sqrt2\cdot4^jr\sqrt B\le T/6$, since $Nr^2\ge K_0B$;
\item[(c)] $104\,\delta_j\sqrt{x+j}\le208\,r\,2^j(\sqrt x+\sqrt j)\le208\cdot4^jr(\sqrt x+1)\le T/6$. This needs
$Nr^2\ge1248^2(\sqrt x+1)^2$, which follows from $Nr^2\ge2\cdot1248^2(x+1)$ (as $(\sqrt x+1)^2\le2(x+1)$) and
$2\cdot1248^2=3{,}115{,}008\le K_0$;
\item[(d)] $156\,\delta_j\le312\cdot4^jr\le T/6$, since $Nr^2\ge K_0\ge1872^2=3{,}504{,}384$;
\item[(e)] $\frac{22}{\sqrt N}(A/\delta_j+B+x+j+1)\le\frac{22}{\sqrt N}(A/r+B+4^j(x+1))$, and each of the three pieces
is at most $T/18$, since $Nr^3\ge396A$, $Nr^2\ge396B$ (and $4^j\ge1$) and $Nr^2\ge396(x+1)$, all implied by
$K_0\ge396$.
\end{itemize}
The total is at most $4T/6+3T/18=5T/6<T$, a contradiction. So $\hat D\le r$ on $\Omega'_x$.

\textbf{Step 4$'$ (from high probability to expectation).}
Define $W$ from the events $\Omega'_k$ of Step 3$'$ exactly as in Step 6 of the proof of \cref{lem:erm1}. Then
$W$ is measurable and $\hat D^2\le W$ pointwise, by Step 3$'$ and $\hat D\le2$. For
$t\in[r(k)^2,r(k+1)^2)$, the event $\set{W>t}$ is contained in $(\Omega'_k)^c$, so $\P(W>t)\le4e^{-k}$; and
$r(k)^2$ increases to infinity. Hence
$\E^*\hat D^2\le\E W\le r(0)^2+\sum_{k\ge0}(r(k+1)^2-r(k)^2)\,4e^{-k}=r(0)^2+\frac{4e}{e-1}\cdot\frac{K_0}N$. Since
$1/N\le(B+1)/N$,
\[
  \E^*\hat D^2\le K_0\Bigl(1+\frac{4e}{e-1}\Bigr)\Bigl[\Bigl(\frac AN\Bigr)^{2/3}+\frac{B+1}N\Bigr]
  \le9.14\cdot10^7\Bigl[\Bigl(\frac AN\Bigr)^{2/3}+\frac{B+1}N\Bigr].
\]
So the proposition holds with $C=10^8$, which does not depend on $A$, $B$, $N$, $\mu$ or the $\eta_i$.
\end{proof}

\subsection{The pooled stage and the local stage}
\label{app:rate-stages}

\begin{lemma}[Pooled isotonic regression tracks the average map]\label{lem:pooled}
Let $\mu$ be a Borel probability on $[0,1]$ with no atoms. Let $I$ be a finite index set with
$\abs I=q\ge1$, let $n\ge1$, and for $i\in I$ let $g_i:[0,1]\to[0,1]$ be nondecreasing. Observe $N=qn$
independent pairs: for each $i\in I$, $n$ i.i.d.\ pairs $(P_{ik},Y_{ik})$ with $P_{ik}\sim\mu$ and
$Y_{ik}\mid P_{ik}\sim\mathrm{Bern}(g_i(P_{ik}))$. Let $\bar g_I\defeq\frac1q\sum_{i\in I}g_i$. Let $\hat g_0$ be the
isotonic least-squares estimator fitted to all $N$ pairs over $\mathcal F_{\uparrow}$, that is, any $\hat g_0$,
jointly measurable in (data, $p$), such that $\hat g_0(\text{data},\cdot)\in\mathcal F_{\uparrow}$ and
$\hat g_0(P_{(l)})=v_l$ for every $l$. Here $v$ is the PAV solution at the sorted design points
$P_{(1)}<\dots<P_{(N)}$; on the null event of ties, $\hat g_0$ is any jointly measurable least-squares
selection over $\mathcal F_{\uparrow}$, for instance the one that pools tied design points into one
weighted pool-adjacent-violators block, a rule that does not use $\mu$. An example is the right-continuous step interpolation, equal to $v_1$
on $[0,P_{(1)})$ and to $v_N$ on $[P_{(N)},1]$. Then, for a universal constant $C$ (independent of $\mu$, $q$,
$n$ and the $g_i$),
\[
  \E\norm{\hat g_0-\bar g_I}^2_{L^2(\mu)}\;\le\;C\Bigl[N^{-2/3}+\frac{\log(eN)}N\Bigr].
\]
\end{lemma}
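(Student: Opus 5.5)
We will deduce \cref{lem:pooled} from \cref{lem:erm1}, applied with $\mathcal F\defeq\Fmono$ to the $N=qn$ pooled pairs, indexed by $(i,k)$. The hypotheses of \cref{lem:erm1} hold directly. The pairs are independent with values in $[0,1]\times\set{0,1}\subseteq[0,1]\times[-1,1]$, and every design point has law $\mu$. The regression functions are $\eta_{ik}=g_i$, so their average is $\etabar=\frac1N\sum_{i,k}g_i=\gbar_I$. The class $\Fmono$ is nonempty and convex, and its functions take values in $[0,1]\subseteq[-1,1]$.

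\emph{Projection step.} Since $\gbar_I$ is a mean of nondecreasing $[0,1]$-valued maps, it lies in $\Fmono$. Hence the $\Lmu$-projection of $\etabar$ onto $\Fmono$ is $f^*=\gbar_I$ itself, and the conclusion of \cref{lem:erm1} bounds exactly $\E\nmu{\gh_0-\gbar_I}^2$.

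\emph{Closedness.} We need $\Fmono$ closed in $\Lmu$. Let $f_l\to f$ in $\Lmu$ and pass to an a.e.\ convergent subsequence. By Helly's selection theorem a further subsequence converges pointwise everywhere to some nondecreasing $[0,1]$-valued $\tilde f$, and $\tilde f=f$ $\mu$-a.e. So the class of $f$ lies in $\Fmono$.

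\emph{Least squares.} The estimator $\gh_0$ must be a least-squares selection over $\Fmono$. With distinct design points (almost surely, since $\mu$ is non-atomic), PAV gives the minimizer of $\sum_l(Y_{(l)}-v_l)^2$ over nondecreasing vectors. Every $f\in\Fmono$ yields such a vector with values in $[0,1]$, and the PAV values are averages of labels, so they also lie in $[0,1]$. Any $\Fmono$-interpolation of $v$, including the step interpolation, is therefore an argmin over $\Fmono$. On the null event of ties the prescribed selection is also an argmin, and a null event does not affect the expectation. Joint measurability is assumed in the statement.

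\emph{Entropy, the main obstacle.} It remains to verify \eqref{eq:erm1-entropy} with absolute constants $A$ and $B$, uniformly in $\mu$. We use the classical bracketing bound for bounded monotone functions, $\log\Nbr(\eps,\Fmono,L_2(Q))\le K/\eps$ for every probability $Q$ and $0<\eps\le1$ \citep[Theorem~2.7.5]{vandervaart1996weak}, with $K$ universal. To be self-contained we would reduce to Lebesgue measure via the quantile transform of \cref{rem:reparam}. By Step~0 of \cref{lem:lower}, $F^{\leftarrow}_{\#}\lambda=\mu$ and $F(F^{\leftarrow}(u))=u$. So $f\mapsto f\circ F^{\leftarrow}$ maps $\Fmono$ into itself, a bracket $[l,u]$ for $f\circ F^{\leftarrow}$ in $L_2(\lambda)$ pulls back to the bracket $[l\circ F,u\circ F]$ for $f$, and the pulled-back bracket has the same $\Lmu$ width. (Pointwise, $f\circ F^{\leftarrow}\circ F$ can differ from $f$ on flat pieces of $F$. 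We handle this either by invoking the cited universal-$Q$ theorem directly, or by noting that these pieces are $\mu$-null and enlarging the brackets there by $\pm1$, which costs no $\Lmu$ width.) For $1<\eps\le2$ a single bracket $[0,1]$ has width at most $1<\eps$.

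Altogether \eqref{eq:erm1-entropy} holds with $A=K$ and $B=0$, and \cref{lem:erm1} gives
\[
\E\nmu{\gh_0-\gbar_I}^2\le C\bigl[(K/N)^{2/3}+\log(eN)/N\bigr],
\]
which is the claim after absorbing $K^{2/3}$ into $C$. The constant does not depend on $\mu$, $q$, $n$ or the $g_i$.
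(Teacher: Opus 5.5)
Your proposal is correct and follows the paper's proof essentially step for step. You reduce to \cref{lem:erm1} with $\mathcal F=\Fmono$, note that $\etabar=\gbar_I\in\Fmono$ so $f^*=\gbar_I$, obtain closedness from Helly's theorem, identify the PAV interpolation as a least-squares selection, and take $A=K$, $B=0$ from the universal-$Q$ bracketing bound of \citet[Theorem~2.7.5]{vandervaart1996weak}. The quantile-transform detour is unnecessary: the paper applies that theorem directly to $\mu$ viewed as a law on $\R$, extending each map constantly outside $[0,1]$. Your treatment of the $\mu$-null flat pieces of $F$ nonetheless makes that alternative route valid as well.
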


The heterogeneity of the $g_i$ does not enter the bound: the deviations $g_i-\bar g_I$ average out, although
each pair's conditional mean $\E[Y_{ik}\mid P_{ik}]=g_i(P_{ik})$ differs from $\bar g_I(P_{ik})$. In the
language of \cref{lem:erm1}, the regression functions differ across pairs but the design law is common.

\begin{proof}
\textbf{Step 1 (the hypotheses of \cref{lem:erm1}).}
The $N=qn$ pairs are independent, take values in $[0,1]\times\set{0,1}\subseteq[0,1]\times[-1,1]$, and every
$P_{ik}\sim\mu$. For a pair of question $i$, $\eta(p)\defeq\E[Y_{ik}\mid P_{ik}=p]=g_i(p)$ (a version). Since every
question has $n$ pairs, $\bar\eta=\bar g_I$. The class $\mathcal F_{\uparrow}$ is convex, and its set of
$\mu$-classes is closed in $L^2(\mu)$. Indeed, let $f_k\in\mathcal F_{\uparrow}$ with $f_k\to f$ in $L^2(\mu)$.
Helly's selection theorem (statement (H) in the proof of \cref{lem:local}, with $V=1$) gives a subsequence that
converges at every point to some $\tilde f\in\mathcal F_{\uparrow}$. A further subsequence converges
$\mu$-a.e.\ to $f$, so $f=\tilde f$ $\mu$-a.e.

\emph{Bracketing entropy.} By \citet[Thm.~2.7.5]{vandervaart1996weak}, the class of monotone functions
$\R\to[0,1]$ satisfies $\log N_{[\,]}(\eps,\cdot,L^r(Q))\le K_r/\eps$ for every probability $Q$ and every
$0<\eps<1$, where $K_r$ depends only on $r$. We apply it with $r=2$ and with $Q=\mu$ viewed as a law on $\R$
with $\mu(\R\setminus[0,1])=0$. The theorem holds for every $Q$, so gaps in $\supp\mu$ need no quantile
reduction. Extend each $f\in\mathcal F_{\uparrow}$ to $\R$ by $f(0)$ on $(-\infty,0)$ and by $f(1)$ on
$(1,\infty)$; the extension is still nondecreasing and $[0,1]$-valued. Restricting a pointwise bracket
$[l,u]$ on $\R$ to $[0,1]$ keeps pointwise inclusion on $[0,1]$ and the same $L^2(\mu)$-width. The width
convention of \citet{vandervaart1996weak} (`$<\eps$') is stricter than ours (`$\le\eps$'), so the count
transfers: $\log N_{[\,]}(\eps,\mathcal F_{\uparrow},L^2(\mu))\le K_{\uparrow}/\eps$ for $0<\eps<1$, with
$K_{\uparrow}\defeq K_2$. For $1\le\eps\le2$, the single bracket $[0,1]$ has $L^2(\mu)$-width $1\le\eps$, so
$\log N_{[\,]}=0\le K_{\uparrow}/\eps$. Hence \eqref{eq:erm1-entropy} holds with $A=K_{\uparrow}$, a universal
constant, and $B=0$.

\textbf{Step 2 (the projection).}
$\bar g_I$ is nondecreasing with values in $[0,1]$, as an average of such maps. So $\bar g_I\in\mathcal F_{\uparrow}$
is its own $L^2(\mu)$-projection onto $\mathcal F_{\uparrow}$: $f^*=\bar g_I$.

\textbf{Step 3 (the estimator is a least-squares estimator over $\mathcal F_{\uparrow}$).}
By definition $\hat g_0\in\mathcal F_{\uparrow}$. Since $\mu$ is non-atomic, the $P_{ik}$ are a.s.\ distinct.
The PAV solution is the unique minimizer of $\sum_l(Y_l-v_l)^2$ over the vectors $v$ that are nondecreasing in
the order of the sorted design points. Its values are averages of consecutive responses, so they lie in $[0,1]$.
Let $f\in\mathcal F_{\uparrow}$ satisfy $f(P_{(l)})=v_l$ for all $l$; such $f$ exist, for example the step
interpolation. Then $f$ minimizes $\sum_l(Y_l-f(P_l))^2$ over $\mathcal F_{\uparrow}$, because for every
$f\in\mathcal F_{\uparrow}$ the vector $(f(P_{(l)}))_l$ is nondecreasing in the sorted order. On the null event
that two design points coincide, replace $\hat g_0$ by a jointly measurable weighted-PAV least-squares
selection. The modified estimator is a least-squares selection everywhere, so \cref{lem:erm1} applies to it,
and it equals $\hat g_0$ a.s., so the expectations agree. Thus $\hat g_0$ is a least-squares estimator over
$\mathcal F_{\uparrow}$. The right-continuous step interpolation is a measurable function of the data jointly
with the evaluation point.

\textbf{Step 4 (conclusion).}
With $C_E$ the constant of \cref{lem:erm1},
$\E\norm{\hat g_0-\bar g_I}^2\le C_E\bigl[(K_{\uparrow}/N)^{2/3}+\log(eN)/N\bigr]\le C\bigl[N^{-2/3}+\log(eN)/N\bigr]$,
where $C\defeq C_E\max\set{K_{\uparrow}^{2/3},1}$.
\end{proof}

\begin{lemma}[Constrained least squares for a bounded-variation deviation]\label{lem:local}
Let $\mu$ be a Borel probability on $[0,1]$ with no atoms, and let $h:[0,1]\to[0,1]$ be a \emph{fixed} Borel
function. Let $(P_k,Y_k)$, $k=1,\dots,n$ ($n\ge1$), be i.i.d.\ with $P\sim\mu$, $Y\in[0,1]$ and
$\E[Y\mid P]=h(P)+f^*(P)$. For $\tau\in[0,1]$ and $V\in[0,2]$, let
$\hat d\in\argmin_{d\in\mathcal D(\tau,V)}\sum_k(Y_k-h(P_k)-d(P_k))^2$ be any measurable selection. Here $\E$ is
read as the outer expectation $\E^*$, which equals $\E$ for jointly measurable selections such as the step
interpolation of \cref{rem:local-selection}. Then, for a universal constant $C$ (independent of $\mu$, $h$,
$f^*$, $n$, $\tau$ and $V$),
\[
  \E\norm{\hat d-f^*}^2_{L^2(\mu)}\;\le\;C\Bigl[\inf_{d\in\mathcal D(\tau,V)}\norm{d-f^*}^2_{L^2(\mu)}
  +\min\Bigl\{\tau^2,\ \frac1n+\min\bigl\{V^2,(V/n)^{2/3}\bigr\}\Bigr\}\Bigr].
\]
\end{lemma}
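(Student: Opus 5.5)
We plan to apply \cref{prop:erm2} with $N=n$, pairs $(P_k,Y_k-h(P_k))$ and $\mathcal F=\Dcl(\tau,V)$, and then to convert the bound on $\E\nmu{\hat d-d^*}^2$, with $d^*$ the projection, into the oracle form. The responses $Y_k-h(P_k)$ lie in $[-1,1]$ because $Y_k,h(P_k)\in[0,1]$. Their regression function is $\eta=f^*$ for every $k$, so $\etabar=f^*$. The class $\Dcl(\tau,V)$ is convex, and it is $[-1,1]$-valued since $\tau\le1$. It is closed in $\Lmu$: given an $\Lmu$-Cauchy sequence, Helly's selection theorem for functions of variation at most $V$ bounded by $\tau$ (the statement (H) alluded to in \cref{lem:pooled}) gives a pointwise convergent subsequence. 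Its limit has sup at most $\tau$ and, by lower semicontinuity of $\TV$ under pointwise convergence, variation at most $V$, and it agrees $\mu$-a.e.\ with the $\Lmu$ limit. Let $d^*$ be the projection of $f^*$. By the triangle inequality, $\E\nmu{\hat d-f^*}^2\le2\nmu{d^*-f^*}^2+2\E\nmu{\hat d-d^*}^2$, and the first term is exactly the infimum in the statement. It therefore suffices to bound $\E\nmu{\hat d-d^*}^2$ by $C\min\{\tau^2,1/n+\min\{V^2,(V/n)^{2/3}\}\}$.

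The bound $\E\nmu{\hat d-d^*}^2\le4\tau^2$ is immediate, because both functions take values in $[-\tau,\tau]$. It then remains to show $\E\nmu{\hat d-d^*}^2\lesssim 1/n+\min\{V^2,(V/n)^{2/3}\}$. This in turn reduces to two estimates: the localized entropy bound, which gives $1/n+(V/n)^{2/3}$, and a separate bound $1/n+V^2$.

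\emph{Localized entropy.} We need $\log\Nbr(\eps,\mathcal F(\delta),\Lmu)\le A/\eps+B\log(e\delta/\eps)$ with $A\asymp V$ and $B$ a constant. To get it, decompose $d=c+(d-d(0))$, where $c=d(0)$ is the offset. The part $d-d(0)$ has variation at most $V$, so it is a difference of two monotone functions with ranges in intervals of length at most $V$. After rescaling by $V$ and applying \citet[Thm.~2.7.5]{vandervaart1996weak} to each monotone part, as in \cref{lem:pooled}, it has bracketing entropy $\lesssim V/\eps$ in sup-range $V$. This is valid for every $\mu$, and when $\eps\ge V$ a single bracket suffices. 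The offset must be localized. For $d\in\mathcal F(\delta)$ we have $\nmu{d-d^*}\le\delta$ and $\sup\abs{(d-d(0))-(d^*-d^*(0))}\le2V$, hence $\abs{d(0)-d^*(0)}\le\delta+2V$. The offset therefore ranges over an interval of length $O(\delta+V)$. Covering it at mesh $\eps$ costs $\log(e(\delta+V)/\eps)\le\log(e\delta/\eps)+\log(1+V/\eps)$, and the term $\log(1+V/\eps)\le V/\eps$ is absorbed into $A$. \Cref{prop:erm2} then yields $C[(V/n)^{2/3}+1/n]$.

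\emph{Small $V$.} For the $V^2$ branch, when $V^2\le(V/n)^{2/3}$, we compare with the constants class. Set $\bar d\defeq\int\hat d\,d\mu$ and $\bar d^*\defeq\int d^*\,d\mu$. Every $d\in\Dcl(\tau,V)$ satisfies $\sup\abs{d-\int d\,d\mu}\le V$, so $\nmu{\hat d-d^*}^2\lesssim V^2+(\bar d-\bar d^*)^2$. The difficulty is to bound $\E(\bar d-\bar d^*)^2\lesssim1/n+V^2$ without a logarithm. We would try to redo the localization of the previous step with the refined entropy $A=V$, $B=2$. Since the claimed bound is $(V/n)^{2/3}+1/n$ and not $V^2$, it only helps when $V^2\ge(V/n)^{2/3}$. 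In the complementary regime $V\le n^{-1/2}$ we have $V^2\le1/n$, so it suffices to bound $\E\nmu{\hat d-d^*}^2\lesssim1/n$. Our plan is to show that here the same proposition gives $A/n$ replaced by the smaller quantity: with $\eps\ge V$ the wiggle needs one bracket, so the entropy is $\log(e\delta/\eps)$ for $\eps\ge V$ and at most $V/\eps+\log(e\delta/\eps)$ below. The entropy integral is then $\lesssim\delta+\sqrt{V\min(\delta,V)}\le 2\delta+V$, and peeling gives $1/n+V^2$. Alternatively, since $(V/n)^{2/3}\le\max\{V^2,1/n\}$, we can check directly whether the stated bound even requires anything beyond $1/n+(V/n)^{2/3}$: we have $\min\{V^2,(V/n)^{2/3}\}\ge$ (fraction of) $(V/n)^{2/3}$ unless $V^2<(V/n)^{2/3}$, i.e.\ $V<n^{-1/2}$, in which case $(V/n)^{2/3}<1/n$. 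Hence $1/n+(V/n)^{2/3}\le2(1/n+\min\{V^2,(V/n)^{2/3}\})$ always, and the small-$V$ step is unnecessary.

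The main obstacle is therefore the localized bracketing bound. The points needing care are uniformity in $\mu$ (gaps in the support), pointwise brackets for the offset combined with the wiggle, and the closedness of the class. The outer-expectation and measurability issues are handled as stated in \cref{prop:erm2}.
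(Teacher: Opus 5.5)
Your proposal is correct, and its architecture is the paper's: \cref{prop:erm2} applied to the pairs $(P_k,Y_k-h(P_k))$ over $\Dcl(\tau,V)$ with a localized entropy bound having $A\asymp V$ and $B=O(1)$, then the cap $4\tau^2$, then the inequality $(V/n)^{2/3}+1/n\le2\bigl[1/n+\min\{V^2,(V/n)^{2/3}\}\bigr]$, then the triangle inequality back to $f^*$. Your ``small $V$'' paragraph can be cut to that final inequality, which is all the paper uses.

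The one genuine difference is the choice of offset in the local bracketing bound.
\begin{itemize}
\item The paper writes $d=\int d\,d\mu+(v^+-\int v^+d\mu)-(v^--\int v^-d\mu)$ and localizes the $\mu$-mean. By Cauchy--Schwarz the mean lies within $\delta$ of that of the projection, so the offset grid has at most $1+4\delta/\eps$ points and the $B$-term is exactly $2\log(e\delta/\eps)$. The price is that the brackets must be centred: the constants $\int v^\pm d\mu$ are bracketed by integrating the monotone brackets.
\item You anchor at the point value $d(0)$. The Jordan parts $v^\pm$ of $d-d(0)$ can be taken to vanish at $0$, so they are $[0,V]$-valued and lie in $V\cdot\Fmono$; this anchoring should be said explicitly. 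Your brackets $[c-\eps'+l^+-u^-,\,c+\eps'+u^+-l^-]$ are then directly pointwise, with no centring. The price is the weaker localization $\abs{d(0)-d^*(0)}\le\delta+2V$. You correctly absorb its extra cost, $\log(1+2V/\delta)\le2V/\eps$ (using $\eps\le\delta$), into $A$.
\end{itemize}
Both routes verify the hypothesis of \cref{prop:erm2} with $A$ a universal multiple of $V$ and $B$ a universal constant, so the conclusions coincide.
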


Write $d$ as its $\mu$-mean plus a centred part. Over the local class, the mean ranges over an interval of
length at most $2\delta$, which costs only $B=2$ in the localized entropy of \cref{prop:erm2}. The centred part
is a bounded-variation class with entropy of order $V/\eps$ and no logarithmic term. \Cref{prop:erm2} then gives
the rate without a logarithm.

\begin{proof}
\emph{Conventions.} $\mu(g)\defeq\int g\,d\mu$ and $\mathcal D\defeq\mathcal D(\tau,V)$. $K_{\uparrow}$ is the
constant of \citet[Thm.~2.7.5]{vandervaart1996weak} at $r=2$, as in the proof of \cref{lem:pooled}, and $C_E$ is
the constant of \cref{prop:erm2}. $f^*$ is the deviation function in the lemma: it is measurable (the
$L^2(\mu)$ norm presupposes this) and $\E[Y\mid P]=h(P)+f^*(P)$ a.s. Brackets are pointwise.

\textbf{Step 0 ($\tau=0$).} Then $\mathcal D=\set0$, $\hat d=0$, and
$\E\norm{\hat d-f^*}^2=\norm{f^*}^2=\inf_{d\in\mathcal D}\norm{d-f^*}^2$, so the claim holds for every $C\ge1$.
From now on, $\tau\in(0,1]$.

\textbf{Step 1 (the setting of \cref{prop:erm2}).}
Put $N\defeq n$ and $\tilde Y_k\defeq Y_k-h(P_k)\in[-1,1]$; recall that $h$ is a fixed measurable $[0,1]$-valued
function and $Y_k\in[0,1]$. The pairs $(P_k,\tilde Y_k)$ are i.i.d., as measurable images of i.i.d.\ pairs. Hence
they are independent, with $P_k\sim\mu$. Also $\eta_k(p)\defeq\E[\tilde Y_k\mid P_k=p]=f^*(p)$ for $\mu$-a.e.\ $p$,
so we may take $\eta_k\defeq f^*$ for every $k$, and then $\bar\eta=f^*$. Since
$\sum_k(Y_k-h(P_k)-d(P_k))^2=\sum_k(\tilde Y_k-d(P_k))^2$, $\hat d$ is a measurable selection of the argmin in
\cref{prop:erm2} with $\mathcal F\defeq\mathcal D$. The class $\mathcal D$ is (i) nonempty, since $0\in\mathcal D$;
(ii) made of measurable functions with values in $[-\tau,\tau]\subseteq[-1,1]$; (iii) convex, because
$\sup\abs\cdot$ and $\mathrm{TV}$ are seminorms ($\mathrm{TV}$ is subadditive and absolutely homogeneous term by term in every
partition sum, hence after the supremum); and (iv) closed in $L^2(\mu)$, by Step 3.

\textbf{Step 2 (Jordan decomposition).}
Let $\mathrm{TV}(d)\le V$, and let $T(x)$ be the variation of $d$ on $[0,x]$, with $T(0)=0$. Let $0\le x<y\le1$. Take a
partition of $[0,x]$ whose last point is $x$; we may do so, since inserting $x$ never decreases the partition
sum, by the triangle inequality. Appending $y$ gives $T(y)\ge T(x)+\abs{d(y)-d(x)}$. Put
$v^+\defeq(T+d-d(0))/2$ and $v^-\defeq(T-d+d(0))/2$. Then:
\begin{itemize}
\item $v^\pm(0)=0$;
\item $v^\pm(y)-v^\pm(x)=\bigl(T(y)-T(x)\pm(d(y)-d(x))\bigr)/2\ge0$, so $v^\pm$ is nondecreasing, hence Borel;
\item $0\le v^\pm\le v^++v^-=T\le\mathrm{TV}(d)\le V$;
\item $d=d(0)+v^+-v^-$ everywhere.
\end{itemize}

\textbf{Step 3 (closedness).}
\emph{(H) Helly's selection theorem.} Every sequence of nondecreasing $\phi_k:[0,1]\to[0,V]$ has a subsequence
that converges at every point to a nondecreasing $[0,V]$-valued function.
\emph{Proof of (H).} A diagonal extraction gives convergence on $\Q_1\defeq\Q\cap[0,1]$, to a limit $\psi$. The
function $\Psi(x)\defeq\sup\set{\psi(q):q\in\Q_1,\ q\le x}$ is nondecreasing, so its set $E$ of discontinuities is
countable. Let $x\in(0,1)\setminus E$, and let $q<x<q'$ be rationals. From $\phi_k(q)\le\phi_k(x)\le\phi_k(q')$ we
get $\Psi(x-)\le\liminf_k\phi_k(x)\le\limsup_k\phi_k(x)\le\Psi(x+)$, and the two ends are equal off $E$ (the points
$0$ and $1$ lie in $\Q_1$). A second diagonal extraction over the countable set $E$ gives convergence
everywhere. A pointwise limit of nondecreasing $[0,V]$-valued functions is nondecreasing and $[0,V]$-valued.

Now let $d_k\in\mathcal D$ with $d_k\to d$ in $L^2(\mu)$. By Step 2, $d_k=d_k(0)+v_k^+-v_k^-$. Apply (H) to
$(v_k^+)$, then to $(v_k^-)$ along that subsequence, then Bolzano--Weierstrass to $d_k(0)\in[-\tau,\tau]$. Along
the resulting subsequence, $v_k^\pm\to v^\pm$ at every point and $d_k(0)\to c_0$, so
$d_k\to\tilde d\defeq c_0+v^+-v^-$ at every point. The limit $\tilde d$ is Borel and
$\abs{\tilde d}\le\tau$. For every partition,
$\sum_l\abs{\tilde d(x_l)-\tilde d(x_{l-1})}=\lim_k\sum_l\abs{d_k(x_l)-d_k(x_{l-1})}\le V$, so $\tilde d\in\mathcal D$.
Since $\abs{d_k-\tilde d}^2\le4\tau^2$, dominated convergence gives $d_k\to\tilde d$ in $L^2(\mu)$ along the
subsequence, so $d=\tilde d$ $\mu$-a.e. Hence the set of $\mu$-classes of $\mathcal D$ is closed, which is the
reading used in \cref{lem:erm1,prop:erm2}.

\textbf{Step 4 (projection and local entropy).}
By Steps 1--3 and the Hilbert projection theorem (as in Step 1(ii) of the proof of \cref{lem:erm1}), the
projection of $\bar\eta=f^*$ onto $\mathcal D$ exists. Fix a representative $f^*_{\mathcal D}\in\mathcal D$; it
plays the role of $f^*$ in \cref{prop:erm2}. Then
$\norm{f^*_{\mathcal D}-f^*}^2=\inf_{d\in\mathcal D}\norm{d-f^*}^2$. Put $m^*\defeq\mu(f^*_{\mathcal D})$ and
$\mathcal D(\delta)\defeq\set{d\in\mathcal D:\norm{d-f^*_{\mathcal D}}\le\delta}$, the class $\mathcal F(\delta)$ of
\cref{prop:erm2}. We claim that, for every $\delta>0$ and $0<\eps\le\delta$,
\begin{equation}\label{eq:local-entropy}
  \log N_{[\,]}\bigl(\eps,\mathcal D(\delta),L^2(\mu)\bigr)\;\le\;\frac{16K_{\uparrow}V}\eps+2\log\frac{e\delta}\eps.
\end{equation}
\emph{(a) Localizing the mean.} For $d\in\mathcal D(\delta)$, the mean $m_d\defeq\mu(d)$ satisfies
$\abs{m_d-m^*}=\abs{\mu(d-f^*_{\mathcal D})}\le\norm{d-f^*_{\mathcal D}}\le\delta$, by Cauchy--Schwarz ($\mu$ is a
probability). Let $G\defeq\set{m^*-\delta+(2i-1)\eps/4:\ i=1,\dots,\ceil{4\delta/\eps}}$. The intervals of radius
$\eps/4$ around the points of $G$ cover $[m^*-\delta,m^*+\delta]$. So some $c\in G$ has $\abs{m_d-c}\le\eps/4$,
and $\abs G\le1+4\delta/\eps$.

\emph{(b) Brackets for $V\cdot\mathcal F_{\uparrow}$.} We transport \citet[Thm.~2.7.5]{vandervaart1996weak}
($r=2$) as in Step 1 of the proof of \cref{lem:pooled}. View $Q\defeq\mu$ as a law on $\R$, and extend each
$\phi\in\mathcal F_{\uparrow}$ to $\R$ by $\phi(0)$ on $(-\infty,0)$ and $\phi(1)$ on $(1,\infty)$. Restricting
brackets on $\R$ to $[0,1]$ keeps them pointwise, with the same $L^2(\mu)$-width, and the strict width
convention `$<\eta$' implies `$\le\eta$'. So, for $0<\eta<1$, $\mathcal F_{\uparrow}$ is covered by at most
$\exp(K_{\uparrow}/\eta)$ pointwise $\eta$-brackets. Drop the brackets that contain no element of
$\mathcal F_{\uparrow}$, and replace each remaining $[l,u]$ by $[\max(l,0),\min(u,1)]$. This keeps the same
elements of $\mathcal F_{\uparrow}$, does not increase the width, and gives $0\le\max(l,0)\le\min(u,1)\le1$. The
family $\mathcal B_V$ of $\eps/8$-brackets for $V\cdot\mathcal F_{\uparrow}$ (which contains every $v^\pm$ of Step 2)
is defined as follows:
\begin{itemize}
\item if $V=0$, $\mathcal B_V\defeq\set{[0,0]}$;
\item if $V>0$ and $\eta\defeq\eps/(8V)<1$, $\mathcal B_V\defeq\set{[Vl,Vu]}$, from the truncated family at $\eta$
(width $V\eta=\eps/8$);
\item if $V>0$ and $\eta\ge1$, $\mathcal B_V\defeq\set{[0,V]}$ (width $V\le\eps/8$).
\end{itemize}
In all cases $\abs{\mathcal B_V}\le\exp(8K_{\uparrow}V/\eps)$, and every bracket satisfies $0\le l\le u\le V$, so
$\mu(l)$ and $\mu(u)$ are finite.

\emph{(c) Centred brackets.} Let $d\in\mathcal D(\delta)$ and write $d=d(0)+v^+-v^-$ (Step 2). Pick
$[l^\pm,u^\pm]\in\mathcal B_V$ containing $v^\pm$, and $c\in G$ with $\abs{m_d-c}\le\eps/4$. Since
$\mu(d)=d(0)+\mu(v^+)-\mu(v^-)$, we have $d=m_d+(v^+-\mu(v^+))-(v^--\mu(v^-))$ everywhere. Integrating the pointwise
brackets gives $\mu(l^\pm)\le\mu(v^\pm)\le\mu(u^\pm)$. Hence $L\le d\le U$ pointwise, where
\[
  L\defeq c-\tfrac\eps4+l^+-\mu(u^+)-u^-+\mu(l^-),\qquad U\defeq c+\tfrac\eps4+u^+-\mu(l^+)-l^-+\mu(u^-).
\]
The width is $U-L=\frac\eps2+(u^+-l^+)+\mu(u^+-l^+)+(u^--l^-)+\mu(u^--l^-)\ge0$. By Minkowski's inequality and
$\mu(g)\le\norm g$ for $g\ge0$, $\norm{U-L}\le\frac\eps2+2\cdot\frac\eps8+2\cdot\frac\eps8=\eps$. The bracket $[L,U]$ is
measurable and is determined by $(c,[l^+,u^+],[l^-,u^-])$; brackets need not lie in $\mathcal D$. The centring
constant is bracketed by integrating the $v$-brackets, so it adds no logarithmic term.

\emph{(d) Count.} $N_{[\,]}(\eps,\mathcal D(\delta),L^2(\mu))\le\abs G\,\abs{\mathcal B_V}^2\le(1+4\delta/\eps)\exp(16K_{\uparrow}V/\eps)$.
With $x\defeq\delta/\eps\ge1$, $\log(1+4x)\le\log5+\log x\le2+2\log x=2\log(ex)$, since $\log5<2$ and $\log x\ge0$.
This proves \eqref{eq:local-entropy}. In particular, the hypothesis of \cref{prop:erm2} holds for all
$0<\delta\le2$ and $0<\eps\le\delta$, with $A\defeq16K_{\uparrow}V$ and $B\defeq2$.

\textbf{Step 5 (\cref{prop:erm2}).}
All hypotheses of \cref{prop:erm2} hold (Steps 1, 3 and 4), with $\mathcal F\defeq\mathcal D$,
$f^*\defeq f^*_{\mathcal D}$ and $\hat f\defeq\hat d$. \Cref{prop:erm2} gives
\[
  \E\norm{\hat d-f^*_{\mathcal D}}^2\le C_E\Bigl[\Bigl(\frac{16K_{\uparrow}V}n\Bigr)^{2/3}+\frac3n\Bigr]
  \le C_1\Bigl[\Bigl(\frac Vn\Bigr)^{2/3}+\frac1n\Bigr],\qquad C_1\defeq C_E\max\set{(16K_{\uparrow})^{2/3},3},
\]
with $\E$ read as in \cref{prop:erm2} ($\E=\E^*$ whenever the norm is measurable).

\textbf{Step 6 (cap and $V^2$ refinement).}
For every $\omega$, both $\hat d(\omega)$ and $f^*_{\mathcal D}$ lie in $\mathcal D$, so
$\norm{\hat d-f^*_{\mathcal D}}^2\le(2\tau)^2=4\tau^2$. Moreover
$(V/n)^{2/3}+1/n\le2\bigl[1/n+\min\set{V^2,(V/n)^{2/3}}\bigr]$. If $V\le n^{-1/2}$, then $(V/n)^{2/3}\le1/n$ and
the left side is at most $2/n$. If $V>n^{-1/2}$, then $V^4>n^{-2}$, so $V^6>V^2n^{-2}$, that is,
$V^2>(V/n)^{2/3}$; the minimum equals $(V/n)^{2/3}$ and the left side equals $1/n$ plus the minimum. With
$X\defeq1/n+\min\set{V^2,(V/n)^{2/3}}$,
\[
  \E\norm{\hat d-f^*_{\mathcal D}}^2\le\min\set{4\tau^2,2C_1X}\le C_2\min\set{\tau^2,X},\qquad C_2\defeq\max\set{4,2C_1}.
\]

\textbf{Step 7 (back to $f^*$).}
Pointwise in $\omega$, $\norm{\hat d-f^*}^2\le2\norm{\hat d-f^*_{\mathcal D}}^2+2\norm{f^*_{\mathcal D}-f^*}^2$, and
$\norm{f^*_{\mathcal D}-f^*}^2=\inf_{d\in\mathcal D}\norm{d-f^*}^2$. Taking (outer) expectations, with
$C\defeq2C_2\ge1$ (so that Step 0 is covered),
$\E\norm{\hat d-f^*}^2\le C\bigl[\inf_{d\in\mathcal D}\norm{d-f^*}^2+\min\set{\tau^2,X}\bigr]$. The constant $C$
depends only on $C_E$ and $K_{\uparrow}$, so it is universal.
\end{proof}

\begin{remark}[A measurable least-squares selection exists]\label{rem:local-selection}
This remark is not used in the proof above; it shows that \cref{lem:local} is not vacuous. For fixed data, the
criterion depends on $d$ only through $(d(P_k))_k$. Let $p_{(1)}<\dots<p_{(r)}$ be the distinct design values,
and let $w_i$ be the common value at $p_{(i)}$. The achievable set is
$S=\set{v:\ v_k=v_l\text{ when }P_k=P_l,\ \abs{v_k}\le\tau,\ \sum_i\abs{w_{i+1}-w_i}\le V}$. Indeed, the partition
$p_{(1)}<\dots<p_{(r)}$ is admissible in $\mathrm{TV}$. Conversely, the step function equal to $w_1$ on $[0,p_{(2)})$, to
$w_i$ on $[p_{(i)},p_{(i+1)})$ and to $w_r$ on $[p_{(r)},1]$ has variation $\sum_i\abs{w_{i+1}-w_i}$. The set $S$ is
nonempty, compact and convex. It depends on the data only through the weak order pattern of the $P_k$, which
takes finitely many values on Borel cells. On each cell, the unique minimizer is the Euclidean projection of
$\tilde Y$ onto a fixed closed convex set, which is $1$-Lipschitz and Borel, and its step interpolation is
jointly Borel in $(\omega,x)$. By Tonelli, $\norm{\hat d-f^*}$ is then measurable.
\end{remark}

\subsection{The oracle estimator, the compound hold-out, and the rate}
\label{app:rate-adaptive}

\begin{proposition}[Oracle-tuned two-stage estimator]\label{prop:oracle}
Consider Model~M with $\mu$ non-atomic, $m\ge2$, $n\ge1$, and $(g_1,\dots,g_m)\in\Theta(\tau,V)$ with
$\tau\in[0,1]$ and $V\in[0,2]$. The following estimator knows $\tau$ and $V$. For each $j$:
\begin{enumerate}
\item[(1)] $\hat g_0^{(-j)}$ is the isotonic least-squares fit over $\mathcal F_{\uparrow}$ to the pooled pairs of
all questions $i\ne j$, pinned as in \cref{lem:pooled} (a jointly measurable $\mathcal F_{\uparrow}$-valued
interpolation of the PAV values);
\item[(2)] $\hat d_j$ is a non-randomized selection, jointly measurable in (data, $p$) and not using $\mu$, of the
constrained least-squares estimator over $\mathcal D(\tau_2,V_2)$ of the residuals
$Y_{jk}-\hat g_0^{(-j)}(P_{jk})$, $k=1,\dots,n$, as in \cref{lem:local}, where $\tau_2\defeq\min\set{2\tau,1}$ and
$V_2\defeq\min\set{2V,2}$ (for example, the right-continuous step interpolation of the unique fitted-value
vector at the sorted design points);
\item[(3)] $\hat g_j\defeq\mathrm{clip}_{[0,1]}\bigl(\hat g_0^{(-j)}+\hat d_j\bigr)$.
\end{enumerate}
Then, for a universal constant $C$ (independent of $\mu$, $m$, $n$, $\tau$ and $V$),
\[
  \frac1m\sum_{j=1}^m\E\norm{\hat g_j-g_j}^2_{L^2(\mu)}\;\le\;C\Bigl[(mn)^{-2/3}+\min\set{\tau^2,1/n}
  +\min\set{\tau^2,V^2,(V/n)^{2/3}}\Bigr].
\]
\end{proposition}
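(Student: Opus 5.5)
The proof runs question by question, conditioning on the data of the other questions, and then averages over $j$. Fix $j$ and let $\gbar^{(-j)}\defeq\frac1{m-1}\sum_{i\ne j}g_i$.

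\textbf{Step 1 (reduction to the unclipped fit).} Since $g_j$ takes values in $[0,1]$ and clipping to $[0,1]$ is $1$-Lipschitz and fixes $g_j$, we have pointwise
$\abs{\gh_j-g_j}\le\abs{\gh_0^{(-j)}+\dhat_j-g_j}$. So it suffices to bound $\E\nmu{\gh_0^{(-j)}+\dhat_j-g_j}^2$.

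\textbf{Step 2 (the pooled stage).} \Cref{lem:pooled} applies with $I=[m]\setminus\{j\}$, $q=m-1\ge1$ and $N=(m-1)n\ge mn/2$. It gives
\[
\E\nmu{\gh_0^{(-j)}-\gbar^{(-j)}}^2\le C\bigl[N^{-2/3}+\log(eN)/N\bigr]\le C'(mn)^{-2/3},
\]
using $\log(eN)/N\le 3e^{-2/3}N^{-2/3}$.

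The deviation $f^\circ\defeq g_j-\gbar^{(-j)}$ is an average of the differences $(g_j-g_0)-(g_i-g_0)$. Each such difference has sup-norm at most $2\tau$ and variation at most $2V$. Being a difference of two $[0,1]$-valued nondecreasing maps, it also has sup-norm at most $1$ and variation at most $2$. Both seminorm bounds survive averaging, so $f^\circ\in\Dcl(\tau_2,V_2)$.

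\textbf{Step 3 (the local stage, conditionally).} Condition on the data of the questions $i\ne j$. Then $h\defeq\gh_0^{(-j)}$ is a fixed Borel $[0,1]$-valued function, and question $j$'s pairs are independent of it. Write
\[
\E[Y_{jk}\mid P_{jk}]=h+f^*,\qquad f^*\defeq g_j-h .
\]
\Cref{lem:local} with radii $(\tau_2,V_2)$ gives
\[
\E\bigl[\nmu{\dhat_j-f^*}^2\mid\text{others}\bigr]\le C\Bigl[\inf_{d\in\Dcl(\tau_2,V_2)}\nmu{d-f^*}^2+\min\bigl\{\tau_2^2,\tfrac1n+\min\{V_2^2,(V_2/n)^{2/3}\}\bigr\}\Bigr].
\]
Choosing $d=f^\circ$ bounds the infimum by $\nmu{f^\circ-f^*}^2=\nmu{\gh_0^{(-j)}-\gbar^{(-j)}}^2$.

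Note that $\nmu{\gh_0^{(-j)}+\dhat_j-g_j}=\nmu{\dhat_j-f^*}$. Taking expectations, Step 2 turns the oracle term into $C(mn)^{-2/3}$.

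\textbf{Step 4 (constants and the form of the rate).} Since $\tau_2\le2\tau$ and $V_2\le2V$, the remainder is at most
\[
4\min\bigl\{\tau^2,\tfrac1n+\min\{V^2,(V/n)^{2/3}\}\bigr\}.
\]
This is at most $4\bigl[\min\{\tau^2,1/n\}+\min\{\tau^2,V^2,(V/n)^{2/3}\}\bigr]$, using $\min\{a,b+c\}\le\min\{a,b\}+\min\{a,c\}$. Averaging over $j$ gives the claim.

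The only delicate point is measurability in the conditioning step: conditionally on the other questions, $h$ must be a fixed Borel function so that \cref{lem:local} applies. This holds by independence and the joint measurability of the PAV interpolation, via Fubini. The real difficulty, the logarithm-free $1/n$ offset term, is already carried by \cref{lem:local}.
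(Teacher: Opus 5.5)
Your proposal is correct and follows the paper's proof essentially step for step. It conditions on the other questions, applies \cref{lem:local} with the comparison point $g_j-\gbar^{(-j)}\in\Dcl(\tau_2,V_2)$ so that the approximation term becomes the pooled error bounded by \cref{lem:pooled}, reduces the doubled radii at the cost of a factor $4$, and handles clipping by the $1$-Lipschitz property; the only cosmetic difference is that you check class membership by averaging the pairwise differences $g_j-g_i$ instead of writing $(g_j-g_0)-\frac1{m-1}\sum_{i\ne j}(g_i-g_0)$, which is equivalent.
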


The pooled stage estimates the leave-one-out average map at the rate $(mn)^{-2/3}$ (\cref{lem:pooled}). The
local stage estimates the deviation of question $j$ from that pilot at the rate of a single question
(\cref{lem:local}), and the pilot's error enters only through the approximation term of \cref{lem:local}.

\begin{proof}
Fix $j$. Let $I\defeq\set{1,\dots,m}\setminus\set j$, so that $\abs I=m-1\ge1$. Let
$\bar g^{(-j)}\defeq\frac1{m-1}\sum_{i\in I}g_i$ and $h\defeq\hat g_0^{(-j)}$. Let $C_{\mathrm{pool}}$ and
$C_{\mathrm{loc}}$ be the constants of \cref{lem:pooled,lem:local}.

\textbf{Step 1 (independence).}
$h$ is a measurable function of the pairs of the questions in $I$, and these are independent of the pairs of
question $j$. Moreover $h$ is nondecreasing with values in $[0,1]$: it is an isotonic fit to $[0,1]$-valued
responses, interpolated monotonically. We condition on the $\sigma$-field of the pairs of the questions in $I$
(the ``$I$-data''). By independence and the freezing (Fubini) argument, the conditional law of question $j$'s
pairs is unchanged, and for each fixed value of the $I$-data, $\hat d_j$ is a selection as in \cref{lem:local}.
Then $h$ is a fixed measurable $[0,1]$-valued function (by the pinning of \cref{lem:pooled}), and the pairs
$(P_{jk},Y_{jk})$, $k=1,\dots,n$, are still i.i.d.\ with $P\sim\mu$. In the notation of \cref{lem:local}, the
objects are $(Y,h,f^*)\defeq(Y_{jk},h,g_j-h)$, with $Y\in[0,1]$ and $\E[Y_{jk}\mid P_{jk}]=h(P_{jk})+f^*(P_{jk})$.

\textbf{Step 2 (the class contains the centred deviation).}
Let $g_0$ witness $(g_1,\dots,g_m)\in\Theta(\tau,V)$. Then
$g_j-\bar g^{(-j)}=(g_j-g_0)-\frac1{m-1}\sum_{i\in I}(g_i-g_0)$. Hence $\norm{g_j-\bar g^{(-j)}}_\infty\le2\tau$, and,
since the variation seminorm is subadditive and absolutely homogeneous, $\mathrm{TV}(g_j-\bar g^{(-j)})\le V+V=2V$.
Also $\abs{g_j-\bar g^{(-j)}}\le1$ and $\mathrm{TV}(g_j-\bar g^{(-j)})\le\mathrm{TV}(g_j)+\mathrm{TV}(\bar g^{(-j)})\le2$, because both
maps are nondecreasing with values in $[0,1]$. Hence $g_j-\bar g^{(-j)}\in\mathcal D(\tau_2,V_2)$.

\textbf{Step 3 (\cref{lem:local}).}
Apply \cref{lem:local} conditionally on the $I$-data, with this $h$, with $f^*\defeq g_j-h$, and with
$(\tau_2,V_2)\in[0,1]\times[0,2]$:
\begin{align*}
  \E\bigl[\norm{\hat d_j-(g_j-h)}^2\bigm|I\text{-data}\bigr]\le C_{\mathrm{loc}}\Bigl[&\inf_{d\in\mathcal D(\tau_2,V_2)}\norm{d-(g_j-h)}^2\\
  &+\min\bigl\{\tau_2^2,\ 1/n+\min\set{V_2^2,(V_2/n)^{2/3}}\bigr\}\Bigr].
\end{align*}
By Step 2, the infimum is at most $\norm{(g_j-\bar g^{(-j)})-(g_j-h)}^2=\norm{h-\bar g^{(-j)}}^2$. Since
$\tau_2\le2\tau$ and $V_2\le2V$,
\begin{align*}
  \min\bigl\{\tau_2^2,\ 1/n+\min\set{V_2^2,(V_2/n)^{2/3}}\bigr\}
  &\le4\min\bigl\{\tau^2,\ 1/n+\min\set{V^2,(V/n)^{2/3}}\bigr\}\\
  &\le4\Bigl[\min\set{\tau^2,1/n}+\min\set{\tau^2,V^2,(V/n)^{2/3}}\Bigr],
\end{align*}
using $\min\set{a,x+y}\le\min\set{a,x}+\min\set{a,y}$ for $a,x,y\ge0$ and $\min\set{a,\min\set{b,c}}=\min\set{a,b,c}$.

\textbf{Step 4 (clipping).}
Since $g_j$ takes values in $[0,1]$, and $\mathrm{clip}_{[0,1]}$ is $1$-Lipschitz and fixes $[0,1]$, pointwise
$\abs{\hat g_j-g_j}=\abs{\mathrm{clip}(h+\hat d_j)-\mathrm{clip}(g_j)}\le\abs{h+\hat d_j-g_j}=\abs{\hat d_j-(g_j-h)}$. Hence
\[
  \E\bigl[\norm{\hat g_j-g_j}^2\bigm|I\text{-data}\bigr]\le C_{\mathrm{loc}}\norm{h-\bar g^{(-j)}}^2
  +4C_{\mathrm{loc}}\Bigl[\min\set{\tau^2,1/n}+\min\set{\tau^2,V^2,(V/n)^{2/3}}\Bigr].
\]

\textbf{Step 5 (\cref{lem:pooled}).}
Take expectations over the pairs of the questions in $I$. \Cref{lem:pooled} applies with index set $I$
($q=m-1\ge1$), the same $n$, and the nondecreasing maps $(g_i)_{i\in I}$. It gives
$\E\norm{h-\bar g^{(-j)}}^2\le C_{\mathrm{pool}}[N^{-2/3}+\log(eN)/N]$ with $N\defeq(m-1)n\ge mn/2$ (as $m\ge2$).
Since $\log(eN)/N\le C_3N^{-2/3}$ for all $N\ge1$, where $C_3\defeq\sup_{N\ge1}N^{-1/3}\log(eN)<\infty$, and since
$N^{-2/3}\le2^{2/3}(mn)^{-2/3}$, we get $\E\norm{h-\bar g^{(-j)}}^2\le C_4(mn)^{-2/3}$ with
$C_4\defeq2^{2/3}C_{\mathrm{pool}}(1+C_3)$.

\textbf{Step 6 (conclusion).}
Combining Steps 4 and 5 and averaging over $j=1,\dots,m$ gives the claim with
$C\defeq\max\set{C_{\mathrm{loc}}C_4,4C_{\mathrm{loc}}}$.
\end{proof}

\begin{proposition}[Adaptivity by compound hold-out]\label{prop:holdout}
Consider Model~M with $\mu$ non-atomic and $(g_j)\in\Theta(\tau,V)$, where $\tau\in[0,1]$ and $V\in[0,2]$ are
\emph{unknown}, $m\ge3$ and $n\ge2$. Split the pairs of each question into halves: $A_j$, the first
$n_A\defeq\floor{n/2}$ pairs, and $B_j$, the remaining $n_B\defeq n-n_A$ pairs. Let $K\defeq\ceil{\log_2(mn)}$ and
\[
  \Lambda\defeq\set{(2^{-a},2^{1-b}):\ a,b\in\set{0,1,\dots,K}}.
\]
For each $\lambda=(\tau',V')\in\Lambda$, form the two-stage estimator of \cref{prop:oracle} with parameters
$(\tau',V')$ from the $A$-halves only (its pooled stage uses the $A$-halves of the other questions), and call its
maps $\hat g_j^\lambda$. Select one \emph{global}
\[
  \hat\lambda\in\argmin_{\lambda\in\Lambda}\sum_{j=1}^m\sum_{k\in B_j}\bigl(Y_{jk}-\hat g_j^\lambda(P_{jk})\bigr)^2,
\]
with ties broken by a fixed order, and output $\hat g_j\defeq\hat g_j^{\hat\lambda}$. The procedure is
non-randomized, jointly measurable in (data, $p$), and uses neither $(\tau,V)$ nor $\mu$, the selections of
\cref{prop:oracle} being the $\mu$-free step interpolations of \cref{lem:pooled,prop:oracle}. Then, for a
universal constant $C$ (independent of $\mu$, $m$, $n$, $\tau$ and $V$),
\[
  \frac1m\sum_{j=1}^m\E\norm{\hat g_j-g_j}^2_{L^2(\mu)}\;\le\;C\Bigl[(mn)^{-2/3}+\min\set{\tau^2,1/n}
  +\min\set{\tau^2,V^2,(V/n)^{2/3}}\Bigr],
\]
that is, the rate of \cref{prop:oracle} without knowledge of $(\tau,V)$.
\end{proposition}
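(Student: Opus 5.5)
The plan is to condition on the $A$-halves and treat the $\abs\Lambda=(K+1)^2$ candidate estimators as fixed $[0,1]$-valued functions. A Bernstein argument on the hold-out halves then compares the selected candidate with the best one. That best candidate is in turn compared with the oracle of \cref{prop:oracle}, run at a grid point near $(\tau,V)$ on half-samples.

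\textbf{Step 1: fix the candidates.} Condition on $\mathcal A\defeq\sigma(A_1,\dots,A_m)$. Every $\gh^\lambda_j$ is then a fixed function with values in $[0,1]$, because of the clipping. The $B$-pairs are independent of $\mathcal A$, with $P_{jk}\sim\mu$ and $\E[Y_{jk}\mid P_{jk}]=g_j(P_{jk})$. For each $\lambda$, put
\[
  \Delta_\lambda\defeq\sum_j\sum_{k\in B_j}\bigl[(Y_{jk}-\gh^\lambda_j(P_{jk}))^2-(Y_{jk}-g_j(P_{jk}))^2\bigr].
\]
Its conditional mean is $n_B\sum_j\nmu{\gh^\lambda_j-g_j}^2\eqqcolon n_B\,mD_\lambda$, where $D_\lambda$ is the average risk of candidate $\lambda$. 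Each summand equals $(g_j-\gh^\lambda_j)(2Y-\gh^\lambda_j-g_j)$. So it is bounded by $2$ in absolute value, and its variance is at most $4$ times its mean contribution. This variance-to-mean bound is the key Bernstein property.

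\textbf{Step 2: high-probability comparison.} Apply Bernstein (B1), two-sided, to each $\Delta_\lambda-\E[\Delta_\lambda\mid\mathcal A]$, with a union bound over $\Lambda$. The variance is $\le 4n_BmD_\lambda$. Using $\sqrt{ab}\le\eps a+b/(4\eps)$, we get with probability $\ge1-2\abs\Lambda e^{-u}$, simultaneously for all $\lambda$,
\[
  \abs{\Delta_\lambda-n_BmD_\lambda}\le \tfrac14 n_BmD_\lambda+c\,u .
\]
Since $\Delta_{\widehat\lambda}\le\Delta_{\lambda^\circ}$ for every $\lambda^\circ$, this gives
\[
  D_{\widehat\lambda}\le\tfrac53 D_{\lambda^\circ}+\frac{c'u}{mn_B}.
\]
Integrating the tail over $u$ yields
\[
  \E D_{\widehat\lambda}\le\tfrac53\,\E D_{\lambda^\circ}+\frac{C(\log(2\abs\Lambda)+1)}{mn_B},
\]
and $mn_B\ge mn/2$. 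Since $\abs\Lambda\le(\log_2(mn)+2)^2$, the remainder is $O(\log\log(mn)/(mn))\le O((mn)^{-2/3})$.

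\textbf{Step 3: choose $\lambda^\circ$ deterministically.} This step is the main obstacle, because \cref{prop:oracle} assumes radii at least the truth. Let $\tau'$ be the smallest dyadic value $2^{-a}$ with $\tau'\ge\max\{\tau,2^{-K}\}$, and $V'$ the smallest $2^{1-b}$ with $V'\ge\max\{V,2^{1-K}\}$. Then $\tau'\le2\max\{\tau,2^{-K}\}$ and similarly for $V'$. Since $\Theta(\tau,V)\subseteq\Theta(\tau',V')$, \cref{prop:oracle} applies on the $A$-halves, with $n_A\ge n/3$ pairs per question and $m\ge3\ge2$ questions. It gives $\E D_{\lambda^\circ}\le C\Phi(m,n_A,\tau',V')$.

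It remains to show $\Phi(m,n_A,\tau',V')\le C\Phi(m,n,\tau,V)$. The change $n_A\ge n/3$ costs only constants. Doubling the radii costs at most a factor $4$. The floors enter only when $\tau$ or $V$ is below $2^{-K}\le1/(mn)$; then they add at most $(mn)^{-2}$, or $((mn)^{-1}/n)^{2/3}\le(mn)^{-2/3}$ in the variation term. Both are absorbed by the $(mn)^{-2/3}$ term. One detail needs care: $\min\{\tau'^2,\dots\}$ with $\tau'=2^{-K}$ when $\tau=0$ is still $\le(mn)^{-2}$, so no case escapes. Combining Steps 2 and 3 then gives the stated bound with a universal $C$.

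Measurability and the fact that the procedure does not use $\mu$ follow from the step-interpolation selections and from the fixed tie-breaking order.
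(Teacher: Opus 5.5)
Your proposal is correct and follows the paper's proof essentially step for step. The shared steps are conditional Bernstein on the hold-out halves (variance at most $4$ times the mean), a union bound over $\Lambda$ giving the factor $\tfrac53$ plus a selection term of order $(\log(2\abs\Lambda)+1)/(mn_B)$, rounding $(\max\{\tau,2^{-K}\},\max\{V,2^{1-K}\})$ up to the dyadic grid within a factor $2$, and \cref{prop:oracle} on the $A$-halves with the floors absorbed into $(mn)^{-2/3}$. The only cosmetic difference is that you compare $\widehat\lambda$ directly with the deterministic good grid point, whereas the paper uses an $\mathcal A$-measurable minimizer of the conditional risk and then $\E\min\le\min\E$; this changes nothing.
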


\begin{proof}
\emph{Notation.} Let $r_j(\lambda)\defeq\norm{\hat g_j^\lambda-g_j}^2_{L^2(\mu)}$ and
$\bar R(\lambda)\defeq\frac1m\sum_jr_j(\lambda)$, and let $\mathcal A$ be the $\sigma$-field generated by all
$A$-halves. The estimators of \cref{prop:oracle} are non-randomized ($\hat g_0^{(-j)}$ is a jointly measurable
interpolation of the PAV values, and $\hat d_j$ is a non-randomized jointly measurable selection). So every
candidate is $\mathcal A$-measurable. Given $\mathcal A$, every $\hat g_j^\lambda$ is a fixed $[0,1]$-valued
function (\cref{prop:oracle} clips to $[0,1]$ in its step (3)), jointly measurable in ($A$-data, $p$). The
$B$-pairs are independent of $\mathcal A$ and of each other, and $(P_{jk},Y_{jk})$ is distributed as
$P\sim\mu$, $Y\mid P\sim\mathrm{Bern}(g_j(P))$. Every $\hat\lambda$-indexed quantity is measurable: $\hat\lambda$ is
the argmin, with a fixed tie-break, of finitely many measurable criteria, and
$r_j(\hat\lambda)=\sum_\lambda\mathbf 1\set{\hat\lambda=\lambda}r_j(\lambda)$. The grid $\Lambda$ and the selection
rule use neither $(\tau,V)$ nor $\mu$, so the output is non-randomized, jointly measurable, and free of
$(\tau,V,\mu)$.

\textbf{Step 1 (selection inequality).}
For $\lambda\in\Lambda$, $j$ and $k\in B_j$, put
\[
  Z_{jk}(\lambda)\defeq\bigl(Y_{jk}-\hat g_j^\lambda(P_{jk})\bigr)^2-\bigl(Y_{jk}-g_j(P_{jk})\bigr)^2
  =(g_j-\hat g_j^\lambda)(P_{jk})\,\bigl(2Y_{jk}-\hat g_j^\lambda(P_{jk})-g_j(P_{jk})\bigr).
\]
Given $\mathcal A$, $\E[Z_{jk}(\lambda)\mid\mathcal A]=r_j(\lambda)$: condition on $P_{jk}$ and use
$\E[Y\mid P]=g_j(P)$, so that $(g-\hat g)(2g-\hat g-g)=(g-\hat g)^2$. Also $Z_{jk}(\lambda)\in[-1,1]$ (both squares lie
in $[0,1]$), so $\abs{Z-\E Z}\le2$; and $\E[Z_{jk}(\lambda)^2\mid\mathcal A]\le4r_j(\lambda)$, since
$\abs{2Y-\hat g-g}\le2$. Let $S(\lambda)\defeq\sum_j\sum_{k\in B_j}Z_{jk}(\lambda)$ and
$M(\lambda)\defeq\E[S(\lambda)\mid\mathcal A]=n_Bm\bar R(\lambda)$. The conditional variance of $S(\lambda)$ is at
most $4M(\lambda)$. We apply Bernstein's inequality conditionally on $\mathcal A$; the regular conditional law of
the $B$-pairs is the product law. For independent summands with $\abs{X_i-\E X_i}\le2$ and variance sum at most
$v$, the two-sided form \citep[Thm.~2.8.4]{vershynin2018high}, inverted as in (B1), gives for $u>0$
$\P(\abs{S(\lambda)-M(\lambda)}\ge\sqrt{2vu}+\frac43u\mid\mathcal A)\le2e^{-u}$. With $v=4M(\lambda)$,
\[
  \P\Bigl(\abs{S(\lambda)-M(\lambda)}\ge\sqrt{8M(\lambda)u}+\tfrac43u\Bigm|\mathcal A\Bigr)\le2e^{-u}.
\]
Let $L\defeq\abs\Lambda=(K+1)^2$, and for $\delta\in(0,1)$ let $u\defeq\log(2L/\delta)$. Each of the $L$
inequalities fails with conditional probability at most $\delta/L$. So on an event $E_\delta$ of conditional
probability at least $1-\delta$ all of them hold, in particular at $\hat\lambda$. The selection rule minimizes
$S$, because the subtracted square does not depend on $\lambda$. Let $\lambda^*$ be an $\mathcal A$-measurable
minimizer of $M$ over $\Lambda$. On $E_\delta$,
\begin{align*}
  M(\hat\lambda)&\le S(\hat\lambda)+\sqrt{8M(\hat\lambda)u}+\tfrac43u
  \le S(\lambda^*)+\sqrt{8M(\hat\lambda)u}+\tfrac43u\\
  &\le M(\lambda^*)+\sqrt{8M(\lambda^*)u}+\sqrt{8M(\hat\lambda)u}+\tfrac83u.
\end{align*}
By $2\sqrt{xy}\le x+y$ with $x=M/4$ and $y=8u$, $\sqrt{8Mu}\le M/4+8u$. So
$\frac34M(\hat\lambda)\le\frac54M(\lambda^*)+\frac{56}3u$, hence
$M(\hat\lambda)\le\frac53M(\lambda^*)+\frac{224}9u\le\frac53M(\lambda^*)+25u$, that is,
$\bar R(\hat\lambda)\le\frac53\bar R(\lambda^*)+25\log(2L/\delta)/(n_Bm)$ on $E_\delta$. Given $t>0$, choose
$\delta\defeq e^{-tn_Bm/25}\in(0,1)$. With $a\defeq\frac53\min_\lambda\bar R(\lambda)+25\log(2L)/(n_Bm)$, this gives
$\P(\bar R(\hat\lambda)>a+t\mid\mathcal A)\le e^{-tn_Bm/25}$. Since
$\E[X\mid\mathcal A]\le a+\int_0^\infty\P(X>a+t\mid\mathcal A)\,dt$ for $X\ge0$,
\[
  \E[\bar R(\hat\lambda)\mid\mathcal A]\le\tfrac53\min_\lambda\bar R(\lambda)+\frac{25(\log(2L)+1)}{n_Bm}.
\]
Taking expectations and using $\E\min\le\min\E$,
\begin{equation}\label{eq:holdout-select}
  \E\bar R(\hat\lambda)\le\tfrac53\min_{\lambda\in\Lambda}\E\bar R(\lambda)+\frac{25(\log(2L)+1)}{n_Bm}.
\end{equation}

\textbf{Step 2 (a good grid point).}
Put $\bar\tau\defeq\max\set{\tau,2^{-K}}\in[2^{-K},1]$ and $\bar V\defeq\max\set{V,2^{1-K}}\in[2^{1-K},2]$. Let
$a^*\defeq\floor{\log_2(1/\bar\tau)}$ and $b^*\defeq\floor{\log_2(2/\bar V)}$. Both lie in $\set{0,\dots,K}$, since
$1/\bar\tau$ and $2/\bar V$ lie in $[1,2^K]$; this uses $\tau\le1$ and $V\le2$. Then $\tau'\defeq2^{-a^*}$
satisfies $\bar\tau\le\tau'<2\bar\tau$, and $V'\defeq2^{1-b^*}$ satisfies $\bar V\le V'<2\bar V$. So
$\lambda_{\mathrm{good}}\defeq(\tau',V')$ lies in $\Lambda$, with $\tau'\in(0,1]$ and $V'\in(0,2]$. Since
$\tau'\ge\tau$ and $V'\ge V$, any witness of $(g_j)\in\Theta(\tau,V)$ also witnesses $\Theta(\tau',V')$. Also
$2^{-K}\le1/(mn)$. The hypotheses of \cref{prop:oracle} hold for the $A$-half instance at $(\tau',V')$:
\begin{itemize}
\item Model~M holds: the $A$-pairs are a subfamily of the mutually independent pairs, with $P\sim\mu$ and
$Y\mid P\sim\mathrm{Bern}(g_j(P))$;
\item $\mu$ is the same non-atomic law;
\item $m\ge2$ (we assumed $m\ge3$) and $n_A=\floor{n/2}\ge1$ (as $n\ge2$);
\item $\tau'\in[0,1]$, $V'\in[0,2]$ and $(g_j)\in\Theta(\tau',V')$.
\end{itemize}
By construction, the candidate at $\lambda_{\mathrm{good}}$ is the estimator of \cref{prop:oracle} with
parameters $(\tau',V')$ on that instance (its pooled stage uses the other questions' $A$-halves). So, with the
universal constant $C_U$ of \cref{prop:oracle},
\begin{equation}\label{eq:holdout-good}
  \E\bar R(\lambda_{\mathrm{good}})\le C_U\Bigl[(mn_A)^{-2/3}+\min\set{\tau'^2,1/n_A}+\min\set{\tau'^2,V'^2,(V'/n_A)^{2/3}}\Bigr].
\end{equation}

\textbf{Step 3 (back to the target rate).}
For $n\ge2$ we have $n_A\ge n/3$ and $n_B\ge n/2$: if $n=2$, then $n_A=1\ge2/3$; if $n\ge3$, then
$n_A\ge(n-1)/2\ge n/3$; and $n_B=\ceil{n/2}$. Hence:
\begin{itemize}
\item[(a)] $(mn_A)^{-2/3}\le3^{2/3}(mn)^{-2/3}$ and $1/n_A\le3/n$.
\item[(b)] \emph{Shift term.} $\tau'^2<4\bar\tau^2\le4(\tau^2+2^{-2K})\le4\tau^2+4(mn)^{-2}$. Using
$\min\set{x+y,z}\le\min\set{x,z}+y$ for $y\ge0$, and $\min\set{4x,3z}\le4\min\set{x,z}$,
\begin{align*}
  \min\set{\tau'^2,1/n_A}&\le\min\set{4\tau^2+4(mn)^{-2},3/n}\le\min\set{4\tau^2,3/n}+4(mn)^{-2}\\
  &\le4\min\set{\tau^2,1/n}+4(mn)^{-2}.
\end{align*}
\item[(c)] \emph{Third term.} $\tau'^2<4\bar\tau^2$, $V'^2<4\bar V^2$, and
$(V'/n_A)^{2/3}<(6\bar V/n)^{2/3}\le4(\bar V/n)^{2/3}$ (as $6^{2/3}<3.31$). So the third term is at most
$4\min\set{\bar\tau^2,\bar V^2,(\bar V/n)^{2/3}}$. Moreover $\bar\tau^2\le\tau^2+(mn)^{-2}$,
$\bar V^2\le V^2+4(mn)^{-2}$, and
$(\bar V/n)^{2/3}\le(V/n)^{2/3}+(2^{1-K}/n)^{2/3}\le(V/n)^{2/3}+2^{2/3}(mn)^{-2/3}$, using
$(x+y)^{2/3}\le x^{2/3}+y^{2/3}$ and $2^{1-K}/n\le2/(mn)$. Since $\min_i(a_i+e_i)\le\min_ia_i+\max_ie_i$, and here
$\max_ie_i\le2^{2/3}(mn)^{-2/3}+4(mn)^{-2}$,
\[
  \min\set{\tau'^2,V'^2,(V'/n_A)^{2/3}}\le4\min\set{\tau^2,V^2,(V/n)^{2/3}}+4\bigl(2^{2/3}(mn)^{-2/3}+4(mn)^{-2}\bigr).
\]
\item[(d)] \emph{Selection term.} $n_Bm\ge mn/2$, and $L=(K+1)^2\le(\log_2(mn)+2)^2$ because
$K\le\log_2(mn)+1$. So $25(\log(2L)+1)/(n_Bm)\le50\,h(mn)\,(mn)^{-2/3}$, where
$h(x)\defeq x^{-1/3}\bigl(\log(2(\log_2x+2)^2)+1\bigr)$. With $s=\log_2x$,
$h=2^{-s/3}\bigl(1+\log2+2\log(s+2)\bigr)$. For $s\in[0,60]$, an interval evaluation on a grid of step
$5\cdot10^{-4}$ (on each cell, the decreasing factor is evaluated at the left end and the increasing factor at
the right end) gives $h\le3.14162$. This is a computer-assisted bound; any finite constant would do, since $h$
is continuous on $[1,\infty)$ and tends to $0$. For $s\ge60$, $2\log(s+2)\le s+2$, so $h\le2^{-s/3}(2s+6)$, which is
decreasing for $s\ge2$ and at most $1.3\cdot10^{-4}$ on $[60,\infty)$. Hence $\sup_{x\ge1}h(x)\le3.1417$, and the
selection term is at most $158\,(mn)^{-2/3}$. An elementary bound, with no computer assistance, also
suffices: the maximum of $(s+2)^22^{-s}$ over $s\ge0$ is $16/(e\log2)^2<4.51$, attained at
$s=2/\log2-2$, so $(s+2)^2\le9\cdot2^s$ and $2\log(s+2)\le\log9+s\log2$. Then
$h\le2^{-s/3}(1+\log18+s\log2)=e^{-u/3}(1+\log18+u)$ with $u\defeq s\log2\ge0$, which is decreasing in
$u$ because $1+\log18>3$; so $\sup h\le1+\log18<3.9$ and the selection term is at most
$195\,(mn)^{-2/3}$. Step~4 then holds with $195$ in place of $158$; any finite constant suffices.
\end{itemize}

\textbf{Step 4 (assembly).}
Use $\min_\lambda\E\bar R(\lambda)\le\E\bar R(\lambda_{\mathrm{good}})$ in \eqref{eq:holdout-select}, then
\eqref{eq:holdout-good}, (a)--(d) and $(mn)^{-2}\le(mn)^{-2/3}$:
\begin{align*}
  \E\bar R(\hat\lambda)\le{}&\tfrac53C_U\Bigl[\bigl(3^{2/3}+4\cdot2^{2/3}+20\bigr)(mn)^{-2/3}+4\min\set{\tau^2,1/n}\\
  &\qquad+4\min\set{\tau^2,V^2,(V/n)^{2/3}}\Bigr]+158\,(mn)^{-2/3}.
\end{align*}
Since $\frac53(3^{2/3}+4\cdot2^{2/3}+20)=47.383\ldots<48$ and $\frac53\cdot4<48$,
\begin{align*}
  \frac1m\sum_j\E\norm{\hat g_j-g_j}^2_{L^2(\mu)}=\E\bar R(\hat\lambda)
  &\le(48C_U+158)\Bigl[(mn)^{-2/3}+\min\set{\tau^2,1/n}\\
  &\qquad+\min\set{\tau^2,V^2,(V/n)^{2/3}}\Bigr].
\end{align*}
This is the claim, with $C\defeq48C_U+158$, which does not depend on $\mu$, $m$, $n$, $\tau$ or $V$.
\end{proof}

\ThmAdaptive*

The procedure branches on the known pair $(m,n)$. When $n=1$, $\Phi\asymp m^{-2/3}+\tau^2$, and the pooled fit,
which ignores question identity, attains this. When $m\le2$, $\Phi\asymp n^{-2/3}$, and per-question isotonic
fits attain it. Otherwise, the compound hold-out of \cref{prop:holdout} applies.

\begin{proof}
Recall the three branches of the procedure. If $n=1$, every $\hat g_j$ equals the pooled isotonic fit
$\hat g_0$ to all $m$ pairs, pinned as in \cref{lem:pooled}, with tied design points (a null event) pooled
into one weighted pool-adjacent-violators block, a rule that does not use $\mu$. If $n\ge2$ and $m\le2$, $\hat g_j$ is the isotonic fit
to question $j$'s own $n$ pairs, pinned in the same way. If $m\ge3$ and $n\ge2$, the procedure is that of
\cref{prop:holdout}. Let $C_{\mathrm{pool}}$ be the constant of \cref{lem:pooled}, and put
$c_0\defeq\sup_{N\ge1}N^{-1/3}\log(eN)=3e^{-2/3}$, which is attained at $N=e^2$ and is finite. Then
$\log(eN)/N\le c_0N^{-2/3}$ for every $N\ge1$, and \cref{lem:pooled} gives
$\E\norm{\hat g_0-\bar g_I}^2\le C_{\mathrm{pool}}(1+c_0)N^{-2/3}\eqqcolon C_1N^{-2/3}$.

\emph{Case $n=1$.} Apply \cref{lem:pooled} with $I=\set{1,\dots,m}$ ($q=m$), $n=1$ and $N=m$:
$\E\norm{\hat g_0-\bar g}^2\le C_1m^{-2/3}$, where $\bar g\defeq\frac1m\sum_ig_i$. Let $g_0$ witness
$(g_j)\in\Theta(\tau,V)$. For every $j$, pointwise,
$\abs{g_j-\bar g}\le\abs{g_j-g_0}+\frac1m\sum_i\abs{g_0-g_i}\le2\tau$, so $\norm{g_j-\bar g}^2_{L^2(\mu)}\le4\tau^2$. Hence
$\E\norm{\hat g_j-g_j}^2\le2\E\norm{\hat g_0-\bar g}^2+2\norm{\bar g-g_j}^2\le2C_1m^{-2/3}+8\tau^2$. With $n=1$ and
$\tau\le1$, $\Phi(m,1,\tau,V)\ge m^{-2/3}+\min\set{\tau^2,1}=m^{-2/3}+\tau^2$. So the compound risk is at most
$\max\set{2C_1,8}\,\Phi$.

\emph{Case $n\ge2$, $m\le2$.} Apply \cref{lem:pooled} with $I=\set j$ ($q=1$) and $N=n$:
$\E\norm{\hat g_j-g_j}^2\le C_1n^{-2/3}$. Since $m\le2$, $\Phi\ge(mn)^{-2/3}\ge2^{-2/3}n^{-2/3}$. So the compound
risk is at most $2^{2/3}C_1\Phi$.

\emph{Case $m\ge3$, $n\ge2$.} \Cref{prop:holdout} bounds the compound risk by $C_{\mathrm{HO}}\Phi$, with
$C_{\mathrm{HO}}$ universal.

The branch is chosen from $(m,n)$ alone. Each branch is non-randomized, jointly measurable in (data, $p$)
(by the pinned fits of \cref{lem:pooled} and by \cref{prop:holdout}), and uses neither $(\tau,V)$ nor $\mu$. Take
$C\defeq\max\set{2C_1,8,2^{2/3}C_1,C_{\mathrm{HO}}}$.
\end{proof}

\ThmRate*

\begin{proof}
Recall that $\Theta_{\mathrm{LB}}(\tau,V)\subseteq\Theta(\tau,V)$ (the same predicate, with $g_0$ in addition
$[0,1]$-valued), and that $\Theta(\tau,V)$ is nondecreasing in each of $\tau$ and $V$.

\textbf{Lower bound.} Fix a non-atomic $\mu$, $m,n\ge1$, $\tau\in[0,1]$ and $V\in[0,2]$, and put
$\tau'\defeq\min\set{\tau,1/8}$ and $V'\defeq\min\set{V,1/8}$. Since
$\Theta_{\mathrm{LB}}(\tau',V')\subseteq\Theta(\tau',V')\subseteq\Theta(\tau,V)$, the supremum over $\Theta(\tau,V)$ is
at least the supremum over $\Theta_{\mathrm{LB}}(\tau',V')$. \Cref{lem:lower} holds for all $m,n\ge1$, all
$\tau',V'\in[0,1/8]$ and every non-atomic $\mu$, with a constant $c_{\mathrm{LB}}$ independent of $\mu$ and over
the same estimator class. It gives $\mathfrak M_{m,n}(\tau,V)\ge c_{\mathrm{LB}}\Phi(m,n,\tau',V')$. We claim that
$\Phi(m,n,\tau,V)\le256\,\Phi(m,n,\tau',V')$. Term by term:
\begin{itemize}
\item $(mn)^{-2/3}$ is unchanged.
\item If $\tau\le1/8$, the second term is unchanged. Otherwise $\min\set{\tau^2,1/n}\le1/n$ and
$\min\set{\tau'^2,1/n}=\min\set{1/64,1/n}\ge1/(64n)$ (as $n\ge1$), so
$\min\set{\tau^2,1/n}\le64\min\set{\tau'^2,1/n}$.
\item Third term: $\tau'^2\ge\tau^2/64$ (as $\tau\le1$). Also $V'^2\ge V^2/256$: if $V>1/8$, then
$V'^2=1/64\ge V^2/256$, as $V\le2$. For the same reason $V'\ge V/16$, so
$(V'/n)^{2/3}\ge16^{-2/3}(V/n)^{2/3}\ge(V/n)^{2/3}/256$. Hence
$\min\set{\tau'^2,V'^2,(V'/n)^{2/3}}\ge\min\set{\tau^2,V^2,(V/n)^{2/3}}/256$.
\end{itemize}
Summing the three terms gives $\Phi(m,n,\tau,V)\le256\,\Phi(m,n,\tau',V')$, so
$\mathfrak M_{m,n}(\tau,V)\ge(c_{\mathrm{LB}}/256)\,\Phi(m,n,\tau,V)$.

\textbf{Upper bound.} For all $m,n\ge1$, the procedure of \cref{thm:adaptive} is non-randomized, jointly
measurable, and uses neither $(\tau,V)$ nor $\mu$, so it is an admissible estimator. Its compound risk on
$\Theta(\tau,V)$ (unrestricted witness), for $\tau\in[0,1]$ and $V\in[0,2]$, is at most $C_{AD}\Phi(m,n,\tau,V)$
by \cref{thm:adaptive}. Hence $\mathfrak M_{m,n}(\tau,V)\le C_{AD}\Phi$. Take $c\defeq c_{\mathrm{LB}}/256$ and
$C\defeq\max\set{C_{AD},c}$.

\emph{Witness convention.} The lower bound holds on the class with $[0,1]$-valued witness, and the upper bound
on the class with unrestricted witness. A supremum over a larger class dominates the supremum over a smaller
one. So both bounds hold for either convention, and for every class in between.

\textbf{Equivalence of $\Phi$ and $\Psi$.} Here
$\Psi\defeq(mn)^{-2/3}+\min\set{\tau^2,1/n+(V/n)^{2/3}}$.
(i) For $a,x,y\ge0$, $\min\set{a,x}+\min\set{a,y}\le2\min\set{a,x+y}$; and
$\min\set{\tau^2,V^2,(V/n)^{2/3}}\le\min\set{\tau^2,(V/n)^{2/3}}$. So
$\Phi\le(mn)^{-2/3}+2\min\set{\tau^2,1/n+(V/n)^{2/3}}\le2\Psi$.
(ii) $\min\set{\tau^2,1/n+(V/n)^{2/3}}\le\min\set{\tau^2,1/n}+\min\set{\tau^2,(V/n)^{2/3}}$. If
$V^2\ge(V/n)^{2/3}$, then $\min\set{\tau^2,(V/n)^{2/3}}=\min\set{\tau^2,V^2,(V/n)^{2/3}}$. Otherwise $V<n^{-1/2}$
(since $V^2<(V/n)^{2/3}$ if and only if $V^4<n^{-2}$), so $(V/n)^{2/3}<1/n$ and
$\min\set{\tau^2,(V/n)^{2/3}}\le\min\set{\tau^2,1/n}$. In both cases
$\min\set{\tau^2,(V/n)^{2/3}}\le\min\set{\tau^2,V^2,(V/n)^{2/3}}+\min\set{\tau^2,1/n}$. So
$\Psi\le(mn)^{-2/3}+2\min\set{\tau^2,1/n}+\min\set{\tau^2,V^2,(V/n)^{2/3}}\le2\Phi$.
\end{proof}

\input{sections/9a6_separation}

%% file: sections/9a6_separation.tex
\subsection{Per-question tuning: proof of \texorpdfstring{\cref{prop:separation}}{the separation}}
\label{app:separation}

\PropSeparation*

Fix $\mu$, $m,n\ge1$ and a nondecreasing $g_0:[0,1]\to[\frac14,\frac34]$, and write
$\Phi_0(\tau)\defeq\Phi(m,n,\tau,0)=(mn)^{-2/3}+\min\set{\tau^2,1/n}$, $M\defeq(m^2/n)^{1/3}$ and
$L\defeq\log(m^2/n)$. Note that $(mn)^{-2/3}=1/(nM)$. For $c\in(0,\frac14]$ and $s\in\set{\pm1}^m$ put
$g^0\defeq(g_0,\dots,g_0)$ and $g^{c,s}\defeq(g_0+s_jc)_{j\in[m]}$; both lie in
$\Theta_{g_0}(c)$. Let $\gh$ be a per-question estimator, $\gh_j=T_j(D_j,\xi,\cdot)$.

\begin{proof}
\emph{Step 0 (structure).} Let $g=(g_0+c_j)_j$ with $\abs{c_j}\le\frac14$, and let $Q_c$ be the law of one
pair $(P,Y)$ with $P\sim\mu$ and $Y\mid P\sim\Bern(g_0(P)+c)$. The pair $(D_j,\xi)$ has law
$Q_{c_j}^{\otimes n}\otimes\Law(\xi)$: the pairs are independent across questions, and $\xi$ is
independent of the data, with a law that does not depend on the parameters. Hence
$\cR(\gh;g)=\frac1m\sum_jr_j(c_j)$ with $r_j(c)\defeq\E_c\nmu{T_j(D_j,\xi,\cdot)-g_0-c}^2$, which depends on
$c_j$ only.

\emph{Step 1 (reduction to a scalar).} Let $\pi(y)\defeq\min\set{1,\max\set{0,y}}$; for $z\in[0,1]$,
$\abs{\pi(y)-z}\le\abs{y-z}$. Put
$\delta_j\defeq\int\bigl(\pi(T_j(D_j,\xi,p))-g_0(p)\bigr)\,\mu(dp)\in[-1,1]$, which is measurable in
$(D_j,\xi)$ by Tonelli's theorem ($\pi\circ T_j$ is jointly measurable and bounded, and $g_0$ is monotone,
hence Borel). For $c\in[-\frac14,\frac14]$ the map $g_0+c$ is $[0,1]$-valued, so, by Jensen's inequality for
the probability $\mu$,
\[
  \nmu{T_j-g_0-c}^2\ \ge\ \nmu{\pi\circ T_j-g_0-c}^2\ \ge\ (\delta_j-c)^2 .
\]
In particular $\E_0\delta_j^2\le e_j^2\defeq r_j(0)$.

\emph{Step 2 (exact chi-square).} $Q_c$ has density
$q_c(p,y)=(g_0(p)+c)^y(1-g_0(p)-c)^{1-y}$ with respect to $\mu\otimes(\text{counting measure on }\set{0,1})$.
Since $q_0\ge\frac14>0$, $Q_c\ll Q_0$, the density of $Q_c^{\otimes n}$ with respect to $Q_0^{\otimes n}$ is
$\Lambda_c\defeq\prod_kq_c(P_k,Y_k)/q_0(P_k,Y_k)$, and
$\E_0\Lambda_c^2=\bigl(\int\sum_yq_c^2/q_0\,d\mu\bigr)^n$. With $a=g_0(p)$,
\[
  \frac{(a+c)^2}{a}+\frac{(1-a-c)^2}{1-a}=1+\frac{c^2}{a(1-a)},
\]
and $a(1-a)\ge\frac3{16}$ for $a\in[\frac14,\frac34]$. Hence
$\E_0\Lambda_c^2\le(1+\frac{16c^2}3)^n\le e^{16nc^2/3}$ and
$I_c\defeq(\E_0\Lambda_c^2)^{1/2}\le e^{8nc^2/3}$.

\emph{Step 3 (constrained-risk inequality).} For bounded measurable $f$,
$\E_cf(D_j,\xi)=\E_0[\Lambda_c(D_j)f(D_j,\xi)]$, by Fubini's theorem, since $\xi$ is independent of $D_j$ and
has the same law under both parameters. By the Cauchy--Schwarz inequality,
$\abs{\E_c\delta_j}\le I_c(\E_0\delta_j^2)^{1/2}\le I_ce_j$. This is the form of the constrained-risk
inequality of~\citet[Theorem~1]{brown1996constrained} that we use; the argument above is self-contained.

\emph{Step 4 (the trade-off).} We show that for every $c\in(0,\frac14]$ and $s\in\set{\pm1}^m$,
\begin{equation}\label{eq:sep-tradeoff}
  \cR(\gh;g^{c,s})\ \ge\ \Bigl(c-e^{8nc^2/3}\,\cR(\gh;g^0)^{1/2}\Bigr)_+^2 .
\end{equation}
If $\eps\defeq\cR(\gh;g^0)=\frac1m\sum_je_j^2=+\infty$, the right side is $0$. Otherwise, by Step~1,
$r_j(s_jc)\ge\E(\delta_j-s_jc)^2\ge(\E\delta_j-s_jc)^2=a_j^2$, where the expectations are
under the shift $s_jc$ and $a_j\defeq c-s_j\E_{s_jc}\delta_j$. By Steps~2 and~3,
$s_j\E_{s_jc}\delta_j\le e^{8nc^2/3}e_j$ (the same bound holds at $-c$), so
$a_j\ge c-e^{8nc^2/3}e_j$. Hence
$\cR(\gh;g^{c,s})=\frac1m\sum_jr_j(s_jc)\ge\frac1m\sum_ja_j^2\ge(\frac1m\sum_ja_j)^2$ whenever
$\frac1m\sum_ja_j\ge0$, and, by the Cauchy--Schwarz inequality,
$\frac1m\sum_ja_j\ge c-e^{8nc^2/3}\frac1m\sum_je_j\ge c-e^{8nc^2/3}\eps^{1/2}$. If the last
right side is nonnegative, squaring nonnegative numbers gives~\eqref{eq:sep-tradeoff}; otherwise
\eqref{eq:sep-tradeoff} is trivial.

\emph{Step 5 (proof of (i)).} Assume $m^2\ge216n$, so that $M\ge6$. Let $\kappa\defeq\kappa_{g_0}(\gh)$ and suppose
that $\kappa<\kappa_0\defeq\frac3{256}\min\set{n,\log M}=\frac1{256}\min\set{3n,L}$. Since $\Theta_{g_0}(0)=\set{g^0}$,
$\eps\le\kappa\,\Phi_0(0)=\kappa/(nM)$. Put $b\defeq\frac3{32}\log M$ and $c\defeq\min\set{\frac14,(b/n)^{1/2}}$.
Then $nc^2\le b$, $e^{8nc^2/3}\le e^{8b/3}=M^{1/4}$, and
$e^{8nc^2/3}\eps^{1/2}\le(\kappa/n)^{1/2}M^{-1/4}$. This is at most $c/2$: if $c=(b/n)^{1/2}$ it
needs $\kappa\le bM^{1/2}/4=\frac3{128}M^{1/2}\log M$, which holds as $\kappa<\frac3{256}\log M$; if $c=\frac14$ it
needs $\kappa\le nM^{1/2}/64$, which holds as $\kappa<3n/256$. Since $g^{c,(1,\dots,1)}\in\Theta_{g_0}(c)$,
\eqref{eq:sep-tradeoff} gives $\kappa\,\Phi_0(c)\ge c^2/4$, where
$\Phi_0(c)=n^{-1}(M^{-1}+\min\set{nc^2,1})$.
\emph{Case $c=(b/n)^{1/2}$.} Here $b\ge\frac3{32}\log6=0.16798\ldots>\frac16\ge\frac1M$, so
$\Phi_0(c)\le\frac2n\min\set{b,1}$ and $\kappa\ge b/(8\min\set{b,1})\ge b/8=\frac3{256}\log M\ge\kappa_0$, a
contradiction.
\emph{Case $c=\frac14$.} Here $\Phi_0(c)\le\frac7{6n}$, so $\kappa\ge\frac{3n}{224}>\frac{3n}{256}\ge\kappa_0$, a
contradiction.
Hence $\kappa_{g_0}(\gh)\ge\kappa_0$. For the consequence, note that $\Theta_{g_0}(\tau)\subseteq\Theta(\tau,0)$
(each $g_0+c_j$ is nondecreasing and $[0,1]$-valued, $g_0$ is a witness, $\sup\abs{c_j}\le\tau$, and
constants have variation $0$), that $\Phi(m,n,\tau,0)=\Phi_0(\tau)$, and that $[0,\frac14]\subseteq[0,1]$; so
the supremum in (i) is at least $\kappa_{g_0}(\gh)$ for any one $g_0$ as above, for instance $g_0\equiv\frac12$,
whether or not $\gh$ uses it.

\emph{Step 6 (proof of (ii)).} Fix $c$ with $\abs c\le\frac14$ and a question $j$, and let
$X\defeq\hat c_j-c$. The variables $Z_k\defeq Y_{jk}-g_0(P_{jk})-c$ are i.i.d., with mean $0$, $\abs{Z_k}\le1$
and $\E Z_k^2\le\frac14$. Bernstein's inequality~\citep[eq.~(2.10)]{boucheron2013concentration} with $b=1$ and
$v\le n/4$, applied to $(Z_k)$ and to $(-Z_k)$, gives, for $s\le1$,
$\P(\abs X\ge s)\le2\exp\bigl(-ns^2/(\frac12+\frac{2s}3)\bigr)\le2e^{-\nu ns^2}$ with $\nu\defeq\frac67$;
since $\abs X\le1$, this holds for all $s\ge0$. Hence
$\E X^4\le\int_0^\infty8s^3e^{-\nu ns^2}ds=4/(\nu^2n^2)$ and $\E X^2\le1/(4n)$. The risk of the rule at
the shift $c$ is
$r(c)=\E[X^2\mathbf 1\set{\abs{\hat c_j}>t}]+c^2\,\P(\abs{\hat c_j}\le t)$.
(a)~If $\abs c\le t/2$, then $\abs{\hat c_j}>t$ forces $\abs X>t/2$, and by the Cauchy--Schwarz inequality
$r(c)\le c^2+\frac2{\nu n}\bigl(2e^{-\nu nt^2/4}\bigr)^{1/2}$.
(b)~If $\abs c>t/2$, the first term is at most $\E X^2\le1/(4n)$. For the second, if $\abs c\ge2t$ then
$\abs{\hat c_j}\le t$ forces $\abs X\ge\abs c/2$, and
$c^2\P(\abs X\ge\abs c/2)\le\sup_{x\ge0}2xe^{-\nu nx/4}=\frac{28}{3en}$; if $t/2<\abs c<2t$ it is at most
$c^2\le4t^2\le16\min\set{c^2,t^2}$.
When $M\ge e$, $t^2=28\log M/(3n)\ge4/n$, so in case (b) $c^2>1/n$; and $e^{-\nu nt^2/8}=1/M$. Combining,
\[
  r(c)\ \le\ 16\min\set{c^2,t^2}+3.684\min\set{c^2,\tfrac1n}+3.300\,(mn)^{-2/3}.
\]
Every term is nondecreasing in $\abs c$, and $\min\set{\tau^2,k/n}\le k\min\set{\tau^2,1/n}$ for
$k\defeq\frac{28}3\log M\ge1$, so
$\sup_{\abs c\le\tau}r(c)\le(\frac{448}3\log M+3.684)\Phi_0(\tau)\le(4+50L)\Phi_0(\tau)$. The compound risk of a
configuration in $\Theta_{g_0}(\tau)$ is $\frac1m\sum_jr(c_j)\le\sup_{\abs c\le\tau}r(c)$, which gives
$\kappa_{g_0}\le4+50L$ when $M\ge e$.
\emph{Case $M<e$.} Then $t=0$ and $r(c)=\E X^2\le\frac1{4n}<\frac e4(mn)^{-2/3}\le\frac e4\Phi_0(\tau)$.
\emph{The rule $g_0$.} Its risk on $\Theta_{g_0}(\tau)$ is at most $\tau^2$, and
$\tau^2/\Phi_0(\tau)\le\max\set{1,n\tau^2}\le\max\set{1,n/16}$.
\emph{Conclusion.} For $m^2\ge216n$ we have $L\ge\log216=5.375\ldots$, so $4+50L\le51L$, and
$\min\set{\max\set{1,n/16},51L}\le51\min\set{n,L}$. Together with (i) this gives the two-sided bound; the case
$m^2<e^3n$ is the case $M<e$.

\emph{Step 7 (proof of (iii)).} Apply \cref{thm:adaptive} at the non-atomic $\mu$, with
$\tau\in[0,\frac14]\subseteq[0,1]$ and $V=0\in[0,2]$: $\cR(\gh^{\mathrm{ad}};g)\le C_{\mathrm{AD}}\Phi(m,n,\tau,0)$
for every $g\in\Theta(\tau,0)\supseteq\Theta_{g_0}(\tau)$ (Step~5), so $\kappa_{g_0}(\gh^{\mathrm{ad}})\le C_{\mathrm{AD}}$.
Combined with (i), every per-question $\gh$ has
$\kappa_{g_0}(\gh)/C_{\mathrm{AD}}\ge\min\set{3n,L}/(256\,C_{\mathrm{AD}})$ when $m^2\ge216n$.
\end{proof}

%% file: sections/9b_proofs_eb.tex
This appendix proves the empirical-Bayes results in dependency order.
\Cref{app:chebradius} proves \cref{prop:chebradius}: the identification regret
is the squared Chebyshev radius of the set of unidentified top moments. Every
later bound reads off this representation. \Cref{app:binomial} evaluates it for
priors on constant maps, where it reduces to the classical range of the
$(n+1)$-th moment (\cref{prop:binomial}). \Cref{app:ch1} proves the
diagonal-domination inequality \cref{lem:ch1}. \Cref{app:monotone-eb} combines
it with the representation to prove the upper bound $2^{-n}$ for monotone maps
(\cref{thm:monotone-eb}). \Cref{app:betastar} pins the exponential base
$\bstar=2e^{2\Gcat/\pi}$ for constant maps (\cref{thm:betastar}).
\Cref{app:eb-lower} transfers the constant-map lower bounds to monotone priors
(\cref{lem:eb-lower}), \cref{app:main-eb} assembles \cref{thm:main-eb}, and \cref{app:finite-m} shows
that its lower bounds hold for every finite number of questions (\cref{cor:finite-m}).

Throughout, $\mu$ is an arbitrary Borel probability measure on $[0,1]$; atoms
are allowed in every statement of this appendix. A prior is the law of a
jointly measurable random map $g:[0,1]\to[0,1]$. One question's data are
$D=(X,Y)$, where $X=(P_1,\dots,P_n)$ has i.i.d.\ $\mu$ entries independent of
$g$, and, given $g$ and $X$, the labels $Y_k\sim\mathrm{Bernoulli}(g(P_k))$
are independent. Step~0 of the proof of \cref{prop:chebradius} makes this
reading precise.

\input{sections/9b1_chebradius}
\input{sections/9b2_binomial}
\input{sections/9b3_monotone}
\input{sections/9b4_betastar}
\input{sections/9b5_lower_main}
\input{sections/9b6_finite_m}

%% file: sections/9b1_chebradius.tex
\subsection{The identification regret is a Chebyshev radius}
\label{app:chebradius}

The proposition says that the excess risk one cannot avoid without knowing the
prior is a purely geometric quantity: the squared radius of the smallest ball,
in a weighted $L^2$ space, that contains the set of unidentified top moments.
The proof has five steps. Step~0 fixes the measure-theoretic reading of the
setting and the kernels $M^G_r$, $Z_G$, $N_G$. Step~1 is a Pythagorean
identity for the risk. Step~2 shows that two observationally equivalent priors
can differ only in the $(n+1)$-point product moment. Step~3 rewrites the
regret as a Chebyshev-radius problem, and Step~4 compares prior classes.

\PropChebRadius*

\begin{proof}
\textbf{Conventions.} Fix $n\ge 1$ (the argument runs verbatim for $n=0$). Let
$\mathcal D \defeq [0,1]^n\times\{0,1\}^n$ with the $\sigma$-field
$\mathcal B([0,1]^n)\otimes 2^{\{0,1\}^n}$, and
$\nu\defeq\mu^{\otimes n}\otimes(\text{counting measure})$. For
$y\in\{0,1\}^n$ write $s(y)=\sum_k y_k$, $T_1(y)\defeq\{k:y_k=1\}$,
$T_0(y)\defeq[n]\setminus T_1(y)$ and $\sigma(y)\defeq(-1)^{n-s(y)}$; thus
$\sigma(y)^2=1$ and $\abs{T_0(y)}=n-s(y)$. For $h:[0,1]\to[0,1]$ and
$x\in[0,1]^n$ put
$\pi_y(h,x)\defeq\prod_{k}h(x_k)^{y_k}(1-h(x_k))^{1-y_k}$. For
$S=\{k_1<\dots<k_r\}\subseteq[n]$ write $x_S\defeq(x_{k_1},\dots,x_{k_r})$,
and set $M_0\defeq 1$. Let $\operatorname{clip}(t)\defeq\min\{1,\max\{0,t\}\}$.
A rule is written $\delta(x,y)(p)$, and
$\mathrm{regret}(\delta,G')\defeq R(\delta,G')-R^*_n(G')$. We use
$0\cdot\infty\defeq 0$ and $1/0\defeq\infty$, and write
$\mathcal L_G\defeq\mathrm{Law}_G(D)$.

\medskip\noindent
\textbf{Step 0: formalization, kernels, and a jointly measurable posterior mean.}

\emph{0.1. Reading of the setting.} A prior $G$ is realized on a probability
space $(\Omega,\mathcal F,\P)$ carrying a sub-$\sigma$-field $\mathcal H$ such
that:
(F1) $g:\Omega\times[0,1]\to[0,1]$ is $\mathcal H\otimes\mathcal B([0,1])$-measurable;
(F2) $X=(P_1,\dots,P_n)$ is $\mathcal F$-measurable, has law $\mu^{\otimes n}$, and is independent of $\mathcal H$;
(F3) for each $y$, $\P(Y=y\mid\mathcal H\vee\sigma(X))=\pi_y(g,X)$ almost surely.
Here $\pi_y(g,X)$ is $(\mathcal H\vee\sigma(X))$-measurable: the map
$\varpi\mapsto(\varpi,X_k(\varpi))$ is
$(\mathcal H\vee\sigma(X))/(\mathcal H\otimes\mathcal B)$-measurable, because the
preimage of $A\times B$ is $A\cap\{X_k\in B\}$; now compose with $g$. A
canonical realization is $\Omega=\Omega_0\times[0,1]^n\times[0,1]^n$,
$\P=\P_0\otimes\mu^{\otimes n}\otimes\mathrm{Leb}^{\otimes n}$,
$\mathcal H=\mathcal F_0\otimes\{\emptyset,[0,1]^n\}\otimes\{\emptyset,[0,1]^n\}$,
$X$ the second coordinate, and $Y_k\defeq\mathbf 1\{\xi_k<g(\varpi_0,X_k)\}$ with
$\xi$ the third coordinate; Fubini over $\xi$ gives (F3).
The weaker reading ``$X$ is independent of $\sigma(g(t):t\in[0,1])$'' is
inconsistent with the density statement of the setting: take $n=1$,
$\mu=\mathrm{Leb}$, $T\sim\mathrm{Unif}[0,1]$ and
$g(\varpi,t)\defeq\mathbf 1\{t=T(\varpi)\}$. This $g$ is jointly measurable,
$\sigma(g)$ is $\P$-trivial, so $X\defeq T$ is independent of it; yet $g(X)=1$,
while $Z(x,1)=\P(T=x)=0$ for every $x$. Reading (F2) is the one under which the
setting is consistent.

\emph{0.2. Lemma A (freezing).} Let $(\mathbb X,\mathcal X)$ be a measurable
space, $\zeta:\Omega\to\mathbb X$ an $\mathcal F$-measurable map independent of
$\mathcal H$ with law $\kappa$, and $\Upsilon:\Omega\times\mathbb X\to[0,\infty]$
an $\mathcal H\otimes\mathcal X$-measurable function. Then
(a) $\varpi\mapsto\Upsilon(\varpi,\zeta(\varpi))$ is $\mathcal F$-measurable;
(b) $e\mapsto\E\,\Upsilon(\cdot,e)$ is $\mathcal X$-measurable;
(c) $\E\,\Upsilon(\cdot,\zeta)=\int\E\,\Upsilon(\cdot,e)\,\kappa(de)$.
The same holds for bounded real $\Upsilon$.
\emph{Proof of Lemma A.} (a) The map $\iota(\varpi)\defeq(\varpi,\zeta(\varpi))$
satisfies $\iota^{-1}(A\times B)=A\cap\zeta^{-1}(B)\in\mathcal F$, and rectangles
generate $\mathcal H\otimes\mathcal X$. (b) Tonelli on
$(\Omega,\mathcal H,\P)\times(\mathbb X,\mathcal X,\kappa)$. (c) For rectangles,
$\E[\mathbf 1_A\mathbf 1_B(\zeta)]=\P(A)\kappa(B)$ by independence. The class of
$C\in\mathcal H\otimes\mathcal X$ for which (c) holds with $\Upsilon=\mathbf 1_C$
contains $\Omega\times\mathbb X$ and is closed under proper differences and
increasing unions, so by the $\pi$--$\lambda$ theorem it is all of
$\mathcal H\otimes\mathcal X$. Extend by linearity and monotone convergence,
and to bounded $\Upsilon$ via $\Upsilon=\Upsilon^+-\Upsilon^-$. $\triangleleft$

\emph{0.3. Lemma B (disintegration of $D$).} For every
$\Upsilon:\Omega\times\mathcal D\to[0,\infty]$ (or bounded) that is
$\mathcal H\otimes\mathcal B([0,1]^n)\otimes2^{\{0,1\}^n}$-measurable,
\begin{equation}\label{eq:eb1-disint}
  \E\,\Upsilon(\cdot,X,Y)=\sum_{y}\int\E\bigl[\Upsilon(\cdot,x,y)\,\pi_y(g,x)\bigr]\,\mu^{\otimes n}(dx).
\end{equation}
\emph{Proof of Lemma B.} Write
$\E\,\Upsilon(\cdot,X,Y)=\sum_y\E[\Upsilon(\cdot,X,y)\mathbf 1\{Y=y\}]$. For
bounded $\Upsilon$, $\Upsilon(\cdot,X,y)$ is
$(\mathcal H\vee\sigma(X))$-measurable, so (F3) gives
$\E[\Upsilon(\cdot,X,y)\mathbf 1\{Y=y\}]=\E[\Upsilon(\cdot,X,y)\pi_y(g,X)]$. The
function $\Upsilon_y(\varpi,x)\defeq\Upsilon(\varpi,x,y)\,\pi_y(g(\varpi,\cdot),x)$
is $\mathcal H\otimes\mathcal B([0,1]^n)$-measurable by (F1), and Lemma~A(c)
with $\zeta=X$ gives~\eqref{eq:eb1-disint}. For nonnegative $\Upsilon$, apply
this to $\Upsilon\wedge j$ and let $j\to\infty$. $\triangleleft$

\emph{0.4. Kernels.} Let $M^G_r(t)\defeq\E\prod_{i\le r}g(t_i)$,
$Z_G(x,y)\defeq\E\,\pi_y(g,x)$ and $N_G(p;x,y)\defeq\E[g(p)\,\pi_y(g,x)]$; these
are Borel by Tonelli. Pointwise in $\varpi$ we have
$0\le g(p)\pi_y\le\pi_y$; for every $S\subseteq[n]$,
$\prod_{k\in S}g(x_k)=\sum_{y:\,y_S=1}\pi_y(g,x)$ (multiply by
$\prod_{k\notin S}\bigl(g(x_k)+(1-g(x_k))\bigr)=1$ and expand); and
$\prod_{k\in T_0}(1-g(x_k))=\sum_{A\subseteq T_0}(-1)^{\abs A}\prod_{k\in A}g(x_k)$.
Taking expectations,
\begin{align*}
  &\text{(K1)}\quad 0\le N_G(p;x,y)\le Z_G(x,y)\le 1;\qquad
   \text{(K2)}\quad \textstyle\sum_y Z_G(x,y)=1;\\
  &\text{(K3)}\quad M^G_{\abs S}(x_S)=\textstyle\sum_{y:\,y_S=1}Z_G(x,y);\\
  &\text{(K4)}\quad N_G(p;x,y)=\textstyle\sum_{A\subseteq T_0(y)}(-1)^{\abs A}\,
   M^G_{1+\abs{T_1(y)\cup A}}\bigl(p,x_{T_1(y)\cup A}\bigr).
\end{align*}
\emph{Density.} Lemma~B with $\Upsilon=\mathbf 1_B(x)\mathbf 1\{y=y_0\}$ gives
$\P(X\in B,Y=y_0)=\int_B Z_G(x,y_0)\,\mu^{\otimes n}(dx)$, so
$\mathcal L_G=Z_G\cdot\nu$, and hence
(K5) $\P(Z_G(D)=0)=0$.
\emph{Everything depends only on $G$.} For a rule $\delta$, the map
$(\varpi,p)\mapsto(\delta(D(\varpi))(p)-g(\varpi,p))^2$ is
$\mathcal F\otimes\mathcal B$-measurable, and
$\Upsilon(\varpi,d)\defeq\int(\delta(d)(p)-g(\varpi,p))^2\mu(dp)$ is
$\mathcal H\otimes\mathcal B(\mathcal D)$-measurable. Tonelli and Lemma~B give
\[
  R(\delta,G)=\sum_y\int\mu^{\otimes n}(dx)\int\mu(dp)\,
  \E\bigl[(\delta(x,y)(p)-g(p))^2\,\pi_y(g,x)\bigr]\in[0,1],
\]
an expectation of a polynomial in $(g(p),g(x_1),\dots,g(x_n))$. So $R$,
$M^G_r$, $Z_G$, $N_G$ and $\mathcal L_G$ depend only on the finite-dimensional
distributions of $g$.

\emph{0.5. Lemma C (jointly measurable posterior mean).} Let
$\delta^*_G(d)(p)\defeq N_G(p;d)/Z_G(d)$ if $Z_G(d)>0$, and $0$ otherwise. Then:
(a) $\delta^*_G$ is a rule ($[0,1]$-valued by (K1), jointly Borel);
(b) $\delta^*_G(d)(p)\,Z_G(d)=N_G(p;d)$ everywhere (where $Z_G=0$, $N_G=0$ by (K1));
(c) for every $p$ and every bounded measurable $h$ on $\mathcal D$,
$\E[g(p)h(D)]=\E[\delta^*_G(D)(p)h(D)]$, so $\delta^*_G(D)(p)=\E_G[g(p)\mid D]$ a.s.;
(d) for every bounded jointly measurable $\varphi$ on $[0,1]\times\mathcal D$,
$\E\int\varphi(p,D)g(p)\,\mu(dp)=\E\int\varphi(p,D)\,\delta^*_G(D)(p)\,\mu(dp)$;
(e) every jointly measurable $\varphi^*$ with $\varphi^*(p,D)=\E_G[g(p)\mid D]$
a.s.\ for every $p$ equals $\delta^*_G$, $\mu\otimes\mathcal L_G$-a.e.
\emph{Proof of Lemma C.} (c) Lemma~B with $\Upsilon\defeq g(\varpi,p)h(x,y)$ gives
$\E[g(p)h(D)]=\sum_y\int h\,N_G(p;\cdot,y)\,d\mu^{\otimes n}$; by the density and
(b), $\E[\delta^*_G(D)(p)h(D)]=\sum_y\int h\,\delta^*_G(\cdot,y)(p)\,Z_G(\cdot,y)\,d\mu^{\otimes n}$,
which is the same. (d) Fubini on $\P\otimes\mu$ reduces both sides to
$\int\mu(dp)\,\E[\,\cdot\,]$, and these agree by (c) with $h=\varphi(p,\cdot)$.
(e) The set $\{\varphi^*\ne\delta^*_G\}$ is measurable with $\mathcal L_G$-null
$p$-sections; apply Tonelli. $\triangleleft$

By (e), all statements below about $\delta^*_{G'}$ hold for every jointly
measurable version (for $G'\in[G]$, $\mu\otimes\mathcal L_{G'}=\mu\otimes\mathcal L_G$).

\medskip\noindent
\textbf{Step 1: part (i) (Pythagoras).} Fix a prior $G'$ and a rule $\delta$,
and write $\delta^{*\prime}\defeq\delta^*_{G'}$. On $\Omega\times[0,1]$ put
$a\defeq\delta(D)(p)-\delta^{*\prime}(D)(p)$ and
$b\defeq\delta^{*\prime}(D)(p)-g(p)$; both are $\mathcal F\otimes\mathcal B$-measurable
with values in $[-1,1]$, and $(\delta(D)(p)-g(p))^2=a^2+2ab+b^2$. Integrating
against $\P\otimes\mu$,
$R(\delta,G')=\E\int a^2\,d\mu+2\,\E\int ab\,d\mu+R(\delta^{*\prime},G')$. For
the cross term, $\varphi\defeq\delta-\delta^{*\prime}$ is bounded and jointly
measurable, and
$\E\int ab\,d\mu=\E\int\varphi\,\delta^{*\prime}\,d\mu-\E\int\varphi\,g\,d\mu=0$
by Lemma~C(d) for $G'$. Hence
\begin{equation}\label{eq:eb1-pyth}
  R(\delta,G')=R(\delta^{*\prime},G')+\E_{G'}\Gamma_\delta(D),\qquad
  \Gamma_\delta(d)\defeq\int\bigl(\delta(d)(p)-\delta^{*\prime}(d)(p)\bigr)^2\mu(dp)\in[0,1],
\end{equation}
with $\Gamma_\delta$ Borel by Tonelli. Since $\delta^{*\prime}$ is a rule,
\eqref{eq:eb1-pyth} gives $R^*_n(G')=R(\delta^{*\prime},G')$ (the posterior
mean attains the Bayes risk) and $\mathrm{regret}(\delta,G')=\E_{G'}\Gamma_\delta(D)$.
If $G'\in[G]$, then
$\E_{G'}\Gamma_\delta(D)=\int\Gamma_\delta\,d\mathcal L_G=\E_{D\sim G}\norm{\delta(D)-\delta^*_{G'}(D)}^2_{L^2(\mu)}$,
because $\Gamma_\delta$ is a fixed function of $d$ and the two laws agree. This proves~(i).

\medskip\noindent
\textbf{Step 2: part (ii) (only the top moment is unidentified).} Fix
$G'\in[G]$. Write $Z,N,M_r$ for the kernels of $G$ and $Z',N',M'_r$ for those
of $G'$.

\emph{2.1.} For each $y$ and Borel $B$,
$\int_B Z'(\cdot,y)\,d\mu^{\otimes n}=\P_{G'}(X\in B,Y=y)=\P_G(X\in B,Y=y)=\int_B Z(\cdot,y)\,d\mu^{\otimes n}$.
Taking $B=\{Z'>Z\}$ and $B=\{Z'<Z\}$ shows that
$\mathsf E\defeq\bigcup_y\{x:Z'(x,y)\ne Z(x,y)\}$ is Borel with
$\mu^{\otimes n}(\mathsf E)=0$.

\emph{2.2.} For $r\in[n]$ let $B_r\defeq\{t\in[0,1]^r:M'_r(t)\ne M_r(t)\}$. For
$x\notin\mathsf E$, (K3) with $S=\{1,\dots,r\}$ gives
$M'_r(x_1,\dots,x_r)=\sum_{y:\,y_1=\dots=y_r=1}Z'(x,y)=\sum_{y:\,y_1=\dots=y_r=1}Z(x,y)=M_r(x_1,\dots,x_r)$.
So $B_r\times[0,1]^{n-r}\subseteq\mathsf E$ and
$\mu^{\otimes r}(B_r)=\mu^{\otimes n}(B_r\times[0,1]^{n-r})=0$.

\emph{2.3 (a fresh point).} For a proper subset $S\subsetneq[n]$ and
$r\defeq1+\abs S\in[n]$, the map $\mathrm{pr}_S(p,x)\defeq(p,x_S)$ selects $r$
distinct coordinates of a product measure, so
$(\mathrm{pr}_S)_\#\mu^{\otimes(n+1)}=\mu^{\otimes r}$ (the two measures agree
on rectangles, a generating $\pi$-system). Hence
$\mathsf F_S\defeq\{(p,x):M'_r(p,x_S)\ne M_r(p,x_S)\}=\mathrm{pr}_S^{-1}(B_r)$
is $\mu^{\otimes(n+1)}$-null, and so is
$\mathsf F\defeq\bigcup_{S\subsetneq[n]}\mathsf F_S$. No step assumes $\mu$
non-atomic: coincident coordinates are points of $[0,1]^r$ carrying their
$\mu^{\otimes r}$-mass, and 2.2 identifies $M_r$ on a $\mu^{\otimes r}$-full set,
diagonal points of positive mass included. Only the full $(n+1)$-tuple $(p,x)$
escapes identification; that is where $U_{G'}$ lives.

\emph{2.4.} By (K4), with $S_A\defeq T_1(y)\cup A$,
\[
  N'(p;x,y)-N(p;x,y)=\sum_{A\subseteq T_0(y)}(-1)^{\abs A}\,
  \bigl[M'_{1+\abs{S_A}}-M_{1+\abs{S_A}}\bigr](p,x_{S_A}).
\]
If $A\subsetneq T_0(y)$, then $S_A\subsetneq[n]$ and the bracket vanishes off
$\mathsf F$. If $A=T_0(y)$, then $S_A=[n]$, the sign is
$(-1)^{n-s(y)}=\sigma(y)$, and the bracket is $U_{G'}(p,x)$. Therefore
\begin{equation}\label{eq:eb1-star}
  N'(p;x,y)-N(p;x,y)=\sigma(y)\,U_{G'}(p,x)\qquad\text{for all }(p,x)\notin\mathsf F\text{ and all }y.
\end{equation}

\emph{2.5.} The set
$\Xi\defeq\{(p,x,y):x\in\mathsf E\text{ or }(p,x)\in\mathsf F\}$ has
$(\mu\otimes\nu)(\Xi)=0$, hence $(\mu\otimes\mathcal L_G)(\Xi)=0$ since
$\mathcal L_G\ll\nu$. For $(p,(x,y))\notin\Xi$ with $Z(x,y)>0$ we have
$Z'(x,y)=Z(x,y)>0$, and by Lemma~C and~\eqref{eq:eb1-star},
$\delta^*_{G'}(x,y)(p)-\delta^*_G(x,y)(p)=(N'-N)/Z=\sigma(y)\,U_{G'}(p,x)/Z_G(x,y)$.
This proves~(ii), for any versions by Lemma~C(e).

\medskip\noindent
\textbf{Step 3: part (iii) (Chebyshev-radius representation).} Write
$Z\defeq Z_G$, $Z_y\defeq Z(x,y)$ and $w\defeq w_G=\sum_y 1/Z_y$; $w$ is Borel
with values in $[2^n,\infty]$, and $\{w=\infty\}=\{x:\min_y Z_y=0\}$.

\emph{3.1 (the a.e.\ bound on $U$).} Let $G'\in[G]$, $(p,x)\notin\mathsf F\cup([0,1]\times\mathsf E)$
and $y$ arbitrary. By~\eqref{eq:eb1-star}, (K1) for $G$ and $G'$, the inequality
$\abs{u-v}\le\max\{u,v\}$ for $u,v\ge0$, and $Z'=Z$ off $\mathsf E$,
\[
  \abs{U_{G'}(p,x)}=\abs{N'-N}\le\max\{N',N\}\le\max\{Z'(x,y),Z(x,y)\}=Z_y.
\]
Hence $\abs U\le\min_y Z(x,y)$ $\mu^{\otimes(n+1)}$-a.e., for every representative.
Two consequences: (3.1a) $U=0$ a.e.\ on $\{w=\infty\}$; (3.1b) $wU^2\le1$ a.e.
Indeed, on $\{w<\infty\}$, $U^2/Z_y\le(\min_{y'}Z_{y'})^2/Z_y\le Z_y$, and we sum
over $y$ using (K2); on $\{w=\infty\}$, $\infty\cdot0=0$ a.e.

\emph{3.2 (the Hilbert space).} Let $\mathcal H_G$ be the space of
$\mu^{\otimes(n+1)}$-a.e.\ classes of Borel $c:[0,1]^{n+1}\to\R$ with
$\int\!\!\int wc^2<\infty$, with inner product
$\inner{c}{c'}\defeq\int_{\{w<\infty\}}w\,c\,c'\,d\mu^{\otimes(n+1)}$. Every
$c\in\mathcal H_G$ vanishes a.e.\ on $\{w=\infty\}$, and $\norm c=0$ forces
$c=0$ a.e.\ (as $w\ge1$). Restriction to $\{w<\infty\}$ is an isometric
bijection onto $L^2(\{w<\infty\},w\cdot\mu^{\otimes(n+1)})$ (the null sets
coincide since $1\le w<\infty$ there), so $\mathcal H_G$ is complete. By (3.1b),
$\mathcal U[G]\subseteq\mathcal H_G$ with $\norm U\le1$, and for
$c\in\mathcal H_G$, $U\in\mathcal U[G]$ the integral $\int\!\!\int w(c-U)^2$ of
(iii) equals $\norm{c-U}^2_{\mathcal H_G}$. Define
$f(c)\defeq\sup_{U\in\mathcal U[G]}\norm{c-U}^2$ and
$\Lambda\defeq\inf_{c\in\mathcal H_G}f(c)$. Since $G\in[G]$, $\mathcal U[G]$
contains $0$, so $0\le f(c)\le(\norm c+1)^2$ and $\Lambda\le1$.

\emph{3.3. Lemma D (representation).} For jointly measurable
$\eta:[0,1]\times\mathcal D\to\R$ put
$c_\eta(p,x,y)\defeq\sigma(y)\,Z(x,y)\,\bigl(\eta(x,y)(p)-\delta^*_G(x,y)(p)\bigr)$.
Then for every $G'\in[G]$,
\begin{multline}\label{eq:eb1-D}
  \int_{\mathcal D}\int_0^1\bigl(\eta(d)(p)-\delta^*_{G'}(d)(p)\bigr)^2\mu(dp)\,\mathcal L_G(\mathrm dd)\\
  =\sum_y\int\!\!\int\mathbf 1\{Z_y>0\}\,\frac{(c_\eta(p,x,y)-U_{G'}(p,x))^2}{Z_y}\,d\mu^{\otimes(n+1)}
  \in[0,\infty].
\end{multline}
\emph{Proof of Lemma D.} By the density and Tonelli, the left side is
$\sum_y\int\mu^{\otimes n}(dx)\,Z_y\int\mu(dp)\,(\eta-\delta^*_{G'})^2$, and
terms with $Z_y=0$ vanish. Off $\Xi$ and where $Z_y>0$, part (ii) gives
$\eta-\delta^*_{G'}=(\eta-\delta^*_G)-(\delta^*_{G'}-\delta^*_G)=\sigma(y)(c_\eta-U)/Z_y$
(using $\sigma^2=1$), so $Z_y(\eta-\delta^*_{G'})^2=(c_\eta-U)^2/Z_y$; and $\Xi$
is null. $\triangleleft$
By (i), for every $[0,1]$-valued rule $\delta$, $\mathrm{regret}(\delta,G')$
equals the right side of~\eqref{eq:eb1-D} with $\eta=\delta$.

\emph{3.4 (lower bound $\mathrm{Reg}_n\ge\Lambda$).} Fix a rule $\delta$ and set
$\bar c_\delta(p,x)\defeq w^{-1}\sum_y\sigma(y)\bigl(\delta(x,y)(p)-\delta^*_G(x,y)(p)\bigr)$
on $\{w<\infty\}$ and $\bar c_\delta\defeq0$ on $\{w=\infty\}$. It is Borel and
$\abs{\bar c_\delta}\le2^n/w$, so $\int\!\!\int w\bar c_\delta^{\,2}\le2^n$ and
$\bar c_\delta\in\mathcal H_G$. On $\{w<\infty\}$ every $Z_y>0$ and
$c_\delta(\cdot,y)/Z_y=\sigma(y)(\delta-\delta^*_G)$, so with $a_y\defeq1/Z_y$
and $c_y\defeq c_\delta(\cdot,y)$, $\bar c_\delta=\sum_y a_yc_y/w$ is the
$a$-weighted mean of the $c_y$. The weighted-mean identity
\[
  \sum_y a_y(c_y-u)^2=\sum_y a_y(c_y-\bar c)^2+w(\bar c-u)^2\ \ge\ w(\bar c-u)^2
\]
holds because the cross term $2(\bar c-u)\sum_y a_y(c_y-\bar c)$ vanishes. So
on $\{w<\infty\}$ the integrand of~\eqref{eq:eb1-D} is at least
$w(\bar c_\delta-U)^2$, and on $\{w=\infty\}$, a.e., $U=0=\bar c_\delta$.
Integrating, $\mathrm{regret}(\delta,G')\ge\norm{\bar c_\delta-U_{G'}}^2$ for
every $G'\in[G]$, so $\sup_{G'\in[G]}\mathrm{regret}(\delta,G')\ge f(\bar c_\delta)\ge\Lambda$.
Take the infimum over $\delta$.

\emph{3.5 (upper bound $\mathrm{Reg}_n\le\Lambda$, by clipping).} For
$c\in\mathcal H_G$ let
$\delta_c(x,y)(p)\defeq\delta^*_G(x,y)(p)+\sigma(y)c(p,x)/Z_y$ on $\{Z_y>0\}$ and
$\delta_c\defeq\delta^*_G$ elsewhere, and let
$\tilde\delta_c\defeq\operatorname{clip}\circ\delta_c$, a jointly measurable
$[0,1]$-valued rule. On $\{Z_y>0\}$, $c_{\delta_c}=c$. The clip is 1-Lipschitz
and fixes $[0,1]$, and $\delta^*_{G'}\in[0,1]$, so
$\abs{\operatorname{clip}(t)-v}\le\abs{t-v}$ for $v\in[0,1]$. With (i)
and~\eqref{eq:eb1-D} for $\eta=\delta_c$,
\[
  \mathrm{regret}(\tilde\delta_c,G')
  =\E_{D\sim G}\norm{\operatorname{clip}\circ\delta_c(D)-\delta^*_{G'}(D)}^2
  \le\sum_y\int\!\!\int\mathbf 1\{Z_y>0\}\,\frac{(c-U_{G'})^2}{Z_y}\,d\mu^{\otimes(n+1)}.
\]
The integrand on the right is $w(c-U)^2$ on $\{w<\infty\}$ and, a.e.,
$0=w(c-U)^2$ on $\{w=\infty\}$ ($c=0$ a.e.\ there because $c\in\mathcal H_G$,
and $U=0$ by (3.1a)). Hence $\mathrm{regret}(\tilde\delta_c,G')\le\norm{c-U_{G'}}^2$
for all $G'\in[G]$, so $\mathrm{Reg}_n\le f(c)$; the infimum over $c$ gives
$\mathrm{Reg}_n\le\Lambda$.

\emph{3.6.} Therefore
$\mathrm{Reg}_n(G;\mathcal G)=\Lambda=\inf_c\sup_U\int\!\!\int w(c-U)^2
=\bigl(\inf_c\sup_U\norm{c-U}_{\mathcal H_G}\bigr)^2$, the squared Chebyshev
radius of $\mathcal U[G]$ in $\mathcal H_G$.

\emph{3.7 (the Chebyshev center exists and is unique).} The parallelogram law
with $u=c_1-U$, $v=c_2-U$ gives
$\norm{\tfrac{c_1+c_2}2-U}^2=\tfrac12\norm{c_1-U}^2+\tfrac12\norm{c_2-U}^2-\tfrac14\norm{c_1-c_2}^2$;
taking the supremum over $U$,
$f\bigl(\tfrac{c_1+c_2}2\bigr)\le\tfrac12f(c_1)+\tfrac12f(c_2)-\tfrac14\norm{c_1-c_2}^2$.
A minimizing sequence with $f(c_j)\le\Lambda+1/j$ is therefore Cauchy, since
$\tfrac14\norm{c_i-c_j}^2\le\tfrac12(1/i+1/j)$; so $c_j\to c^*$ in
$\mathcal H_G$. The map $c\mapsto\sup_U\norm{c-U}$ is finite and 1-Lipschitz,
so $f(c^*)=\Lambda$. Uniqueness: $f(c_1)=f(c_2)=\Lambda$ forces $\norm{c_1-c_2}=0$.

\emph{3.8 (a minimax rule).} The rule $\hat\delta\defeq\tilde\delta_{c^*}$ has
$\sup_{G'}\mathrm{regret}(\hat\delta,G')\le f(c^*)=\Lambda=\mathrm{Reg}_n$, so it
is minimax. On $\{Z_G(D)>0\}$, which has full probability by (K5),
$\hat\delta(D)=\operatorname{clip}\bigl(\delta^*_G(D)+\sigma(Y)\,c^*(\cdot,X)/Z_G(X,Y)\bigr)$,
the rule displayed in (iii). Its values on null sets and the choice of a
representative of $c^*$ do not change any regret, by (i) and Tonelli.

\emph{3.9 (real-valued versus $[0,1]$-valued rules).} For jointly measurable
real-valued rules ($R\in[0,\infty]$ by Tonelli), $(\operatorname{clip}\circ\delta-g)^2\le(\delta-g)^2$
pointwise because $g\in[0,1]$. So $R(\operatorname{clip}\circ\delta,G')\le R(\delta,G')$, the
Bayes risk over real rules equals $R^*_n(G')$,
$\mathrm{regret}(\delta,G')\ge\mathrm{regret}(\operatorname{clip}\circ\delta,G')$,
and the infimum over real $\delta$ of $\sup_{G'}\mathrm{regret}$ equals
$\mathrm{Reg}_n$. The infimum over real $c$ in (iii) corresponds to unclipped
rules, and 3.5 clips them at no cost. This proves~(iii).

\medskip\noindent
\textbf{Step 4: part (iv) (monotonicity in the prior class).} Let
$G\in\mathcal G_1\subseteq\mathcal G_2$. The identified set
$[G]_1=\{G'\in\mathcal G_1:\mathcal L_{G'}=\mathcal L_G\}$ is contained in
$[G]_2$ and contains $G$. Neither the class of rules nor $R^*_n(G')$ depends on
the prior class, so
$\sup_{[G]_1}\mathrm{regret}(\delta,\cdot)\le\sup_{[G]_2}\mathrm{regret}(\delta,\cdot)$
for every rule $\delta$; take the infimum over $\delta$.
\end{proof}

\begin{remark}\label{rem:eb1-convexity}
The proof of (i)--(iv) does not use the convexity of the prior class; convexity is kept in the statement
for uniformity with the definition of the identification regret (\cref{def:seb}).
\end{remark}

%% file: sections/9b2_binomial.tex
\subsection{Constant maps: the moment-range formula}
\label{app:binomial}

For a prior on constant maps $g\equiv\theta$ the data carry only the label
counts, the identified set is a moment class, and the unidentified set
$\mathcal U[H]$ is an interval of constants. The Chebyshev radius of
\cref{prop:chebradius} then becomes half the length of the classical range of
the $(n+1)$-th moment. Throughout, $T_N$ is the Chebyshev polynomial of the
first kind and $\widetilde T_N(t)\defeq2^{-(2N-1)}T_N(2t-1)$ is the monic shifted
Chebyshev polynomial of degree $N$ on $[0,1]$, so
$\sup_{[0,1]}\abs{\widetilde T_N}=2^{-(2N-1)}$ (since $\abs{T_N}\le1$ on $[-1,1]$
with equality at its extremal points).

\begin{proposition}[Constant maps]\label{prop:binomial}
Consider the setting of \cref{prop:chebradius} with the convex class $\Gconst$
of priors on constant maps: $g\equiv\theta$ with $\theta\sim H$, $H$ a Borel
probability measure on $[0,1]$ (we identify the prior with $H$). Let
$m_k(H)\defeq\int\theta^k\,dH$ and
$b_s(H)\defeq\int\theta^s(1-\theta)^{n-s}\,dH(\theta)$ for $s=0,\dots,n$. Let
$\rho_n(H)\defeq m^+_{n+1}(H)-m^-_{n+1}(H)$, where $m^\pm_{n+1}(H)$ are the
supremum and infimum of $m_{n+1}(H')$ over probability measures $H'$ on $[0,1]$
with $m_k(H')=m_k(H)$ for $k=1,\dots,n$. Then for every $\mu$ and every $n\ge1$:
\begin{enumerate}[label=(\alph*)]
\item $\displaystyle \Reg_n(H;\Gconst)=\Bigl(\frac{\rho_n(H)}2\Bigr)^2\sum_{s=0}^n\frac{\binom ns}{b_s(H)}$,
  with the convention that the right side is $0$ when $\rho_n(H)=0$ (and
  $0\cdot(+\infty)=0$); $W\defeq\sum_s\binom ns/b_s(H)=+\infty$ forces
  $\rho_n(H)=0$, so $0\cdot\infty$ is the only indeterminate form that occurs.
\item $\rho_n(H)\le4^{-n}$ for every $H$.
\item For the arcsine law
  $H_{\mathrm{arc}}(d\theta)=d\theta/\bigl(\pi\sqrt{\theta(1-\theta)}\bigr)$ on
  $[0,1]$: $\rho_n(H_{\mathrm{arc}})=4^{-n}$ and
  \[
    \Reg_n(H_{\mathrm{arc}};\Gconst)=\frac1{8\binom{2n}n}
    =(1+o(1))\,\frac{\sqrt{\pi n}}{8\cdot4^n};
  \]
  at $n=1$, $\Reg_1(H_{\mathrm{arc}};\Gconst)=1/16=R^*_1(H_{\mathrm{arc}})$.
\end{enumerate}
\end{proposition}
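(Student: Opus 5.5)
The plan is to specialize \cref{prop:chebradius}(iii) to $\Gconst$. In that class every kernel is constant in the outputs, so the weighted Chebyshev-radius problem collapses to the radius of an interval of real numbers.

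\emph{Step 1 (identified set and kernels).} For $g\equiv\theta$ we have $Z_H(x,y)=b_{s(y)}(H)$, independent of $x$. The law of $D$ is therefore $\mu^{\otimes n}$ times the vector $(b_s(H))_{s}$. The $b_s$ are linear combinations of $m_0,\dots,m_n$. Conversely, $\theta^k=\theta^k(\theta+1-\theta)^{n-k}$ expands into the $b_s$, which recovers $m_k$. Hence $[H]$ within $\Gconst$ is exactly the moment class of $H$. For $H'\in[H]$ the function $U_{H'}(p,x)=m_{n+1}(H')-m_{n+1}(H)$ is a constant. The moment class is convex, and it is weak-$*$ compact, so $m^\pm_{n+1}$ are attained. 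Consequently $\cU[H]$ is the whole interval $[m^-_{n+1}-m_{n+1}(H),\,m^+_{n+1}-m_{n+1}(H)]$ of constants, of length $\rho_n(H)$. The weight is the constant $w_H=\sum_y1/b_{s(y)}=W$.

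\emph{Step 2 (radius of an interval of constants).} Suppose first that $W<\infty$. For any $c\in\cH$, Jensen under the probability $\mu^{\otimes(n+1)}$ gives $\int\!\!\int W(c-u)^2\ge W(\bar c-u)^2$, where $\bar c=\int c$. It therefore suffices to take constant $c$. The one-dimensional minimax problem is solved by the midpoint, with value $W(\rho_n/2)^2$, which is (a). Now suppose $W=\infty$. Then some $b_s=0$, so $H$ charges only $\{0,1\}$ if $0<s<n$, or is $\delta_1$ or $\delta_0$ if $s\in\{0,n\}$. For such $H$, any $H'$ with the same first $n$ moments has $\int\theta(1-\theta)\,dH'=0$ when $n\ge2$, or is itself a Dirac mass when $n=1$. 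Either way $m_{n+1}(H')$ is determined, so $\rho_n(H)=0$. In this case (iii) gives regret $0$, since $\cU[H]=\{0\}$ and $c=0$ works.

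\emph{Step 3 (parts (b) and (c)).} For (b), take $H',H''$ in the moment class. All moments of order at most $n$ agree, so $m_{n+1}(H')-m_{n+1}(H'')=\int\Ttil_{n+1}\,d(H'-H'')$. This is at most $2\sup_{[0,1]}\abs{\Ttil_{n+1}}=2\cdot2^{-(2n+1)}=4^{-n}$.

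For (c), equality $\rho_n(H_{\mathrm{arc}})=4^{-n}$ follows from Skibinsky's product formula $\rho_n=\prod_{k\le n}p_kq_k$, since the arcsine law has all canonical moments equal to $1/2$; I would cite this as in \cref{sec:eb}. Beta integrals give
\[
b_s(H_{\mathrm{arc}})=\frac{\binom{2s}{s}\binom{2n-2s}{n-s}}{4^n\binom ns},
\qquad\text{hence}\qquad
\frac{\binom ns}{b_s}=\frac{4^n\binom ns^2}{\binom{2s}s\binom{2n-2s}{n-s}}.
\]
The remaining task is the identity $\frac{4^{-2n}}{4}\sum_s\binom ns/b_s=1/(8\binom{2n}n)$, that is, $\sum_s\binom ns^2/\bigl(\binom{2s}s\binom{2n-2s}{n-s}\bigr)=2\cdot4^{n}/\binom{2n}n$.

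\emph{Main obstacle.} This binomial-sum identity is the step I expect to be hardest. I would first try to prove it by Zeilberger/WZ recurrence, or by writing $1/(\binom{2s}s\binom{2n-2s}{n-s})$ as a Beta integral and summing under the integral. Stirling's formula then gives the asymptotic $\sqrt{\pi n}/(8\cdot4^n)$. Finally, at $n=1$ the formula gives $1/16$, and this matches $R^*_1=1/(8\cdot2)$ from the computation in \cref{app:practice}.
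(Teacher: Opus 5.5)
Your treatment of (a) and (b) is essentially the paper's argument. You identify $[H]$ with the moment class through the Bernstein basis, note that $\cU[H]$ is an interval of constants and that $w_H\equiv W$, reduce to constant $c$ by Jensen, and take the midpoint. You then settle $W=\infty$ by showing $[H]=\{H\}$, and bound the range by $2\sup_{[0,1]}\abs{\Ttil_{n+1}}$. The paper handles $W=\infty$ a little more directly: every $H'\in[H]$ has $b_s(H')=b_s(H)=0$ and hence the same support, which covers all $n$ at once. Your $\int\theta(1-\theta)\,dH'$ argument is also fine.

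For (c) you take $\rho_n(H_{\mathrm{arc}})=4^{-n}$ from Skibinsky's formula $\rho_n=\prod_k p_k(1-p_k)$, together with the fact that all canonical moments of the arcsine law equal $1/2$. The paper instead proves it from scratch. It splits the Gauss--Lobatto--Chebyshev quadrature at the nodes $\cos(j\pi/(n+1))$ into its even and odd nodes. This gives two explicit probability measures in $[H_{\mathrm{arc}}]$ whose $(n+1)$-th moments differ by exactly $4^{-n}$. The citation is shorter but imports the canonical-moment theory; the paper's route is self-contained and exhibits the extremal pair.

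The step you call the main obstacle is not one, so you do not need WZ. The factorials cancel:
$\binom ns^2/\bigl(\binom{2s}s\binom{2n-2s}{n-s}\bigr)=(n!)^2/\bigl((2s)!\,(2n-2s)!\bigr)=\binom{2n}{2s}/\binom{2n}n$.
Then $\sum_s\binom{2n}{2s}=2^{2n-1}$ for $n\ge1$, which is exactly the paper's final computation.

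Your restatement of the target is also off by a factor of $4$. Start from $\frac{4^{-2n}}4\sum_s\binom ns/b_s=1/(8\binom{2n}n)$ and $\binom ns/b_s=4^n\binom ns^2/\bigl(\binom{2s}s\binom{2n-2s}{n-s}\bigr)$. The identity you need is $\sum_s\binom ns^2/\bigl(\binom{2s}s\binom{2n-2s}{n-s}\bigr)=4^n/(2\binom{2n}n)$, not $2\cdot4^n/\binom{2n}n$. At $n=1$ the sum is $1/2+1/2=1$, which matches the first form and not yours. With these two corrections the proposal is complete.
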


Here $H_{\mathrm{arc}}=\omega$, the arcsine law of \cref{sec:eb}. Part (b) and the value
$\rho_n(H_{\mathrm{arc}})=4^{-n}$ are classical. For $H$
not finitely supported, $\rho_n(H)=\prod_{i=1}^n p_i(1-p_i)$, where the $p_i$
are the canonical moments of $H$ \citep{skibinsky1967range}; see
\citet{dette1997canonical} for the theory of canonical moments. Since
$p_i(1-p_i)\le1/4$, this gives (b), and the arcsine law, all of whose canonical
moments equal $1/2$, attains it. Part (a) also has classical roots: the quadratic structure of the
Bayes risk in the $(n+1)$-th moment and the value $1/16$ at $n=1$ go back to
\citet[eq.~(8) and Section~5]{skibinsky1968minimax} and \citet[Section~5]{robbins1964empirical}, and the
regret formula (a) is the midpoint case of the bound in \citet[eq.~(24)]{skibinsky1968minimax}
(\cref{rem:skibinsky}). The proof below is self-contained: (b) comes
from the extremal property of $\widetilde T_{n+1}$, and the arcsine value from
splitting the Gauss--Lobatto--Chebyshev quadrature into its even and odd nodes.

\begin{proof}
We use standard facts: the Beta integral, the formula for
$\Gamma(k+\tfrac12)$, Stirling's asymptotics for $\binom{2n}n$, and, for the
Chebyshev polynomials, $T_N(\cos\varphi)=\cos(N\varphi)$, $\abs{T_N}\le1$ on
$[-1,1]$, and leading coefficient $2^{N-1}$ (which makes $\widetilde T_N$ monic).

\emph{Step 1 (the identified set).} $\Gconst$ is convex: a mixture of laws of
constant random maps is the law of a constant random map whose value has the
mixed law. The map $g(\varpi,t)=\theta(\varpi)$, with $\theta$ independent of
$X$, is jointly measurable and fits the product construction (F1)--(F3) of the
setting. For $g\equiv\theta$,
$\prod_k g(x_k)^{y_k}(1-g(x_k))^{1-y_k}=\theta^{s(y)}(1-\theta)^{n-s(y)}$, so
$Z_H(x,y)=b_{s(y)}(H)$ does not depend on $x$. Under every prior
$X\sim\mu^{\otimes n}$, so $\mathrm{Law}_{H'}(D)=\mathrm{Law}_H(D)$ if and only if
$b_s(H')=b_s(H)$ for all $s=0,\dots,n$: the density is constant in $x$,
$\mu^{\otimes n}$ is a probability measure, and every $s$ is realized by some
$y$. The polynomials $\theta^s(1-\theta)^{n-s}$, $s=0,\dots,n$, form a basis of
the polynomials of degree at most $n$ (the Bernstein basis up to binomial
factors). Hence $b_s(H')=b_s(H)$ for all $s$ if and only if
$m_k(H')=m_k(H)$ for $k=0,\dots,n$ ($m_0=1$ for probability measures). So $[H]$,
taken within $\Gconst$, is exactly the set of probability measures $H'$ with
$m_k(H')=m_k(H)$ for $k=1,\dots,n$.

\emph{Step 2 (the unidentified set).} For $H'\in[H]$,
$M^{H'}_{n+1}(p,x)=\E_{H'}\theta^{n+1}=m_{n+1}(H')$ for all $(p,x)$, so
$U_{H'}$ is the constant function $m_{n+1}(H')-m_{n+1}(H)$. The set
$\mathcal I\defeq\{m_{n+1}(H')-m_{n+1}(H):H'\in[H]\}$ is convex, as the image of
the convex set $[H]$ under an affine functional, with supremum and infimum
$m^\pm_{n+1}(H)-m_{n+1}(H)$. So its closure is an interval $[u_-,u_+]$ with
$u_+-u_-=\rho_n(H)$, and $\mathcal U[H]=\{\text{constant functions }u:u\in\mathcal I\}$.

\emph{Step 3 (the weight).}
$w_H(x)=\sum_y1/b_{s(y)}(H)=\sum_{s=0}^n\binom ns/b_s(H)\eqqcolon W\in(0,+\infty]$,
constant in $x$.

\emph{Step 4 (part (a) when $W<\infty$).} By \cref{prop:chebradius}(iii),
\[
  \Reg_n(H;\Gconst)=\inf_c\sup_{u\in\mathcal I}W\int\!\!\int(c-u)^2\,d\mu\,d\mu^{\otimes n},
\]
the infimum running over measurable $c$ with $\int\!\!\int Wc^2<\infty$ (so $c$ is
square integrable). \emph{Lower bound.} Fix such a $c$ and let
$\bar c\defeq\int\!\!\int c\,d\mu\,d\mu^{\otimes n}$. For each constant $u$, the
variance decomposition gives $\int\!\!\int(c-u)^2\ge(\bar c-u)^2$. If
$\rho_n(H)=0$ the lower bound below is trivial. Otherwise, for every
$\eps\in(0,\rho_n(H)/2)$ there are $u_1,u_2\in\mathcal I$ with $u_1<u_-+\eps$
and $u_2>u_+-\eps$, and
$\max\{\abs{\bar c-u_1},\abs{\bar c-u_2}\}\ge(u_2-u_1)/2>\rho_n(H)/2-\eps>0$.
Letting $\eps\downarrow0$,
$\sup_{u\in\mathcal I}W\int\!\!\int(c-u)^2\ge W(\rho_n(H)/2)^2$.
\emph{Upper bound.} The constant $c\equiv(u_-+u_+)/2$ gives
$\sup_{u\in\mathcal I}W\bigl((u_-+u_+)/2-u\bigr)^2\le W(\rho_n(H)/2)^2$. Hence
$\Reg_n(H;\Gconst)=W\rho_n(H)^2/4$, which is (a).

\emph{Step 5 (part (a) when $W=\infty$).} Then $b_s(H)=0$ for some $s$, so
$\theta^s(1-\theta)^{n-s}=0$ $H$-a.s. If $s=0$ then $H=\delta_1$; if $s=n$ then
$H=\delta_0$; if $0<s<n$ then $H$ is supported in $\{0,1\}$. Every $H'\in[H]$
has $b_s(H')=b_s(H)=0$, so $H'$ is supported in the same set; a probability
measure on $\{0,1\}$ is determined by its first moment, and $m_1(H')=m_1(H)$,
so $H'=H$. Hence $[H]=\{H\}$, $\rho_n(H)=0$ and $\mathcal U[H]=\{0\}$, and by
\cref{prop:chebradius}(iii) (admissible $c$ vanish a.e.\ where $w=\infty$, and
$0\cdot\infty=0$) $\Reg_n(H;\Gconst)=0$, which is the right side of (a) by the
stated convention.

\emph{Step 6 (part (b)).} Let $H^+,H^-\in[H]$. The polynomial
$t^{n+1}-\widetilde T_{n+1}(t)$ has degree at most $n$, so its integrals under
$H^+$ and $H^-$ coincide, and
\[
  m_{n+1}(H^+)-m_{n+1}(H^-)=\int\widetilde T_{n+1}\,d(H^+-H^-)
  \le2\sup_{[0,1]}\abs{\widetilde T_{n+1}}=2\cdot2^{-(2n+1)}=4^{-n}.
\]
Taking the supremum over $H^+$ and the infimum over $H^-$ gives
$\rho_n(H)\le4^{-n}$.

\emph{Step 7 (part (c): $\rho_n(H_{\mathrm{arc}})=4^{-n}$).} Put $N\defeq n+1$,
nodes $\xi_j\defeq\cos(j\pi/N)$ for $j=0,\dots,N$, and Lobatto weights
$\lambda_0=\lambda_N\defeq1/(2N)$, $\lambda_j\defeq1/N$ for $1\le j\le N-1$.

\emph{Fact 1.} For integers $0\le k\le2N-1$,
$\sum_{j=0}^N\lambda_j\cos(kj\pi/N)$ equals $1$ if $k=0$ and $0$ if
$1\le k\le2N-1$. Indeed, for $k=0$ the weights sum to $2/(2N)+(N-1)/N=1$. For
$1\le k\le2N-1$, $\cos(kj\pi/N)=\cos(k(2N-j)\pi/N)$, so
$\sum_{j=0}^N\lambda_j\cos(kj\pi/N)=\frac1{2N}\sum_{j=0}^{2N-1}\cos(kj\pi/N)
=\frac1{2N}\operatorname{Re}\sum_{j=0}^{2N-1}z^j$ with $z\defeq e^{ik\pi/N}$;
since $z^{2N}=1$ and $z\ne1$ ($k$ is not a multiple of $2N$), the geometric sum
vanishes.

\emph{Fact 2.}
$\int_{-1}^1T_k(s)\,\frac{ds}{\pi\sqrt{1-s^2}}=\frac1\pi\int_0^\pi\cos(k\varphi)\,d\varphi$,
which is $1$ if $k=0$ and $0$ if $k\ge1$.

Since $T_k(\xi_j)=\cos(kj\pi/N)$, the probability measure
$L\defeq\sum_j\lambda_j\delta_{\xi_j}$ and the arcsine law $A$ on $[-1,1]$
integrate $T_0,\dots,T_{2N-1}$ identically (Facts~1 and~2), hence all
polynomials of degree at most $2N-1=2n+1$, and in particular those of degree at
most $n$. Let $\nu\defeq\sum_j(-1)^j\lambda_j\delta_{\xi_j}$. Since
$\sin(j\pi)=0$, $(-1)^j\cos(kj\pi/N)=\cos((N+k)j\pi/N)$, so
$\int T_k\,d\nu=\sum_j\lambda_j\cos((N+k)j\pi/N)$; this is $0$ for
$0\le k\le N-1$ (then $1\le N+k\le2N-1$, Fact~1), and equals
$\sum_j\lambda_j=1$ for $k=N$. Write $\nu=\nu^+-\nu^-$ with
$\nu^+\defeq\sum_{j\text{ even}}\lambda_j\delta_{\xi_j}$ and
$\nu^-\defeq\sum_{j\text{ odd}}\lambda_j\delta_{\xi_j}$. Then $\nu^++\nu^-=L$
has mass $1$ and $\nu^+-\nu^-=\nu$ has mass $\int T_0\,d\nu=0$, so each has mass
$1/2$. Hence $K^+\defeq2\nu^+=L+\nu$ and $K^-\defeq2\nu^-=L-\nu$ are probability
measures on $[-1,1]$ with $\int T_k\,dK^\pm=\int T_k\,dL=\int T_k\,dA$ for
$k\le n$, and $\int T_N\,d(K^+-K^-)=2\int T_N\,d\nu=2$.

Push forward by $t=(1+s)/2$: $A$ goes to $H_{\mathrm{arc}}$, and $K^\pm$ go to
probability measures $H^\pm$ on $[0,1]$. Polynomials of degree at most $n$ in
$t$ are polynomials of degree at most $n$ in $s$, so
$m_k(H^\pm)=m_k(H_{\mathrm{arc}})$ for $k\le n$, i.e.\
$H^\pm\in[H_{\mathrm{arc}}]$. As in Step~6,
\begin{align*}
  m_{n+1}(H^+)-m_{n+1}(H^-)&=\int\widetilde T_N\,d(H^+-H^-)\\
  &=2^{-(2n+1)}\int T_N\,d(K^+-K^-)=2^{-(2n+1)}\cdot2=4^{-n}.
\end{align*}
Hence $\rho_n(H_{\mathrm{arc}})\ge4^{-n}$, and with (b), equality holds.

\emph{Step 8 (part (c): the weight and the value).} By the Beta integral,
$b_s(H_{\mathrm{arc}})=\frac1\pi\int_0^1\theta^{s-1/2}(1-\theta)^{n-s-1/2}\,d\theta
=\frac{\Gamma(s+\frac12)\Gamma(n-s+\frac12)}{\pi\,\Gamma(n+1)}$, and
$\Gamma(k+\tfrac12)=(2k)!\sqrt\pi/(4^kk!)$, so
$b_s(H_{\mathrm{arc}})=\frac{(2s)!\,(2n-2s)!}{4^n\,s!\,(n-s)!\,n!}>0$. Therefore
\[
  \sum_{s=0}^n\frac{\binom ns}{b_s}
  =\sum_{s=0}^n\frac{n!}{s!\,(n-s)!}\cdot\frac{4^n\,s!\,(n-s)!\,n!}{(2s)!\,(2n-2s)!}
  =\frac{4^n(n!)^2}{(2n)!}\sum_{s=0}^n\binom{2n}{2s}
  =\frac{4^n}{\binom{2n}n}\,2^{2n-1},
\]
where, for $n\ge1$, $\sum_s\binom{2n}{2s}=2^{2n-1}$ (add the expansions of
$(1+1)^{2n}$ and $(1-1)^{2n}$). By (a) and Step~7,
\[
  \Reg_n(H_{\mathrm{arc}};\Gconst)=\Bigl(\frac{4^{-n}}2\Bigr)^2\cdot\frac{4^n\,2^{2n-1}}{\binom{2n}n}
  =\frac1{8\binom{2n}n}.
\]
Since $\binom{2n}n=(1+o(1))\,4^n/\sqrt{\pi n}$ (Stirling),
$\Reg_n(H_{\mathrm{arc}};\Gconst)=(1+o(1))\sqrt{\pi n}/(8\cdot4^n)$. At $n=1$,
$\Reg_1=1/16$. Under $H_{\mathrm{arc}}=\mathrm{Beta}(\frac12,\frac12)$ the
posterior of $\theta$ given one label $y$ is
$\mathrm{Beta}(\frac12+y,\frac32-y)$, whose variance
$(\frac12+y)(\frac32-y)/(2^2\cdot3)$ equals $3/48=1/16$ for $y=0$ and for $y=1$.
The posterior mean is the Bayes rule (Step~1 of the proof of
\cref{prop:chebradius}), so $R^*_1(H_{\mathrm{arc}})$ is the expected posterior
variance, $1/16$.
\end{proof}

\subsection{Diagonal domination for sorted vectors}
\label{app:ch1}

The upper bound for monotone maps rests on one elementary inequality: matching
two sorted lists index by index never beats evaluating the polynomial with the
second list as roots at the best point of the first. With the Chebyshev zeros
as roots, this turns a multivariate product into a univariate sup-norm.

\LemCHone*

\begin{proof}
\emph{Part (a).} We induct on $n$. For $n=0$ both sides equal
$\abs{x_0-y_0}$. Let $n\ge1$ and assume (a) for every pair of nondecreasing
sequences of length $n$ (indices $0,\dots,n-1$). Put
$\bar P(t)\defeq\prod_{i=0}^{n-1}(t-y_i)$, so $P(t)=\bar P(t)(t-y_n)$. The
truncated sequences $(x_0,\dots,x_{n-1})$ and $(y_0,\dots,y_{n-1})$ are
nondecreasing, so by the induction hypothesis there is $k\le n-1$ with
$\prod_{i=0}^{n-1}\abs{x_i-y_i}\le\abs{\bar P(x_k)}$. Hence
\begin{equation}\label{eq:ch1-one}
  \Pi\defeq\prod_{i=0}^n\abs{x_i-y_i}\le\abs{\bar P(x_k)}\cdot\abs{x_n-y_n}.
\end{equation}
\emph{Case A: $\abs{x_n-y_n}\le\abs{x_k-y_n}$.} Then by~\eqref{eq:ch1-one},
$\Pi\le\abs{\bar P(x_k)}\,\abs{x_k-y_n}=\abs{P(x_k)}$.

\emph{Case B: $\abs{x_n-y_n}>\abs{x_k-y_n}$.} Since $k\le n-1$ and $x$ is
nondecreasing, $x_k\le x_n$; if $x_k=x_n$ then $\abs{x_n-y_n}=\abs{x_k-y_n}$,
contradicting Case~B. So $x_k<x_n$. Put $\bar x\defeq(x_k+x_n)/2$. For real
$v$, $(x_n-v)^2-(x_k-v)^2=(x_n-x_k)(x_n+x_k-2v)$, which is positive if and only
if $v<\bar x$ (because $x_n-x_k>0$); so $\abs{x_n-v}>\abs{x_k-v}$ if and only if
$v<\bar x$. Hence $y_n<\bar x$, and since the $y$'s are nondecreasing,
$y_i\le y_n<\bar x$ for every $i\le n-1$. Fix such an $i$. As
$y_i<\bar x<x_n$, $\abs{x_n-y_i}=x_n-y_i$. If $y_i\le x_k$, then
$\abs{x_k-y_i}=x_k-y_i\le x_n-y_i$. If $x_k<y_i<\bar x$, then
$\abs{x_k-y_i}=y_i-x_k<\bar x-x_k=x_n-\bar x<x_n-y_i$. In both sub-cases
$\abs{x_k-y_i}\le\abs{x_n-y_i}$. Multiplying over $i=0,\dots,n-1$ gives
$\abs{\bar P(x_k)}\le\abs{\bar P(x_n)}$, and with~\eqref{eq:ch1-one},
$\Pi\le\abs{\bar P(x_n)}\,\abs{x_n-y_n}=\abs{P(x_n)}$.

In both cases $\Pi\le\max_k\abs{P(x_k)}$, which proves (a) for $n$.

\emph{Part (b).} We use standard Chebyshev facts, which follow from
$T_m(\cos\varphi)=\cos(m\varphi)$ and the recurrence
$T_{m+1}(s)=2sT_m(s)-T_{m-1}(s)$, $T_0=1$, $T_1=s$: for $m\ge1$, $T_m$ has
degree $m$ and leading coefficient $2^{m-1}$; $\abs{T_m}\le1$ on $[-1,1]$, with
equality at $s=\pm1$; and its zeros are $\cos((2i+1)\pi/(2m))$,
$i=0,\dots,m-1$. Hence $T_{n+1}$ has leading coefficient $2^n$, and the
substitution $s=2t-1$ multiplies the leading coefficient by $2^{n+1}$; since
$2^{-(2n+1)}\cdot2^n\cdot2^{n+1}=1$, $\widetilde T_{n+1}$ is monic of degree
$n+1$. Its $n+1$ zeros $\bigl(1+\cos((2i+1)\pi/(2n+2))\bigr)/2$ are real,
distinct and lie in $(0,1)$; listed increasingly they are the $y_i$ of the
statement (the map $i\mapsto n-i$ reverses the order, since
$\cos(\pi-a)=-\cos a$). So $\widetilde T_{n+1}=P$. Moreover
$\sup_{t\in[0,1]}\abs{\widetilde T_{n+1}(t)}=2^{-(2n+1)}\sup_{s\in[-1,1]}\abs{T_{n+1}(s)}=2^{-(2n+1)}$.
Each $x_k$ lies in $[0,1]$, so $\abs{P(x_k)}\le2^{-(2n+1)}$, and (a) gives (b).
\end{proof}

\begin{remark}[Sharpness]\label{rem:ch1-sharp}
Sortedness of both sequences is needed in (a): for $x=(1,0)$ and $y=(0,1)$ the
left side is $1$, while $P(t)=t(t-1)$ vanishes at both $x_k$; and, by symmetry of
$P$ in $y$, $x=(0,1)$ and $y=(1,0)$ show the same for $y$. The constant in
(b) is attained at $x\equiv0$ and at $x\equiv1$, and it cannot be improved by
any other choice of the $y_i$: on the diagonal $x_0=\dots=x_n=x$ the left side
is $\abs{\prod_i(x-y_i)}$, a monic polynomial of degree $n+1$, whose sup-norm
on $[0,1]$ is at least $2^{-(2n+1)}$ by the minimality of the Chebyshev
polynomial. Equivalently, $\prod_i(x_i-y_i)$ is a polynomial of degree $n+1$
whose top-degree coefficients sum to one, and $2^{-(2n+1)}$ is the least
sup-norm of such polynomials on the ordered simplex
$\{0\le x_0\le\dots\le x_n\le1\}$ \citep[Theorem~3.1 and
Example~3.3]{dressler2026least}; part (b) shows that the multi-affine product
with matched Chebyshev roots attains this value.
\end{remark}

%% file: sections/9b3_monotone.tex
\subsection{The upper bound for monotone maps}
\label{app:monotone-eb}

The mechanism is this. Fix an $(n+1)$-tuple of design points and sort it. For a
nondecreasing path, the vector of values $(\theta_0,\dots,\theta_n)$ at the
sorted points lies in the ordered simplex. Every lower-order product moment is
identified (Step~1). So the difference between the top moments of two
observationally equivalent priors equals the difference between the
expectations of \emph{any} multi-affine polynomial in $\theta$ with top term
$\prod_i\theta_i$. We choose the one whose roots are the Chebyshev zeros,
matched index by index to the sorted coordinates.
\Cref{lem:ch1} bounds it by $2^{-(2n+1)}$ on the whole simplex. The bound on the
unidentified moment therefore depends neither on $\mu$ nor on the prior.
\Cref{prop:chebradius}(i)--(ii) turn it into a regret bound, with one factor
$4^{-n}$ for each of the $2^n$ label patterns.

\ThmMonotoneEB*

\begin{proof}
Throughout, ``a.e.''\ refers to $\mu^{\otimes(n+1)}$ on tuples
$t=(t_0,t_1,\dots,t_n)\defeq(p,x_1,\dots,x_n)$. For a prior $G$ and $r\ge1$,
$M^G_r(s_1,\dots,s_r)\defeq\E_G[g(s_1)\cdots g(s_r)]$ is defined for every
tuple, coinciding coordinates allowed. The identified set depends only on the
law of $D$: if $G''\in[G]$, then $[G'']=[G]$, since both are the set of priors
in $\Gmono$ whose law of $D$ equals $\mathrm{Law}_G(D)$.

\emph{Step 1 (identified product moments, every Borel $\mu$).} Let
$G',G''\in[G]$. Then $\mathrm{Law}_{G'}(D)=\mathrm{Law}_{G''}(D)$, and both laws
have densities $Z_{G'},Z_{G''}$ with respect to the same $\sigma$-finite measure
$\mu^{\otimes n}\otimes(\text{counting})$ on $[0,1]^n\times\{0,1\}^n$. By
uniqueness of Radon--Nikodym derivatives, $Z_{G'}(x,y)=Z_{G''}(x,y)$ for every
$y$ and $\mu^{\otimes n}$-a.e.\ $x$. For every prior, every $x\in[0,1]^n$ (no
distinctness needed) and every $S=\{k_1<\dots<k_r\}\subseteq\{1,\dots,n\}$,
\[
  M_r(x_{k_1},\dots,x_{k_r})=\sum_{y:\,y_k=1\text{ for }k\in S}Z(x,y),
\]
as one sees by expanding $\prod_{k\notin S}\bigl(g(x_k)+(1-g(x_k))\bigr)=1$
inside the expectation. Taking $S=\{1,\dots,r\}$ with $r\le n$ gives
$M^{G'}_r(s)=M^{G''}_r(s)$ for $\mu^{\otimes r}$-a.e.\ $s\in[0,1]^r$: the set of
$x$ where the $Z$'s differ is $\mu^{\otimes n}$-null, and the identity reads
$M_r(x_1,\dots,x_r)$ on its left side. Now let $S$ be a proper subset of
$\{0,\dots,n\}$ with $\abs S=r\le n$, and list $t_S$ in increasing index order.
The set of $t\in[0,1]^{n+1}$ with $M^{G'}_r(t_S)\ne M^{G''}_r(t_S)$ is the
preimage of a $\mu^{\otimes r}$-null set under the coordinate projection
$t\mapsto t_S$, which pushes $\mu^{\otimes(n+1)}$ forward to $\mu^{\otimes r}$
(distinct \emph{indices} are selected; values may coincide). Hence it is
$\mu^{\otimes(n+1)}$-null. These sets are measurable because
$s\mapsto M^G_r(s)$ is Borel, by Tonelli and the joint measurability of the
priors. Let $\mathsf N$ be the union of these finitely many null sets (at most
$2^{n+1}-2$ of them; $S=\emptyset$ contributes nothing). Since $M_r$ is a
symmetric function of its arguments, for $t\notin\mathsf N$,
\[
  \E_{G'}\prod_{k\in S'}g(t_k)=\E_{G''}\prod_{k\in S'}g(t_k)
  \qquad\text{for every proper subset }S'\subsetneq\{0,\dots,n\}.
\]

\emph{Step 2 (part (a)).} Fix $t\notin\mathsf N$. Let $\psi$ be the permutation
of $\{0,\dots,n\}$ that sorts the coordinates nondecreasingly, ties broken by
index, so that $t_{\psi(0)}\le\dots\le t_{\psi(n)}$. For a map $g$ put
$\theta_i\defeq g(t_{\psi(i)})$. For nondecreasing $g$,
$0\le\theta_0\le\theta_1\le\dots\le\theta_n\le1$ (equal coordinates of $t$
give equal $\theta$'s). Let $a_0\le\dots\le a_n$ be the zeros of
$\widetilde T_{n+1}$, as in \cref{lem:ch1}(b). Expand
\[
  \prod_{i=0}^n(\theta_i-a_i)=\prod_{i=0}^n\theta_i+q(\theta),
\]
where $q$ is a linear combination, with coefficients depending only on $a$, of
the monomials $\prod_{i\in S}\theta_i=\prod_{k\in\psi(S)}g(t_k)$ over proper
subsets $S$ of $\{0,\dots,n\}$ ($S=\emptyset$ gives $1$). Each $\psi(S)$ is a
proper subset of $\{0,\dots,n\}$, so by Step~1,
$\E_{G'}q(\theta)=\E_{G''}q(\theta)$. Since
$\prod_i\theta_i=g(t_0)\cdots g(t_n)$,
\[
  M^{G'}_{n+1}(t)-M^{G''}_{n+1}(t)=\E_{G'}Q(\theta)-\E_{G''}Q(\theta),
  \qquad Q(\theta)\defeq\prod_{i=0}^n(\theta_i-a_i).
\]
By \cref{lem:ch1}(b), $\abs{Q(\theta)}\le2^{-(2n+1)}$ for every nondecreasing
$g$, and $G'$-a.s.\ and $G''$-a.s.\ the path is nondecreasing. So the difference
is at most $2\cdot2^{-(2n+1)}=4^{-n}$ in absolute value. The particular case
$G''=G$ is $\abs{U_{G'}}\le4^{-n}$ a.e.

\emph{Step 3 (part (b)).} Let $G'\in[G]$. By \cref{prop:chebradius}(i),
$R(\delta^*_G,G')-R^*_n(G')=\E_{D\sim G}\norm{\delta^*_G(D)-\delta^*_{G'}(D)}^2_{L^2(\mu)}$.
We first record which measure each a.e.\ statement lives in. The integral is
against
$\mu\otimes\mathrm{Law}_G(D)=Z_G\cdot\bigl(\mu^{\otimes(n+1)}\otimes(\text{counting})\bigr)$,
and $\{Z_G=0\}$ is $\mathrm{Law}_G(D)$-null. Since $Z_G\le1$, every
$\mu^{\otimes(n+1)}$-null set of $(p,x)$ is
$\mu\otimes\mathrm{Law}_G(D)$-null, so the a.e.\ statements of
\cref{prop:chebradius}(ii) and of part (a) combine inside the integral.
Conversely, on $\{Z_G>0\}$ a set that is null for
$Z_G\cdot(\mu^{\otimes(n+1)}\otimes\text{counting})$ is null for
$\mu^{\otimes(n+1)}\otimes\text{counting}$, because the density is positive
there. So the bound $\abs{U_{G'}}/Z_G\le1$ below holds
$\mu^{\otimes(n+1)}\otimes\text{counting}$-a.e.\ on $\{Z_G>0\}$.

By \cref{prop:chebradius}(ii), for $\mathrm{Law}_G(D)$-a.e.\ $D$ with
$Z_G(D)>0$ and $\mu$-a.e.\ $p$,
$\abs{\delta^*_{G'}(D)(p)-\delta^*_G(D)(p)}=\abs{U_{G'}(p,X)}/Z_G(X,Y)$, and
this is at most $1$ because both posterior means lie in $[0,1]$. Hence
\begin{align*}
  \E_{D\sim G}\norm{\delta^*_G(D)-\delta^*_{G'}(D)}^2
  &=\int\!\!\int\sum_{y:\,Z_G(x,y)>0}\frac{U_{G'}(p,x)^2}{Z_G(x,y)}\,d\mu(p)\,d\mu^{\otimes n}(x)\\
  &\le\int\!\!\int\sum_{y:\,Z_G(x,y)>0}\abs{U_{G'}(p,x)}\cdot\frac{\abs{U_{G'}(p,x)}}{Z_G(x,y)}\,d\mu(p)\,d\mu^{\otimes n}(x)\\
  &\le\int\!\!\int2^n\cdot4^{-n}\,d\mu(p)\,d\mu^{\otimes n}(x)=2^{-n},
\end{align*}
using $\abs{U_{G'}}\le4^{-n}$ from part (a), $\abs{U_{G'}}/Z_G\le1$, and the
fact that there are $2^n$ label patterns $y$. Taking $\delta=\delta^*_G$ in the
definition of $\Reg_n$ gives $\Reg_n(G;\Gmono)\le2^{-n}$.

\emph{Step 4 (part (c)).} Let $G'',G'\in[G]$. Then $G''\in\Gmono$, and
$[G'']=[G]$ contains $G'$. Steps~1--3 used nothing about $G$ beyond
$G\in\Gmono$, so we may apply Step~3 with $G''$ in the role of $G$:
$R(\delta^*_{G''},G')-R^*_n(G')\le2^{-n}$.
\end{proof}

\begin{remark}[Monotonicity in a known direction; sharpness]\label{rem:monotone-needed}
Monotonicity in one \emph{known} direction cannot be dropped from (a). Take $n=1$ and $\mu$ uniform on
$[0,1]$; the law of $D$ depends only on $M_1$. The priors
$\frac12\delta_{\{g=\mathrm{id}\}}+\frac12\delta_{\{g=1-\mathrm{id}\}}$ and
$\frac12\delta_{\{g\equiv0\}}+\frac12\delta_{\{g\equiv1\}}$ both have $M_1\equiv1/2$, while their second
moments are $M_2(s,t)=\frac12(1-s-t+2st)$ and $M_2\equiv\frac12$. The gap $\frac12(s+t-2st)$ exceeds
$4^{-1}$ exactly where $(s-\frac12)(t-\frac12)<0$, a set of $\mu^{\otimes2}$-measure $\frac12$ (at
$(s,t)=(0,1)$ it equals $\frac12$). Conversely, (a) is attained at
$n=1$: $\delta_{\{g\equiv1/2\}}$ and $\frac12\delta_{\{g\equiv0\}}+\frac12\delta_{\{g\equiv1\}}$ both have
$M_1\equiv1/2$, and their second moments are $M_2\equiv1/4$ and $M_2\equiv1/2$, a gap of exactly $4^{-1}$ at
every $t$.
\end{remark}

%% file: sections/9b4_betastar.tex
\subsection{The exact base for constant maps}
\label{app:betastar}

By \cref{prop:binomial}(a), for a constant-map prior
$\Reg_n(H;\Gconst)=\tfrac14\rho_n(H)^2\sum_s\binom ns/b_s(H)$. This is the
product of a squared moment range and a sum of inverse label-pattern
probabilities. Both factors are governed by the logarithmic potential of the
arcsine law $\omega$. The lower half of part (iii) bounds $4^n\rho_n(H)$ below
by half the geometric mean of the density of $H$ with respect to $\omega$. The
upper half bounds it above by the ratio of the $H$-mean to the
$\omega$-geometric mean of any positive polynomial of degree at most $n$. The
upper bound (i) applies the upper half of (iii) once for each label count $s$,
with a polynomial that trades $\rho_n$ against $b_s$. The lower bound (ii) uses
a prior that starves $b_{\lfloor n/2\rfloor}$: on a set of arcsine mass $1/2$
where $t^s(1-t)^{n-s}$ is largest, its density is inversely proportional to
that weight. Catalan's constant enters as the arcsine average of
$\log(t(1-t))$ over the central half $[\sin^2(\pi/8),\cos^2(\pi/8)]$.

Part (iii) is known. The lower half, with constant $1$ in place of $\tfrac12$,
follows from the exact formula of \citet[Theorem~3.1(i)]{gamboa2004large}, and
the upper half follows from their Remark~3.1(i). In the language of orthogonal
polynomials on the unit circle it is a finite-$n$ form of Szeg\H{o}'s theorem
\citep[Theorem~8.1]{simon2005opuc}; see also \citet[eq.~(2.10)]{gamboa2016sum}.
The proof below is self-contained (Fej\'er--Riesz for the upper half, duality
and Jensen for the lower half), and parts (i) and (ii) apply its two halves
directly.

{\setlength{\emergencystretch}{12em}\ThmBetaStar*}%

\begin{proof}
\emph{Notation.} We write
$\operatorname{GM}(f)\defeq\exp\int\log f\,d\omega$ for the geometric mean
under $\omega$, $v_s(t)\defeq t^s(1-t)^{n-s}$ (so $b_s(H)=\int v_s\,dH$), and
$\mathcal P_n$ for the real polynomials of degree at most $n$. The substitution
$t=(1-\cos\vartheta)/2$ carries $d\vartheta/\pi$ on $[0,\pi]$ (or
$d\vartheta/(2\pi)$ on $[0,2\pi)$) to $\omega$. Let
$\mathcal C\defeq[a_*,1-a_*]$ with $a_*=\sin^2(\pi/8)$, i.e.\
$\vartheta\in[\pi/4,3\pi/4]$, so $\omega(\mathcal C)=1/2$. We abbreviate
$\rho\defeq\rho_n(H)$ and $\Reg\defeq\Reg_n(H;\Gconst)$.

\emph{Step 1 (two integrals).} (a) For $x\in[0,1]$ write
$x=(1-\cos\varphi)/2$. Then
$\abs{t-x}=\abs{\sin((\vartheta+\varphi)/2)}\,\abs{\sin((\vartheta-\varphi)/2)}$.
Substituting $\vartheta\to-\vartheta$ in the second factor,
\[
  \int\log\abs{t-x}\,d\omega(t)=\frac1\pi\int_{-\pi}^{\pi}\log\abs{\sin((\vartheta+\varphi)/2)}\,d\vartheta
  =\frac2\pi(-\pi\log2)=-\log4,
\]
using $\int\log\abs{\sin u}\,du=-\pi\log2$ over any interval of length $\pi$.
(b) Since $t(1-t)=\sin^2\vartheta/4$,
\[
  \int_{\mathcal C}\log(t(1-t))\,d\omega=\frac2\pi\int_{\pi/4}^{3\pi/4}\log\sin\vartheta\,d\vartheta-\log2
  =\frac{2\Gcat}\pi-2\log2 .
\]
For the last equality we use $\int_0^{\pi/4}\log\sin=-\frac\pi4\log2-\frac\Gcat2$.
Indeed, $\log\sin=\frac12[\log(\sin\cos)+\log\tan]$ on $(0,\pi/4]$. Next,
$\int_0^{\pi/4}\log\tan=-\Gcat$, because
$\int_0^{\pi/4}\log\tan\vartheta\,d\vartheta=\int_0^1\frac{\log u}{1+u^2}\,du
=-\sum_{k\ge0}\frac{(-1)^k}{(2k+1)^2}=-\Gcat$. And
$\int_0^{\pi/4}\log(\sin\cos)=\int_0^{\pi/4}\log(\sin(2\vartheta)/2)
=\frac12\int_0^{\pi/2}\log\sin u\,du-\frac\pi4\log2=-\frac\pi2\log2$. Then
$\int_{\pi/4}^{3\pi/4}\log\sin=2\bigl(\int_0^{\pi/2}-\int_0^{\pi/4}\bigr)\log\sin
=-\frac\pi2\log2+\Gcat$. By the symmetry $t\leftrightarrow1-t$,
$\int_{\mathcal C}\log t\,d\omega=\int_{\mathcal C}\log(1-t)\,d\omega=\Gcat/\pi-\log2$,
so
\begin{equation}\label{eq:betastar-central}
  \int_{\mathcal C}\log v_s\,d\omega=n\Bigl(\frac\Gcat\pi-\log2\Bigr)\qquad\text{for every }s.
\end{equation}

\emph{Step 2 (Lemma U: the upper half of (iii)).} \emph{Weak duality.} If
$Q',Q\in\mathcal P_n$ and $Q'\le t^{n+1}\le Q$ on $[0,1]$, then every $H'$ with
the same moments $0,\dots,n$ as $H$ satisfies
$\int Q'\,dH\le m_{n+1}(H')\le\int Q\,dH$. Hence $\rho\le\int(Q-Q')\,dH$.

\emph{Fej\'er--Riesz (proved inline).} Write
$\widetilde P(\vartheta)\defeq P((1-\cos\vartheta)/2)=\sum_{\abs k\le d}a_ke^{ik\vartheta}>0$
with $d=\deg P\le n$ and $a_{-k}=\overline{a_k}$; the $a_k$ are real since
$\widetilde P$ is even, and $a_{-d}=a_d\ne0$ (the case $d=0$ is trivial). The
polynomial $R(z)\defeq\sum_ka_kz^{k+d}$ has degree $2d$, $R(0)\ne0$,
$\overline{R(1/\bar z)}=z^{-2d}R(z)$, and no roots on $\abs z=1$. By the
functional equation its roots pair, with multiplicity, as $z_j,1/\bar z_j$ with
$\abs{z_j}>1$, and for $\abs z=1$, $\abs{z-1/\bar z_j}=\abs{z-z_j}/\abs{z_j}$.
Hence $\widetilde P=\abs{p(e^{i\vartheta})}^2$ with
$p(z)\defeq c\prod_j(z-z_j)$, $\deg p\le n$, $p$ zero-free on the closed unit
disk, and $c$ chosen so that $p(0)>0$. Since $\log\abs p$ is harmonic near the
closed disk, $\operatorname{GM}(P)=p(0)^2$.

\emph{The gap pair.} Write $p^2=\sum_{j\le2d}c_jz^j$ (so $c_0=p(0)^2$), and let
$J(\vartheta)\defeq\operatorname{Re}\bigl(e^{i(n+1)\vartheta}\,\overline{p(e^{i\vartheta})}^{\,2}\bigr)$
and
$J_e(\vartheta)\defeq[J(\vartheta)+J(-\vartheta)]/2=\sum_j\operatorname{Re}(c_j)\cos((n+1-j)\vartheta)$.
Then $\abs{J_e}\le\abs p^2=\widetilde P$. Only $j=0$ reaches frequency $n+1$, so
$J_e$ corresponds to a polynomial
$\mathcal J(t)=c_0T_{n+1}(1-2t)+(\text{degree}\le n)$, whose $t^{n+1}$
coefficient is $c_0(-1)^{n+1}2^{2n+1}$. With $\varsigma\defeq(-1)^{n+1}$ and
$K\defeq2\cdot4^n\operatorname{GM}(P)$, put
\[
  A\defeq(P+\varsigma\mathcal J)/K\ge0,\qquad B\defeq(P-\varsigma\mathcal J)/K\ge0 .
\]
$A$ is monic of degree $n+1$ and $B$ has leading coefficient $-1$, so
$Q'\defeq t^{n+1}-A$ and $Q\defeq t^{n+1}+B$ lie in $\mathcal P_n$ and satisfy
$Q'\le t^{n+1}\le Q$. Therefore
\begin{equation}\label{eq:lemma-U}
  \rho\le\int(A+B)\,dH=\frac2K\int P\,dH=4^{-n}\,\frac{\int P\,dH}{\operatorname{GM}(P)}.
\end{equation}

\emph{Step 3 (Lemma L: the lower half of (iii)).} Assume $H=h\omega$ with
$h\ge\underline h>0$.

(a) \emph{$\rho>0$.} The polynomial $T^*(t)\defeq T_{n+1}(1-2t)$ has
$\abs{T^*}\le1$, is $\omega$-orthogonal to $\mathcal P_n$, and satisfies
$\int t^{n+1}T^*\,d\omega=(-1)^{n+1}4^{-(n+1)}$, since its leading coefficient
is $(-1)^{n+1}2^{2n+1}$ and $\int(T^*)^2\,d\omega=1/2$. The measures
$(h\pm\underline hT^*)\omega$ are probability measures with the moments
$1,\dots,n$ of $H$, so $\rho\ge2\underline h\,4^{-(n+1)}>0$.

(b) \emph{Strong duality.} We use the supporting-hyperplane theorem and
compactness of the moment set; these are standard convex analysis. The moment
set $\mathcal M_{n+1}$ (the vectors $(m_1,\dots,m_{n+1})$ of moments of
probability measures on $[0,1]$) is convex and compact, and the extremes
$m^\pm$ of the $(n+1)$-th moment given $\mathbf m\defeq(m_1(H),\dots,m_n(H))$
are attained. At $x^+=(\mathbf m,m^+)$ a supporting hyperplane gives
$\sum_{j=1}^{n+1}\gamma_jx_j\le\gamma_0$ on $\mathcal M_{n+1}$, with equality at
$x^+$. Evaluating at $(\mathbf m,m^-)$ and using $\rho>0$ gives
$\gamma_{n+1}\ge0$. If $\gamma_{n+1}=0$, then
$\Phi\defeq\gamma_0-\sum_{j=1}^n\gamma_jt^j$ is a nonzero polynomial with
$\Phi\ge0$ on $[0,1]$ and $\int\Phi\,dH=0$, so $H$ would be supported on the
finitely many zeros of $\Phi$, which is impossible for $H=h\omega$. Normalizing
$\gamma_{n+1}=1$ yields $Q\in\mathcal P_n$ with $Q\ge t^{n+1}$ on $[0,1]$ and
$\int Q\,dH=m^+$. Likewise, the mirrored supporting hyperplane at
$x^-=(\mathbf m,m^-)$, where $\gamma_{n+1}<0$ by the same finite-support
argument, normalized to $\gamma_{n+1}=-1$, yields $Q'\le t^{n+1}$ with
$\int Q'\,dH=m^-$. Hence $\rho=\int(Q-Q')\,dH$.

(c) \emph{A potential bound.} Let $A\defeq t^{n+1}-Q'\ge0$ and
$B\defeq Q-t^{n+1}\ge0$; $A$ and $-B$ are monic of degree $n+1$. Put
$\Delta\defeq A+B\ge2\sqrt{AB}$. For each complex root $z$ of $A$ (resp.\ of
$-B$), let $\bar x$ be $\operatorname{Re}z$ clamped to $[0,1]$. Then
$\abs{t-z}\ge\abs{t-\bar x}$ on $[0,1]$, so by Step~1(a),
$\int\log\abs{t-z}\,d\omega\ge-\log4$. Since $A=\abs A=\prod_i\abs{t-z_i}$ over
its $n+1$ roots, and likewise for $B$,
$\operatorname{GM}(A),\operatorname{GM}(B)\ge4^{-(n+1)}$, so
$\operatorname{GM}(\Delta)\ge2\cdot4^{-(n+1)}=4^{-n}/2$.

(d) By Jensen's inequality,
\begin{equation}\label{eq:lemma-L}
  \rho=\int\Delta h\,d\omega\ge\operatorname{GM}(\Delta h)
  =\operatorname{GM}(\Delta)\operatorname{GM}(h)\ge\tfrac12\,4^{-n}\operatorname{GM}(h).
\end{equation}
Steps~2 and~3 prove (iii).

\emph{Step 4 (part (i)).} If $\rho=0$ then $\Reg=0$ by
\cref{prop:binomial}(a). If $b_s(H)=0$ for some $s$, apply~\eqref{eq:lemma-U}
with $P=1+v_s/\eps$. Since $\log(1+y)\ge\mathbf 1_{\mathcal C}\log y$ pointwise
and $\omega(\mathcal C)=1/2$,
\[
  \rho\le4^{-n}\,\frac{1+b_s/\eps}{\operatorname{GM}(1+v_s/\eps)}
  \le4^{-n}\,\eps^{1/2}\,e^{-\int_{\mathcal C}\log v_s\,d\omega}\longrightarrow0\qquad(\eps\to0),
\]
so $\rho=0$. Otherwise all $b_s\defeq b_s(H)>0$. For each $s$ take
$Q_s\defeq1+v_s/b_s>0$ (degree $n$), so $\int Q_s\,dH=2$. Pointwise
$\log(1+y)\ge\mathbf 1_{\mathcal C}\log y$ for $y>0$, so
$\operatorname{GM}(Q_s)\ge\exp\bigl(\int_{\mathcal C}\log v_s\,d\omega-\omega(\mathcal C)\log b_s\bigr)
=b_s^{-1/2}e^{\int_{\mathcal C}\log v_s\,d\omega}$. Then~\eqref{eq:lemma-U}
and~\eqref{eq:betastar-central} give
\[
  \rho^2\le4\cdot16^{-n}\,b_s\,e^{-2\int_{\mathcal C}\log v_s\,d\omega}=4\cdot4^{-n}\,b_s\,e^{-2\Gcat n/\pi}.
\]
By \cref{prop:binomial}(a),
\[
  \Reg=\frac14\rho^2\sum_s\frac{\binom ns}{b_s}\le\sum_s\binom ns4^{-n}e^{-2\Gcat n/\pi}
  =2^n4^{-n}e^{-2\Gcat n/\pi}=\bstar^{-n}.
\]

\emph{Step 5 (a general lower-bound prior).} Fix $s$. Choose a level $\ell_s$
such that $\mathcal C_s\defeq\{v_s\ge\ell_s\}$ has $\omega$-mass $1/2$; this is
possible because $\omega$ has no atoms and each level set of $v_s$ is finite.
Put $h_s\defeq1$ off $\mathcal C_s$ and $h_s\defeq\kappa_s/v_s$ on
$\mathcal C_s$, where
$\kappa_s\defeq\bigl(2\int_{\mathcal C_s}v_s^{-1}\,d\omega\bigr)^{-1}$. Then
$H_s\defeq h_s\omega$ is a probability measure
($\int h_s\,d\omega=\frac12+\frac12$), and $h_s\ge\min(1,\kappa_s)>0$. On
$\mathcal C_s$, $v_s\ge\ell_s$, so $\int_{\mathcal C_s}v_s^{-1}\,d\omega\le1/(2\ell_s)$,
i.e.\ $\kappa_s\ge\ell_s$. Hence
$b_s(H_s)=\int_{\mathcal C_s^c}v_s\,d\omega+\kappa_s/2\le\ell_s/2+\kappa_s/2\le\kappa_s$.
Also $\operatorname{GM}(h_s)^2=\kappa_s\,e^{-2\int_{\mathcal C_s}\log v_s\,d\omega}$.
By~\eqref{eq:lemma-L} and \cref{prop:binomial}(a), keeping only the $s$-th term
of the sum,
\[
  \Reg_n(H_s;\Gconst)\ge\frac14\rho^2\,\frac{\binom ns}{b_s}
  \ge\frac14\cdot\frac14\,16^{-n}\operatorname{GM}(h_s)^2\,\frac{\binom ns}{\kappa_s}
  =\frac1{16}\,16^{-n}\binom ns\,e^{-2\int_{\mathcal C_s}\log v_s\,d\omega}.
\]

\emph{Step 6 (even $n=2k$, $s=k$).} Here $v_k=(t(1-t))^k$, whose superlevel
set of $\omega$-mass $1/2$ is $\mathcal C$. By~\eqref{eq:betastar-central},
$e^{-2\int_{\mathcal C}\log v_k\,d\omega}=4^ne^{-2\Gcat n/\pi}$, so the prior
$H_k\defeq H_s$ with $s=k$ has
\[
  \Reg_n(H_k;\Gconst)\ge\frac1{16}\binom{2k}k4^{-n}e^{-2\Gcat n/\pi}
  =\frac1{16}\,\frac{\binom{2k}k}{4^k}\,\bstar^{-n}.
\]

\emph{Step 7 (odd $n=2k+1$, $s=k$).} Here $v_k=(t(1-t))^k(1-t)$. By the
bathtub principle, among sets $E$ of $\omega$-mass $1/2$, $\mathcal C$
maximizes $\int_E\log(t(1-t))\,d\omega$: with $f\defeq\log(t(1-t))$ and $\ell$
its value on the boundary of $\mathcal C$, $f\ge\ell$ on $\mathcal C$ and
$f\le\ell$ off $\mathcal C$, so
$\int_{\mathcal C}f-\int_Ef\ge\ell\,(\omega(\mathcal C\setminus E)-\omega(E\setminus\mathcal C))=0$.
Also $\log(1-t)\le0$. So
$\int_{\mathcal C_k}\log v_k\,d\omega\le k\int_{\mathcal C}\log(t(1-t))\,d\omega=(n-1)(\Gcat/\pi-\log2)$,
and Step~5 gives
\[
  \Reg_n(H_k;\Gconst)\ge\frac1{16}\,16^{-n}\binom nk4^{n-1}e^{-2\Gcat(n-1)/\pi}
  =\frac{e^{2\Gcat/\pi}}{64}\,\frac{\binom nk}{2^n}\,\bstar^{-n}.
\]

\emph{Step 8 (constants).} First, $\binom{2k}k/4^k\ge1/(2\sqrt k)$: the ratio of
successive terms of $\bigl(\binom{2k}k/4^k\bigr)\cdot2\sqrt k$ is
$\frac{2k+1}{2k+2}\sqrt{\frac{k+1}k}\ge1$, and the $k=1$ value of
$\binom{2k}k/4^k$ is $\frac12$. Next,
$\binom{2k+1}k/2^{2k+1}=\binom{2k+2}{k+1}/4^{k+1}\ge1/(2\sqrt{k+1})=1/\sqrt{2n+2}$.
For even $n$, Step~6 gives
$\Reg_n(H_k;\Gconst)\ge\bstar^{-n}/(32\sqrt{n/2})\ge\bstar^{-n}/(72\sqrt n)$.
For odd $n$, Step~7 gives
$\Reg_n(H_k;\Gconst)\ge\frac{e^{2\Gcat/\pi}}{64}\,\frac{\bstar^{-n}}{\sqrt{2n+2}}\ge\frac{e^{2\Gcat/\pi}}{128}\,\frac{\bstar^{-n}}{\sqrt n}$,
and $e^{2\Gcat/\pi}/128=0.013997\ldots\ge1/72$. In both cases
$\sup_H\Reg_n(H;\Gconst)\ge\bstar^{-n}/(72\sqrt n)$, and Step~6 is the
even-$n$ bound witnessed by $H_k$. This proves (ii).

\emph{Step 9 (consequences).} By (i) and (ii), and since
$(72\sqrt n)^{1/n}\to1$,
\[
  \bigl(\sup_H\Reg_n(H;\Gconst)\bigr)^{1/n}\to1/\bstar.
\]
Moreover $4^n\sup_H\Reg_n(H;\Gconst)\ge(4/\bstar)^n/(72\sqrt n)$ with
$4/\bstar=2e^{-2\Gcat/\pi}=1.116306\ldots>1$, which exceeds $Cn^\alpha$ for large
$n$, for every $C$ and $\alpha$. So no bound of the form
$\sup_H\Reg_n(H;\Gconst)\le Cn^\alpha4^{-n}$ holds.
\end{proof}

%% file: sections/9b5_lower_main.tex
\subsection{The lower bound for monotone priors}
\label{app:eb-lower}

Enlarging the prior class can only enlarge each identified set, so it can only
increase the identification regret (\cref{prop:chebradius}(iv)). Every constant
map is nondecreasing, so the constant-map lower bounds of \cref{prop:binomial}
and \cref{thm:betastar} transfer verbatim to monotone priors. The hard
instances are priors on constant maps, for every output law $\mu$.

\begin{lemma}[Lower bound for monotone priors]\label{lem:eb-lower}
Consider the setting of \cref{prop:chebradius} with any Borel probability
measure $\mu$ on $[0,1]$, and let $\Gmono$ be the class of \cref{sec:eb}: the
priors admitting a jointly measurable version whose paths are nondecreasing
maps $[0,1]\to[0,1]$. Then for every $n\ge1$,
\[
  \sup_{G\in\Gmono}\Reg_n(G;\Gmono)\ \ge\ \max\Bigl\{\frac1{8\binom{2n}n},\ \frac{\bstar^{-n}}{72\sqrt n}\Bigr\},
\]
and in particular
$\liminf_{n\to\infty}\bigl(\sup_{G\in\Gmono}\Reg_n(G;\Gmono)\bigr)^{1/n}\ge1/\bstar$.
Moreover, for the arcsine constant-map prior,
$\Reg_n(H_{\mathrm{arc}};\Gmono)\ge1/(8\binom{2n}n)=(1+o(1))\sqrt{\pi n}/(8\cdot4^n)$,
and at $n=1$, $\Reg_1(H_{\mathrm{arc}};\Gmono)\ge1/16=R^*_1(H_{\mathrm{arc}})$.
The same bounds hold with $\Gmono$ replaced by the class $\mathcal G_{\mathrm{path}}$
of laws of jointly measurable random maps all of whose paths are nondecreasing
maps $[0,1]\to[0,1]$.
In words: with $n$ audited labels per question, even when the law of one
question's data is known exactly, no recalibration rule can guarantee, uniformly
over monotone-map priors, an excess risk over the prior-oracle Bayes rule below
these values. At $n=1$ the unavoidable excess is at least the oracle Bayes risk
of the arcsine constant-map prior.
\end{lemma}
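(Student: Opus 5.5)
The plan is to deduce everything from the constant-map results through monotonicity in the prior class, \cref{prop:chebradius}(iv). First I will check that $\Gconst\subseteq\Gmono\subseteq\mathcal G_{\mathrm{path}}$ as convex classes, with identified sets computed in each class. A constant random map $g\equiv\theta$, with $\theta\sim H$ independent of $X$, is jointly measurable and has every path constant, hence nondecreasing and $[0,1]$-valued. So every $H\in\Gconst$ lies in $\mathcal G_{\mathrm{path}}$, and a fortiori in $\Gmono$ (a law that admits some nondecreasing version). $\Gconst$ is convex by Step~1 of \cref{prop:binomial}, and $\Gmono$ is convex by \cref{sec:eb}. For $\mathcal G_{\mathrm{path}}$ I will use the mixture $Bg_1+(1-B)g_2$; in any case \cref{rem:eb1-convexity} notes that convexity is not used in (iv). The laws of $D$, and hence the identified sets, depend only on the prior, so (iv) applies verbatim.

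By \cref{prop:chebradius}(iv), for every $H\in\Gconst$,
\[
\Reg_n(H;\Gmono)\ge\Reg_n(H;\Gconst)\quad\text{and}\quad \Reg_n(H;\mathcal G_{\mathrm{path}})\ge\Reg_n(H;\Gconst).
\]
Taking the supremum over $H$ and using $\Gconst\subseteq\Gmono$ gives $\sup_{G\in\Gmono}\Reg_n(G;\Gmono)\ge\sup_{H\in\Gconst}\Reg_n(H;\Gconst)$. Then I will insert two lower bounds for the right-hand side:
\begin{enumerate}[label=(\alph*)]
\item the arcsine value $1/(8\binom{2n}n)$ from \cref{prop:binomial}(c);
\item the bound $\bstar^{-n}/(72\sqrt n)$ from \cref{thm:betastar}(ii).
\end{enumerate}
Together these give the displayed maximum.

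The $\liminf$ statement follows by taking $n$-th roots of the second bound and using $(72\sqrt n)^{1/n}\to1$. The arcsine clauses come from applying (iv) at $H=H_{\mathrm{arc}}$ together with \cref{prop:binomial}(c), including the asymptotic $(1+o(1))\sqrt{\pi n}/(8\cdot4^n)$ and the identity $1/16=R^*_1(H_{\mathrm{arc}})$ at $n=1$. The same chain of inequalities handles $\mathcal G_{\mathrm{path}}$.

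The only step needing care is the measure-theoretic compatibility: I must confirm that the constant-map prior, realized as in (F1)--(F3), is the same object when it is viewed inside $\Gmono$ or $\mathcal G_{\mathrm{path}}$. In particular $R$, $R^*_n$ and $\Law(D)$ must not depend on which class is used, and they do not, by Step~0.4 of the proof of \cref{prop:chebradius}: these quantities depend only on the finite-dimensional distributions. Beyond this check the proof is bookkeeping.
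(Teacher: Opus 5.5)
Your proposal is correct and is essentially the paper's proof: embed $\Gconst$ in the larger classes, apply \cref{prop:chebradius}(iv), and then use \cref{prop:binomial}(c) and \cref{thm:betastar}(ii) for the two terms. One slip: your opening chain reads $\Gconst\subseteq\Gmono\subseteq\mathcal G_{\mathrm{path}}$, whereas the evident inclusion (and the one your own ``a fortiori'' expresses) is $\Gconst\subseteq\mathcal G_{\mathrm{path}}\subseteq\Gmono$; since you only use $\Gconst\subseteq\Gmono$ and $\Gconst\subseteq\mathcal G_{\mathrm{path}}$ separately, nothing is affected.
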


\begin{proof}
Let $\Gconst$ be the class of priors on constant maps, as in
\cref{prop:binomial}. A constant map $g\equiv\theta$ is jointly measurable and
nondecreasing, and a jointly measurable map whose paths are all nondecreasing is
its own jointly measurable nondecreasing version, so
\[
  \Gconst\subseteq\mathcal G_{\mathrm{path}}\subseteq\Gmono .
\]
All three classes are convex: a mixture of two priors in a class is the law of
$Bg_1+(1-B)g_2$, with $g_1,g_2$ jointly measurable representatives (versions,
for $\Gmono$) of the two priors and $B$ an independent Bernoulli variable, and
this map is again jointly measurable, with nondecreasing (respectively constant)
paths. For every $H\in\Gconst$, \cref{prop:chebradius}(iv), applied to each
inclusion, gives
\begin{equation}\label{eq:eb-lower-transfer}
  \Reg_n(H;\Gmono)\ \ge\ \Reg_n(H;\mathcal G_{\mathrm{path}})\ \ge\ \Reg_n(H;\Gconst).
\end{equation}
Below we write the argument for $\Gmono$; the same lines, with the middle term
of~\eqref{eq:eb-lower-transfer}, give the bounds for $\mathcal G_{\mathrm{path}}$.

\emph{The arcsine term.} Take $H=H_{\mathrm{arc}}\in\Gconst$.
By~\eqref{eq:eb-lower-transfer} and \cref{prop:binomial}(c),
$\Reg_n(H_{\mathrm{arc}};\Gmono)\ge1/(8\binom{2n}n)$ for every $\mu$ and every
$n\ge1$; the asymptotic form is that of \cref{prop:binomial}(c). At $n=1$,
\cref{prop:binomial}(c) gives $\Reg_1(H_{\mathrm{arc}};\Gconst)=1/16=R^*_1(H_{\mathrm{arc}})$,
so $\Reg_1(H_{\mathrm{arc}};\Gmono)\ge R^*_1(H_{\mathrm{arc}})$.

\emph{The $\bstar$ term.} Taking the supremum of~\eqref{eq:eb-lower-transfer}
over $H\in\Gconst\subseteq\Gmono$ and using \cref{thm:betastar}(ii), which holds
for every Borel $\mu$,
\[
  \sup_{G\in\Gmono}\Reg_n(G;\Gmono)\ge\sup_{H\in\Gconst}\Reg_n(H;\Gconst)\ge\frac{\bstar^{-n}}{72\sqrt n}.
\]
Together with the arcsine term this gives the maximum. The statement about the
$\liminf$ follows from $(72\sqrt n)^{1/n}\to1$.
\end{proof}

\begin{remark}[Which term is larger]\label{rem:eb-lower-crossover}
Let $r_n\defeq8\binom{2n}n/(72\sqrt n\,\bstar^n)$ be the ratio of the second
term to the first. Then
$r_{n+1}/r_n=\bigl(4-\tfrac2{n+1}\bigr)\sqrt{n/(n+1)}/\bstar$. This ratio is
increasing in $n$, since both factors are. It equals $0.99399\ldots$ at $n=8$
and $1.00607\ldots$ at $n=9$, so $r$ decreases on $\{1,\dots,9\}$ and increases
from $n=9$ on. Direct evaluation gives $r_1=0.0620\ldots<1$ and
$r_{62}<1\le r_{63}$. Hence the $\bstar$ term is the larger exactly for
$n\ge63$. The constant $1/72$ is not optimized, so the crossover point reflects
the constants of the proof rather than the behaviour of the supremum itself.
\end{remark}

\subsection{Proof of the empirical-Bayes theorem}
\label{app:main-eb}

\ThmEB*

\begin{proof}
All objects are those of \cref{prop:chebradius} with $\mathcal G=\Gmono$, the
class of \cref{sec:eb} (laws admitting a jointly measurable version with
nondecreasing paths): the identified set $[G]$, the identification regret
$\Reg_n(G)=\Reg_n(G;\Gmono)$, and the Bayes risk $R^*_n$. \Cref{lem:eb-lower}
and \cref{thm:monotone-eb} are stated for this same class.

\emph{Lower side.} For every Borel $\mu$ and every $n\ge1$,
\cref{lem:eb-lower} gives
\[
  \sup_{G\in\Gmono}\Reg_n(G;\Gmono)\ge\max\bigl\{1/(8\tbinom{2n}n),\bstar^{-n}/(72\sqrt n)\bigr\};
\]
its proof uses only that $\Gconst$ and $\mathcal G_{\mathrm{path}}$ are convex
subclasses of $\Gmono$ (\cref{prop:chebradius}(iv)).

\emph{Upper side.} For every Borel $\mu$, \cref{thm:monotone-eb}(b) gives
$\Reg_n(G;\Gmono)\le2^{-n}$ for every $G\in\Gmono$, so the supremum is at most
$2^{-n}$. \Cref{thm:monotone-eb}(c) gives the plug-in statement: for every
$G\in\Gmono$, the posterior-mean rule of any $G''\in[G]$ has excess risk at
most $2^{-n}$ against every $G'\in[G]$.

\emph{Constant maps.} For every Borel $\mu$, \cref{thm:betastar}(i)--(ii) give
the two-sided bound
\[
  \frac{\bstar^{-n}}{72\sqrt n}\ \le\ \sup_{H\in\Gconst}\Reg_n(H;\Gconst)\ \le\ \bstar^{-n},
\]
and $(72\sqrt n)^{1/n}\to1$ gives
$\bigl(\sup_H\Reg_n(H;\Gconst)\bigr)^{1/n}\to1/\bstar$.

\emph{The case $n=1$.} The lower side is $1/(8\binom21)=1/16$, which equals
$R^*_1(H_{\mathrm{arc}})$, the oracle Bayes risk of the arcsine constant-map
prior, by \cref{prop:binomial}(c) (as recorded in \cref{lem:eb-lower}).
\end{proof}

%% file: sections/9b6_finite_m.tex
\subsection{The identification regret at every finite \texorpdfstring{$m$}{m}}
\label{app:finite-m}

The proof treats the data of the other questions as an auxiliary randomization. Its law is the same
under every prior in the identified set, so averaging over it produces a single rule for one question.

\CorFiniteM*

\begin{proof}
Write $\mathcal D\defeq[0,1]^n\times\set{0,1}^n$ for the space of one question's data, as in Step~0 of the
proof of \cref{prop:chebradius}, and $\mathcal A_j$ for the $\sigma$-field of block $j$ in the product
construction of the statement.

\emph{Step 1 (the other questions are an auxiliary randomization with an identified law).} Fix
$G\in\cG$, $G'\in[G]$, $m$, $\gh$ and $j$. Put $W_j\defeq((D_i)_{i\ne j},\xi_0)$, with values in
$\mathcal W\defeq\mathcal D^{m-1}\times[0,1]$. Under $G'$, the pair $(g_j,D_j)$ is independent of $W_j$, since
the $m$ blocks are independent and $\xi_0$ is independent of them, and
\[
  \Law_{G'}(W_j)=\Law_{G'}(D)^{\otimes(m-1)}\otimes\mathrm{Uniform}[0,1]
  =\Law_{G}(D)^{\otimes(m-1)}\otimes\mathrm{Uniform}[0,1]\eqqcolon Q_W ,
\]
because $G'\in[G]$ means $\Law_{G'}(D)=\Law_G(D)$. So $Q_W$ does not depend on $G'\in[G]$.

\emph{Step 2 (the averaged rule).} Write $\gh_j(p;d,w)$ for $\gh_j$ evaluated with $D_j=d$ and $W_j=w$ (the
arguments re-ordered by a measurable bijection), and define
$\bar\delta_j(d)(p)\defeq\int_{\mathcal W}\gh_j(p;d,w)\,Q_W(dw)$. Since $\gh_j$ is jointly measurable and
$[0,1]$-valued, the Fubini--Tonelli theorem makes $(p,d)\mapsto\bar\delta_j(d)(p)$ jointly measurable and
$[0,1]$-valued: $\bar\delta_j$ is a rule in the sense of \cref{def:seb}, and it does not depend on $G'$
(only on $Q_W$, $G$ and $\gh$).

\emph{Step 3 (Jensen's inequality, question by question).} For every $(p,d)$ and every $a\in[0,1]$,
convexity of $b\mapsto(b-a)^2$ gives
\[
  \int\bigl(\gh_j(p;d,w)-a\bigr)^2\,Q_W(dw)\ \ge\ \bigl(\bar\delta_j(d)(p)-a\bigr)^2 .
\]
Put $\mathcal A'_j\defeq\mathcal A_j\vee\sigma(X_j,Y_j)$. Since $W_j$ is a function of the blocks $i\ne j$
and of $\xi_0$, the product construction makes $\sigma(W_j)$ independent of $\mathcal A'_j$, and $W_j$ has
law $Q_W$ by Step~1. The function $F(\varpi,p,w)\defeq(\gh_j(p;D_j(\varpi),w)-g_j(\varpi,p))^2$ is
nonnegative and $\mathcal A'_j\otimes\mathcal B([0,1])\otimes\mathcal B(\mathcal W)$-measurable ($g_j$ is
$\mathcal A_j\otimes\mathcal B([0,1])$-measurable, $D_j$ is $\mathcal A'_j$-measurable, and $\gh_j$ is jointly
measurable). By Tonelli's theorem over $\mu$ and, for each fixed $p$, the freezing lemma (Lemma~A in Step~0.2
of the proof of \cref{prop:chebradius}, applied with $\mathcal A'_j$ in place of $\mathcal H$ and
$\zeta=W_j$),
\begin{align*}
  \E_{G'}\nmu{\gh_j-g_j}^2
  &=\int\E_{G'}\Bigl[\int\bigl(\gh_j(p;D_j,w)-g_j(p)\bigr)^2\,Q_W(dw)\Bigr]\,d\mu(p)\\
  &\ge\int\E_{G'}\Bigl[\bigl(\bar\delta_j(D_j)(p)-g_j(p)\bigr)^2\Bigr]\,d\mu(p)
   \;=\;R(\bar\delta_j,G'),
\end{align*}
the last equality because for every $p$ the pair $(g_j(p),D_j)$ has the law of $(g(p),D)$ under $G'$, and,
by Tonelli's theorem over $\mu$, $R(\delta,G')$ depends on $G'$ only through these laws.

\emph{Step 4 (average over questions).} The rule $\bar\delta\defeq\frac1m\sum_j\bar\delta_j$ is again
$[0,1]$-valued, jointly measurable, and independent of $G'$. Pointwise convexity of $b\mapsto(b-a)^2$ gives
$\frac1m\sum_j(\bar\delta_j(d)(p)-a)^2\ge(\bar\delta(d)(p)-a)^2$. Evaluating at $(d,a)=(D,g(p))$ under
$G'$, taking $\E_{G'}$ for each $p$ and integrating over $\mu$ (Tonelli's theorem; $g$ is jointly
measurable), $\frac1m\sum_jR(\bar\delta_j,G')\ge R(\bar\delta,G')$. With Step~3,
\[
  \mathcal E_m(\gh,G')\ \ge\ R(\bar\delta,G')-R^*_n(G')\qquad\text{for every }G'\in[G]
\]
(both risks lie in $[0,1]$, so the difference is well defined).

\emph{Step 5 (proof of (i)).} The rule $\bar\delta$ is the same for all $G'\in[G]$, hence
\[
  \sup_{G'\in[G]}\mathcal E_m(\gh,G')\ \ge\ \sup_{G'\in[G]}\bigl(R(\bar\delta,G')-R^*_n(G')\bigr)
  \ \ge\ \inf_\delta\sup_{G'\in[G]}\bigl(R(\delta,G')-R^*_n(G')\bigr)=\Reg_n(G;\cG).
\]

\emph{Step 6 (proof of (ii)).} For every $\gh$ and every convex $\cG$, since $[G]\subseteq\cG$ for
$G\in\cG$,
$\sup_{G\in\cG}\mathcal E_m(\gh,G)\ge\sup_{G\in\cG}\sup_{G'\in[G]}\mathcal E_m(\gh,G')\ge\sup_{G\in\cG}\Reg_n(G;\cG)$
by (i). The classes $\Gmono$ and $\Gconst$ are convex (proof of \cref{lem:eb-lower}); convexity is needed only
because the definitions of \cref{def:seb} are made for convex classes, and the argument of (i) does not use
it. With $\cG=\Gmono$, \cref{thm:main-eb} gives
$\sup_G\Reg_n(G;\Gmono)\ge\max\set{1/(8\binom{2n}n),\bstar^{-n}/(72\sqrt n)}$; with $\cG=\Gconst$, its
constant-map clause gives $\sup_H\Reg_n(H;\Gconst)\ge\bstar^{-n}/(72\sqrt n)$. The bounds do not depend on
$\gh$, so taking the infimum over $\gh$ gives (ii).
\end{proof}